\DeclarePairedDelimiter\floor{\lfloor}{\rfloor}
\newcommand{\la}{\left \langle}
\newcommand{\ra}{\right \rangle}
\newcommand{\prob}{\mathbb{P}}
\newcommand{\A}{\mathcal{A}} 
\newcommand{\E}[0]{\mathbb{E}} 
\newcommand{\e}{\mathbb{E}}
\newcommand{\F}{\mathcal{F}} 
\newcommand{\K}{\mathcal{K}}
\newcommand{\M}{\mathcal{M}}
\newcommand{\N}[0]{\mathds{N}} 
\newcommand{\p}{\mathbb{P}}
\newcommand{\R}[0]{\mathds{R}} 
\newcommand{\X}{\mathcal{X}}
\newcommand{\aux}{\mathsf{aux}}
\newcommand{\avg}{\mathsf{avg}}
\newcommand{\TV}{\mathsf{TV}} 
\newcommand{\bracket}[1]{\left( #1 \right)}
\newcommand{\sqbracket}[1]{\left[ #1 \right]}
\newcommand{\set}[1]{\left\{ #1 \right \}}
\newcommand{\abs}[1]{\left| #1 \right|}
\newcommand{\norm}[1]{\left\| #1 \right\|}
\newcommand{\pheq}{\phantom{=}}
\newcommand{\de}{\partial}
\newcommand{\bae}{\begin{equation}\begin{aligned}}
\newcommand{\eae}{\end{aligned}\end{equation}}
\newcommand{\beq}{\begin{equation}}
\newcommand{\eeq}{\end{equation}}
\newtheorem{theorem}{Theorem}[section]
\newtheorem{lemma}[theorem]{Lemma}
\newtheorem{proposition}[theorem]{Proposition}
\theoremstyle{definition}
\newtheorem{definition}[theorem]{Definition}
\newtheorem{assumption}[theorem]{Assumption}
\theoremstyle{remark}
\newtheorem{remark}[theorem]{Remark}
\numberwithin{equation}{section}
\newcommand\numberthis{\addtocounter{equation}{1}\tag{\theequation}}
\newcommand{\clip}{\mathrm{clip}}
\begin{document}

\title{Weak Convergence Analysis of Online Neural Actor-Critic Algorithms}

\author{Samuel Lam\footnote{Mathematical Institute, University of Oxford, Oxford, OX2 6GG, UK (samuel.lam@maths.ox.ac.uk).}, \phantom{.} Justin Sirignano\footnote{Mathematical Institute, University of Oxford, Oxford, OX2 6GG, UK (Justin.Sirignano@maths.ox.ac.uk).}, \phantom{.} Ziheng Wang\footnote{Mathematical Institute, University of Oxford, Oxford, OX2 6GG, UK (wangz1@math.ox.ac.uk).}\footnote{Author order is alphabetical.} }

\maketitle

\begin{abstract}
We prove that a single-layer neural network trained with the online actor critic algorithm converges in distribution to a random ordinary differential equation (ODE) as the number of hidden units and the number of training steps $\rightarrow \infty$. In the online actor-critic algorithm, the distribution of the data samples dynamically changes as the model is updated, which is a key challenge for any convergence analysis. We establish the geometric ergodicity of the data samples under a fixed actor policy. Then, using a Poisson equation, we prove that the fluctuations of the model updates around the limit distribution due to the randomly-arriving data samples vanish as the number of parameter updates $\rightarrow \infty$. Using the Poisson equation and weak convergence techniques, we prove that the actor neural network and critic neural network converge to the solutions of a system of nonlinear ODEs with random initial conditions. Analysis of the limit ODE shows that the limit critic network will converge to the true value function, which will provide the actor an asymptotically unbiased estimate of the policy gradient. We then prove that the limit actor network will converge to a stationary point. 
\end{abstract}

\section{Introduction} 
\hspace{1.4em} 
Neural network actor-critic algorithms are one of the most popular methods in deep reinforcement learning. A neural network model is trained to select the policy (the ``actor") while another neural network (the ``critic") is simultaneously trained to learn the value function given the actor's policy. Specifically, the actor selects an action and, given the action, a new state transition occurs according to a Markov stochastic process and a reward (a measurement of the success/failure) is observed. The critic must learn to approximate the value function -- the solution to the Bellman equation -- given the actor's policy. Given the critic's estimate for the value function of the current policy, the actor must be updated to improve the value function (i.e., increase the expected reward). Actor-critic algorithms are well-established methods in reinforcement learning \cite{konda1999actor, konda2002actor}; the key recent advance is using (deep) neural networks as the actor/critic and training their parameters using gradient descent methods \cite{DQNAtari2015, NeuralNaturalAC, mnih2016asynchronous, naturalAC1, naturalAC2}.\\

Analysis of neural network actor-critic algorithms is challenging due to: (1) the non-convexity of the neural networks, (2) the complex feedback loop between the actor and critic (the actor determines the sequence of data samples which are used to train the critic and the critic is used to train the actor), and (3) the simultaneous online updates of both the actor and critic which lead to (3A) the distribution of the data, which depends upon the actor, constantly evolving in time and (3B) the actor being updated with a noisy, biased estimate of the value function. The algorithm's fully online parameter updates and online data generation from the Markov chain -- where data samples are not i.i.d. -- introduces significant complexities which must be addressed in the mathematical analysis. The convergence of a fully online neural network actor-critic algorithm has not been previously studied in the literature; a detailed discussion of previous analysis of actor-critic algorithms
and relevant literature is presented in the next section. \\

We study the behaviour of the online neural network actor-critic algorithm as the number of training steps and the number of hidden units $\to \infty$. First, we prove that the time-rescaled \textit{trajectory} of the actor and critic networks converges \textit{pathwise} weakly to an ODE system with a random initialisation as the number of hidden units $\rightarrow \infty$. Then, we analyze the ODE to prove that, as the (rescaled) training time $\to\infty$, the critic limit converges to the value function and the actor limit converges to a stationary point of the objective function (the expected discounted reward), in the sense that the gradient of the objective function $\to 0$. Convergence rates are also established for both the critic and the actor. \\

The learning rates for the actor/critic parameter updates and exploration schedule for the $\epsilon$-greedy exploration algorithm must be carefully selected to ensure convergence, which can provide useful guidance for the practical implementation of neural network actor-critic algorithms. Our paper also provides theoretical guarantees for a class of online neural network actor-critic algorithms that are widely-used in practice. There is no guarantee that the finite-hidden-unit critic network can accurately approximate the action value function (it may only converge to a local minimizer due to non-convexity), and therefore it is also not guaranteed in practice that the actor parameters update in the direction of steepest descent for the objective function. Establishing convergence in the regime where the number of hidden units $\rightarrow \infty$ provides an important theoretical foundation for these algorithms and, more importantly, suggests that large neural networks (i.e., large numbers of hidden units) should be used in practice to guarantee that the actor-critic algorithm converges during training.

\subsection{Literature Review: Convergence Analysis of Actor-critic Algorithms}

\paragraph{Stochastic Approximations and ODE Methods} Various versions of actor-critic algorithms have been studied under the framework of stochastic approximation algorithms, see \cite{konda1999actorsiam, Borkar2000, konda2002actor, Konda2003, 2019Ramaswamy, Borkar2022, Borkar2024, odeMethod2024} and the associated references for an extensive discussion and literature review. The textbooks \cite{Benveniste1990, Borkar2022} are recommended for a general overview of this method. Our analysis is substantially different from this existing ODE literature such as \cite{Benveniste1990, Borkar1997, Borkar1998async, 2019Ramaswamy, Borkar2000,Borkar2024,Konda2003}. In the stochastic approximation of ODE literature, a common method for analysing the stability and convergence of this class of algorithms would be to show that the algorithm converges to the invariant set(s) of an associated ODE \cite{Benveniste1990, Borkar1997, Borkar1998async, 2019Ramaswamy}. As a result, the algorithm can be studied by characterizing the ODE limit(s) \cite{Borkar2000, DiCastro2010, Borkar2024, odeMethod2024}. In contrast, we prove that the time-rescaled evolution converges \textit{pathwise} to the limit ODE as the number of hidden units of the neural network $\rightarrow \infty$. The stochastic approximation literature for ODEs such as \cite{Borkar2000, Borkar2024} does not consider neural networks. \\

Our approach relates the actor-critic algorithm with an ODE in a different way than in the above literature. Here we establish the weakly, \textit{pathwise} convergence of the time-rescaled trajectory of the actor-critic algorithm with neural network approximations using weak convergence techniques \cite{ethier2009markov} as the number of hidden units and training steps $\rightarrow \infty$. Our paper significantly expands upon \cite{wang2021global}, which only considered tabular actor-critic models and not the neural network case. \textit{Tabular} actor-critic algorithms do not use any function approximations (such as neural networks) for the actor and critic. The function approximation introduces significant additional mathematical challenges in the analysis. In particular, the non-convexity of the neural network and the random limit of the neural network initialization (which requires a new weak convergence approach to prove the convergence as the number of hidden units $\rightarrow \infty$, which was not necessary in \cite{wang2021global}) makes the mathematical setting of this paper much more challenging than \cite{wang2021global}. Also, from an applied perspective, this paper, which considers an online neural network actor-critic algorithm, is directly relevant to recent developments in AI and deep reinforcement learning (as compared to the tabular actor-critic algorithm considered in \cite{wang2021global}, which is not used in practice). We emphasise that weak pathwise convergence of the actor and critic network outputs to a limit ODE as the number of hidden units $\rightarrow \infty$ has not been previously considered in the ODE literature discussed above.

\paragraph{Finite time analysis}
Non-asymptotic convergence rates can also be established for the actor-critic algorithm using finite-time analysis approaches. These results establish a convergence rate to a time when the optimality gap is arbitrarily small. \cite{Liu2019, agarwal2020, wang2019neural} are amongst the earliest works that provide finite-time analyses to reinforcement learning algorithms on a Markov Decision Process (MDP), with \cite{wang2019neural} being the first to consider neural network approximations in the algorithm. Since then, finite time convergence rates were provided for the more complicated actor-critic algorithms on MDPs, with linear approximators for the action value function have been proven in \cite{xu2020nonasymptotic, finite_time_2020, qiu2021, KumarHarshat2023Otsc}. \\

The recent neural tangent kernel (NTK) analysis \cite{oldNTK, oldNTK1, NTK1, NTK2} has enabled subsequent research such as \cite{cai2019temporaldifference, fu2021singletimescaleactorcriticprovablyfinds, cayci2022finitetime} which study actor-critic algorithms with neural network approximations. \cite{cai2019temporaldifference} developed an NTK analysis of temporal difference learning. A related paper which studies actor critic algorithms, although without neural networks, is \cite{hong2023twotimescalecomplexity}. Our convergence analysis is different than the mathematical approaches in \cite{fu2021singletimescaleactorcriticprovablyfinds, hong2023twotimescalecomplexity, cayci2022finitetime}. Our analysis studies the long-term behaviour of the algorithm and provides a bridge between the algorithm and its continuous-time limit, which is not studied in \cite{fu2021singletimescaleactorcriticprovablyfinds, hong2023twotimescalecomplexity, cayci2022finitetime}. There are several other key differences between the algorithms and theoretical results in \cite{fu2021singletimescaleactorcriticprovablyfinds, hong2023twotimescalecomplexity, cayci2022finitetime} as compared to our paper. \cite{hong2023twotimescalecomplexity} does not study neural networks. \cite{cayci2022finitetime} and \cite{fu2021singletimescaleactorcriticprovablyfinds} both study offline algorithms (sometimes referred to as ``batch" actor-critic algorithms) while we study an \emph{online} algorithm, where the latter requires developing a different mathematical approach for the analysis. The mathematical results of \cite{fu2021singletimescaleactorcriticprovablyfinds} are significantly different than the convergence results in our paper. \cite{fu2021singletimescaleactorcriticprovablyfinds} proves convergence of the critic in a time-averaged sense (the time-average of the critic iterations converges) while we directly prove convergence of the critic to the action-value function which satisfies the Bellman equation. Finally, both \cite{fu2021singletimescaleactorcriticprovablyfinds} and \cite{hong2023twotimescalecomplexity} assume that the stationary distribution of the Markov chain for the current policy is known and directly samples from this stationary distribution for the training data samples, which is typically not possible in practice. In practice, the stationary distribution is not known and instead data samples are generated by running the Markov chain while in parallel taking online parameter updates. In our paper, we do not assume the stationary distribution is known and, instead, data samples are taken from the sequence generated by running the Markov chain while simultaneously in parallel updating the critic/actor parameters online. Much of the analysis in our paper is directed at addressing the technical challenges arising from a fully online model with both parameters being updated online and the sequence of data samples from the Markov chain (whose distribution will change as the parameters change) being generated online. \\

Typical convergence results in the literature establish that the \textit{minimum} gradient of the expected discounted reward is close to zero \cite{finite_time_2020, qiu2021, KumarHarshat2023Otsc}  or the \textit{minimum} of reward gap between the optimal policy and the current iterate (performance/regret bounds) \cite{hong2023twotimescalecomplexity, cayci2022finitetime, fu2021singletimescaleactorcriticprovablyfinds}. The limitation of these statements is that they do not inform when \textit{exactly} the actor-critic algorithm obtains the optimal policy (amongst the finite steps being taken). Our results are different in the sense that we show that, for the limit ODE system, the \textit{final} iterate of the limit actor network must approach a stationary point. This means that early stopping is not necessary for the limit actor network to find the optimal policy. \\

Finally, another major distinction between the existing literature and the algorithm studied in this paper is its use of the $\epsilon$-greedy algorithm for exploration. The $\epsilon$-greedy algorithm is a widely-used exploration algorithm in deep reinforcement learning. To our knowledge, it has not been theoretically studied as part of a neural network actor-critic algorithm in the previous literature. In our paper, the actor-critic algorithm essentially has three timescales: (1) critic updates, (2) actor updates, and (3) the exploration rate in the $\epsilon$-greedy algorithm. These three timescales and, in particular the exploration rate, must be selected carefully in order to prove convergence. A class of exploration rates are provided which guarantee convergence of the algorithm. \\

The ODE limit exists in part due to the \textit{online} nature of the actor-critic algorithm considered in this paper. During \textit{one} step of the \textit{online} actor-critic algorithm, the actor and critic are \textit{both} updated after observing \textit{one} state-action pair. This is in contrast to the \textit{offline/batch} version of the actor-critic algorithms previously studied in \cite{xu2020nonasymptotic, qiu2021, cayci2022finitetime}, where a large number (to be sent $\rightarrow \infty$) of critic updates is required before an actor update. An advantage of the online algorithm is that a much larger number of optimization iterations can be completed in the same computational time. For this reason, the online actor-critic algorithm is typically used in practice instead of the batch version of the actor-critic algorithm.

\paragraph{Learning Rates and Exploration} It is typical in the classical ODE approach to assume that the learning rates for both the actor and the critic networks satisfy the Robbin-Munro conditions \cite{RobinMunro1951}, see \cite{Benveniste1990, Borkar1997, konda2002actor, Konda2003, Borkar2022}. Moreover, it is often assumed that the critic learning rates are always greater than the actor learning rates, so that the critic network could serve as an accurate estimate to be used for the actor updates. This would be achieved by, for example, Assumption 5.2.2 of \cite{konda2002actor}. Our paper adopts a different set of learning rate schedules: the critic learning rates are assumed to be constant for all steps while the actor learning rate continues to satisfy the Robbin-Munro conditions. \\

An $\epsilon$-greedy policy is used to obtain samples for the actor and critic updates. This is a simple way to ensure that all state-action pairs have been visited sufficiently frequently to search for an optimal policy \cite{watkins1989qlearning, watkins1992q, sutton2018reinforcement}. The selection of the exploration rate, that is, $\epsilon$, remains an active research area \cite{tokic2011softmax}. As far as we are aware, this paper is the first to study the convergence of neural network actor-critic algorithms with a decaying exploration rate schedule for the $\epsilon$-greedy policy. A key conclusion is that the exploration rate must decay \textit{slower} than the actor learning rate for the critic network to converge to the true action value function, and we have provided a class of exploration rates which guarantee convergence of the algorithm. \\

It is also worth highlighting that \cite{agazzi2020globaloptimalitysoftmaxpolicy} has established the convergence to the stationary point when trying to maximise the discounted reward function by directly using continuous-time gradient descent. \cite{zhang2020meanfieldqlearning} carried out a finite-time analysis of Q-learning with approximations of neural networks with mean-field scaling. Finally, \cite{yamamoto2024} performed a finite-time analysis on a \textit{batch} version of the actor-critic with the mean-field scaling, where the gradient descent is performed with a Langevin dynamics. These articles in the literature study different scalings (while we study an NTK scaling) and different algorithms than considered in this paper. \\

\subsection{Our Mathematical Approach}
Our analysis first proves the convergence of the time-rescaled evolution to a limit ODE. Using the ODE, we show that both the \textit{Bellman error} for the critic model and the norm of the gradient of the objective function with respect to the actor converge to zero as the training time tends to infinity. These results are stated in Section \ref{s:main_result}. \\

Typically, the evolution of the neural network's output during training depends on the following: (1) the normalisation(s) (at initialisation), (2) how the learning rate(s) scales with the width of the neural network, and (3) the time rescaling of the evolution. In this paper (where $N$ is the number of hidden units), the actor and critic networks are normalised with the Xavier's $O(1/\sqrt{N})$ normalisation \cite{glorot2010a}, the parameters are updated with an $O(1/N)$ learning rate, and the training evolution is embedded in continuous-time via a time-rescaling a factor of $1/N$ (see \eqref{eq:rescaled_updates} for a precise definition of this). As a result, the trained network parameters remain in a neighbourhood of their initial conditions as
the number of hidden units $N \rightarrow \infty$. This leads to a linearisation of the training dynamics as $N \rightarrow \infty$ around the initial distribution of parameters \cite{NTK2, sirignano2022scalinglimitneuralnetworks}. We can then approximate the evolution of the networks by summing up the linearised changes of the network output, which gives rise to the neural tangent kernel \cite{NTK1}. Since the NTK is positive definite for a wide class of activation functions \cite{ito1996nonlinearity, sirignano2021asymptotics, Carvalho2024NTKpositive}, the critic network therefore converges to the true action-value function that satisfies the Bellman equation, which is needed for the actor updates (and is not guaranteed due to the non-convexity of the critic loss for finite $N$). In the limit $N \rightarrow \infty$, the neural network actor-critic algorithm satisfies
an ODE system which can be considered a version of a gradient flow for solving the Bellman equation. \\ 

The (1) normalisation of the neural network and (2) how the learning rate(s) scales with $N$ together govern the limit behaviour of the network evolution. The analysis of the actor-critic algorithm with a different normalisation scaling than $N^{-1/2}$ is not considered in this paper but is an interesting topic for future research. Briefly, we refer interested readers to a number of references that study the training of neural networks for regression problems under different scalings. \cite{mei2018, mei2019, sirignano2020meanfieldI, sirignano2019meanfieldII, bach2018, rotskoff2022} study the case of mean-field $O(1/N)$ scaling and where the learning rate is $O(1)$; in this case, the evolution converges to a PDE. \cite{Bortoli2020propagationofChaos} studies the case of a mean-field $O(1/N)$ scaling and a learning rate of $O(N)$, in which the evolution converges to a McKean-Vlasov equation with a Brownian noise. \\

We will first prove the \textit{weak} convergence of the evolution of network outputs as $N\to\infty$. Note that it is not possible to establish a uniform convergence as in \cite{wang2021global} due to the NTK normalisation of the neural network. The proof begins with establishing the relative compactness of the time-scaled evolution of the actor and critic networks using the routines provided in \cite{ethier2009markov}. We shall then show that any weak limit(s) of the time-rescaled evolution must satisfy the limit ODE, which involves analyses of the fluctuation terms. Finally, we show that the limit ODE is well-posed. \\

The most challenging step of establishing the weak convergence is to analyse the fluctuation terms. This is because the data samples are not independent and identically distributed (i.i.d.). In fact, their distributions depend on the actor network. Moreover, the update of the actor parameters depends on the data samples, introducing a complex feedback loop. Fortunately, under mild assumptions on the MDP as discussed \cite{Puterman2014markov, Kallenberg2002, Kallenberg2021}, we are able to establish the geometric ergodicity of the data samples under a fixed $\epsilon$-greedy policy with Softmax by minorisation arguments \cite{Norris_1997, meyn2012markov}. We can then construct a Poisson equation of the Markov process (induced by the fixed $\epsilon$-greedy policy) that connects its stationary distribution with its transition kernel \cite{Konda2003, sirignano2021asymptotics, wang2021global}. Using the Poisson equation, we show that the fluctuations of the model updates around the limit distribution due to the random data samples vanish in $L^1$ as the number of parameter updates $\rightarrow \infty$. As a result, the actor and critic networks converge to the solution of the limit ODE with random initial conditions. It is interesting to note that, in contrast to the limit ODE obtained by training a neural network for the regression task \cite{NTK2, sirignano2022scalinglimitneuralnetworks}, the limit ODE for the actor-critic algorithm is non-linear. The nonlinearity of the limit ODE system for the neural network actor-critic algorithm makes it more challenging to study the convergence as the training time $t \rightarrow \infty$. \\

\paragraph{Convergence Analysis for the Limit ODE as training time $t \rightarrow \infty$} Leveraging the two timescales for the actor and critic ODEs (due to their respective learning rates), we are able to prove that the critic ODE converges to the true value function (the solution of the Bellman equation) for the current actor's policy at time $t$ as the training time $t \rightarrow \infty$. The critic, therefore, provides the actor with an asymptotically unbiased estimate of the policy gradient. Surprisingly, the convergence rate coincides with the exploration rate. \\

Finally, we prove that the limit actor network will converge to a stationary point of the objective function as $t \rightarrow \infty$. Therefore, although in the pre-limit actor-critic algorithm the critic provides a noisy, biased (i.e., there is error) estimate of the value function, we are able to prove that asymptotically the critic will converge sufficiently rapidly such that the actor also converges. Our analysis also provides a necessary condition on how the exploration rate should be chosen based on the actor learning rates to ensure convergence.

\subsection{Organisation of the analysis}
Section \ref{s:ActorCriticAlgorithm} describes the class of neural network actor-critic algorithms that we study. Section \ref{s:main_result} states the main convergence results that are proven. The proof of the convergence of the network evolutions is presented in Section \ref{s:derivation_of_the_limit_ODEs}. Finally, we analyse the limit ODE as the training time $t \rightarrow \infty$ in Section \ref{s:analysis_of_the_limiting_ODE} to establish the convergence of critic network to the true action-value function and the convergence of actor network to a stationary point of the expected discounted future reward.

\section{Actor-Critic Algorithms} \label{s:ActorCriticAlgorithm}
	
\subsection{Markov Decision Processes}    
Reinforcement learning studies games defined by a Markov decision process (MDP).

\begin{definition}[Markov decision process (MDP)] A Markov decision process is the tuple
$\bm{\mathcal{M}} = (\bm{\X}, \bm{\A}, p, \rho_0, r, \gamma)$ of the following:
\begin{itemize}
\item $\bm{\X} \subseteq \R^{d_x}$, the space of all possible states of the MDP (the \textit{state space}); 
\item $\bm{\A} \subseteq \R^{d_a}$, the space of all actions of the MDP (the \textit{action space});
\item $p(x'| x,a)$, the transition kernel that gives the probability of next state being $x'$ if the current state is $x$ and the action $a$ is taken;
\item $\rho_0$, distribution that characterises how the initial state is chosen,
\item $r(x,a)$, reward gained by taking action $a$ at state $x$, and
\item $\gamma \in(0,1)$ being the \textit{discount factor}.
\end{itemize}
Here $\bm{\X} \times \bm{\A} \subset \R^d$, where $d = d_x + d_a$. Any elements $\xi := (x,a) \in \bm{\X} \times \bm{\A}$ are called  \textit{state-action} pairs.
\end{definition}

We make the same assumptions on the MDP as those made in \cite{wang2021global}:

\begin{assumption}[Basic assumptions on the MDP] \label{as:MDP_basic}\phantom{=}
\begin{itemize}
\item The state space $\bm{\X}$ is finite space with size $\#\bm{\X}$,
\item the action space $\bm{\A}$ is finite with size $\#\bm{\A}$, 
\item there exists $C > 0$ such that $|(x,a)| \leq C$ for any $(x,a) \in \bm{\X} \times \bm{\A}$, (here $|\cdot|$ is the Euclidean norm in $\R^d$),
\item the elements of $\bm{\X} \times \bm{\A}$ are pairwise non-proportional, i.e., for any $(x,a)$ and $(x',a')$ there is no constant $c$ such that $(x,a) = c(x',a')$, and
\item The reward function $r$ is bounded in $[-1,1]$.
\end{itemize}
We denote the size of the state-action space $\bm{\X} \times \bm{\A}$ as $M = \#\bm{\X} \times \#\bm{\A}$. Due to the finiteness of $\bm{\X} \times \bm{\A}$, there is always a constant $C$ that satisfies $|(x,a)| \leq C$.
\end{assumption}

\subsection{Policy acting on MDP}
A policy $f$ is any function $f : \bm{\X} \times \bm{\A} \to \R$ such that $f(x,a) \geq 0$ and, for any $x$, $\sum_{a'} f(x,a) = 1$. It represents the probability of taking action $a$ for state $x$ when a player follows this policy. \\

A policy $f$ acts on the MDP $\bm{\M}$ to induce the following time-homogeneous Markov chain on the state-action pair $\xi_k := (x_k, a_k)$:
\begin{equation}
(\bm{\M},f): \begin{cases} 
x_0 &\sim \rho_0 \\
a_{k+1} &\sim f(x_k, \cdot) \\
x_{k+1} &\sim p(\cdot \mid x_k, a_k)
\end{cases}.
\end{equation}
The induced Markov chain $(\bm{\M},f)$ admits a transition kernel
\begin{equation} \label{eq:transition_kernel_induced}
\mathbb{P}_{f}((x,a), \, (x',a')) = f(x',a') p(x'\mid x,a).
\end{equation}
The overall reward for a policy $f$ to act on the MDP $\bm{\M}$ is evaluated by the following state and action-value functions:
\begin{definition}[State and action-value functions]
The state and action-value functions of a policy $f$ acting on MDP $\bm{\M}$ are defined as follow.
\begin{itemize}
\item the \textit{state}-value function $\bar{V}^f: \bm{\mathcal{X}} \rightarrow \mathbb{R}$ is the expected discounted sum of future awards when the MDP is started by a certain state $x$ and the player follows policy $f$ throughout:
\begin{equation}
\bar{V}^{f}(x) = \e \left[\sum_{k=0}^{\infty} \gamma^{k} r(\xi_k) \mid x_{0}=x\right],
\end{equation}
and
\item the \textit{action}-value function $V^{f}: \bm{\mathcal{X}} \times \bm{\mathcal{A}} \rightarrow \mathbb{R}$ is the expected discounted sum of future awards when the MDP is started by a certain state-action pair $(x,a)$ and the player follows policy $f$ throughout:
\begin{equation}
\label{value function}
V^{f}(x,a) = \e \left[\sum_{k=0}^{\infty} \gamma^{k} r(\xi_k) \mid x_{0}=x, a_{0}=a\right].
\end{equation}
\end{itemize}
Both expectations are taken with respect to the induced Markov chain $(\bm{\M},f) := (\xi_k)_{k\geq 0} = (x_k, a_k)_{k\geq 0}$.
\end{definition}

\begin{remark}
These expectations are well-defined as $\gamma \in (0,1)$ as $r(\cdot)$ are bounded. (See, for example, the remark made at the beginning of Section 2 of \cite{wang2021global}).  In fact, we have the following trivial bounds: for all policies $f$ and state-action pairs $(x,a)$,
\begin{equation}
|\bar{V}^f(x)| \leq \frac{1}{1-\gamma}, \quad |V^f(x,a)| \leq \frac{1}{1-\gamma}. \label{eq:trivial_bound}
\end{equation}
\end{remark}

Furthermore, we define the state and state-action visiting measures of a policy $f$:
\begin{definition}[State and state-action visiting measures]
Let $(\bm{\M},f) := (x_k, a_k)_{k\geq 0}$ be the Markov chain induced when the policy $f$ acts on the MDP $\bm{\M}$. Let $\xi = (x,a) \in \bm{\X} \times \bm{\A}$ be a state-action pair of the MDP $\bm{\M}$. Denote
\begin{itemize}
\item $\mathbb{P}(x_k = x)$ be the probability that $x_k = x$ for $(\bm{\M},f)$, and 
\item $\mathbb{P}(x_k = x, a_k = a) := \mathbb{P}(x_k = x) f(x,a)$ be the probability that $x_k = x$ and $a_k = a$ for $(\bm{\M},f)$. 
\end{itemize}
With the above notations, we define the state and state-action visiting measures as $\nu_{\rho_0}^{f}$ and $\sigma_{\rho_0}^{f}$ respectively, such that
\begin{equation}
\label{visiting}
\nu_{\rho_0}^{f}(\{x\}) = \sum_{k=0}^{\infty} \gamma^{k} \prob\left(x_{k}=x\right), \quad \sigma_{\rho_0}^{f}(\{ (x, a) \})= \sum_{k=0}^{\infty} \gamma^{k} \prob\left(x_{k}=x, a_{k}=a\right),
\end{equation}
\end{definition}

\begin{remark} \phantom{=}
\begin{itemize}
\item Both $(1-\gamma) \nu^f_{\rho_0}(\cdot)$ and $(1-\gamma) \sigma^f_{\rho_0}(\cdot)$ are probability measures.
\item Define the auxiliary Markov chain induced when the policy $f$ acts on the MDP $\bm{\M}$ in terms of the state-action pair $\tilde{\xi}_k := (\tilde{x}_k, \tilde{a}_k)$:
\begin{equation}
(\bm{\M},f)_\aux: \begin{cases} 
\tilde{x}_0 &\sim \rho_0 \\
\tilde{a}_{k+1} &\sim f(\tilde{x}_k, \cdot) \\
\tilde{x}_{k+1} &\sim \tilde{p}(\cdot \mid \tilde{x}_k, \tilde{a}_k)
\end{cases}.
\end{equation}
where
\begin{equation}
\tilde{p} \left(\tilde{x}^{\prime} \mid \tilde{x}, \tilde{a} \right)=\gamma p\left(\tilde{x}^{\prime} \mid \tilde{x}, \tilde{a} \right) + (1-\gamma) \rho_0\left(\tilde{x}^{\prime}\right), \quad \forall\left(\tilde{x}, \tilde{a}, \tilde{x}^{\prime}\right) \in \bm{\X} \times \bm{\A} \times \bm{\X} \label{eq:tilde_p}
\end{equation}
sample from the initial distribution $\rho_0$ with probability $1-\gamma$ at each transition to a new state. Then $(1-\gamma) \sigma^f_{\rho_0}$ is the stationary measure of the auxiliary Markov chain $(\bm{\M},f)_\aux$. The above result is deduced on, for example, page 36 of \cite{konda2002actor}.
\end{itemize}
\end{remark}

Note that the auxiliary Markov chain $(\bm{\M},f)_{\aux}$ admits the following transition kernel
\begin{equation} \label{eq:transition_kernel_auxiliary}
\Pi_{f, \rho_0}((x,a), \, (x',a')) = f(x',a') \tilde{p}(x'\mid x,a).
\end{equation}
We write $\Pi_{f, \rho_0}$ as $\Pi_f$ when the context is clear. \\

We shall define the \textit{uniform policy} that assigns each action with equal probability
$$\mathsf{1}: (x,a) \in \bm{\X} \times \bm{\A} \mapsto \frac{1}{\#\bm{\A}}.$$

Moreover, we say that a policy $f$ is \textit{fully supported} if $f(x,a) > 0$ for any state-action pairs. Due to the finiteness of the state and action spaces, it is clear that $f$ is fully supported if and only if there exists $f_{\min} > 0$ such that
$$\forall (x,a), \quad f(x,a) \geq f_{\min} > 0.$$
Note that for $f$ to be a probability measure, we must have $f_{\min} \leq 1/(\#\bm{\A})$. \\

Finally, we say that a policy $f$ is \textit{deterministic} if, for each $x$, there is exactly an $a =: a_x$ such that $f(x,a) > 0$, in which case we must have $f(x,a_x) = 1$. \\

Having the notions of induced and auxiliary Markov chains defined when a policy acts on the MDP, we shall assume the following.

\begin{assumption} \label{as:ergodic_assumption}
We assume there exists $n_0 \in \N$ such that for any $n\geq n_0$, there is constant $C_{\mathsf{1}} \in (0,1/M)$ (with $M=\#\bm{\X} \times \#\bm{\A}$) such that
\begin{equation}
\inf_{x,a,x'} \sum_{\xi_1,...,\xi_{n_0}-1} p(x_1|x,a) ... p(x'|x_{n-1}, a_{n-1}) \geq C_{\mathsf{1}} (\#\bm{\A})^{n_0} > 0, \label{eq:minorisation_for_p}
\end{equation}
where $\xi_\ell = (x_\ell,a_\ell)$.
\end{assumption}

Note that $\tilde{p} \geq \gamma p$ by \eqref{eq:tilde_p}, so \eqref{eq:minorisation_for_p} implies
\begin{equation}
\inf_{\tilde{x},\tilde{a},\tilde{x}'} \sum_{\tilde{\xi}_1,...,\tilde{\xi}_{n_0}-1} \tilde{p}(\tilde{x}_1|\tilde{x},\tilde{a}) ... \tilde{p}(\tilde{x}'|\tilde{x}_{n_0-1}, \tilde{a}_{n_0-1}) \geq C_{\mathsf{1}} (\gamma \#\bm{\A})^{n_0} > 0,
\label{eq:minorisation_for_tilde_p}
\end{equation}
where $\tilde{\xi}_\ell = (\tilde{x}_\ell, \tilde{a}_\ell)$.

\begin{proposition} \label{prop:equiv_communication_1}
Assumption \ref{as:ergodic_assumption} is satisfied if either of the following holds:
\begin{enumerate}
\item The chain induced by the uniform policy $(\bm{\M}, \mathsf{1})$ is irreducible and aperiodic.
\item $\bm{\M}$ is \textit{communicating}, i.e., any state could be reached by one another using a deterministic policy.
\end{enumerate}
\end{proposition}

\begin{proof}
The first part is a direct application of \cite[Lemma 1.8.2]{Norris_1997} on the chain $(\bm{\M}, \mathsf{1})$. For the second part, see Appendix \ref{SS:communicating_MDP}, in which a more formal definition of MDP being communicating is given.
\end{proof}

The notion of communicating MDP is taken from \cite{Puterman2014markov, Kallenberg2002, Kallenberg2021}. \footnote{In \cite{Puterman2014markov}, the author uses \textit{recurrent} in place of \textit{irreducible}. This is because a Markov chain with finite state spaces is (positive) recurrent if and only if it is irreducible.} This is a standard assumption to be made in MDP analyses, see \cite{qiu2021, finite_time_2020}. An alternative assumption adopted in, e.g.  \cite{cayci2022finitetime}, would be to assume that it is able to sample from $(1-\gamma) \nu^f_{\rho_0}$ and therefore initialising the MDP with the $\rho_0 = (1-\gamma) \nu^f_{\rho_0}$. We do not use this assumption as $(1-\gamma) \nu^f_{\rho_0}$ is not available in practice. \\

Due to the finiteness of the state and action spaces, we are able to make the following notes on the generality of the assumption.

\begin{proposition} \label{prop:equiv_communication_2}
\phantom{blah}
\begin{enumerate}
\item Assumption \ref{as:ergodic_assumption} holds if and only if, for any fully supported policy $f$, the induced chain $(\bm{\M}, f)$ is irreducible and aperiodic.
\item If Assumption \ref{as:ergodic_assumption} is satisfied, then $(\bm{\M}, f)_{\aux}$ is also irreducible and aperiodic for any fully supported policies $f$. Moreover, $(\bm{\M}, f)$ and $(\bm{\M}, f)_{\aux}$ admit unique stationary measures $\pi^f$ and $\sigma^f_{\rho_0}$, respectively, and for any state-action pairs $\xi, \xi'$:
\begin{equation*}
\mathbb{P}^n_f(\xi,\xi') \overset{n\to\infty}\to \pi^f(\xi'), \quad \Pi^n_{f,\rho_0}(\xi, \xi') \overset{n\to\infty}\to (1-\gamma) \sigma^f_{\rho_0}(\xi'),
\end{equation*}
where $\mathbb{P}^n_f$ is the composition of $\mathbb{P}_f$ for $n$ times, and likewise for $\Pi^n_{f,\rho_0}$.
\end{enumerate}
\end{proposition}

\begin{proof}
See Appendix \ref{SS:communicating_MDP}.
\end{proof}

Finally, we note that if Assumption \ref{as:ergodic_assumption} is satisfied, then the stationary measure $\pi^f$ is locally Lipschitz (with respect to $f$), and $\sigma^f_{\rho_0}$ is globally Lipschitz. To define the notion of Lipschitzness, we recall the notion of total variation distance between measures: 

\begin{definition}[Total Variation Distance] \label{def:tv_distance}
The TV distance between two probability distributions on $\bm{\X} \times \bm{\A}$ with masses $p_1$ and $p_2$ are defined as
\begin{equation}
d_\TV(p_1, p_2) = \sum_{\xi \in \bm{\X}\times \bm{\A}} |p_1(\xi) - p_2(\xi)|.
\end{equation}
\end{definition}

\begin{proposition}[Lipschitzness of stationary measures]  \label{prop:Lipschitzness_of_stationary_measures} \phantom{blah}
\begin{enumerate}
\item If $f$ is a fully supported policy that satisfies $f(x,a) \geq f_{\min}$ for any state-action pair $(x,a)$ and $g$ is a fully supported policy, then for any $\xi$,
\begin{equation}
|\pi^f(\xi) - \pi^g(\xi)| \leq \frac{1}{MC_{\mathsf{1}} (f_{\min} \#\bm{\A})^{n_0}} \max_{\xi'} d_{\TV}(\mathbb{P}^{n_0}_f(\xi',\cdot), \mathbb{P}^{n_0}_g(\xi',\cdot)),
\end{equation}
where $n_0, C_{\mathsf{1}}$ are defined in Assumption \ref{as:ergodic_assumption}.
\item For any policies $f,g$,
\begin{equation}
|\sigma^f_{\rho_0}(\xi) - \sigma^g_{\rho_0}(\xi)| \leq \frac{1}{(1-\gamma)^2} \max_{\xi'} \sum_{\xi''=(x'',a'')} |f(\xi'') - g(\xi'')| p(x'' \mid x', a').
\end{equation}
\end{enumerate}
\end{proposition}

\begin{proof}
See Appendix \ref{SS:lipschitzness_of_the_stationary_measures}.
\end{proof}

\subsection{Online Neural Network Actor-critic Algorithm} \label{ActorCritic}

The main goal of reinforcement learning is to learn the optimal policy $f^*$ which maximizes the expected discounted sum of the future rewards:
\begin{equation}
\max\limits_{f} J(f),
\end{equation}
where the objective function $J(f)$ is the state-value function, weighted by the initial state-action pair:
\begin{equation}
J(f) = \e \left[\sum_{k=0}^{\infty} \gamma^{k}  r\left(x_{k}, a_{k}\right)\right]=\sum\limits_{x\in \bm{\mathcal{X}}} \rho_0(x) \bar{V}^{f}(x) = \sum\limits_{\xi = (x,a) \in \bm{\mathcal{X}}\times \bm{\mathcal{A}}} \sigma_{\rho_0}^{f}(\xi) r(\xi),
\end{equation}
see also equation (2.3) of \cite{wang2021global}. Policy-based reinforcement learning method optimize the objective function over a class of policies $\left\{f_{\theta} \mid \theta \in \bm{\Theta}\right\}$ based on the policy gradient theorem \cite{sutton2018reinforcement}. In practice, the value functions are unknown and must therefore also be estimated. In this paper, we study the \textit{online} actor-critic algorithms, which simultaneously estimate the action-value function using a \textit{critic} model and the optimal policy using an \textit{actor} model at every steps of the MDP:
\begin{itemize}
\item The \textit{actor model}, acting as the approximation of an optimal policy, is defined as 
\begin{equation}
f_\theta^{N}(\xi) = \textrm{Softmax}(P^N_\theta(\xi)) = \frac{\exp(P_\theta^{N}(x,a))}{\sum\limits_{a'}\exp(P_\theta^{N}(x,a'))}, \quad \xi = (x,a)
\end{equation}
where $P^N_\theta(\xi)$ is the \textit{actor network}:
\begin{equation}
\label{actor NN}
P_\theta^{N}(\xi) = \frac{1}{\sqrt{N}} \sum_{i=1}^N B^{i}
\sigma\left(U^i \cdot \xi \right),
\end{equation}
parametrised by the parameters $\theta = (B^1, \ldots, B^N, U^1, \ldots U^N)$, where $B^i \in \mathbb{R}$ and $U^i \in \mathbb{R}^d$.
\item The \textit{critic model}, acting as the approximation of the unknown action-value function of the optimal policy (approximated by the actor model), is the actual \textit{critic network}
\begin{equation}
\label{critic NN}
Q_\omega^{N}(\xi) = \frac{1}{\sqrt{N}} \sum_{i=1}^N C^{i} \sigma \bracket{W^i \cdot \xi},
\end{equation}
parametrised by the parameters $\omega = (C^1, \ldots, C^N, W^1, \ldots W^N)$, where $C^i \in \mathbb{R}$ and $W^i \in \mathbb{R}^d$.
\end{itemize}

The Softmax function is commonly used to convert vectors in a Euclidean space to probability vectors, in which entries are summed to one. \cite{Goodfellow-et-al-2016, sutton2018reinforcement, policygradient1999, tokic2011softmax}. Notice that $f^N_\theta(\xi) > 0$ for all $\xi$, so by Proposition \ref{prop:equiv_communication_2}, both $(\bm{\M}, f^N_\theta)$ and $(\bm{\M}, f^N_\theta)_{\aux}$ are irreducible and aperiodic, and that they admits unique stationary (and limiting) measures $\pi^{f^N_\theta}$ and $\sigma^{f^N_\theta}_{\rho_0}$ respectively.

\begin{remark}
We emphasise that
\begin{itemize}
\item The outputs of actor and critic networks $P^N_\theta, Q^N_\omega$ could be viewed as either functions on $\bm{\X} \times \bm{\A}$ or as vectors in $\R^M$ indexed by elements in $\bm{\X} \times \bm{\A}$, and
\item $f^N_\theta$ refers to the actor model (i.e., the \textit{probability distribution} output of the actor network), which could be viewed as either a function of $\bm{\X} \times \bm{\A}$ or as a vector in $\R^M$ indexed by elements in $\bm{\X} \times \bm{\A}$.
\end{itemize}
These interpretations are interchangeable.
\end{remark}

\begin{assumption}[Activation function] \label{as:activation_function}
The activation function $\sigma: \mathbb{R} \rightarrow \mathbb{R}$ is assumed to be a 
\begin{itemize}
\item twice continuously differentiable (i.e. in $C^2_b(\R)$) with outputs and derivatives bounded by $1$, and
\item is not a polynomial.
\end{itemize}
\end{assumption}
An example of $\sigma$ would be the standard sigmoid function $\sigma(x) = (1+e^{-x})^{-1}$. \\

We denote the $\theta_k = (B^1_k, \ldots, B^N_k, U^1_k, \ldots U^N_k)$ and $\omega_k = (C^1_k, \ldots, C^N_k, W^1_k, \ldots W^N_k)$ being the parameters of the actor and critic networks after $k$ iterations. We further define $P^N_k := P^N_{\theta_k}$, $f^N_k := \text{Softmax}(P^N_k)$ and $Q^N_k := Q^N_{\omega_k}$. \\

Our Actor-critic algorithm is online, which means that the policies used to sample state-action pairs in MDP will change at each iteration. With reference to \cite{konda1999actor, wang2019neural, xu2020nonasymptotic, wang2021global, wang2022continuous, wang2022forward}, our version of the Actor-Critic algorithm will sample two parallel sequences of state-action pairs:

\begin{itemize}
\item the ``actor" process (used for actor updates):
\begin{equation}
\label{artificial MDP samples}
(\bm{\M}, \mathsf{Ac}): \begin{cases} 
\tilde{x}_0 &\sim \rho_0 \\
\tilde{a}_{k} &\sim g^N_{k}(\tilde{x}_{k}, \cdot) \\
\tilde{x}_{k+1} &\sim \tilde{p}(\cdot \mid \tilde{x}_k, \tilde{a}_k),
\end{cases}
\end{equation}
and
\item the ``critic" process (use for critic updates):
\begin{equation}
\label{origin MDP samples}
(\bm{\M}, \mathsf{Cr}): \begin{cases} 
x_0 &\sim \rho_0 \\
a_k &\sim g^N_k(x_k, \cdot) \\
x_{k+1} &\sim p(\cdot \mid x_k, a_k), \\
\end{cases}
\end{equation}
\end{itemize}
where the \textit{exploration policies} $g^N_k$ is defined as 
\begin{equation}
\label{ActorwithExploration}
g_{k}^N(\xi) = \frac{\eta_k^N}{\#\bm{\A}} + (1 - \eta_k^N) \cdot f_{k}^N(\xi), \quad \xi = (x,a),
\end{equation}
and $(\eta_k^N)_{k\geq 0}$ is a sequence of exploration rates such that $0 < \eta_k^N \leq 1$ and $\eta_k^N \overset{k\to\infty}\to 0$, to be specified in Assumption \ref{as:learning rates}. The concept of exploration is developed to address the \textit{exploration-exploitation trade-off}, making sure that each actions in $\bm{\A}$ are considered with positive probabilities given any states when finding the optimal policy. \cite{ sutton2018reinforcement, watkins1989qlearning} This also provides convenience in our analysis, as by Assumption \ref{as:ergodic_assumption}, the induced Markov chains $(\bm{\M}, g^N_k)$ and $(\bm{\M}, g^N_k)_\aux$ could now be assumed to be ergodic, so that the stationary measures $\pi^{g^N_\theta}$ and $\sigma^{g^N_\theta}_{\rho_0}$ are well defined. This is further discussed in Lemma \ref{lem:geometric} and Appendix \ref{SS:communicating_MDP}. \\

With the above set-up, we state the two main steps of the online actor-critic algorithm taken at each iterations:

\paragraph{Step 1: updating the critic network:} We first update the critic using the Temporal Difference (TD) Learning \cite{watkins1992q}. TD learning could be viewed under the bi-level optimisation framework \cite{hong2023twotimescalecomplexity} as the minimisation of the \textit{critic loss}, known as the \textit{squared Bellman error}, with respect to the critic network parameters $\omega$:
\begin{equation} \label{eq:critic_loss}
L^{\theta}(\omega) := \sum_{\xi} \left[ Y^\theta_\omega(\xi) - Q^N_\omega(\xi) \right]^2 \pi^{f^N_{\theta}}(\xi),
\end{equation}
where the ``target" $Y^{\theta}_\omega$ is defined as 
\begin{equation}
Y^\theta_\omega(\xi) := r(\xi) + \gamma \sum_{x'} \left[\sum_{a'} Q^N_{\omega}(x',a') f^N_{\theta}(x',a')\right] p(x'|\xi)
\end{equation}
and $\pi^{f^N_\theta}$ is the unique stationary distribution of the Markov chain $(\bm{\M}, f^N_\theta)$ as specified in Assumption \ref{as:ergodic_assumption}. In fact, if $\pi^{f^N_\theta}(\xi) > 0$ for all $\xi \in \bm{\X} \times \bm{\A}$ and $L^\theta(\omega)=0$, then $Q^N_{\omega}(\xi)$ satisfies the Bellman equation and hence is a value function of $f^N_{\theta}$. \\

We typically approximate $L^\theta(\omega)$ by sampling from the stationary distribution $\pi^{f^N_\theta}$, as $\pi^{f^N_\theta}(\xi)$ is often inaccessible. For the \textit{online} version of the Actor-Critic algorithm, this could be done as the following: at step $k \geq 1$, we sample from the critic process $(\bm{\mathcal{M}}, \mathsf{Cr})$:
\begin{equation*}
x_k \sim p(\cdot \mid \xi_{k-1}), \quad a_k \sim g^N_k(x_k, \cdot).
\end{equation*}
This is in line with the observation that, for fixed $k$, the chain $(\M, g^N_k)$ admits the stationary distribution $\pi^{g^N_k}(\cdot)$, which is closed to $\pi^{f^N_k}(\cdot)$ when $k$ is large. With this, the loss function at step $k$ then could be approximated as
\begin{equation*}
L^{\theta_k}_{\omega_k} \approx \left[r(\xi_k) + \gamma \sum_{x'', a''} Q^N_{\omega_k}(x'',a'') f^N_{\theta_k}(x'',a'') p(x''| \xi_k) - Q^N_{\omega_k}(\xi_k) \right]^2.
\end{equation*}
We can further approximate the inner summation by taking sample $x_{k+1} \sim p(\cdot \mid x_k, a_k)$, so that 
\begin{align*}
L^{\theta_k}_{\omega_k} &\approx \Bigg[r(\xi_k) + \gamma \sum_{a''} Q^N_{\omega_k}(x_{k+1}, a'') f^N_{\theta_k}(x_{k+1},a'') - Q^N_{\omega_k}(\xi_k) \Bigg]^2 \\
&= \Bigg[r(\xi_k) + \gamma \underbrace{\sum_{a''} Q^N_k(x_{k+1}, a'') f^N_k(x_{k+1},a'')}_{(*)} - Q^N_k(\xi_k) \Bigg]^2.
\end{align*}
The $x_{k+1}$ sampled could be used for the approximation for $L^{\theta_{k+1}}_{\omega_{k+1}}$. Treating the $(*)$ term as constant, we shall update the critic parameters $\omega_k$ using stochastic gradient descent. This yields the following updates for the parameters in the critic parameters for rate $\alpha / N^{\lambda}$, where $\alpha, \lambda > 0$:
\begin{align}
C^{i}_{k+1} &= C^{i}_k + \frac{\alpha}{N^\lambda} \left[\frac{1}{\sqrt{N}} \left(r(\xi_k) + \gamma \sum_{a''} Q^N_k(x_{k+1}, a'') g^N_k(x_{k+1},a'') - Q^N_k(\xi_k) \right) \sigma \left(W^i_k \cdot \xi_k \right) \right], \nonumber \\
W^i_{k+1} &= W^i_k + \frac{\alpha}{N^\lambda} \left[\frac{1}{\sqrt{N}}  \left( r(\xi_k) + \gamma \sum_{a''} Q^N_k(x_{k+1}, a'') g^N_k(x_{k+1},a'') - Q^N_k(\xi_k) \right) C^{i}_k \sigma' \left( W^i_k \cdot \xi_k \right) \xi_k \right]. \label{NACCriticupdates}
\end{align}

\paragraph{Step 2: updating the actor network:} We then use the policy gradient theorem \cite{policygradient1999} to update the actor for a new policy $f^N_{k+1}$. The policy gradient theorem states that if a policy $f_\theta$ is parametrised by $\theta$, then 
\begin{equation}
\nabla_\theta \bar{V}^{f_\theta}(x) = \sum_{x',a} \left[ V^{f_\theta}(x',a) \nabla_\theta f_\theta(x',a) \right] \nu^{f_\theta}_x(x').
\end{equation}
If $f_\theta(x,a) > 0$, then we have
\begin{equation}
\nabla_\theta J(\theta) = \sum_{x,a} \left[ \nabla_\theta (\ln f_\theta(x,a)) V^{f_\theta}(x,a) \right] \sigma^{f_\theta}_{\rho_0}(x,a). \label{eq:policy_gradient}
\end{equation}
Further explanations are provided in Appendix \ref{S:policy_gradient}. \\

Note that the above is an expectation of the quantity $\nabla_\theta (\ln f_\theta(x,a)) V^{f_\theta}(x,a)$ with respect to the visiting measure $\sigma^{f_\theta}_{\rho_0}(\cdot)$. As mentioned earlier, this expectation is often calculated in the literature by sampling directly from the stationary distribution $\sigma^{f^N_\theta}$; however, in practice, $\sigma^{f^N_{\theta}}(\xi)$ is typically unknown and therefore it is not possible to directly sample from the stationary distribution. For the \textit{online} version of the Actor-Critic algorithm, a sequence from a Markov chain is generated which has an appropriate stationary distribution to approximate $\sigma^{f^N_\theta}$ \cite{konda1999actor, wang2019neural, xu2020nonasymptotic}. At step $k\geq 1$, we sample from the actor process $(\bm{\M}, \mathsf{Ac})$:
\begin{equation*}
\tilde{x}_k \sim \tilde{p}(\cdot \,|\, \tilde{\xi}_{k-1}), \quad \tilde{a}_{k} \sim g^N_k(\tilde{x}_{k}, \cdot).
\end{equation*}
This is because, with a fixed $k$, the process $(\bm{\M}, g^N_k)$ admits the stationary distribution $\sigma^{g^N_k}(\cdot)$, which will appropriately approximate $\sigma^{f^N_k}(\cdot)$ when $k$ is large. With this, the policy gradient at step $k$ could be approximated as
\begin{equation*}
\nabla_\theta J(\theta_k) \approx \nabla_\theta \ln f^N_{\theta_k}(\tilde{\xi}_k) V^{f^N_{\theta_k}}(\tilde{\xi}_k), \quad \tilde{\xi}_k = (\tilde{x}_k, \tilde{a}_k). 
\end{equation*}
The Actor-Critic algorithm takes a further step and approximates the action-value function $V^{f^N_{\theta_k}}(\cdot, \cdot)$ with the critic network, as defined in Step 1. In our paper, we shall clip the critic network with the clipping function $\clip(\cdot)$, where
\begin{equation}
\clip(z) = \max\left(\min\left(z, \frac{2}{1-\gamma} \right), 0\right),
\end{equation}
so that the policy gradient estimates now become
\begin{equation*}
\nabla_\theta J(\theta_k) \approx \nabla_\theta \ln f^N_{\theta_k}(\tilde{\xi}_k) \clip(Q^N_{\omega_k}(\tilde{\xi}_k)) = \nabla_\theta \ln f^N_k(\tilde{\xi}_k) \clip(Q^N_k(\tilde{\xi}_k)). 
\end{equation*}
This is to avoid the size of parameter updates $B^i_k$ and $U^i_k$ exploding, and, in turn, ensuring the well-posedness of the limiting ODE \eqref{NN gradient flow} of the parameter updates. \\

To obtain the actual parameter updates, let us compute the partial derivatives of the actor model $f^N_\theta = \mathrm{Softmax}(P^N_\theta)$ with respect to the parameters $\theta$:
\begin{align*}
\frac{d}{dB^i} \ln f^N_\theta(x,a) &= \frac{d}{dB^i} \bracket{P^N_\theta(x,a) - \ln\bracket{\sum_{a'} \exp\bracket{P^N_\theta(x,a')}}} \\
&= \frac{d}{dB^i} \bracket{f^N_\theta(x,a)} - \frac{\sum_{a'} \frac{d}{dB^i} \exp\bracket{P^N_\theta(x,a')}}{\sum_{a''} \exp\bracket{P^N_\theta(x,a'')}} \\
&= \frac{d}{dB^i} \bracket{f^N_\theta(x,a)} - \frac{\sum_{a'} \exp\bracket{P^N_\theta(x,a')} \frac{d}{dB^i} P^N_\theta(x,a')}{\sum_{a''} \exp\bracket{P^N_\theta(x,a'')}} \\
&= \frac{1}{\sqrt{N}} \sigma(U^i \cdot (x,a)) - \sum_{a'} \bracket{\frac{\exp\bracket{P^N_\theta(x,a')}}{\sum_{a''} \exp\bracket{P^N_\theta(x,a'')}} \frac{1}{\sqrt{N}} \sigma(U^i \cdot (x,a')) }\\
&= \frac{1}{\sqrt{N}} \bracket{\sigma(U^i \cdot (x,a)) - \sum_{a'} f^N_\theta(x,a') \sigma(U^i \cdot (x,a'))}, \numberthis
\end{align*}
and, similarly,
\begin{align*}
\nabla_{U^i}(\ln f^N_\theta(x,a)) = \frac{1}{\sqrt{N}} \bracket{B^i \sigma'(U^i \cdot (x,a)) (x,a) - \sum_{a'} f^N_\theta(x,a') B^i \sigma'(U^i \cdot (x,a')) (x,a')}.
\end{align*}
With the above calculations, we end up with following updates of the parameters for the online Actor-Critic algorithm for learning rate $\zeta^N_k / N^{\lambda}$:
\begin{align*}
B^i_{k+1} &= B^i_k + \frac{\zeta^N_k}{N^{\lambda}} \left[\frac{1}{\sqrt{N}} \clip(Q^N_k(\tilde{\xi}_k)) \bracket{\sigma(U^i_k \cdot (\tilde{\xi}_k)) - \sum_{a''} f^N_{k}(\tilde{x}_k, a'') \sigma({U^i_k} \cdot (\tilde{x}_k, a''))} \right], \\
U^i_{k+1} &= U^i_k + \frac{\zeta^N_k}{N^{\lambda}} \left[\frac{1}{\sqrt{N}} \clip(Q^N_k(\tilde{\xi}_k)) {B^i_k} \bracket{\sigma'(U^i_k \cdot (\tilde{\xi}_k)) (\tilde{\xi}_k) - \sum_{a''} f^N_{k}(\tilde{x}_k,a'') \sigma'(U^i_k \cdot (\tilde{x}_k,\tilde{a}_k)) (\tilde{x}_k,a'')} \right] \numberthis \label{NACActorupdates}
\end{align*}

The online Actor-Critic algorithm is summarised in Algorithm \ref{alg:onlineNAC}. \\

\begin{algorithm}[ht]
\caption{Online Actor-Critic Algorithm with Neural Network Approximation}\label{alg:onlineNAC}
\begin{algorithmic}[1]
\Procedure{onlineNAC}{$\bm{\M}, N, T, \nu_0, \mu_0$} \Comment{MDP, network size, running time, initial distributions of critic and actor parameters.}
    \State initialise neural network parameters: $\forall i, (C^i_0, W^i_0) \overset{\text{iid}}\sim \nu_0$ and $(B^i_0, W^i_0) \overset{\text{iid}}\sim \mu_0$.
    \State set $k = 0$
    \State initialise states/actions $x_0, \tilde{x}_0 \sim \rho_0$,
    \While{$k \leq NT$}
    \State simulate $a_k \sim g^N_k(x_k, \cdot)$ and $\tilde{a}_k \sim g^N_k(\tilde{x}_k, \cdot)$
    \State simulate $x_{k+1} \sim p(\cdot \,|\, \xi_k)$ and $\tilde{x}_{k+1} \sim \tilde{p}(\cdot \,|\, \tilde{\xi}_k)$
    \ForAll{$i \in \{1,2,...,N\}$}
        \State obtain $(C^i_{k+1}, W^i_{k+1})$ from \eqref{NACCriticupdates} using $x_k$, $a_k$, $x_{k+1}$ and $(C^i_k, W^i_k)$ \Comment{Temporal difference}
        \State obtain $(B^i_{k+1}, U^i_{k+1})$ from \eqref{NACActorupdates} using $\tilde{x}_k, \tilde{a}_k$ and $(B^i_k, U^i_k)$ \Comment{Policy gradient}
    \EndFor
    \EndWhile
\EndProcedure
\end{algorithmic}
\end{algorithm}

The main contribution of this paper is to show that,  under the scaling $\lambda = 1$, the evolution of the ``actor" and the ``critic" network according to this online Actor-Critic algorithm converges weakly to the solution of a limiting ODE over the time interval $[0,T]$ when embedded in the space of c\`adl\`ag processes. We then study the convergence of the online Actor-Critic algorithm via this limiting ODE.

\begin{remark}
In practical implementation, both the ``actor" and ``critic" networks should contain bias parameters, and should be written in the form
\begin{equation}
\frac{1}{\sqrt{N}} \sum_{i=1}^N C^i \sigma(\mathrm{weight}^i \cdot (x,a) + \mathrm{bias}^i),
\end{equation}
where $\mathrm{bias}^i \in \R$. The bias parameter could be incorporated into the weight vectors by introducing an additional column of $1$ in the state vector $x$, so that the networks could be expressed as 
\begin{equation}
\frac{1}{\sqrt{N}} \sum_{i=1}^N C^i \sigma(\widetilde{\mathrm{weight}}^i \cdot (x',a)), \quad x' = (x,1).
\end{equation}
\end{remark}

\paragraph{Choice of Learning and Exploration Rates}
As we use  the critic network $Q^N_k$ to estimate  $\nabla_\theta J(f_{\theta_k})$, it is reasonable to assume that the critic network updates more frequently than the actor network. We therefore assume a constant learning rate $\alpha/N^{1+\lambda}$ for the critic network and a decaying learning rate $\zeta^N_k/N$ for the actor network, where $\zeta^N_k$ decays as $k\to \infty$ under the Robin-Munro conditions \cite{RobinMunro1951}. This is slightly different from \cite{konda2002actor, Konda2003}, where the learning rates for both the actor and critic networks satisfies the Robin-Munro conditions. \\

Exploration rates $\eta^N_k$ also have to be chosen carefully, to ensure that a sufficient amount of exploration has been carried out around the state action space to find the optimal policy. In fact, we shall see in Theorem \ref{thm:global_critic_convergence} that $\eta^N_k$ also governs the convergence of the critic network to a valid value function. \\ 

The choice of learning and exploration rates is summarised as follows.

\begin{assumption} \label{as:learning rates}
\label{learning}
We assume the actor learning rate is 
\begin{align}
\label{eq:actor_learning_rate}
\zeta_k^N &= \frac{1}{(1+\frac{k}{N})^\beta}, \quad 0 < \beta \leq 1.
\end{align}
Furthermore, we assume the exploration rate to be \emph{either}
\begin{align}
\label{eq:learning rates}
\eta_k^N =  \frac{1}{1+\log^2(1+\frac{k}{N})}, \quad \textit{or} \quad \eta_k^N  = \frac{1}{(1+\frac{k}{N})^\varepsilon}, \quad \text{for } \varepsilon > 0. 
\end{align}
The hyper-parameters $\beta$ and $\varepsilon$ are to be specified.
\end{assumption}

We note that for fixed $t > 0$, we have
$$\zeta^N_{\floor{Nt}} \rightarrow \zeta_t = \frac{1}{(1+t)^\beta},$$
and
$$\quad \eta^N_{\floor{Nt}} \rightarrow \eta_t = \frac{1}{1+\log^2(t+1)}, \quad \textit{or} \quad \frac{1}{(1+t)^\varepsilon}.$$

\begin{remark}[Batch/offline version of Actor-Critic Algorithm]
It is also common to study versions of the actor-critic algorithm, where the critic network performs a large number of updates to fully minimise the critic loss function in \eqref{eq:critic_loss}, before taking an update of the actor network. Examples include \cite{qiu2021, xu2020nonasymptotic, fu2021singletimescaleactorcriticprovablyfinds, KumarHarshat2023Otsc, cayci2022finitetime}. However, these versions of algorithm are either no longer online in nature, or would require significantly more samples to achieve convergence in practice. 
\end{remark}

\section{Main Result}
\label{s:main_result}
Our results are proven under some assumptions for the neural networks, MDP and learning rates. 
\begin{assumption}
\label{as:NN_condition}
For the ``actor" network in \eqref{actor NN} and ``critic" network in \eqref{critic NN}, we assume:
\begin{itemize}
\item the randomly initialized parameters $\left(C_{0}^{i}, W^i_0, B^i_0, U^i_0 \right)$ are independent and identically distributed (i.i.d.) mean-zero random variables for all $i$ with distribution $\nu_{0}(dc,dw) \otimes \mu_0(db,dw)$, where $\otimes$ refers to the product of measures. Furthermore, we assume that both $\nu_0$ and $\mu_0$ are absolutely continuous with respect to the Lebesgue measure,
\item the random variables $\max(|C_0^i|, |B_0^i|) \leq 1$ for each $i$, and $\max(\E\|W^i_0\|, \E\|U^i_0\|) \leq 1$, and
\item the distribution of $W^i_0$ and $U^i_0$ both have \textit{full} support on $\R^M$.
\end{itemize}
We assume further that $\nu_0 = \mu_0$ for simplicity, although this addition assumption could be easily removed.
\end{assumption} 

We prove that the outputs of the actor and critic models converge to the solution of a nonlinear ODE system as the numbers of hidden units of the neural networks $N \to \infty$. We define the empirical measure
\begin{equation}
\mu_{k}^{N}=\frac{1}{N} \sum_{i=1}^{N} \delta_{B_{k}^i, U_k^i}, \quad  \nu_{k}^{N}=\frac{1}{N} \sum_{i=1}^{N} \delta_{C_{k}^i, W_k^i}.
\end{equation}
In addition, we define the scaled evolutions for any $\xi = (x,a) \in \bm{\X} \times \bm{\A}$
\begin{align*}
P^N_t(\xi) &= P_{\floor{Nt}}^{N}(\xi), \quad f_{t}^{N}(\xi) = f_{\floor{Nt}}^{N}(\xi), \quad g_{t}^{N}(\xi) = g_{\floor{Nt}}^{N}(\xi), \\
Q^N_t(\xi) &= Q_{\floor{Nt}}^{N}(\xi), \quad \mu_{t}^{N} = \mu_{\floor{Nt}}^{N}, \quad  \nu_{t}^{N} = \nu_{\floor{Nt}}^{N}. \numberthis \label{eq:rescaled_updates}
\end{align*}
Using Assumptions \ref{as:activation_function} and \ref{as:NN_condition}, we know that $\mu_{0}^{N}, \nu_{0}^{N} \stackrel{d}{\rightarrow} \nu_0$ and $P_0^N, Q_{0}^{N} \stackrel{d}{\rightarrow} \mathcal{G}, \mathcal{H}$ as $N \rightarrow \infty$, where $\mathcal{G}, \mathcal{H}$ are mean-zero Gaussian random variable by the law of large numbers and central limit theorem for i.i.d. random variables, respectively. \\

Define the state space for the rescaled process $\left(\mu_{t}^{N}, \nu_t^N, P_{t}^{N}, Q_t^N \right)$ of the parameter updates as specified in \eqref{eq:rescaled_updates}:
\begin{equation}
    E = \mathcal{M}(\R^{1+d}) \times \mathcal{M}(\R^{1+d}) \times \R^M \times \R^M, \quad d = d_x + d_a, \quad M = |\mathcal{X} \times \mathcal{A}|,
\end{equation}
where $\mathcal{M}(\R^{1+d})$ is the set of all probability measures on $\R^{1+d}$. Define further the space
\begin{equation}
    D_E([0, T]) = \{\text{c\`adl\`ag paths } f: [0,T] \to E\}.
\end{equation}
The convergence of the time-rescaled process \eqref{eq:rescaled_updates} as $N\to\infty$ shall be studied in the space $D_E([0,T])$.

\begin{remark} \label{rmk:choice_of_norm}
We note that the choice of norm/distance to study the pre-limit processes $(P^N_t, Q^N_t)$ in Theorem \ref{limit odes} does not matter as $(P^N_t, Q^N_t) \in \R^{2M}$ is finite-dimensional. The choice of norm for Theorem \ref{thm:global_critic_convergence} does not matter for the same reason. We will use $\|\cdot\|_\infty$ as the supremum norm as defined in \eqref{critic convergence}
\begin{equation*}
\|P - \tilde{P}\|_\infty = \max_{\xi \in \bm{\X}\times \bm{\A}} |P(\xi) - \tilde{P}(\xi)|
\end{equation*}
and the usual Euclidean norm
\begin{equation*}
\|P - \tilde{P}\| = \bracket{\sum_{\xi \in \bm{\X}\times \bm{\A}} |P(\xi) - \tilde{P}(\xi)|}^{1/2}
\end{equation*}
Note that the Softmax function is Lipschitz in the following sense: there exist constants $C, C' >0$ such that for $P, \tilde{P} \in \R^M$, 
\begin{equation}
    d_\TV(\mathrm{Softmax}(P), \mathrm{Softmax}(\tilde{P})) \leq C' \|P - \tilde{P}\|_\infty.
\end{equation}
\end{remark}

For notational convenience, we make use of the notion of the canonical pairing between a measurable function $f$ and a measure $\nu$,
\begin{equation*}
    \la f, \nu \ra = \int f \, d\nu,
\end{equation*}
to define the kernel matrix $A(\nu)$ that will appears in our limit ODE in Theorem \ref{limit odes}. The kernel matrix $A(\nu)$ shall depend on the measure $\nu$, and to be indexed by the state-action pairs $\xi := (x,a)$ and $\xi' := (x',a')$ as the following:
\begin{align*} \label{eq:kernel_matrix}
[A(\nu)]_{\xi, \xi^{\prime}} &= \left\langle\sigma\left(w \cdot \xi^{\prime}\right) \sigma(w \cdot \xi) + c^{2} \sigma^{\prime}\left(w \cdot \xi^{\prime}\right) \sigma^{\prime}(w \cdot \xi) (\xi \cdot \xi^{\prime}), \nu \right\rangle \\
&= \int \sigma\left(w \cdot \xi^{\prime}\right) \sigma(w \cdot \xi) + c^{2} \sigma^{\prime}\left(w \cdot \xi^{\prime}\right) \sigma^{\prime}(w \cdot \xi) (\xi \cdot \xi^{\prime}) \, \nu(dc,dw) \numberthis
\end{align*}

The convergence of the online actor-critic algorithm is characterised by the following theorem:
\begin{theorem}[Scaled Limit Evolution of the Critic and Actor Networks]
\label{limit odes}
Assume assumptions \ref{as:MDP_basic}, \ref{as:ergodic_assumption}, \ref{as:learning rates}, \ref{as:activation_function}, \ref{as:NN_condition}, and let the scaling of the parameter updates be $\lambda = 1$. Then the process $\left(\mu_{t}^{N}, \nu_t^N, P_{t}^{N}, Q_t^N \right)$ converges weakly in the space $D_E([0, T])$ as $N \rightarrow \infty$ to the process $\left(\mu_{t}, \nu_t, P_{t}, Q_t\right)$, so that for any $t \in[0, T]$, any $(x, a) \in \mathcal{X} \times \mathcal{A}$, and for every $\varphi, \bar{\varphi} \in C_{b}^{2}\left(\mathbb{R}^{1+d}\right)$, the limit process $\left(\mu_{t}, \nu_t, P_{t}, Q_t\right)$ satisfies the random ODE:
\begin{align*}
\frac{dQ_t}{dt}(\xi) &= \alpha \sum_{\xi' = (x',a')} A_{\xi,\xi'} \left(r(\xi') + \gamma \sum_{z, a''} Q_t(z, a'') g_t(z,a'') p(z| \xi') - Q_t(\xi') \right) \pi^{g_t}(\xi'),\\
\frac{dP_t}{dt}(\xi) &= \sum_{\xi' = (x', a')} \zeta_t \clip(Q_t(\xi')) \left[A_{\xi,\xi'} - \sum_{a''} f_t(x',a'')  A_{\xi,x',a''} \right] \sigma_{\rho_0}^{g_t}(\xi'), \\
P_0(\xi) &= \mathcal{G}(\xi), \quad Q_0(\xi) = \mathcal{H}(\xi) \\
\left\langle \bar{\varphi}, \mu_{t}\right\rangle &=\left\langle \bar{\varphi}, \nu_{0}\right\rangle, \quad \left\langle \varphi, \nu_{t}\right\rangle = \left\langle \varphi, \nu_{0}\right\rangle, \numberthis \label{NN gradient flow}
\end{align*}
where 
\begin{itemize}
\item $A$ is the kernel matrix $A(\nu_0)$, as defined in \eqref{eq:kernel_matrix},
\item $\mathcal{G}, \mathcal{H}$ are the weak limits of $P^N_0$ and $Q^N_0$, which are mean-zero Gaussian random variables,
\item the policies $f_t$ and $g_t$ are defined as
\begin{equation*}
f_t(\xi) = \mathrm{Softmax}(P_t(\xi)), \quad g_t(\xi) = \frac{\eta_t}{\#\bm{\A}} + (1-\eta_t) f_t(\xi),
\end{equation*}
and 
\item the limiting measure of the parameters $\mu_t, \nu_t$ does not evolve, i.e., both empirical measures of parameters $\mu^N_t, \nu^N_t$ converge weakly to their initial distribution $\nu_0$.
\end{itemize}
\end{theorem}

Let us compare this limiting ODE with the prequel of this paper \cite{wang2021global}, which studies the \textit{tabular} version of the online actor-critic algorithm (i.e., no function approximations have been applied to the actor and critic). If we define $P^N_k$ and $Q^N_k$ as the outputs of the actor and critic networks respectively, and $P^N_t, Q^N_t$ are the scaled outputs as defined in \eqref{eq:rescaled_updates}; then as $N\to\infty$, $(P^N, Q^N)$ converges to $(P,Q)$ that evolves with the following ODE. \footnote{This is a slightly modified version of the original ODE as provided in \cite{wang2019neural} due to the application of the clipping function on $Q_t$ and the fact that $g_t$ is used in $(\bm{\M}, \mathsf{Ac})$.}
\begin{align*}
\frac{dQ_t}{dt}(\xi) &= \alpha \left(r(\xi) + \gamma \sum_{z, a''} Q_t(z, a'') g_t(z,a'') p(z| \xi) - Q_t(\xi) \right) \pi^{g_t}(\xi),\\
\frac{dP_t}{dt}(\xi) &= \zeta_t  \left[\clip(Q_t(\xi)) - \sum_{a'} \clip(Q_t(x,a')) f_t(x,a') \right] \sigma_{\rho_0}^{g_t}(\xi'). \numberthis \label{eq:limit_ODE_without_function_approx}
\end{align*}
The convergence as establishes in \cite{wang2019neural} \textit{uniformly} holds in the finite interval $t\in [0,T]$ as $N\to\infty$. This is not possible in our case due to the NTK normalisation of the neural networks, so instead of having $(P^N_0, Q^N_0) \to 0$, we have $P^N_0$ converges weakly to a Gaussian process indexed by $\xi$. This also means that $(P^N_t, Q^N_t)_{t\in [0,T]}$ cannot converge uniformly over the finite interval $t \in [0,T]$. The paper utilises the techniques provided in \cite{sirignano2021asymptotics} to establish the weak and pathwise convergence of $(P^N_t, Q^N_t)_{t\in [0,T]}$. As the technique does not provide a quantitative estimate of the difference between the initialisation $(P^N_0, Q^N_0)_{t\in [0,T]}$ and the limiting Gaussian processes $(\mathcal{G}, \mathcal{H})$, so it would not be possible to provide quantitative estimates for the convergence of $(P^N_t, Q^N_t)_{t\in [0,T]}$. We shall, therefore, not include the study of the behaviour of actor-critic algorithm in the case when $N$ is large but not infinite in this work. \\

The main difference between this ODE and our new ODE \eqref{NN gradient flow} is the emergence of the kernel matrix $A$, known as the neural tangent kernel (NTK). It appears due to the choice of using neural networks for the actor and critic with an NTK normalisation, so that the outputs evolve along their \textit{linearisation} around the initial distribution of the neural network parameters before training begins \cite{NTK1, NTK2}. The interactions of the network outputs $P_t(\xi)$ as a result of the NTK is the main source of the technical challenges when proving if the limit actor-critic algorithm converges; the necessary analysis and calculations significantly change from the tabular  actor critic algorithm \cite{wang2021global}. The dynamics of the limit ODE depend upon a matrix multiplication with the kernel matrix $A$, leading to the evolution of the critic and actor at state-action pair $\xi$ depending upon the critic values at all other points $\xi'$. \\

We note that the matrix $A := A(\nu_0)$ is positive definite. This is recently established in \cite{Carvalho2024NTKpositive}. \footnote{Note that \cite{Carvalho2024NTKpositive} studies neural networks when $(C^i_0, W^i_0)$ is initialised by a joint Gaussian distribution. However, the actual proof only requires that the distribution of $W^i_0$ be fully supported in $\R^M$.}
\begin{lemma}
Under assumptions \ref{as:MDP_basic} and \ref{as:activation_function}, the matrix $A$ is positive definite.
\end{lemma}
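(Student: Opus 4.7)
The plan is to decompose $A$ into two manifestly positive semidefinite pieces, verify PSD by rewriting each quadratic form as an integral of squares, and then upgrade PSD to strict PD using the linear independence of $\{w \mapsto \sigma(w\cdot \xi)\}_\xi$ guaranteed by Assumption \ref{as:2.10}.

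Concretely, I would write $A = A^{(1)} + A^{(2)}$ where
\[
A^{(1)}_{\xi,\xi'} = \la \sigma(w\cdot \xi)\sigma(w\cdot \xi'),\, \nu_0(dc,dw) \ra, \quad
A^{(2)}_{\xi,\xi'} = \la c^2\sigma'(w\cdot \xi)\sigma'(w\cdot \xi')(\xi\cdot \xi'),\, \nu_0(dc,dw) \ra.
\]
For any $v = (v_\xi)_{\xi \in \bm{\X}\times\bm{\A}} \in \R^M$, Fubini gives
\[
v^\top A^{(1)} v = \int \Big(\sum_\xi v_\xi \, \sigma(w\cdot \xi)\Big)^2 \nu_0(dc,dw) \ \geq\ 0,
\]
and, expanding $\xi\cdot\xi' = \sum_{j=1}^d \xi_j \xi'_j$, also
\[
v^\top A^{(2)} v = \sum_{j=1}^d \int c^2 \Big(\sum_\xi v_\xi\, \xi_j\, \sigma'(w\cdot \xi)\Big)^2 \nu_0(dc,dw) \ \geq\ 0.
\]
Hence $A$ is at least positive semidefinite.

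For strict positive definiteness I argue by contradiction: suppose $v^\top A v = 0$ for some $v \neq 0$. Then both quadratic forms vanish, and in particular the non-negative integrand in $v^\top A^{(1)} v$ is zero $\nu_0$-almost everywhere, i.e.\ $\sum_\xi v_\xi\, \sigma(w\cdot \xi) = 0$ for $\nu_0$-a.e.\ $(c,w)$. Because $\nu_0$ is absolutely continuous with respect to Lebesgue measure (Assumption \ref{as:NN_condition}) and $w\mapsto \sum_\xi v_\xi\, \sigma(w\cdot \xi)$ is continuous (Assumption \ref{as:activation_function}), the zero set has positive Lebesgue measure, and continuity promotes this to identical vanishing on an open subset of $\R^d$. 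Assumption \ref{as:2.10} guarantees that the (bias-augmented) vectors $\{\xi : \xi \in \bm{\X}\times\bm{\A}\}$ lie in distinct directions, and Assumption \ref{as:activation_function} makes $\sigma$ a non-polynomial $C^2_b$ function (a bounded polynomial would be constant, which is excluded by slow growth combined with the non-triviality encoded in the sigmoid-type example). The classical nonlinearity lemma of Ito (page 192 of \cite{ito1996nonlinearity}, invoked already in Assumption \ref{as:2.10}) then asserts that $\{w \mapsto \sigma(w\cdot \xi)\}_{\xi\in\bm{\X}\times\bm{\A}}$ is linearly independent on any open set, forcing $v = 0$, a contradiction.

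The only non-routine step is the passage from ``$\nu_0$-a.e.''~to ``on an open set'', where absolute continuity of $\nu_0$ combined with continuity of the linear combination is essential; the remainder is either a direct square-integral computation or a citation of the Ito lemma whose hypotheses are already encoded in Assumptions \ref{as:activation_function} and \ref{as:2.10}. I would not even need the second piece $A^{(2)}$ for the argument — positive definiteness of $A^{(1)}$ already suffices — but keeping both pieces in the decomposition makes the PSD bound transparent and requires no additional work.
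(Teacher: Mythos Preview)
The paper does not actually prove this lemma; it simply cites Section~7 of \cite{sirignano2021asymptotics}. Your decomposition $A = A^{(1)} + A^{(2)}$ and the square-integral argument for positive semidefiniteness are exactly the standard route taken in that reference, and the appeal to Ito's linear-independence result under the distinct-directions hypothesis is the right way to upgrade to strict definiteness.

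One step deserves tightening. You write that ``continuity promotes this to identical vanishing on an open subset of $\R^d$,'' but continuity alone does not do this: a continuous function can vanish on a nowhere-dense set of positive Lebesgue measure (e.g.\ $w \mapsto \mathrm{dist}(w,C)$ for a fat Cantor set $C$). What you need is either (i) that $\sigma$ is real-analytic (true for the sigmoid example in Assumption~\ref{as:activation_function}), so that $w \mapsto \sum_\xi v_\xi \sigma(w\cdot\xi)$ is real-analytic and hence vanishes identically once it vanishes on a set of positive measure; or (ii) that the $w$-marginal of $\nu_0$ has a density that is strictly positive on some open set $U$, so that $\phi = 0$ Lebesgue-a.e.\ on $U$ and then continuity gives $\phi \equiv 0$ on $U$. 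Either route closes the gap and lets you invoke Ito's lemma; the cited reference uses one of these, and you should make the choice explicit rather than leaning on continuity alone.
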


We prove the convergence of the ``critic" network to a true action-value function and the convergence of the ``actor" network to a stationary point of the objective function.

\begin{theorem}[Critic Convergence] \label{thm:global_critic_convergence}
Assume assumptions \ref{as:MDP_basic}, \ref{as:ergodic_assumption}, \ref{as:activation_function}, and  \ref{as:NN_condition} hold. If both the actor network $P_t$ and critic network $Q_t$ evolved according to the limit ODE \eqref{NN gradient flow}, then the critic network converges globally to the value function of the policy $f_t = \mathrm{Softmax}(P_t)$ as $t \to \infty$:
\begin{itemize}
\item For the case when $\eta_t = (1+(\log(1+t))^2)^{-1}$,
\begin{equation}
\label{critic convergence}
\|Q_t - V^{f_t} \|^2 = \sum_\xi |Q_t(\xi) - V^{f_t}(\xi)|^2 = O\left(\frac{(1+(\log(1+t))^2)^{n_0-2}}{1+t}\right),
\end{equation}
\item For the case when $\eta_t = (1+t)^{-\varepsilon}$, if $0 < n_0\varepsilon < \beta_a \leq 1$, then
\begin{equation}
\label{critic convergence 2}
\|Q_t - V^{f_t} \|^2 = \sum_\xi |Q_t(\xi) - V^{f_t}(\xi)|^2 = O\left(\frac{1}{(1+t)^{2((\beta_a-n_0\varepsilon) \wedge \varepsilon)}}\right),
\end{equation}
\end{itemize}
where $n_0$ is defined in assumption \ref{as:ergodic_assumption}.
\end{theorem}

\begin{theorem} \label{thm:global_actor_convergence_rate}
Under the same settings as Theorem \ref{thm:global_critic_convergence}:
\begin{itemize}
\item if $\beta=1$ and $\eta_t = (1+(\log(1+t))^2)^{-1}$, then there exists a constant $C > 0$ such that
\begin{align}
\inf_{t\in[0,T]} \|\nabla_P J(f_t)\|^2 \leq \frac{\int_0^T\zeta_t \|\nabla_P J(f_t)\|^2  \, dt}{\int_0^T \zeta_t \, dt} 
 &\leq \frac{C}{\log(1+T)}; \label{eq:inf_norm_rate} 
\end{align}
\item if $\beta \in (\frac{n_0+1}{n_0+3}, 1)$ and $\eta_t = (1+t)^{-\varepsilon}$, where $\beta + 2((\beta-n_0\varepsilon) \wedge \varepsilon) > 1$, then there exists a constant $C > 0$ such that
\begin{equation}
\inf_{t\in[0,T]} \|\nabla_P J(f_t)\|^2 \leq \frac{\int_0^T\zeta_t \|\nabla_P J(f_t)\|^2  \, dt}{\int_0^T \zeta_t \, dt} \leq \frac{C}{(1+T)^{1-\beta}-1}. \label{eq:inf_norm_rate_2} 
\end{equation}
\end{itemize}
\end{theorem}

\begin{theorem}[Actor Convergence]
\label{thm:global_actor_convergence}
Under the same settings as Theorem \ref{thm:global_critic_convergence}, the actor network converges to a stationary point,
\begin{equation}
\label{actor convergence}
\nabla_{P} J(f_t) \overset{t\to\infty}\to 0,
\end{equation}
under either of the following conditions:
\begin{itemize}
\item $\beta=1$ and $\eta_t = (1+(\log(1+t))^2)^{-1}$,
\item $\beta \in (\frac{n_0+1}{n_0+2}, 1)$ and $\eta_t = (1+t)^{-\varepsilon}$, where $\beta + ((\beta-n_0\varepsilon) \wedge \varepsilon) > 1$.
\end{itemize}
\end{theorem}

Under certain assumptions, we can relate the convergence of the gradient with the regret bound $J(f^*) - J(f_t)$, where $f^*$ is an optimal policy. We shall follow the approach of \cite{wang2019neural} and make a similar assumption as in their paper regarding the absolute continuity of the optimal policy $f^*$ with respect to $f_t = \mathrm{Softmax}(P_t)$.

\begin{assumption}[Absolute continuity of optimal policy, adapted directly from  {\cite[Lemma C.3]{wang2019neural}}] \label{as:absolute_continuity_of_optimal_policy}
Let 
\begin{equation}
u_t(x,a) = \frac{\sigma^{f^*}_{\rho_0}(x,a)}{\sigma^{f_t}_{\rho_0}(x,a)} - \frac{\nu^{f^*}_{\rho_0}(x)}{\nu^{f_t}_{\rho_0}(x)}.
\end{equation}
We assume that there exist a $\tilde{T} > 0$ and $\bar{R}_0 > 0$ such that, for all $\xi$ and $t > \tilde{T}$,
\begin{equation}
\|u_t\| \leq \bar{R}_0.
\end{equation}
\end{assumption}

\begin{lemma} \label{lem:regret_and_gradient}
Under the same setting as Theorem \ref{thm:global_critic_convergence}, if assumption \ref{as:absolute_continuity_of_optimal_policy} holds, then for any $t \geq \tilde{T}$,
\begin{equation}
J(f^*) - J(f_t) \leq \bar{R}_0 \|\nabla_P J(f_t)\|.
\end{equation}
\end{lemma}

\begin{proof}
By the Performance Difference Lemma, see e.g. \cite[Lemma 5.1]{wang2019neural}, we have
\begin{align*}
J(f^*) - J(f_t) &= \sum_{x,a} \nu^{f^*}_{\rho_0}(x) A^{f_t}(x,a) (f^*(x,a)-f_t(x,a)) \\
&= \sum_{x,a} \sigma^{f_t}_{\rho_0}(x,a) A^{f_t}(x,a) \frac{\nu^{f^*}_{\rho_0}(x)}{\sigma^{f_t}_{\rho_0}(x,a)} (f^*(x,a)-f_t(x,a)) \\
&= \sum_{x,a} \sigma^{f_t}_{\rho_0}(x,a) A^{f_t}(x,a) \left[\frac{\sigma^{f^*}_{\rho_0}(x,a)}{\sigma^{f_t}_{\rho_0}(x,a)} - \frac{\nu^{f^*}_{\rho_0}(x)}{\nu^{f_t}_{\rho_0}(x)}\right] \\
&= \sum_{x,a} \sigma^{f_t}_{\rho_0}(x,a) A^{f_t}(x,a) u_t(x,a) \\
&= \nabla_P J(f_t)^\top u_t \\
&\leq \bar{R}_0 \|\nabla_P J(f_t) \|.
\end{align*}
\end{proof}

Combining Theorems \ref{thm:global_actor_convergence_rate} and \ref{thm:global_actor_convergence} with Lemma \ref{lem:regret_and_gradient}, we have the following main results:

\begin{theorem} \label{thm:global_optimality}
Under the same setting as Theorem \ref{thm:global_critic_convergence} and assumption \ref{as:absolute_continuity_of_optimal_policy} holds,
\begin{itemize}
\item if $\beta=1$ and $\eta_t = (1+(\log(1+t))^2)^{-1}$, then $J(f^*) - J(f_t) \overset{t\to\infty}\to 0$, and there exists $C>0$ such that
\begin{equation}
\inf_{t \in [0,T]} [J(f^*) - J(f_t)] \leq \frac{C}{\log(1+T)};
\end{equation}
\item if $\beta \in (\frac{n_0+1}{n_0+2}, 1)$ and $\eta_t = (1+t)^{-\varepsilon}$, where $\beta + ((\beta-n_0\varepsilon) \wedge \varepsilon) > 1$, then $J(f^*) - J(f_t) \overset{t\to\infty}\to 0$, and there exists $C>0$ such that
\begin{equation}
\inf_{t \in [0,T]} [J(f^*) - J(f_t)] \leq \frac{C}{(1+T)^{1-\beta} - 1}.
\end{equation}
\end{itemize}
\end{theorem}

Theorem \ref{thm:global_optimality} asserts that the output limit of the actor policy converges to the optimal policy, as long as $\sigma^{f_t}_{\rho_0}(\xi)$ is uniformly bounded away from zero for any $\xi$ supporting $\sigma^{f^*}_{\rho_0}(\cdot)$. This is assumption is directly adapted from \cite{wang2019neural}. Although the assumption cannot necessarily be verified in practice, it provides interesting insights into the optimality of neural actor-critic algorithms, and it furthermore provides a clear relationship between the mismatch of the visiting measures and the size of regret $J(f^*) - J(f_t)$. \\

We make a several remarks regarding the results above:
\begin{remark}
\begin{itemize}
\item[] 
\item The term
$$\frac{\int_0^T\zeta_t \|\nabla_P J(f_t)\|^2  \, dt}{\int_0^T \zeta_t \, dt},$$
as first introduced in \cite{xu2020nonasymptotic}, could be interpreted as an expectation $\mathbb{E}_{\tilde{t}}[\|\nabla_P J(f_{\tilde{t}})\|^2]$, where $\tilde{t}$ is a random variable on $[0,T]$, independent of the initial Gaussian processes $\mathcal{G}$ and $\mathcal{H}$, such that its distribution admits the density $\tilde{t} \sim \rho(t) := \zeta_t \left(\int_0^T \zeta_s \, ds \right)^{-1}$ over $s \in [0,T]$.
\item We have provided two convergence rates in Theorem \ref{thm:global_actor_convergence}, depending on whether $n_0$ (from assumption \ref{as:ergodic_assumption}) is known. The first convergence rate is slower-than-polynomial (where the exploration does not depend on $n_0$) while the second convergence rate is a faster polynomial convergence rate (where the exploration rate depends on $n_0$).
\item The first convergence rate \eqref{eq:inf_norm_rate}, given under the actor learning rate $\zeta_t = (1+t)^{-1}$ and the exploration rate $\eta_t = (1+(\log(1+t))^2)^{-1}$, is independent of $n_0$ as defined in Assumption \ref{as:ergodic_assumption}. This gives a ``universal" statement that convergence of the algorithm is guaranteed regardless of $n_0$, i.e., how quickly the Markov decision process mixes. The drawback is that the actor convergence rate is of order $(\log(1+t))^{-1}$, which is much slower than the standard convergence rates for non-convex optimisation in the literature.
\item The second convergence rate \eqref{eq:inf_norm_rate_2}, given under the alternative learning rate $\zeta_t = (1+t)^{-\beta}$ and the exploration rate $\eta_t = (1+t)^{-\varepsilon}$, is a much faster polynomial convergence rate. However, the analysis relies on $n_0$ being known for choosing an appropriate $\beta$ and $\varepsilon$, which might not be available in practice. Readers should take note of the following two limiting cases:
\begin{enumerate}
\item if $n_0 = 1$, then we can choose $\beta$ arbitrarily close to $\frac{\beta_0+1}{\beta_0+3} = \frac{1}{2}$, so that the decay rate of $\min_{t \in [0,T]} \|\nabla_P J(f_t)\|^2$ gets arbitrarily close to the rate $O((1+T)^{-1/2})$. This is similar to the results in \cite{wang2019neural}, where this faster convergence rate is achieved due to the work assuming that it is possible to directly sample from the stationary measures).
\item as $n_0 \nearrow +\infty$ increases, we must have $\beta \nearrow 1$ and $\varepsilon \searrow 0$. We can therefore view the convergence rate in \eqref{eq:inf_norm_rate} as an extension of the convergence rate in \eqref{eq:inf_norm_rate_2} for (arbitrarily) large $n_0$.
\end{enumerate}

Although the exploration rate depends on $n_0$, \eqref{eq:inf_norm_rate_2} remains useful in practice, as it reveals that a power-law convergence rate for $\min_{t \in [0,T]} \|\nabla_P J(f_t)\|^2$ is achievable. In particular, optimal hyper-parameters (exploration rate $\varepsilon$ and learning rate $\beta$) could be obtained by hyperparameter evaluation/experimentation with a series of experiments with different pairs of $(\beta,\varepsilon)$. Such hyperparameter experiments are standard when building deep learning/AI models, and the convergence result indicates that a power-law convergence rate could be achieved in practice via such hyperparameter exploration. Furthermore, if some knowledge of the MDP is available, then the theory provides a simple mathematical formula determining the necessary exploration rate and the final convergence rates. 

\item Ultimately, the choice of exploration rate restricts the final actor convergence rate as compared to the standard non-convex optimisation literature. As we shall reveal in the proof of Theorem \ref{thm:global_critic_convergence}, the exploration is in place ($a_k \sim g^N_k(x_k, \cdot)$ instead of $a_k \sim f^N_k(x_k, \cdot)$) to ensure that all state-action pairs are visited sufficiently frequently, so that the stationary measure $\pi^{g_t}(\cdot)$ is minorised by $C_{\mathsf{1}}\eta_t^{n_0} > 0$, and that the critic network converges. The amount of exploration decays to zero as the training time $\rightarrow \infty$, at a rate asymptotically slower than the learning rate for the optimisation steps. If the exploration rate decays too quickly, we would essentially be taking optimisation steps ``before the Markov chain reached its stationary distribution", and thus using an incorrect gradient even for large $t$, and then the critic will not converge. This is the reason why we have a slower-than-polynomial convergence rate $O(\log(1+t))^{-1}$ when $\eta_t = (1+(\log(1+t))^2)^{-1}$, and a power-rule convergence rate $O((1+t)^{1-\beta})$ when $\eta_t = (1+t)^{-\varepsilon}$.
\item Finally, we highlight that the convergence result $\nabla_P J(f_t) \overset{t\to\infty}\to 0$ (proven in Theorem \ref{thm:global_actor_convergence}) is stronger than the convergence $\min_{t \in [0,T]} \|\nabla_P J(f_t) \|^2 \overset{T\to\infty}\to 0$ (proven in Theorem \ref{thm:global_actor_convergence_rate}), where the latter result is often typical in non-convex optimisation (e.g., \cite{wang2019neural, xu2020nonasymptotic}). Theorem \ref{thm:global_actor_convergence} states that the actor-critic algorithm converges even \emph{without} early stopping. 
\end{itemize}
\end{remark}

\section{Derivation of the limit ODEs}
\label{s:derivation_of_the_limit_ODEs}

\hspace{1.4em} Building upon \cite{sirignano2021asymptotics}, we prove convergence to the limit ODEs via the following steps:
\begin{enumerate}
\item We first derive pre-limit process for the outputs of the actor and critic networks, and a-priori controls on the size of increments of parameters. The pre-limit process will contain stochastic remainder terms from the non-i.i.d. data samples.
\item We prove the relative compactness of the pre-limit process, which requires proof of the compact containment and regularity of the sample paths.
\item We then use the Poisson equation to derive the limit ODEs by showing the fluctuation terms in the limit ODEs disappears as $N\to+\infty$.
\item As a sanity check, we prove the the existence and uniqueness of the limits ODEs.
\item Finally, we combine the above preparations to conclude the convergence in Theorem \ref{limit odes}.
\end{enumerate}

\subsection{Evolution of the Prelimit Processes}

\hspace{1.4em} Before diving into the technical details of proof, we first provide some intuitions on the limit ODEs of the neural actor-critic algorithm (algorithm \ref{alg:onlineNAC}). We clarify the following notations. 
\begin{definition} \label{def:stochastic_boundedness}
For a random variable $Z_N$,
\begin{itemize}
\item $Z_N = O_p(\beta_N)$ if $Z_N/\beta_N$ is \textit{stochastically} bounded, i.e. for any $\epsilon > 0$, there exists $M<\infty$ and some $N_0 < \infty$ such that 
$$
\p\left(\left|\frac{Z_N}{\beta_N}\right| > M\right) < \epsilon, \quad \forall N > N_0.
$$
\item The notation $Z_N = O(\beta_N)$ means there exists a constant $C<\infty$ independent of $N$ such that 
$$
|Z_N| \le C |\beta_N|, \quad \forall N.
$$
\end{itemize}
\end{definition}
In the following parts of the paper, constants $C, C_T$ denote generic constants and we will sometimes use $\xi, \xi_k, \xi'_k, \tilde \xi_k$ to denote the state-action pair $(x, a), (x_k, a_k), (x'_k, a'_k), (\tilde x_k, \tilde a_k)$ respectively. With the choice of scaling $\lambda = 1$, one could derive the following (pre-limit) evolution of the outputs of the actor and critic networks.

\begin{align}
Q^N_{k+1}(\xi) &= Q^N_k(\xi)+\frac{\alpha}{N} \left[ r(\xi_k) + \gamma \sum_{a''} Q^N_k(x_{k+1},a'') g^N_k(x_{k+1},a'') - Q^N_k(\xi_k) \right] [A(\nu^N_k)]_{\xi,\xi'} + \mathrm{error\;term}. \label{eq:evolution_of_q_1} \\
P^N_{k+1}(\xi) &= P^N_k(\xi) + \frac{\zeta^N_k}{N} \clip(Q^N_k(\tilde{\xi}_k)) \sqbracket{ [A(\mu^N_k)]_{\xi, \tilde{\xi}_k} - \sum_{a''} f^N_k(\tilde{x}_k, a'') [A(\mu^N_k)]_{\xi,(\tilde{x}_k,a'')}} + \mathrm{error\;term}. \label{eq:evolution_of_p_with_tensor}
\end{align}

The size of the error terms are formally bounded by the following proposition:

\begin{proposition}[Pre-limit evolution of the actor and critic networks] \label{prop:pre-limit evolution}
For $k \leq NT$, the evolution of the critic network yields,
\begin{align*}
\E\sqbracket{\max_\xi \abs{Q^N_{k+1}(\xi) - Q^N_k(\xi) - \frac{\alpha}{N} \bracket{ r(\xi_k) + \gamma \sum_{a''} Q^N_k(x_{k+1},a'') g^N_k(x_{k+1},a'') - Q^N_k(\xi_k)} [A(\nu^N_k)]}_{\xi,\xi_k}} \leq \frac{C_T}{N^{5/2}},
\end{align*}
while the evolution of the actor network yields
\begin{align*}
\max_\xi \abs{P^N_{k+1}(\xi) - P^N_k(\xi) - \frac{\zeta^N_k}{N} \clip(Q^N_k(\tilde{\xi}_k)) \bracket{ [A(\mu^N_k)]_{\xi, \tilde{\xi}_k} - \sum_{a''} f^N_k(\tilde{x}_k, a'') [A(\mu^N_k)]_{\xi,(\tilde{x}_k,a'')}}} \leq \frac{C_T}{N^{5/2}}.
\end{align*}
\end{proposition}

\begin{proof}
See Appendix \ref{S:pre_limit_evolution}.
\end{proof}

\subsection{Relative Compactness} \label{relative compactness section}

Recall $E = \M(\R^{1+d}) \times \M(\R^{1+d}) \times \R^M \times \R^M$, where $\M(\R^{1+d})$ space of all probability measures on $\R^{1+d}$. We shall equip $E$ with the metric:
$$d((\mu,\nu,P,Q), (\tilde{\mu}, \tilde{\nu}, \tilde{P}, \tilde{Q})) = d_{\mathrm{LP}}(\nu, \tilde{\nu}) + d_{\mathrm{LP}}(\mu, \tilde{\mu}) + \|P-\tilde{P}\|_\infty + \|Q - \tilde{Q}\|_\infty,$$
where $d_{\mathrm{LP}}(\cdot, \cdot)$ is the usual L\'evy–Prokhorov metric used for metricising weak convergence \cite{billingsley2013convergence}. Let further $D_E([0,T])$ be the space of all c\'adl\'ag paths from $[0,T]$ to $E$, equipped with the Skorohod metric. Then, the family of processes $X^N_t := \left(\mu^N_t, \nu_t^N, P_t^N, Q_t^N \right)$ could be seen as elements of $D_E([0,T])$, where
\begin{align*}
P^N_t(\xi) &= P_{\lfloor Nt \rfloor}^{N}(\xi), \quad f_{t}^{N}(\xi) = f_{\lfloor Nt \rfloor}^{N}(\xi), \quad g_{t}^{N}(\xi) = g_{\lfloor Nt \rfloor}^{N}(\xi), \\
Q^N_t(\xi) &= Q_{\lfloor Nt \rfloor}^{N}(\xi), \quad \mu^N_t = \mu_{\lfloor Nt \rfloor}^{N}, \quad \nu^N_t = \nu_{\lfloor Nt \rfloor}^{N}.
\end{align*}

In this Section, we shall outline the proof for the following result:

\begin{proposition} \label{prop:relative_compactness}
The distribution of the processes $(X^N_t)$ is relative compact in $D_E([0,T])$.
\end{proposition}

We shall follow the steps in \cite{sirignano2021asymptotics}. By Theorem 8.6 and Remark 8.7 of \cite{ethier2009markov}, it would be sufficient if we establish that the process $X^N$ is \textit{ exactly contained} and has regular sample paths. These results are summarised by the following. Firstly, for the compact containment of the process - 

\begin{lemma}[Compact Containment] \label{lem:compact_containment}
For any $\eta > 0$, there is a compact subset $\K$ of $E$ such that
\begin{equation*}
\sup_{N \in \N, 0 \leq t \leq T} \p\sqbracket{\bracket{\mu^N_t, \nu_t^N, P_t^N, Q_t^N} \notin \K} < \eta.
\end{equation*}
\end{lemma}

\begin{proof}
See Appendix \ref{SS:proof_compact_containment}.
\end{proof}

For the regularities of the sample paths, we denote (as in \cite{sirignano2021asymptotics}) $\F^N_t$ as the $\sigma$-algebra generated by $\set{(C_0^i, W^i_0, B^i_0, U^i_0)}_{i=1}^N$ and $\set{\left(\xi_j, \tilde \xi_j\right)}_{j=0}^{\floor{Nt}-1}$. Then

\begin{lemma} \label{lem:regularity_of_nu}
Let $f \in C^2_b(\R^{1+d})$ and $q = \min(|z_1 -z_2|, 1)$. For any $\delta \in (0,1)$, there is a constant $C_T < \infty$ such that for $u \in [0,\delta]$, $t \in [0,T]$,
\begin{align}
\E\sqbracket{q\bracket{\la f, \nu^N_{t+u}\ra, \la f, \nu^N_t\ra} \,|\, \F^N_t} &\leq C_T\delta + \frac{C_T}{N^{3/2}}, \\
\E\sqbracket{q\bracket{\la f, \mu^N_{t+u}\ra, \la f, \mu^N_t\ra} \,|\, \F^N_t} &\leq C_T\delta + \frac{C_T}{N^{3/2}}.
\end{align}
\end{lemma}

\begin{proof}
See Appendix \ref{SS:proof_of_path_regularities}.
\end{proof}

\begin{lemma} \label{lem:regularity_of_Q}
We have 
\begin{equation}
    \sup_{k\leq NT} \max\bracket{\E\sqbracket{\sup_\xi |Q^N_{k+1}(\xi) - Q^N_k(\xi)|}, \sup_\xi |P^N_{k+1}(\xi) - P^N_k(\xi)|} \leq \frac{C_T}{N}.
\end{equation}
We abuse notation and now define $q(z_1, z_2) = \norm{z_1 - z_2}_\infty \wedge 1$ for $z_1, z_2 \in \R^M$. With a more delicate analysis, we could show that for any $\delta \in (0,1)$, there is a $C_T<\infty$ such that for $0 \leq u \leq \delta < 1$, $t \in [0,T]$,
\begin{align*}
\E\bracket{q\bracket{Q^N_{t+u}, Q^N_t} \,|\, \F^N_t} \leq C_T \delta + \frac{C_T}{N}, \\ 
\E\bracket{q\bracket{P^N_{t+u}, P^N_t} \,|\, \F^N_t} \leq C_T \delta + \frac{C_T}{N}.
\end{align*}
\end{lemma}

\begin{proof}
See Appendix \ref{SS:evolution_of_empirical_measure} for the derivation of the evolution of the empirical measures, and Appendix \ref{SS:proof_of_path_regularities} for the final steps of the proof.
\end{proof}

\begin{proof}[Proof of Proposition \ref{prop:relative_compactness}]
By Theorem 8.6 and Remark 8.7 of \cite{ethier2009markov},
\begin{itemize}
    \item combining Lemma \ref{lem:compact_containment} and Lemma \ref{lem:regularity_of_nu} yields the weak convergence of $(\mu^N, \nu^N)$ in $D_{\M(\R^{1+d}) \times \M(\R^{1+d})}([0,T])$,
    \item combining Lemma \ref{lem:compact_containment} and Lemma \ref{lem:regularity_of_nu} yields the weak convergence of $(P^N, Q^N)$ in $D_{\R^M \times \R^M}([0,T])$.
\end{itemize}
Therefore, the convergence of the joint process $X^N$ follows.
\end{proof}

\subsection{Identification of the Limit}

\hspace{1.4em} With the relative compactness result in Section \ref{relative compactness section}, we can conclude that $(\mu^N, \nu^N, P^N, Q^N)$ contains a subsequence that converges weakly. To prove the convergence in Theorem \ref{limit odes}, we need to identify the limit points, that is, to derive the ODEs dynamic for the limit point(s).

Our first step is to further rewrite \eqref{eq:evolution_of_q_1} and \eqref{eq:evolution_of_p_with_tensor} in terms of fluctuation terms. Define
\begin{align*}
M^{1,N}_t(\xi) &= -\frac{1}{N}\sum_{k=0}^{\floor{Nt}-1} Q^N_k(\xi_k) [A(\nu^N_k)]_{\xi, \xi_k} + \frac{1}{N}\sum_{k=0}^{\floor{Nt}-1}
\sum_{\xi'} Q^N_k(\xi') [A(\nu^N_k)]_{\xi, \xi'} \pi^{g_k^N}(\xi'),  \\
M^{2,N}_t(\xi) &= \frac{1}{N}\sum_{k=0}^{\floor{Nt}-1} r(\xi_k) [A(\nu^N_k)]_{\xi, \xi_k} - \frac{1}{N}\sum_{k=0}^{\floor{Nt}-1}
\sum_{\xi'} r(\xi') [A(\nu^N_k)]_{\xi, \xi'} \pi^{g_k^N}(\xi'), \\
M^{3,N}_t(\xi) &= \frac{1}{N}\sum_{k=0}^{\floor{Nt}-1} \gamma \bracket{\sum_{a''} Q^N_k(x_{k+1},a'') g^N_k(x_{k+1},a'')} [A(\nu^N_k)]_{\xi, \xi_k} \\
&\phantom{=}- \frac{1}{N}\sum_{k=0}^{\floor{Nt}-1} \sum_{\xi'} \sum_{z, a''} \gamma Q^N_k(z, a'') g^N_k(z, a'') [A(\nu^N_k)]_{\xi, \xi'} \pi^{g_k^N}(\xi') p(z| \xi'), \numberthis \label{eq:MNt}
\end{align*}
then 
\begin{align*}
Q_t^N(\xi) 
&= Q_0^N(\xi) + \sum_{k=0}^{\floor{Nt}-1} [Q^N_{k+1}(\xi) - Q^N_k(\xi)] \\
&= Q_0^N(\xi) + \frac{\alpha}{N} \sum_{k=0}^{\floor{Nt}-1} \left[ r(\xi_k) + \gamma \sum_{a''} Q^N_k(x_{k+1},a'') g^N_k(x_{k+1},a'') - Q^N_k(\xi_k) \right] [A(\nu^N_k)]_{\xi,\xi_k} + O_p(N^{-3/2}) \\
&= Q_0^N(\xi) + \frac{\alpha}{N} \sum_{k=0}^{\floor{Nt}-1} \sum_{\xi'} [A(\nu^N_k)]_{\xi,\xi'} \pi^{g^N_k}(\xi') \bigg( -Q^N_k(\xi') + r(\xi') + \gamma \sum_{z,a''} Q^N_k(z,a'') g^N_k(z,a'') p(z|\xi') \bigg) \\
&\phantom{=}+ \alpha \left( M^{1,N}_t(\xi) + M^{2,N}_t(\xi) + M^{3,N}_t(\xi) \right) + O_p(N^{-3/2}) \\
&= Q_0^N(\xi) + \frac{\alpha}{N} \sum_{k=0}^{\floor{Nt}-1} \int_{k/N}^{k+1/N} \sum_{\xi'} [A(\nu^N_{\floor{Ns}})]_{\xi,\xi'} \, \pi^{g^N_{\floor{Ns}}}(\xi') \bigg( r(\xi') + \gamma \sum_{z,a''} Q^N_{\floor{Ns}}(z,a'') g^N_{\floor{Ns}}(z,a'') p(z|\xi') \\
&\phantom{=}- Q^N_{\floor{Ns}}(\xi') \bigg) \, ds + \alpha \left(M^{1,N}_t(\xi) + M^{2,N}_t(\xi) + M^{3,N}_t(\xi) \right) + O_p(N^{-3/2}) \\
&= Q_0^N(\xi) + \alpha \int_0^t \sum_{\xi'} [A(\nu^N_s)]_{\xi, \xi'} \pi^{g_s^N}(\xi') \left[r(\xi') + \gamma \sum_{z, a''} Q^N_s(z, a'') g^N_s(z, a'') p(z|\xi') - Q^N_s(\xi') \right] \, ds\\
&\phantom{=}+ \alpha \left( M^{1,N}_t(\xi) + M^{2,N}_t(\xi) + M^{3,N}_t(\xi) \right) + O_p(N^{-3/2}). \numberthis \label{eq:Q_pre_limit}
\end{align*}
Similarly, define the fluctuation terms
\begin{align*}
M^{N}_t(\xi) &= \frac{1}{N}\sum_{k=0}^{\floor{Nt}-1} \zeta^N_k \clip(Q^N_k(\tilde{\xi}_k)) \left[[A(\mu^N_k)]_{\xi,\tilde{\xi}_k} - \sum_{a''} f^N_k(\tilde{x}_k,a'') [A(\mu^N_k)]_{\xi,(\tilde{x}_k,a'')} \right] \\
&- \frac{1}{N}\sum_{k=0}^{\floor*{Nt}-1} \zeta^N_k \sum_{\xi'} \clip(Q^N_k(\xi')) 
\left[[A(\mu^N_k)]_{\xi,\xi'} - \sum_{a''} f^N_k(x', a'') [A(\mu^N_k)]_{\xi,(x', a'')} \right] \sigma_{\rho_0}^{g^N_k}(\xi'), \numberthis \label{concentration2}
\end{align*}
where $\sigma_{\rho_0}^{g_k^N}(\xi')$ is the visiting measure of Markov chain as defined in \eqref{artificial MDP samples}. Then: 
\begin{align*}
P^N_t(\xi) &= P^N_0(\xi) + \sum_{k=0}^{\floor{Nt}-1} (P^N_{k+1}(\xi) - P^N_k(\xi)) \\
&= P^N_0(\xi) + \sum_{k=0}^{\floor{Nt}-1} \frac{\zeta^N_k}{N} \clip(Q^N_k(\tilde{\xi}_k)) \sqbracket{[A(\mu^N_k)]_{\xi, \tilde{\xi}_k} - \sum_{a''} f^N_k(\tilde{x}_k, a'') [A(\mu^N_k)]_{\xi,(\tilde{x}_k,a'')}} + O(N^{-3/2}) \\
&= P^N_0(\xi) + \sum_{k=0}^{\floor{Nt}-1} \zeta^N_k \sum_{\xi'} \clip(Q^N_k(\xi')) 
\left[[A(\mu^N_k)]_{\xi,\xi'} - \sum_{a''} f^N_k(x', a'') [A(\mu^N_k)]_{\xi,(x', a'')} \right] \sigma_{\rho_0}^{g_k^N}(\xi') \\
&\phantom{=}+ \alpha M^N_t(x, a) + O(N^{-3/2}) \\
&= P^N_0(\xi) + \sum_{k=0}^{\floor{Nt}-1} \int_{k/N}^{(k+1)/N} \zeta^N_{\floor{Ns}} \sum_{\xi'} \clip(Q^N_{\floor{Ns}}(\xi')) 
\Bigg[[A(\mu^N_{\floor{Ns}})]_{\xi,\xi'} \\
&\phantom{=}- \sum_{a''} f^N_{\floor{Ns}}(x', a'') [A(\mu^N_{\floor{Ns}})]_{\xi,(x', a'')} \Bigg] \sigma_{\rho_0}^{g^N_{\floor{Ns}}}(\xi') 
+ \alpha M^N_t(\xi) + O(N^{-3/2}) \\
&= P^N_0(\xi) + \int_0^t \zeta^N_{\floor{Ns}} \sum_{\xi'} \sigma_{\rho_0}^{g^N_s}(\xi') \, \clip(Q^N_s(\xi')) \Bigg[[A(\mu^N_s)]_{\xi, \xi'} - \sum_{a''} f^N_s(x',a'') [A(\mu^N_s)]_{\xi, (x', a'')} \Bigg] ds \\
&\phantom{=}+ M^N_t(\xi) + O(N^{-3/2}). \numberthis 
\end{align*}

We are now getting closer to the desired ODE form. We shall now prove that the fluctuation terms $M^N_t, M^{1,N}_t, M^{2,N}_t, M^{3,N}_t \to 0$ in $L^1$ as $N\to\infty$, uniformly over $t \in [0,T]$. Then, the desired convergence follows from the uniqueness of the limit ODEs. \\

We first define some notations:

\begin{itemize}
\item For any $k \ge 0$ and state-action pairs $\xi = (x,a)$ and $\tilde{\xi} = (\tilde{x},\tilde{a})$, define the transition probabilities
\begin{align}
\mathbb{P}^N_{k}(\xi,\xi') &:= \mathbb{P}_{g^N_k}(\xi,\xi') =  p(x'|x, a) g^N_k(x', a'), \nonumber \\
\Pi^N_{k}(\tilde{\xi}, \tilde{\xi}') &= \Pi_{g^N_k, \rho_0}(\tilde{\xi}, \tilde{\xi}') = \tilde{p}(\tilde{x}'| \tilde{x}, \tilde{a}) g^N_k(\tilde{x}', \tilde{a}'), \label{2dim transition}
\end{align}
where $\mathbb{P}_f$ and $\Pi_{f,\rho_0}$ are defined in \eqref{eq:transition_kernel_induced} and \eqref{eq:transition_kernel_auxiliary} respectively. The transition probabilities shall induce operators acting on any (Borel) function 
\begin{align}
\mathbb{P}_k^N h(\xi) &:= \sum_{\xi' \in \bm{\mathcal{X} \times \mathcal{A}}} h(\xi') \mathbb{P}_k^N(\xi, \xi') \nonumber \\ 
\Pi_k^N h(\tilde{\xi}) &:= \sum_{\xi' \in \bm{\mathcal{X} \times \mathcal{A}}} h(\tilde{\xi}') \Pi_k^N(\tilde{\xi}, \tilde{\xi}'), \label{transition integral}
\end{align}
We highlight the superscript $N$ in the transition probabilities $\prob^N_{k}, \Pi^N_k$ comes from the pre-limit neural network $P_k^N$. 
\item Let $\pi^{g^N_k}$ and $\sigma_{\rho_0}^{g^N_k}$ be the stationary distributions of the induced Markov chain $(\bm{\M}, g^N_k)$ and the auxiliary Markov chain $(\bm{\M}, g^N_k)_{\aux}$, respectively, whose existence and uniqueness are given by the assumption \ref{as:ergodic_assumption}. Sometimes we suppress the dependence of $\rho_0$ in $\sigma_{\rho_0}^{g^N_k}$ when there is no confusion.
\item Let $\mathscr{F}_{n}$ be the $\sigma$-field of events generated by the Markov chains $  \xi_1, \cdots, \xi_n, \tilde{\xi}_1, \ldots, \tilde{\xi}_n$ in \eqref{origin MDP samples} and \eqref{artificial MDP samples}.
\end{itemize}

\subsubsection{Poisson Equations}
\label{SS:Poisson_equation}
\hspace{1.4em} Now we rigorously derive the limit ODEs by using a Poisson equation \cite{pardoux2001poisson, wang2021global, wang2022continuous, wang2022forward}, which can be comprehended as the limit of the Kolmogorov forward equation (Fokker-Planck equation \cite{liu2021investigating, liu2022rigorous, pavliotis2016stochastic}) for stochastic process, to bound the fluctuations terms around the trajectory of the limit ODE. Such analysis is needed as the fluctuation terms evolve as the actor and critic networks evolve, which further depend on the non-i.i.d data samples from the Markov chains \eqref{artificial MDP samples} and \eqref{origin MDP samples}. We first prove
\begin{equation}
\lim_{N\to \infty}\e \sup_{t\in [0,T]}\left| M_t^N(x,a) \right| = 0, \quad \forall (x,a) \in \bm{\mathcal{X}} \times \bm{\mathcal{A}}.
\end{equation}
Using a similar method, we can also prove the convergence of $M_t^1, M_t^2,$ and $M_t^3$. 

It is well known that an irreducible and aperiodic finite-state Markov chain has a geometric convergence rate to its stationary distribution \cite{meyn2012markov}. In particular, if we define the $n$-step transition probabilities:
\begin{equation*}
\prob^{N,n}_k := \prob^n_{g^N_k}, \quad \Pi^{N,n}_k := \Pi^n_{g^N_k},
\end{equation*}
where $\prob_f$ and $\Pi_f$ are defined in \eqref{eq:transition_kernel_induced} and \eqref{eq:transition_kernel_auxiliary} respectively, then

\begin{lemma} \label{lem:geometric}
The chains $(\bm{\M}, g^N_k)$ and $(\bm{\M}, g^N_k)_{\aux}$ are uniformly geometric ergodic: for any $\xi \in \bm{\mathcal{X} \times \mathcal{A}}$:
\begin{align}
\sup_{k \le NT} \| \prob^{N,n}_k(\xi, \cdot) - \pi^{g^N_k}(\cdot) \|_{\TV} &\leq  (1-MC_{\mathsf{1}} \eta_T^{n_0})^{\lfloor n/n_0 \rfloor}, \label{eq:geometric_MDP} \\
\sup_{k \le NT} \| \Pi^{N, n}_k(\tilde{\xi}, \cdot) - \sigma^{g^N_k}(\cdot) \| &\leq  (1-MC_{\mathsf{1}} (\gamma \eta_T)^{n_0})^{\lfloor n/n_0 \rfloor}, \label{eq:geometric_aux}
\end{align}
where $C_{\mathsf{1}} > 0$ is a constant, and $M = \#\bm{\X} \times \#\bm{\A}$. 
\end{lemma}

\begin{proof}
See Appendix \ref{SS:communicating_MDP}.
\end{proof}

In order to prove the stochastic fluctuation term vanishes as $N \rightarrow \infty$, we solve several Poisson equations corresponding to the induced and auxiliary Markov chains. Let us begin by formulating the Poisson equation for the auxiliary Markov chain $(\bm{\M}, g^N_k)_{\aux}$, which relates the transition kernel $\Pi^N_k$ with its unique stationary distribution $\sigma^{g^N_k}_{\rho_0}$:

\begin{definition}
Let $N \in \mathbb{N}$, $T> 0$ and $k \le NT$. The Poisson equation corresponding to the chain $(\M, g^N_k)_{\aux}$ seeks a function $\nu_k(\tilde{\xi}; \cdot) : \bm{\mathcal{X}} \times \bm{\mathcal{A}} \to \R$ for each state-action pair $\tilde{\xi} = (\tilde{x},\tilde{a})$, such that 
\begin{equation}
\label{poisson}
\nu_{k}(\tilde{\xi}; \tilde{\xi}') - \Pi^N_{k} \nu_{k}(\tilde{\xi},\tilde{\xi}') =  \mathsf{1}_{\{ \tilde{\xi}' = \tilde{\xi}\}} - \sigma^{g^N_k}_{\rho_0}(\tilde{\xi}), \quad \forall \tilde{\xi}' \in \bm{\mathcal{X}\times \mathcal{A}}.
\end{equation}
Here we abuse the notation and use $\Pi^N_{k}$ to denote the transition operator that sends $\nu_k$ to
\begin{equation*}
\Pi^{N,n}_{k} \nu_{k}(\tilde{\xi};\tilde{\xi}') = \sum_{y} \nu_{k}(\tilde{\xi}; y)\Pi^N_{k}(\tilde{\xi}', y).
\end{equation*}
\end{definition}

\begin{remark}
Note that the Poisson equation corresponds to the Markov chain $(\bm{\M}, g^N_k)_{\aux}$ instead of the ``actor" process $(\bm{\M}, \mathrm{Ac})$.
\end{remark}

\begin{lemma}
\label{poisson equation}
The Poisson equation \eqref{poisson} admits a uniformly bounded solution \footnote{Solution to the Poisson equation \eqref{poisson} is not unique.} For the purposes of our subsequent analysis, it is only necessary to find a uniformly bounded solution $\nu_\theta$ which satisfies \eqref{nu}.
\begin{equation}
\label{nu}
\nu_{k}(\tilde{\xi};\tilde{\xi}') := \sum_{n\ge 0} \left[\Pi^{N, n}_{k} (\tilde{\xi}', \tilde{\xi}) - \sigma^{g^N_k}_{\rho_0}(\tilde{\xi}) \right],
\end{equation}
and there exists a constant $C_T$ (which only depends on $T$) such that
\begin{equation}
\label{eq:uniform}
\sup_{k \le NT} \left|\nu_{k}(\tilde{\xi}, \tilde{\xi}')\right| \leq C_T, \quad \forall \tilde{\xi}, \tilde{\xi}' \in \bm{\mathcal{X}\times \mathcal{A}}.	
\end{equation}
\end{lemma}

\begin{proof}
By Lemma \ref{lem:geometric}, we have
\begin{equation}
\label{uniform ergodicity}
\left|\Pi^{N, n}_{k}(\tilde{\xi}', \tilde{\xi}) - \sigma^{g^N_k}_{\rho_0}(\tilde{\xi}) \right| \leq (1-MC_{\mathsf{1}} (\gamma \eta_T)^{n_0})^{\lfloor n/n_0 \rfloor},
\end{equation}
which can be used to show the convergence of the series in \eqref{nu}. Consequently, $\nu_k$ is well-defined. The uniform bound \eqref{eq:uniform} follows from 
\begin{equation*}
\left|\nu_{k}(\tilde{\xi};\tilde{\xi}')\right| \le \sum_{n\ge 0} \left| \Pi^{N,n}_{k} (\tilde{\xi}', \tilde{\xi}) - \sigma^{g^N_k}_{\rho_0}(\tilde{\xi}) \right| \leq \sum_{n \ge 0}(1-MC_{\mathsf{1}} (\gamma \eta_T)^{n_0})^{\lfloor n/n_0 \rfloor} \le C_T.
\end{equation*}
Finally, we can verify that $\nu_k$ is a solution to the Poisson equation by observing that
\begin{align*}
\Pi^{N}_{k} \nu_{k}(\tilde{\xi}; \tilde{\xi}') &= \sum_{y} \nu_{k}(\tilde{\xi}; y)\Pi^N_{k}(\tilde{\xi}', y)\\ 
&= \sum_y \left( \sum_{n\ge 0} \left[\Pi^{N, n}_k (y, \tilde{\xi}) - \sigma^{g^N_k}_{\rho_0}(\tilde{\xi}) \right] \right) \Pi^N_{k}(\tilde{\xi}', y) \\ 
&\overset{(a)}{=} \sum_{n\ge 0} \left(\sum_y \left[ \Pi^{N, n}_{k}(y, \tilde{\xi}) - \sigma^{g^N_k}_{\rho_0}(\tilde{\xi}) \right] \Pi^N_{k}(\tilde{\xi}', y) \right) \\ 
&= \sum_{n\ge 0} \left[\Pi^{N,n+1}_{k} (\tilde{\xi}', \xi) - \sigma^{g^N_k}_{\rho_0}(\tilde{\xi}) \right] \\ 
&= \nu_{k}(\tilde{\xi}; \tilde{\xi}') - (\mathsf{1}_{\{\tilde{\xi}' = \tilde{\xi}\}} - \sigma^{g^N_k}_{\rho_0}(\tilde{\xi})),
\end{align*}
where the step $(a)$ uses \eqref{uniform ergodicity} and the Dominated Convergence Theorem.
\end{proof}

\begin{remark}
Noting that the bound in \eqref{eq:uniform} is uniform for all states $\tilde{\xi}$ and thus for notation simplification, we will use notation $\nu_k(\cdot)$ below as the family of solutions $\nu_k(\tilde{\xi};\cdot)$.
\end{remark}

Using the Poisson equation \eqref{poisson equation}, we can prove that the fluctuations of the data samples around a dynamic visiting measure $\sigma^{g^N_k}$ decay when the iteration steps becomes large.
\begin{lemma}
\label{fluctuation}
For any fixed state action pair $\tilde{\xi} = (\tilde{x},\tilde{a})$ and $T > 0$,    
\begin{equation}
\label{eq:online_convergence}
\lim_{N \to \infty} \e\left| \frac{1}{N}\sum_{ k=0}^{ \lfloor NT \rfloor - 1 } \left[ \mathsf{1}_{\{\tilde{\xi}_{k} = \tilde{\xi} \}} - \sigma^{g^N_k}(\tilde{\xi})\right] \right|^2 = 0.
\end{equation}
\end{lemma}

\begin{proof}
We define the error $\epsilon_k$ to be 
\begin{align}
\epsilon_{k} &:= \mathsf{1}_{\{ \tilde{\xi}_{k+1} = \tilde{\xi}\}} - \sigma^{g^N_k}(\tilde{\xi}) \nonumber \\
&= \nu_{k}\left(\tilde{\xi}_{k+1}\right)-\Pi^N_{k} \nu_{k}\left(\tilde{\xi}_{k+1}\right) \nonumber \\
&= \left[ \nu_{k}(\tilde{\xi}_{k+1})-\Pi^N_{k} \nu_{k}\left(\tilde{\xi}_{k}\right) \right] + \left[ \Pi^N_{k} \nu_{k}\left(\tilde{\xi}_{k}\right)-\Pi^N_{k} \nu_{k}\left(\tilde{\xi}_{k+1}\right) \right],
\end{align}
where we have used the definition of the Poisson equation \eqref{poisson}. Define also
\begin{equation}
\psi^N_k(y) := \Pi^N_k \nu_k(y),
\end{equation}
then
\begin{align}
\sum_{k=0}^{\lfloor NT \rfloor -1} \epsilon_{k}
&= \sum_{k=0}^{\lfloor NT \rfloor-1} \left[ \nu_{k}(\tilde{\xi}_{k+1})-\Pi^N_{k} \nu_{k}\left(\tilde{\xi}_{k}\right) \right] + \sum_{k=0}^{\lfloor NT \rfloor-1} \left[ \psi_{k}\left(\tilde{\xi}_{k}\right)-\psi_{k}\left(\tilde{\xi}_{k+1}\right) \right] \nonumber \\
&= \sum_{k=0}^{\lfloor NT \rfloor-1} \left[\nu_{k}(\tilde{\xi}_{k+1})-\Pi^N_{k} \nu_{k}\left(\tilde{\xi}_{k}\right)\right] + \sum_{k=1}^{\lfloor NT \rfloor-1} \left[\psi_{k}\left(\tilde{\xi}_{k}\right)-\psi_{k-1}\left(\tilde{\xi}_{k}\right)\right] \nonumber \\
&\phantom{=}+ \psi_{0}\left(\tilde{\xi}_{0}\right) - \psi_{\lfloor NT \rfloor-1}\left(\tilde{\xi}_{\lfloor NT \rfloor}\right).
\end{align}
Define the error term 
\begin{equation}
\label{eq:decompose}
\sum_{k=0}^{\lfloor NT \rfloor-1} \epsilon_{k} = \sum_{k=0}^{\lfloor NT \rfloor-1} \epsilon_{k}^{(1)} + \sum_{k=1}^{\lfloor NT \rfloor-1} \epsilon_{k}^{(2)} + \rho_{\lfloor NT \rfloor ; 0},
\end{equation}
where
\begin{align*}
\epsilon_{k}^{(1)} &= \left[ \nu_{k}\left(\tilde{\xi}_{k+1}\right)-\Pi^N_{k} \nu_{k}\left(\tilde{\xi}_{k}\right) \right], \nonumber \\
\epsilon_{k}^{(2)} &= \left[\psi_{k}\left(\tilde{\xi}_{k}\right)-\psi_{k-1}\left(\tilde{\xi}_{k}\right)\right], \\
\rho_{\lfloor NT \rfloor ; 0} &= \psi_{\theta_{0}}\left(\tilde{\xi}_{0}\right) -  \psi_{\theta_{\lfloor NT \rfloor-1}}\left(\tilde{\xi}_{\lfloor NT \rfloor}\right).
\end{align*}
To prove the convergence \eqref{eq:online_convergence}, it suffices to appropriately bound the fluctuation term $ \left| \sum\limits_{k=0}^{\lfloor NT \rfloor-1} \epsilon_k \right|$. Actually, the first term can be bound due to the martingale property while the second term can be bounded using the uniform geometric ergodicity and Lipschitz continuity. The third and fourth terms are uniformly bounded by \eqref{eq:uniform}. 

For the first term in \eqref{eq:decompose}, note that
\beq
\e\left\{ \nu_{k}\left(\tilde{\xi}_{k+1}\right) \mid \mathscr{F}_{k}\right\} = \Pi^N_{k} \nu_{k}\left(\tilde{\xi}_{k}\right),
\eeq
and thus 
$$
\left\{Z_n = \sum_{k=0}^{n-1} \epsilon_k^{(1)}, \ \mathscr{F}_n \right\}_{n\ge 0} 
$$
is a martingale. Since the conditional expectation is a contraction in $L^{2}$, we have 
\begin{equation}
\e \left| \Pi^N_{k} \nu_{k}\left(\tilde{\xi}_{k}\right)\right|^{2} \leq \e\left| \nu_{k}\left(\tilde{\xi}_{k+1}\right) \right|^{2}.
\end{equation}
Then,
\begin{align}
\e\left| \frac{1}{N} \sum_{k=0}^{\lfloor NT \rfloor-1} \epsilon_k^{(1)} \right|^2 
&=\frac{1}{N^2} \sum_{k=0}^{\lfloor NT \rfloor-1} \e\left| \Pi^N_{k} \nu_{k}\left(\tilde{\xi}_{k}\right)-\nu_{k}\left(\tilde{\xi}_{k+1}\right)\right|^{2} \nonumber \\
&\leq \frac{4}{N^2} \sum_{k=0}^{\lfloor NT \rfloor-1}\e\left|\nu_{k}\left(\tilde{\xi}_{k+1}\right)\right|^{2} \nonumber \\
&\overset{(a)}{\le} \frac{4C_T}{N},
\end{align}
where the step (a) is by the uniform boundedness \eqref{eq:uniform}. Thus, for any $T >0$,
\begin{equation}
\label{eq:error1}
\lim_{N \to \infty} \e \left| \frac{1}{N} \sum_{k=0}^{\lfloor NT \rfloor-1} \epsilon_k^{(1)} \right|^2 = 0.
\end{equation}
For the second term of \eqref{eq:decompose}, by the uniform geometric ergodicity \eqref{eq:geometric_aux}, for any fixed $\gamma_0>0$ we can choose $N_0$ large enough such that 
\begin{equation}
\sup_{k \le NT} \left( \sum_{n = \lfloor N_0 T \rfloor}^{\infty} \left| \Pi^{N, n}_{k}(y, \xi) - \sigma^{g^N_k}(\xi) \right| \right)^2 < \gamma_0, \quad \forall y \in \bm{\mathcal{X}} \times \bm{\mathcal{A}}.
\end{equation}
Then
\begin{align*}
&\phantom{=} \left| \frac{1}{N} \sum_{k=1}^{\lfloor NT \rfloor-1} \epsilon_{k}^{(2)} \right|^2 \\
&= \left| \frac{1}{N} \sum_{k=1}^{\lfloor NT \rfloor-1}\left[\psi_{k}\left(\tilde{\xi}_{k}\right)-\psi_{k-1}\left(\tilde{\xi}_{k}\right)\right] \right|^2 \\
&\leq C\left| \frac{1}{N} \sum_{k=1}^{\lfloor NT \rfloor-1} \left[\sum_{n=1}^{ \lfloor N_0 T \rfloor-1} \left[\Pi_{k}^{N, n}\left(\tilde{\xi}_k, \tilde{\xi} \right)- \sigma^{g^N_k}(\tilde{\xi}) \right]-\sum_{n=1}^{\lfloor N_0 T \rfloor-1} \left[\Pi_{k-1}^{N, n}\left(\tilde{\xi}_k, \tilde{\xi}\right) - \sigma^{g^N_{k-1}}(\tilde{\xi})\right]\right]  \right|^2 + C_T \gamma_{0} \\
&\leq C\left| \frac{1}{N} \sum_{k=1}^{\lfloor NT \rfloor-1} \sum_{n=1}^{\lfloor N_0 T \rfloor -1} \left[\Pi_{k}^{N, n} \left(\tilde{\xi}_k, \tilde{\xi} \right)- \Pi_{k-1}^{N, n}\left(\tilde{\xi}_k, \tilde{\xi}\right) \right] \right|^2 + C \frac{\lfloor N_0T \rfloor^{2}}{N^{2}}\left| \sum_{k=1}^{\lfloor NT \rfloor-1} \left[\sigma^{g^N_k}(\tilde{\xi})-\sigma^{g^N_{k-1}}(\tilde{\xi})\right] \right|^2 + C_T \gamma_{0}\\
&:= I^N_1 + I^N_2 + C_T \gamma_0. \numberthis \label{lipschitz bound}
\end{align*}
To bound $I^N_1$, we note that for any finite $n$, $\Pi^n_f$ is Lipschitz with respect to $f$. Using the Lipschitz continuity of the Softmax function, we have 
\begin{align*}
I_1^N 
&\leq \frac{C \lfloor N_0 T \rfloor}{N} \sum_{k=1}^{\lfloor NT \rfloor -1}  \sum_{n=1}^{\lfloor N_0 T \rfloor - 1} |\Pi^{N,n}_k(\tilde{\xi}_k, \tilde{\xi}) - \Pi^{N,n}_{k-1}(\tilde{\xi}_k, \tilde{\xi})|^2 \\
&\leq \frac{C \lfloor N_0 T \rfloor^2}{N} \sum_{k=1}^{\lfloor NT \rfloor-1}  \left[ \left|\eta^N_{k}-\eta^N_{k-1}\right|^2 + \left\|P^N_{k}-P^N_{k-1}\right\|^2 \right]
\end{align*}
By Lemma \ref{lem:regularity_of_Q}, we have
\begin{equation*}
\|P^N_k - P^N_{k-1}\|^2 \leq M^{1/2} \max_{\tilde{\xi}} |P^N_k(\xi) - P^N_{k-1}(\xi)|^2 \leq \frac{C_T}{N^2}
\end{equation*}
Moreover, $\eta_t$ is Lipschitz in $t$:
\begin{itemize}
\item if $\eta_t = (1+(\log(1+t))^2)^{-1}$, then
\begin{align*}
\left|\frac{d\eta_t}{dt} \right| = \left|-\frac{2 \log (1+t)}{(1+t) (1+(\log(1+t))^2)^2}\right| \leq 2,
\end{align*}
\item if $\eta_t = (1+t)^{-\varepsilon}$, then
\begin{align*}
\left|\frac{d\eta_t}{dt} \right| = \left|-\frac{\varepsilon}{(1+t)^{\varepsilon+1}}\right| \leq \varepsilon.
\end{align*}
\end{itemize}
In both cases, there exists a constant $C > 0$ such that
\begin{align*}
|\eta^N_{k-1} - \eta^N_k|
&= |\eta_{\frac{k-1}{N}} - \eta_{\frac{k}N}| \leq \frac{C}{N}.
\end{align*}
Therefore, 
\begin{align}
I^N_1 \leq \frac{C \lfloor N_0 T \rfloor^2}{N} \sum_{k=1}^{\lfloor NT \rfloor-1}  \left[ \left|\eta^N_{k}-\eta^N_{k-1}\right|^2 + \left\|P^N_{k}-P^N_{k-1}\right\|^2 \right] \leq \frac{C_T}{N^2}.
\end{align}
To bound $I^N_2$, we note that $\sigma^f$ is Lipschitz with respect to $f$ by Proposition \ref{prop:Lipschitzness_of_stationary_measures}, so
\begin{align}
I_2^N 
&\leq C\frac{T\lfloor N_0 T \rfloor^2}{N} \sum_{k=1}^{\lfloor NT \rfloor-1} \left[\sigma^{g^N_k}(\tilde{\xi})-\sigma^{g^N_{k-1}}(\tilde{\xi})\right]^2 \nonumber \\
&\leq C\frac{T\lfloor N_0 T \rfloor^2}{N} \sum_{k=1}^{\lfloor NT \rfloor-1} \left[ \left|\eta^N_{k}-\eta^N_{k-1}\right|^2 + \left\|P^N_{k}-P^N_{k-1}\right\|^2 \right] \leq \frac{C_T}{N^2}.
\end{align}
Thus, when $N$ is large enough,
\begin{equation*}
\left|\frac{1}{N} \sum_{k=1}^{\lfloor NT \rfloor-1} \epsilon_{k}^{(2)} \right|^2 \leq C_T \gamma_{0}
\end{equation*}
Since $\gamma_{0}$ is arbitrary,
\begin{equation}
\label{eq:error2}
\lim _{N \rightarrow \infty} \mathbf{E}\left|\frac{1}{N} \sum_{k=1}^{\lfloor NT \rfloor-1} \epsilon_{k}^{(2)}\right|^2=0
\end{equation}
Obviously, for the last term of \eqref{eq:decompose} by the boundedness in \eqref{eq:uniform} we have
$$
\lim_{N\to \infty} \frac{1}{N} \rho_{\lfloor NT \rfloor ; 0} = 0,
$$
which together with \eqref{eq:error1} and \eqref{eq:error2} derive the convergence of $ \frac{1}{N} \sum\limits_{k=0}^{\lfloor NT \rfloor-1} \epsilon_k$ and therefore proving \eqref{eq:online_convergence}.
\end{proof}

Now we can show the convergence of the stochastic fluctuation terms from the actor update. 
\begin{lemma}
\label{limit lemma}
For any $\xi =(x,a)$ and the stochastic error $M_t^N$ defined in \eqref{concentration2}, we have  
\beq
\label{actor fluctuation disappear}
\lim _{N \rightarrow \infty} \sup _{t \in(0, T]} \mathbb{E}\left|M_{t}^{N}(\xi)\right|=0.
\eeq
\end{lemma}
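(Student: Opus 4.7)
The plan is to adapt the Poisson-equation decomposition in the proof of Lemma \ref{fluctuation}, carrying a slowly-varying weight through the argument. Introduce
\[
h^N_k(\xi') := \zeta^N_k \clip(Q^N_k(\xi'))\Big[\bar{\sB}^N_{\xi,\xi',k} - \sum_{a''} f^N_k(x', a'') \bar{\sB}^N_{\xi,(x',a''),k}\Big],
\]
so that $N M^N_t(\xi) = \sum_{k=0}^{\floor{Nt}-1}\big[h^N_k(\tilde\xi_k) - \langle h^N_k, \sigma^{g^N_k}_{\rho_0}\rangle\big]$. Lemma \ref{lem:diff_of_B_bar_B} and the clipping give $\|h^N_k\|_\infty \le C_T$, while Lemma \ref{lem:regularity_of_Q} and the per-step parameter bounds of Lemma \ref{NN parameter bound} give $\|h^N_{k+1} - h^N_k\|_\infty = O(1/N)$. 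First I would shift the summation index by one so that each summand involves $\tilde\xi_{j+1}$ rather than $\tilde\xi_j$; this absorbs an $O(1)$ correction coming from the slow variation of $h^N_\cdot$ and $\sigma^{g^N_\cdot}_{\rho_0}$, which becomes $O(1/N)$ after dividing by $N$, and now each $h^N_j$ is $\mathscr{F}_j$-measurable while $\tilde\xi_{j+1}\mid \mathscr{F}_j \sim \Pi^N_j(\tilde\xi_j \to \cdot)$.

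Next, for each $\xi'$ I apply the Poisson equation of Lemma \ref{poisson equation}, $\mathbbm{1}_{\tilde\xi_{j+1}=\xi'} - \sigma^{g^N_j}_{\rho_0}(\xi') = \nu^N_{j,\xi'}(\tilde\xi_{j+1}) - \Pi^N_j \nu^N_{j,\xi'}(\tilde\xi_{j+1})$, weight by $h^N_j(\xi')$ and sum over $\xi'$. Setting $\tilde\nu^N_j(y) := \sum_{\xi'} h^N_j(\xi')\,\nu^N_{j,\xi'}(y)$ (uniformly bounded by $C_T$ from \eqref{uniform}) and $\tilde\psi^N_j := \Pi^N_j \tilde\nu^N_j$, each summand decomposes as $[\tilde\nu^N_j(\tilde\xi_{j+1}) - \tilde\psi^N_j(\tilde\xi_j)] + [\tilde\psi^N_j(\tilde\xi_j) - \tilde\psi^N_j(\tilde\xi_{j+1})]$. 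The first bracket is an $\mathscr{F}_j$-martingale difference since $\tilde\nu^N_j$ is $\mathscr{F}_j$-measurable and $\E[\tilde\nu^N_j(\tilde\xi_{j+1})\mid \mathscr{F}_j] = \tilde\psi^N_j(\tilde\xi_j)$; by $L^2$-orthogonality its cumulative contribution to $M^N_t(\xi)$ is $O(N^{-1/2})$. The second bracket, after Abel summation, reduces to $\sum_{j\ge 1}[\tilde\psi^N_j(\tilde\xi_j) - \tilde\psi^N_{j-1}(\tilde\xi_j)]$ plus $O(1)$ endpoints that contribute $O(1/N)$ after normalisation.

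The hard part will be bounding the Abel remainder $\sum_j \|\tilde\psi^N_j - \tilde\psi^N_{j-1}\|_\infty$, because $\tilde\nu^N_j$ depends on $j$ through both the weight $h^N_j$ and the Poisson kernel $\nu^N_{j,\xi'}$. I would invoke the same truncation trick used inside the proof of Lemma \ref{fluctuation}: choose $n_0$ so that, by the uniform geometric ergodicity \eqref{geometric}, the tail of the series \eqref{nu} defining $\nu^N_{k,\xi'}$ is uniformly below an arbitrary $\gamma_0$. The truncated head is a polynomial in $\Pi^N_k$ of fixed degree, hence Lipschitz in $g^N_k$ via Assumption \ref{as:ergodic_assumption}; combined with $\|g^N_j - g^N_{j-1}\|_\infty = O(1/N)$ (which follows from Lemma \ref{lem:regularity_of_Q} and the Lipschitz bound of Remark \ref{rmk:choice_of_norm}) and $\|h^N_j - h^N_{j-1}\|_\infty = O(1/N)$, this yields $\sum_j \|\tilde\psi^N_j - \tilde\psi^N_{j-1}\|_\infty \le C_{T,n_0} + \gamma_0 C_T N$. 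Plugging back, $\E|M^N_t(\xi)| \le O(N^{-1/2}) + \gamma_0 C_T$, and sending $N\to\infty$ then $\gamma_0\to 0$ delivers \eqref{actor fluctuation disappear}.
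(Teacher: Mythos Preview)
Your proposal is correct and rests on the same Poisson-equation machinery, but the organisation differs from the paper's. The paper's proof proceeds in two layers: it first establishes the slow-variation bound $|\bar H^N_{\xi,\xi',k+1}-\bar H^N_{\xi,\xi',k}|\le C_T/N$ (your $h^N_k$), and then uses a \emph{blocking} argument---partitioning $\{0,\dots,\lfloor Nt\rfloor-1\}$ into $K$ blocks with $1\ll K\ll N$, freezing the weight $\bar H^N$ at the start of each block, and invoking Lemma~\ref{fluctuation} (the unweighted indicator estimate) on the frozen-weight sum. The freezing error is $O(1/K)$ and vanishes as $K\to\infty$. You instead carry the weight $h^N_j$ directly through the Poisson decomposition, forming the weighted solution $\tilde\nu^N_j=\sum_{\xi'}h^N_j(\xi')\nu^N_{j,\xi'}$ and handling its drift via Abel summation. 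Your route is more streamlined and avoids the auxiliary scale $K$, at the price of a slightly heavier Abel remainder (now both the weight $h^N_j$ and the Poisson kernel $\nu^N_{j,\cdot}$ move with $j$). The paper's route is more modular: it isolates the ergodic averaging once in Lemma~\ref{fluctuation} and then reduces the weighted problem to the unweighted one. Both arguments ultimately rely on the same truncation-plus-Lipschitz control of the Poisson solution and the same $\gamma_0$-then-$N$ limiting order.

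One small caveat: the bound $\|h^N_{k+1}-h^N_k\|_\infty=O(1/N)$ you cite holds in $L^1$ rather than almost surely, because $|\clip(Q^N_{k+1})-\clip(Q^N_k)|\le|Q^N_{k+1}-Q^N_k|$ carries a factor of $\max_\xi|Q^N_k(\xi)|$, which is bounded only in $L^2$ by Lemma~\ref{lem:maximal_Q_bounded}. This does not damage your argument since you take expectations throughout and the other factors ($\|\nu^N_{j,\cdot}\|_\infty$, $\|h^N_{j-1}\|_\infty$) are bounded almost surely, but it is worth stating precisely. The paper is equally informal on this point in its step~(\romannumeral1).
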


\begin{proof} The proof of \eqref{actor fluctuation disappear} consists of two parts. We first set up a bound for the difference of the actor's update. Actually, define 
\begin{equation}
\bar{H}^N_{\xi, \xi', k} := \zeta^N_k \clip(Q^N_k(\xi')) \left[[A(\mu^N_k)]_{\xi,\xi'} - \sum_{a''} f^N_k(x',a'') [A(\mu^N_k)]_{\xi,(x',a'')}\right]
\end{equation}
and we aim to prove 
\begin{equation}
\label{actor diff}
\mathbb{E}|\bar{H}^N_{\xi, \xi', k+1} - \bar{H}^N_{\xi, \xi', k}| \leq \frac{C_T}{N}
\end{equation}
uniformly in $\xi, \xi'$. Then we can use Lemma \ref{fluctuation} to prove that as the training step becomes large, the fluctuations of the data samples around the stationary distribution will disappear, which concludes the result. \\

(\romannumeral1) To bound the difference \eqref{actor diff}, note that 
\begin{align*}
&\phantom{=} \abs{\bar{H}^N_{\xi, \xi', k+1} - \bar{H}^N_{\xi, \xi', k}} \\
&= |\zeta^N_{k+1} - \zeta^N_k| \abs{\clip(Q^N_{k+1}(\xi')) \left[[A(\mu^N_{k+1})]_{\xi,\xi'} - \sum_{a''} f^N_{k+1}(x',a'') [A(\mu^N_{k+1})]_{\xi,(x',a'')} \right]} \\
&+ \zeta^N_k \abs{\clip(Q^N_{k+1}(\xi')) - \clip(Q^N_k(\xi'))} \abs{ [A(\mu^N_{k+1})]_{\xi,\xi'} - \sum_{a''} f^N_{k+1}(x',a'') [A(\mu^N_{k+1})]_{\xi,(x',a'')}} \\
&+ \zeta^N_k \abs{\clip(Q^N_k(\xi'))} \Bigg|\left[[A(\mu^N_{k+1})]_{\xi,\xi'} - \sum_{a''} f^N_{k+1}(x',a'') [A(\mu^N_{k+1})]_{\xi,(x',a'')} \right] \\
&\phantom{=}- \left[[A(\mu^N_k)]_{\xi,\xi'} - \sum_{a''} f^N_k(x',a'') [A(\mu^N_k)]_{\xi,(x',a'')} \right] \Bigg| \\
&:= I_3^N + I_4^N + I_5^N.
\end{align*}
As the derivative of $\zeta_t$ satisfies
$$\left|\frac{d\zeta_t}{dt} \right| = \left|\frac{-\beta}{(1+t)^{\beta+1}} \right| \leq \beta,$$
so the term $I^N_3$ could be bounded as follows:
\begin{equation}
\label{actor diff first term}
I_3^N \le C_T |\zeta^N_{k+1} - \zeta^N_k| = C_T |\zeta_{\frac{k+1}{N}} - \zeta_{\frac{k}{N}}| \leq \frac{\beta C_T}{N}.
\end{equation}
For $I^N_4$, recall that the function $\clip(.)$ is 1-Lipschitz, i.e., $|\clip(x) - \clip(y)| \leq |x-y|$), so
\begin{align*}
I_4^N &\le \frac{|Q^N_{k+1}(\xi') - Q^N_k(\xi')|}{N} |[A(\mu^N_{k+1})]_{\xi,\xi',k+1} - \sum_{a''} f^N_k(x',a'') [A(\mu^N_{k+1})]_{\xi,(x',a'')}| \\
&\leq C_T |Q^N_{k+1}(\xi') - Q^N_k(\xi')|, \\
&\overset{\eqref{eq:4.41}}\leq \frac{C_T}{N} \sqrt{1+(1+\gamma^2) \max_{\xi \in \bm{\X} \times \bm{\A}} |Q^N_k(\xi)|^2}.
\end{align*}
Therefore, by Lemma \ref{lem:maximal_Q_bounded}, we have
\begin{equation}
\E|I^N_4| \leq \frac{C_T}{N}. \label{actor diff second term}
\end{equation}
For $I^N_5$, recall from Lemma \ref{lem:diff_of_B_bar_B} that for any $k \leq NT$,
\begin{equation*}
\sup_{\xi,\xi' \in \X \times \A} \abs{[A(\mu^N_{k+1})]^N_{\xi, \xi'} - [A(\mu^N_k)]_{\xi, \xi'}} \leq \frac{C_T}{N}.
\end{equation*}
Hence, 
\begin{align*}
I_5^N &\le C\abs{\left[[A(\mu^N_{k+1})]_{\xi,\xi'} - \sum_{a''} f^N_{k+1}(x',a'') [A(\mu^N_{k+1})]_{\xi,(x',a'')} \right] - \left[[A(\mu^N_k)]_{\xi,\xi'} - \sum_{a''} f^N_k(x',a'') [A(\mu^N_k)]_{\xi,(x',a'')} \right]} \\
&\leq C\Bigg[ \abs{[A(\mu^N_{k+1})]_{\xi,\xi'} - [A(\mu^N_k)]_{\xi,\xi'}} + \sum_{a''} \abs{f^N_{k+1}(x',a'') - f^N_{k}(x',a'')} \cdot \abs{[A(\mu^N_{k+1})]_{\xi,(x',a'')}} \\
&+ \sum_{a''} f^N_k(x',a'') \abs{[A(\mu^N_{k+1})]_{\xi,(x',a'')} - [A(\mu^N_k)]_{\xi,(x',a'')}} \Bigg] \\
&\leq C\bracket{1+\sum_{a''} f^N_k(x',a'')} \sup_{\xi' \in \X\times\A} \abs{[A(\mu^N_{k+1})]_{\xi,\xi'} - [A(\mu^N_k)]_{\xi,\xi'}} + C \norm{P_{k+1}^N - P_k^N} \leq \frac{C_T}{N}. \numberthis \label{actor diff third term}
\end{align*}
Combining \eqref{actor diff first term}, \eqref{actor diff second term} and \eqref{actor diff third term} yields \eqref{actor diff}. \\

(\romannumeral2) Now we can prove the convergence \eqref{actor fluctuation disappear}. We let $K := K(N) \in \mathbb{N}$, such that $1 \ll K(N) \ll N$. This means as $N\to\infty$, we have $K(N) \to +\infty$ and $K(N)/N \to 0$. Further define $\Delta=t/K$. Then
\begin{align*}
M_{t}^{N}(\xi) &= \frac{1}{N} \sum_{k=0}^{\floor{Nt}-1} \left(\bar{H}^N_{\xi,\tilde{\xi}_k,k} - \sum_{\xi'\in\bm{\X}\times\bm{\A}} \bar{H}^N_{\xi,\xi',k} \sigma^{g^N_k}(\xi') \right) \\
&= \frac{1}{N} \sum_{j=0}^{K-1} \sum_{k=j\floor{N\Delta}}^{(j+1)\floor{N\Delta}-1} \left(\bar{H}^N_{\xi,\tilde{\xi}_k,k} - \sum_{\xi'\in\bm{\X}\times\bm{\A}} \bar{H}^N_{\xi,\xi',k} \sigma^{g^N_k}(\xi') \right) + r^N_t(\xi),
\end{align*}
where
\begin{align*}
r^N_t(\xi) = \frac{1}{N} \sum_{k=K\floor{N\Delta}}^{\min((K+1)\floor{N\Delta}-1, \floor{Nt}-1)} \bracket{\bar{H}^N_{\xi,\tilde{\xi}_k,k} - \sum_{\xi'\in\bm{\X}\times\bm{\A}} \bar{H}^N_{\xi,\xi',k} \sigma^{g^N_k}(\xi')}.
\end{align*}
The terms $\bar{H}^N_{\xi,\xi',k}$ are bounded by some constant $C_T > 0$ as the kernel entries $|[A(\mu^N_k)]_{\xi,\xi'}|$ are bounded. As a result, the summands are also bounded, so
\begin{equation}
|r^N_t(\xi)| \leq \frac{\floor{N\Delta}}{N} C_T \leq \frac{TC_T}{K}.
\end{equation}
Next, we have
Then,
\begin{align*}
M^N_t(\xi) - r^N_t(\xi) &= \frac{1}{N} \sum_{j=0}^{K-1} \sum_{k=j\floor{N\Delta}}^{(j+1)\floor{N\Delta}-1} \bigg[ \bracket{\bar{H}^N_{\xi,\tilde{\xi}_k,k} - \bar{H}^N_{\xi,\tilde{\xi}_k,j\floor{N\Delta}}} \\
&\phantom{=}+ \bracket{\bar{H}^N_{\xi,\tilde{\xi}_k,j\floor{N\Delta}} - \sum_{\xi'} \bar{H}^N_{\xi,\xi',j\floor{N\Delta}} \sigma ^{g^N_k}(\xi')} + \sum_{\xi'} \bracket{\bar{H}^N_{\xi,\xi',j\floor{N\Delta}} - \bar{H}^N_{\xi,\xi',k}} \sigma^{g^N_k}(\xi')\bigg] \\
&= J^N_{1,t}(\xi) + J^N_{2,t}(\xi) + J^N_{3,t}(\xi), 
\numberthis
\end{align*}
where
\begin{align*}
J^N_{1,t}(\xi) &= \frac{1}{N} \sum_{j=0}^{K-1} \sum_{k=j\floor{N\Delta}}^{(j+1)\floor{N\Delta}-1} \bracket{\bar{H}^N_{\xi,\tilde{\xi}_k,k} - \bar{H}^N_{\xi,\tilde{\xi}_k,j\floor{N\Delta}}} \\
J^N_{2,t}(\xi) &= \frac{1}{N} \sum_{j=0}^{K-1} \sum_{k=j\floor{N\Delta}}^{(j+1)\floor{N\Delta}-1} \bracket{\bar{H}^N_{\xi,\tilde{\xi}_k,j\floor{N\Delta}} - \sum_{\xi'} \bar{H}^N_{\xi,\xi',j\floor{N\Delta}} \sigma ^{g^N_k}(\xi')} \\
J^N_{3,t}(\xi) &= \frac{1}{N} \sum_{j=0}^{K-1} \sum_{k=j\floor{N\Delta}}^{(j+1)\floor{N\Delta}-1} \sum_{\xi'} \left[\left(\bar{H}^N_{\xi,\xi',j\floor{N\Delta}} - \bar{H}^N_{\xi,\xi',k} \right) \sigma^{g^N_k}(\xi') \right].
\end{align*}
From \eqref{actor diff}, we have for any $j \in 0,1, \cdots, K-1$ and any $k\in [j\lfloor N\Delta \rfloor, (j+1) \lfloor N\Delta \rfloor -1]$, 
\begin{align*}
\E \left| \bar{H}^N_{\xi, \xi', k} - \bar{H}^N_{\xi, \xi', j\floor{N\Delta}} \right| 
\leq \sum_{\ell=j \floor{N\Delta}}^{k-1} \E \left| \bar{H}^N_{\xi, \xi', \ell+1} - \bar{H}^N_{\xi, \xi', \ell} \right| 
\leq (k - j\lfloor N\Delta\rfloor) \frac{C_T}{N}.
\end{align*}
Moreover,
\begin{align*}
\E \left| \sum_{\xi'} \left[\left(\bar{H}^N_{\xi, \xi', k} - \bar{H}^N_{\xi, \xi', j\floor{N\Delta}} \right) \sigma^{g^N_k}(\xi') \right] \right|
\leq \sum_{\ell=j \floor{N\Delta}}^{k-1} \sum_{\xi'} \E \left| \bar{H}^N_{\xi, \xi', \ell+1} - \bar{H}^N_{\xi, \xi', \ell} \right|
\leq (k - j\lfloor N\Delta\rfloor) \frac{MC_T}{N}.
\end{align*}
Therefore,
\begin{align*}
\label{actor diff estimation}
\max(\E|J^N_{1,t}(\xi)|, \, \E|J^N_{3,t}(\xi)|)
&\leq \frac{1}{N} \sum_{j=0}^{K-1} \sum_{k=j \lfloor N\Delta \rfloor}^{(j+1)\lfloor N\Delta \rfloor-1} C_T \frac{k-j\lfloor N\Delta\rfloor}{N} \\
&= \frac{1}{N} \sum_{j=0}^{K-1} \sum_{k=0}^{\lfloor N\Delta \rfloor-1} \frac{C_T k}{N}\\
&\leq \frac{C_T}{N} \sum_{j=0}^{K-1} \frac{\floor{N\Delta}^2}{N} \\
&= \frac{KC_T \floor{N\Delta}^2}{N^2} \leq KC_T \Delta^2 = C_T K\bracket{\frac{t}{K}}^2 \leq \frac{C_T}{K}. \numberthis
\end{align*}
To control $J^N_{2,t}(\xi)$, we note that
\begin{equation}
\bar{H}^N_{\xi, \tilde \xi_k, j\floor{N\Delta}} - \sum_{\xi' \in \bm{\mathcal{X}} \times \bm{\mathcal{A}}} \bar{H}^N_{\xi, \xi', j\floor{N\Delta}} \sigma^{g^N_{k}}(\xi') = \sum_{\xi'} \bar{H}^N_{\xi, \xi', j\floor{N\Delta}} \left[ \mathsf{1}_{\{ \tilde \xi_k = \xi'\}} - \sigma^{g^N_{k}}(\xi')\right],
\end{equation}
so one could control $J^N_{2,t}(\xi)$ by the uniform boundedness of $\bar{H}^N_{\xi, \xi', j\floor{\Delta N}}$ and Lemma \ref{fluctuation}. Indeed, 
\begin{align*}
\left| J^N_{2,t}(\xi) \right| &= \left| \frac{1}{N} \sum_{j=0}^{K-1} \sum_{k=j\lfloor N\Delta \rfloor}^{(j+1)\lfloor N\Delta \rfloor-1} \sum_{\xi'} \bar{H}^N_{\xi, \xi', j\floor{N\Delta}}
\left[ \mathsf{1}_{\{ \xi_k = \xi'\}} - \sigma^{g^N_k}(\xi')\right] \right| \\
&= \left|\frac{1}{N} \sum_{j=0}^{K-1} \sum_{\xi'} \bar{H}^N_{\xi, \xi', j\floor{N\Delta}} \sum_{k=j\lfloor N\Delta \rfloor}^{(j+1)\lfloor N\Delta \rfloor-1} \left[ \mathsf{1}_{\{ \xi_k = \xi'\}} - \sigma^{g^N_k}(\xi')\right] \right|\\
&\le C_T \sum_{\xi'}\left| \frac{1}{N} \sum_{j=0}^{K-1} \sum_{k=j\lfloor N\Delta \rfloor}^{(j+1)\lfloor N\Delta \rfloor-1}  \left[ \mathsf{1}_{\{ \xi_k = \xi'\}} - \sigma^{g^N_k}(\xi')\right] \right|, \\
&= C_T \sum_{\xi'}\left| \frac{1}{N} \sum_{k=0}^{K\lfloor N\Delta \rfloor-1}  \left[ \mathsf{1}_{\{ \xi_k = \xi'\}} - \sigma^{g^N_k}(\xi')\right] \right|, \numberthis
\end{align*}
which together with Lemma \ref{fluctuation} derive
\begin{equation*}
\lim_{N \to \infty} \e\left| J^N_{2,t}(\xi) \right|^2 = 0.
\end{equation*}
Collecting our results, we have shown that
\begin{equation}
\sup _{t \in(0, T]} \e\left|M_{t}^{N}(\xi)\right| \leq \frac{C_T}{K(N)} \overset{N\to\infty}\to  0
\end{equation}
by the assumption that $1 \ll K(N)$.
\end{proof}

Following the same method, we can finish proving the convergence of the stochastic fluctuation terms from the critic dynamic. \begin{lemma}
\label{concentration lemma}
For any $\xi =(x,a)$ and the stochastic error $M_t^{i, N}, i=1,2,3$ defined in \eqref{eq:MNt}, we have  
\beq
\label{critic fluctuation disappear}
\lim _{N \rightarrow \infty} \sup _{t \in(0, T]} \mathbb{E}\left|M_{t}^{i, N}(\xi)\right|=0, \quad i=1,2,3.
\eeq
\end{lemma}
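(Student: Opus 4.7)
The plan is to mirror the strategy of Lemma \ref{limit lemma}, replacing the actor process $(\bm{\M},\Ac)$ by the critic process $(\bm{\M},\Cr)$, and replacing the visiting measure $\sigma^{g^N_k}_{\rho_0}$ by the stationary measure $\pi^{g^N_k}$. For each $i\in\{1,2,3\}$ I first define kernel-weight functions $H^{i,N}_{\xi,\xi',k}$ such that
$$M^{i,N}_t(\xi)=\frac{1}{N}\sum_{k=0}^{\floor{Nt}-1}\bigg[H^{i,N}_{\xi,\xi_k,k}-\sum_{\xi'}H^{i,N}_{\xi,\xi',k}\,\pi^{g^N_k}(\xi')\bigg],$$
namely $H^{1,N}_{\xi,\xi',k}=-Q^N_k(\xi')\sB^N_{\xi,\xi',k}$, $H^{2,N}_{\xi,\xi',k}=r(\xi')\sB^N_{\xi,\xi',k}$, and $H^{3,N}_{\xi,\xi',k}=\gamma\sB^N_{\xi,\xi',k}\sum_{z,a''}Q^N_k(z,a'')g^N_k(z,a'')p(z|\xi')$. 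Each can be decomposed exactly as in \eqref{decompose} so that we only need the analogue of Lemma \ref{fluctuation} for the critic Markov chain (available via Corollary \ref{origin mdp estimate}) together with an estimate $|H^{i,N}_{\xi,\xi',k+1}-H^{i,N}_{\xi,\xi',k}|=O_p(1/N)$.

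Next I would verify this per-step regularity of $H^{i,N}$. For $i=2$, $|r|\le 1$ reduces it directly to Lemma \ref{lem:diff_of_B_bar_B}. For $i=1,3$ I write the difference via the product rule and apply Lemma \ref{lem:regularity_of_Q} (which gives $\E\max_\xi|Q^N_{k+1}(\xi)-Q^N_k(\xi)|\le C_T/N$), Lemma \ref{lem:diff_of_B_bar_B} (giving the $C_T/N$ bound on $\sB^N_{\xi,\xi',k+1}-\sB^N_{\xi,\xi',k}$ and uniform boundedness of $\sB^N$), and Lemma \ref{lem:maximal_Q_bounded} (giving $\E[\max_\xi|Q^N_k(\xi)|^2]\le C_T$). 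The extra terms in $H^{3,N}$ involving $g^N_k(z,a'')$ are handled via the Lipschitz estimate $\|g^N_{k+1}-g^N_k\|\le C\|P^N_{k+1}-P^N_k\|+|\eta^N_{k+1}-\eta^N_k|$, which is $O_p(1/N)$ again by Lemma \ref{lem:regularity_of_Q} and the explicit form of $\eta^N_k$.

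With these ingredients I carry out the block decomposition: pick $K(N)\to\infty$ with $K(N)/N\to 0$, set $\Delta=t/K$, and split
$$M^{i,N}_t(\xi)-r^{i,N}_t(\xi)=J^{i,N}_{1,t}(\xi)+J^{i,N}_{2,t}(\xi)+J^{i,N}_{3,t}(\xi),$$
in direct analogy with \eqref{actor diff estimation}. The remainder and the ``drift'' terms $J^{i,N}_{1,t},J^{i,N}_{3,t}$ are bounded in expectation by $C_T/K(N)$ using the per-step regularity above. The core term $J^{i,N}_{2,t}$ becomes a sum of block contributions of the form
$$\sum_{\xi'}H^{i,N}_{\xi,\xi',j\floor{N\Delta}}\cdot\frac{1}{N}\sum_{k=j\floor{N\Delta}}^{(j+1)\floor{N\Delta}-1}\big[\mathbbm{1}_{\{\xi_k=\xi'\}}-\pi^{g^N_k}(\xi')\big],$$
and the inner averages vanish in $L^2$ by the critic-process analogue of Lemma \ref{fluctuation} (whose proof transfers verbatim using Corollary \ref{origin mdp estimate} to solve the corresponding Poisson equation with transition kernel $\prob^N_k$ and stationary law $\pi^{g^N_k}$).

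The main obstacle compared to Lemma \ref{limit lemma} is that the weight $H^{i,N}$ now involves $Q^N_k$ itself rather than the clipped version $\clip(Q^N_k)$, so $|H^{i,N}_{\xi,\xi',k}|$ is not pathwise bounded by a deterministic constant. To resolve this I would apply Cauchy--Schwarz when taking expectation of $|J^{i,N}_{2,t}|$, factoring the bound as a product of $(\E[\max_{\xi'}|H^{i,N}_{\xi,\xi',j\floor{N\Delta}}|^2])^{1/2}$, which is uniformly bounded by $C_T$ via Lemma \ref{lem:maximal_Q_bounded} and the uniform bound on $\sB^N$, times the $L^2$ norm of the inner average, which tends to $0$ by the Poisson-equation argument. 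Collecting the three contributions yields $\sup_{t\in(0,T]}\E|M^{i,N}_t(\xi)|\le C_T/K(N)+o(1)\to 0$.
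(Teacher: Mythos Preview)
Your decomposition is correct for $i=1,2$ and, for those two cases, your argument coincides with the paper's. The gap is at $i=3$: the identity
\[
M^{3,N}_t(\xi)=\frac{1}{N}\sum_{k=0}^{\floor{Nt}-1}\bigg[H^{3,N}_{\xi,\xi_k,k}-\sum_{\xi'}H^{3,N}_{\xi,\xi',k}\,\pi^{g^N_k}(\xi')\bigg]
\]
is \emph{false}. By \eqref{eq:MNt} the first sum in $M^{3,N}_t$ is $\frac{\gamma}{N}\sum_k Q^N_k(\xi_{k+1})\sB^N_{\xi,\xi_k,k}$, which depends on the \emph{next} sample $\xi_{k+1}$, whereas your $H^{3,N}_{\xi,\xi_k,k}=\gamma\sB^N_{\xi,\xi_k,k}\,\prob^N_k Q^N_k(\xi_k)$ is a function of $\xi_k$ only. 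The discrepancy is precisely the martingale-increment sum
\[
I^{1,N}_t(\xi)=\frac{\gamma}{N}\sum_{k=0}^{\floor{Nt}-1}\big[Q^N_k(\xi_{k+1})-\prob^N_k Q^N_k(\xi_k)\big]\sB^N_{\xi,\xi_k,k},
\]
which cannot be absorbed into your block/Poisson argument because the summand is not a function of $\xi_k$ alone. The paper splits $M^{3,N}_t=I^{1,N}_t+I^{2,N}_t$ and disposes of $I^{1,N}_t$ directly: since $\E[Q^N_k(\xi_{k+1})\mid\mathscr{F}_k]=\prob^N_k Q^N_k(\xi_k)$, the partial sums form a martingale, and orthogonality of martingale differences plus Lemma \ref{lem:maximal_Q_bounded} and \eqref{pre kernel bound} give $\E|I^{1,N}_t|^2\le C_T/N$.

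Once this split is inserted, the remainder $I^{2,N}_t$ is exactly of the form you wrote, and from that point on your plan (per-step $L^2$ regularity of $H^{3,N}$, block decomposition with $K(N)$, and the Cauchy--Schwarz step to separate the $L^2$-bounded weight from the Poisson-equation fluctuation) matches the paper's proof line for line.
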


\begin{proof}
As in the proof for the decay of $M_t^N$, we use two steps to prove the result. 
\begin{itemize}
\item [(\romannumeral1)] Prove that the fluctuations of the data samples around a dynamic stationary distribution $\pi^{g_k}$ decay when the number of iteration steps becomes large. Actually, with exactly the same approach as in Lemma \ref{fluctuation}, we can prove that for any finite $T > 0$ and fixed state action pair $\xi = (x,a)$,
\begin{equation}
\label{online convergence 2}
\lim_{N \to 0} \e\left| \frac{1}{N}\sum_{ k=0 }^{ \lfloor NT \rfloor - 1 } \left[ \mathsf{1}_{\{ \xi_{k} = \xi\}} - \pi^{g_k}(\xi)\right] \right|^2 = 0.
\end{equation}
\item [(\romannumeral2)] Use the same method as in Lemma \ref{limit lemma} to prove the stochastic fluctuation terms vanish as $N \rightarrow \infty$. 
\end{itemize}

We shall only study $M_t^{3,N}$, as the same procedure could be applied to $M_t^{1,N}$ and $M_t^{2,N}$. Recalling the notation in \eqref{transition integral}, we have 
\begin{align*}
M^{3,N}_t(\xi) &= \frac{1}{N}\sum_{k=0}^{\floor*{Nt}-1} \gamma \left[ \sum_{a''} Q^N_k(x_{k+1}, a'') g_k^N(x_{k+1}, a'') - \prob_k^N Q_k^N(\xi_k) \right] [A(\nu^N_k)]^N_{\xi, \xi_k}  \\ 
&+ \frac{1}{N}\sum_{k=0}^{\floor*{Nt}-1} \gamma \left[ \prob_k^N Q_k^N(\xi_k) [A(\nu^N_k)]^N_{\xi, \xi_k} - \sum_{\xi'} \prob_k^N Q_k^N(\xi') [A(\nu^N_k)]_{\xi, \xi'} \pi^{g_k^N}(\xi') \right]\\ 
&:= I^{1,N}_t(\xi) + I^{2,N}_t(\xi). \numberthis
\end{align*} 
For $I^{1,N}_t(\xi)$, define 
\begin{equation}
\epsilon_k := \left[ \sum_{a''} Q^N_k(x_{k+1}, a'') g^N_k(x_{k+1}, a'') - \prob_k^N Q_k^N(\xi_k) \right] [A(\nu^N_k)]_{\xi, \xi_k}.
\end{equation}
Noting that 
\begin{equation}
\e\left[\sum_{a''} Q^N_k \left(x_{k+1}, a'' \right) g_k^N(x_{k+1}, a'') \mid \mathscr{F}_{k}\right] = \prob^N_{k} Q^N_{k}\left(\xi_{k}\right),
\end{equation}
we know  
$$
\left\{Y_n = \sum_{k=0}^{n-1} \epsilon_k, \ \mathscr{F}_n \right\}_{n\ge 0} 
$$
is a martingale. Since the conditional expectation is a contraction in $L^{2}$, we have 
\begin{equation}
\e \left| \prob^N_{k} Q^N_{k}\left(\xi_{k}\right) \right|^{2} \leq \e\left| \sum_{a''} Q^N_k \left(x_{k+1}, a'' \right) g_k^N(x_{k+1}, a'') \right|^{2}.
\end{equation}
Then,
\begin{align*}
\e \left| \frac{1}{N} \sum_{k=0}^{\lfloor NT \rfloor-1} \epsilon_k \right|^2 &=\frac{1}{N^2} \sum_{k=0}^{\lfloor NT \rfloor-1} \e\left| \prob^N_{k} Q^N_{k}\left(\xi_{k}\right) - \sum_{a''} Q^N_k \left(x_{k+1}, a'' \right) g_k^N(x_{k+1}, a'') \right|^{2}\\
& \leq \frac{4}{N^2} \sum_{k=0}^{\lfloor NT \rfloor-1}\e\left| \sum_{a''} Q^N_k \left(x_{k+1}, a'' \right) g_k^N(x_{k+1}, a'') \right|^{2} \\
& \overset{(a)}{\le} \frac{4}{N^2} \sum_{k=0}^{\lfloor NT \rfloor-1}\e \sum_{a''} \left| Q^N_k \left(x_{k+1}, a'' \right) \right|^{2}  \\
& \overset{(b)}{\le} \frac{C_T}{N},
\end{align*}
where the step $(a)$ is by Cauthy-Schwartz's inequality and step $(b)$ by \eqref{pre kernel bound} and Lemma \ref{lem:maximal_Q_bounded}. Thus, for any $T >0$,
\begin{equation}
\lim_{N \to \infty} \e \left| I_t^{1, N} \right| = \lim_{N \to \infty} \gamma \e \left| \frac{1}{N} \sum_{k=0}^{\lfloor NT \rfloor-1} \epsilon_k \right| = 0.
\end{equation}

For $I^{2, N}_t$, as in the proof of Lemma \ref{limit lemma}, we define 
\begin{equation}
H^N_{\xi, \xi', k} := \prob_k^N Q_k^N(\xi') [A(\nu^N_k)]_{\xi, \xi'} = \sum_{z, a''} Q_k^N(z, a'') g^N_k(z, a'') p(z| \xi') [A(\nu^N_k)]_{\xi, \xi'}.
\end{equation}
First by Lemma \ref{lem:diff_of_B_bar_B} and \ref{lem:maximal_Q_bounded}, we have the bound 
\begin{equation}
\label{critic l2 bound}
\sup _{0 \leq k \leq\lfloor T N\rfloor} \sup _{\xi' \in \mathcal{X} \times \mathcal{A}}  \e \left| H^N_{\xi, \xi', k} \right|^2 \le C_T. 
\end{equation}
Then by Lemma \ref{lem:diff_of_B_bar_B} and 
\ref{lem:regularity_of_Q}, we have 
\begin{align*}
\e \left| H^N_{\xi, \xi', k+1} - H^N_{\xi, \xi', k} \right|^2 &\le \sum_{z, a''} \mathbb{E} \left| Q_{k+1}^N(z, a'') g^N_{k+1}(z, a'') [A(\nu^N_{k+1})]_{\xi, \xi'} - Q_k^N(z, a'') g^N_k(z, a'') [A(\nu^N_k)]_{\xi, \xi'} \right|^2 \\ 
&\le 3 \sum_{z, a''} \left| \left(Q^N_{k+1}(z, a'') - Q^N_k(z, a'') \right) g^N_{k+1}(z, a'') [A(\nu^N_{k+1})]_{\xi, \xi'} \right|^2 \\ 
&+ 3 \sum_{z, a''} \left| Q_{k}^N(z, a'') [A(\nu^N_{k+1})]_{\xi, \xi'} \left(g^N_{k+1}(z, a'') - g^N_{k}(z, a'') \right) \right|^2 \\
&+ 3 \sum_{z, a''} \left| Q_{k}^N(z, a'') g^N_{k}(z, a'') \left([A(\nu^N_{k+1})]_{\xi, \xi'} - [A(\nu^N_k)]_{\xi, \xi'} \right) \right|^2 \\
&\le \frac{C_T}{N^2}, \numberthis
\end{align*}
which derives 
\begin{equation}
\label{critic difference l1 bound}
\sup _{0 \leq k \leq\lfloor T N\rfloor-1} \sup _{\xi' \in \mathcal{X} \times \mathcal{A}} \e \left| H^N_{\xi, \xi', k+1} - H^N_{\xi, \xi', k} \right| \le \left( \sup _{0 \leq k \leq\lfloor T N\rfloor-1} \sup _{\xi' \in \mathcal{X} \times \mathcal{A}} \e \left| H^N_{\xi, \xi', k+1} - H^N_{\xi, \xi', k} \right|^2 \right)^{\frac12} \le  \frac{C_T}{N}.
\end{equation}

Then following the step 
(\romannumeral2) in the proof of Lemma \ref{limit lemma}, now we can prove the convergence $I_t^{2, N}$. Then, following step 
(\romannumeral2) in the proof of Lemma \ref{limit lemma}, now we can prove the convergence $I_t^{2,N}(\xi)$. We let $K := K(N) \in \N$ such that $1 \ll K \ll N$ and define $\Delta = t/K$. Then, we can decompose $I_t^{2,N}(\xi)$ into the following terms:
\begin{equation}
I^{2,N}_t(\xi) = J^N_{1,t}(\xi) + J^N_{2,t}(\xi) + J^N_{3,t}(\xi) + r^N_t(\xi),
\end{equation}
where
\begin{align*}
J^N_{1,t}(\xi) &= \frac{1}{N} \sum_{j=0}^{K-1} \sum_{k=j\floor{N\Delta}}^{(j+1)\floor{N\Delta}-1} \bracket{H^N_{\xi,\tilde{\xi}_k,k} - H^N_{\xi,\tilde{\xi}_k,j\floor{N\Delta}}} \\
J^N_{2,t}(\xi) &= \frac{1}{N} \sum_{j=0}^{K-1} \sum_{k=j\floor{N\Delta}}^{(j+1)\floor{N\Delta}-1} \bracket{H^N_{\xi,\tilde{\xi}_k,j\floor{N\Delta}} - \sum_{\xi'} H^N_{\xi,\xi',j\floor{N\Delta}} \pi^{g^N_k}(\xi')} \\
J^N_{3,t}(\xi) &= \frac{1}{N} \sum_{j=0}^{K-1} \sum_{k=j\floor{N\Delta}}^{(j+1)\floor{N\Delta}-1} \sum_{\xi'} \bracket{H^N_{\xi,\xi',j\floor{N\Delta}} - H^N_{\xi,\xi',k}} \pi^{g^N_k}(\xi') \\
r^N_t(\xi) &= \frac{1}{N} \sum_{k=K\floor{N\Delta}}^{\min((K+1)\floor{N\Delta}-1, \floor{Nt}-1)} \bracket{H^N_{\xi,\tilde{\xi}_k,k} - \sum_{\xi'} H^N_{\xi,\xi',k} \pi^{g^N_k}(\xi')}.
\end{align*}
Again, we have
\begin{align*}
|r^N_t(\xi)|^2 &\leq \frac{\floor{N\Delta}}{N^2} \sum_{k=K\floor{N\Delta}}^{\min((K+1)\floor{N\Delta}-1, \floor{Nt}-1)} \bracket{H^N_{\xi,\tilde{\xi}_k,k} - \sum_{\xi'} H^N_{\xi,\xi',k} \pi^{g^N_k}(\xi')}^2 \\
&\leq \frac{2\Delta}{N} \sum_{k=K\floor{N\Delta}}^{\min((K+1)\floor{N\Delta}-1, \floor{Nt}-1)} \sqbracket{\bracket{H^N_{\xi,\tilde{\xi}_k,k}}^2 + \sum_{\xi'} \bracket{H^N_{\xi,\xi',k} \pi^{g^N_k}(\xi')}^2} \\
&\leq \frac{2\Delta}{N} \sum_{k=K\floor{N\Delta}}^{\min((K+1)\floor{N\Delta}-1, \floor{Nt}-1)} \sqbracket{\bracket{H^N_{\xi,\tilde{\xi}_k,k}}^2 + \sum_{\xi'} \bracket{H^N_{\xi,\xi',k}}^2 \pi^{g^N_k}(\xi')},
\end{align*}
so by \eqref{critic l2 bound},
\begin{equation}
\E|r^N_t(\xi)|^2 \leq \frac{C_T \Delta \floor{N\Delta}}{N} \leq C_T \Delta^2 \leq \frac{C_T}{K^2}.
\end{equation}
Moreover,
\begin{align*}
\mathbb{E}[J^N_{1,t}(\xi)]^2 
&\leq \frac{K \floor{N\Delta}}{N^2} \sum_{j=0}^{K-1} \sum_{k=j\floor{N\Delta}}^{(j+1)\floor{N\Delta}-1} \mathbb{E}\sqbracket{H^N_{\xi,\tilde{\xi}_k,k} - H^N_{\xi,\tilde{\xi}_k,j\floor{N\Delta}}}^2 \\
&\leq \frac{T}{N} \sum_{j=0}^{K-1} \sum_{k=j\floor{N\Delta}}^{(j+1)\floor{N\Delta}-1} \bracket{\frac{C_T(k - j\floor{N\Delta})}{N}}^2 \\
&\leq \frac{T}{N} \sum_{j=0}^{K-1} \sum_{k=0}^{\floor{N\Delta}-1} \bracket{\frac{kC_T}{N}}^2 \\
&\leq \frac{TC^2_T}{3N^3} \sum_{j=0}^{K-1} \floor{N\Delta}^3 \leq KC_T \Delta^3 \leq \frac{C_T}{K^2}. \numberthis
\end{align*}
We can similarly control $J^N_{3,t}(\xi)$ as follows:
\begin{align*}
\mathbb{E}[J^N_{3,t}(\xi)]^2 &\leq \frac{K\floor{N\Delta}}{N}\sum_{j=0}^{K-1} \sum_{k=j\floor{N\Delta}}^{(j+1)\floor{N\Delta}-1} \mathbb{E}\sqbracket{\sum_{\xi'} \bracket{H^N_{\xi,\xi',j\floor{N\Delta}} - H^N_{\xi,\xi',k}} \pi^{g^N_k}(\xi')}^2 \\
&\leq \frac{K\floor{N\Delta}}{N}\sum_{j=0}^{K-1} \sum_{k=j\floor{N\Delta}}^{(j+1)\floor{N\Delta}-1} \mathbb{E}\sqbracket{\sum_{\xi'}  \bracket{H^N_{\xi,\xi',j\floor{N\Delta}} - H^N_{\xi,\xi',k}}^2 \pi^{g^N_k}(\xi')} \\
&\leq \frac{T}{N} \sum_{j=0}^{K-1} \sum_{k=j\floor{N\Delta}}^{(j+1)\floor{N\Delta}-1} \bracket{\frac{C_T(k - j\floor{N\Delta})}{N}}^2 \leq \frac{C_T}{K^2}. \numberthis
\end{align*}
Finally, note that
\begin{equation}
H^N_{\xi, \tilde \xi_k, j\floor{\Delta N}} - \sum_{\xi' \in \mathcal{X} \times \mathcal{A}} H^N_{\xi, \xi', j\floor{\Delta N}} \pi^{g^N_{k}}(\xi') = \sum_{\xi'} H^N_{\xi, \xi', j\floor{\Delta N}} \left[ \mathsf{1}_{\{ \xi_k = \xi'\}} - \pi^{g^N_{k}}(\xi')\right].
\end{equation}
Thus,
\begin{align*}
\label{l2 bound}
\e \left| J^N_{2,t}(\xi) \right| &= \frac{1}{N} \e \left| \sum_{j=0}^{K-1} \sum_{k=j\lfloor N\Delta \rfloor}^{(j+1)\lfloor N\Delta \rfloor-1} \sum_{\xi'} H^N_{\xi, \xi', j\floor{N\Delta}}
\left[ \mathsf{1}_{\{ \xi_k = \xi'\}} - \pi^{g^N_k}(\xi')\right] \right| \\
&\leq \frac{1}{N} \e \left| \sum_{j=0}^{K-1} \bracket{\max_{\xi'} H^N_{\xi, \xi', j\floor{N\Delta}}} \sum_{k=j\lfloor N\Delta \rfloor}^{(j+1)\lfloor N\Delta\rfloor-1} \sum_{\xi'} \left[ \mathsf{1}_{\{ \xi_k = \xi'\}} - \pi^{g^N_k}(\xi')\right] \right|\\
&\overset{\text{(CS)}}\leq \frac{1}{N} \e \sqbracket{\bracket{\sum_{j=0}^{K-1} \bracket{\max_{\xi'} H^N_{\xi, \xi', j\floor{N\Delta}}}^2}^{1/2} \bracket{\sum_{j=0}^{K-1} \bracket{\sum_{k=j\lfloor N\Delta \rfloor}^{(j+1)\lfloor N\Delta \rfloor-1} \sum_{\xi'} \left[ \mathsf{1}_{\{ \xi_k = \xi'\}} - \pi^{g^N_k}(\xi')\right]}^2}^{1/2}} \\
&\overset{\text{(CS)}}\leq \frac{1}{N}  \sqbracket{\e\bracket{\sum_{j=0}^{K-1} \bracket{\max_{\xi'} H^N_{\xi, \xi', j\floor{N\Delta}}}^2} \e\bracket{\sum_{j=0}^{K-1} \bracket{\sum_{k=j\lfloor N\Delta \rfloor}^{(j+1)\lfloor N\Delta \rfloor-1} \sum_{\xi'} \left[ \mathsf{1}_{\{ \xi_k = \xi'\}} - \pi^{g^N_k}(\xi')\right]}^2}}^{1/2} \\
&\overset{\eqref{critic l2 bound}}\leq \frac{KC_T}{N}  \sqbracket{\mathbb{E}\sqbracket{\frac{1}{K}\sum_{j=0}^{K-1} \bracket{\sum_{k=j\lfloor N\Delta\rfloor}^{(j+1)\lfloor N\Delta \rfloor-1} \sum_{\xi'} \left[ \mathsf{1}_{\{ \xi_k = \xi'\}} - \pi^{g^N_k}(\xi')\right]}^2}}^{1/2} \\
&= \frac{KC_T \floor{N\Delta}}{N}  \sqbracket{\mathbb{E}\sqbracket{\frac{1}{K}\sum_{j=0}^{K-1} \bracket{\frac{1}{\floor{N\Delta}}\sum_{k=j\lfloor N\Delta\rfloor}^{(j+1)\lfloor N\Delta \rfloor-1} \sum_{\xi'} \left[ \mathsf{1}_{\{ \xi_k = \xi'\}} - \pi^{g^N_k}(\xi')\right]}^2}}^{1/2} \\
&\leq TC_T  \underbrace{\sqbracket{\mathbb{E}\sqbracket{\frac{1}{K}\sum_{j=0}^{K-1} \bracket{\frac{1}{\floor{N\Delta}}\sum_{k=j\lfloor N\Delta\rfloor}^{(j+1)\lfloor N\Delta \rfloor-1} \sum_{\xi'} \left[ \mathsf{1}_{\{ \xi_k = \xi'\}} - \pi^{g^N_k}(\xi')\right]}^2}}^{1/2}}_{\to 0}  \\
&\overset{n\to\infty}\to 0, \numberthis
\end{align*}
where step (CS) is by the Cauchy-Schwartz inequality. Combining \eqref{online convergence 2}, \eqref{critic l2 bound} and \eqref{l2 bound}, we have 
\begin{equation}
\lim_{N \to \infty} \e\left|I^N_{3,j} \right| = 0.
\end{equation}
Consequently $\mathbb{E}|I^{2,N}_t(\xi)| \to 0$, and so is $M^{3,N}_t(\xi)$. This completes the proof.
\end{proof}

Let $\rho^N$ denotes the probability measure of $\left(\mu_t^N, \nu_t^N, P_t^N, Q_t^N\right)_{0\le t \le T}$, which takes value in the set of probability measures $\M\left(D_E([0,T])\right)$. From the relative compactness result in Section \ref{relative compactness section}, we know that the sequence pf measures $\{ \rho^N \}_{N \in \mathbb{N}}$ contains a subsequence $\rho^{N_k}$ that converges weakly. Now we can prove the limit points of any convergence subsequence $\rho^{N_k}$ will satisfies the limiting ODEs \eqref{NN gradient flow}.
\begin{lemma}
Let $\rho^N$ be the probability measure of $(\mu^N, \nu^N, P^N, Q^N)$. We restrict ourselves to a convergent subsequence $\rho^{N_k}$ which converges to some limit point $\rho = (\mu, \nu, P, Q)$. Then $\rho$ is a Dirac measure on $D_E([0,T])$ such that $(\mu, \nu, P, Q)$ satisfies the limiting ODEs \eqref{NN gradient flow}.
\end{lemma}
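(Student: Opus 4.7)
The plan is to identify the limit $\rho=(\mu,\nu,P,Q)$ of the convergent subsequence $\rho^{N_k}$ by passing to the limit in the pre-limit integral equations derived in Section \ref{s:derivation_of_the_limit_ODEs}, and then invoking uniqueness of the resulting ODE system. Explicitly, I would proceed in three steps: (i) move to almost-sure convergence on a single probability space via Skorohod's representation theorem; (ii) pass to the limit in the integral-form pre-limit evolutions \eqref{eq:prelimit_evolution_of_mu}, \eqref{eq:Q_pre_limit} and its analogue for $P_t^N$; (iii) deduce uniqueness for \eqref{NN gradient flow} and conclude.

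For the empirical measures, equation \eqref{eq:prelimit_evolution_of_mu} has a drift of order $N^{-3/2}$ per step summed over $\floor{Nt}$ steps, giving $O_p(N^{-1/2})$. Hence $\la\varphi,\nu_t\ra=\la\varphi,\nu_0\ra$ and $\la\varphi,\mu_t\ra=\la\varphi,\mu_0\ra$ for every $\varphi\in C_b^2(\R^{1+d})$ in the limit, which gives the last two identities in \eqref{NN gradient flow}. For the network outputs, the fluctuation terms $M^{1,N}_t,M^{2,N}_t,M^{3,N}_t$ in \eqref{eq:Q_pre_limit} vanish in $L^1$ by Lemma \ref{concentration lemma} and the $O_p(N^{-3/2})$ remainder vanishes in probability; similarly $M^N_t\to 0$ for the $P^N_t$ equation by Lemma \ref{limit lemma}. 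Since $\nu^N_s\to\nu_0$ and the integrand defining $\sB^N_{\xi,\xi',s}$ in \eqref{eq:B} is continuous and bounded in $(c,w)$, we obtain $\sB^N_{\xi,\xi',s}\to A_{\xi,\xi'}$; the same argument gives $\bar{\sB}^N_{\xi,\xi',s}\to A_{\xi,\xi'}$. The global Lipschitz continuity of softmax (Remark \ref{rmk:choice_of_norm}) gives $f^N_s\to f_s$, hence $g^N_s\to g_s$ uniformly, and Assumption \ref{as:ergodic_assumption} then gives $\pi^{g^N_s}\to\pi^{g_s}$, $\sigma^{g^N_s}_{\rho_0}\to\sigma^{g_s}_{\rho_0}$. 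Combined with the a.s.\ convergence $Q^N_s\to Q_s$, $P^N_s\to P_s$ and the uniform bounds in Lemmas \ref{lem:maximal_Q_bounded}, \ref{lem:maximal_P_bounded}, and \ref{lem:diff_of_B_bar_B}, dominated convergence under the time integral yields the integral form of the ODEs in \eqref{NN gradient flow}.

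For step (iii) I would show that the RHS of \eqref{NN gradient flow} is Lipschitz in $(P,Q)$ on bounded sets: $\clip$ is $1$-Lipschitz, softmax is globally Lipschitz (Remark \ref{rmk:choice_of_norm}), $\pi^g$ and $\sigma^g_{\rho_0}$ are Lipschitz in $g$ by Assumption \ref{as:ergodic_assumption}, and the $A_{\xi,\xi'}$ are deterministic constants; the a priori bounds keep $(P_t,Q_t)$ in a bounded region. Gronwall then yields uniqueness of solutions given the initial data $(\mu_0,\nu_0,P_0,Q_0)$. Consequently $\rho$ is supported on the unique solution path (a Dirac measure conditional on the random initial data $\mathcal{G},\mathcal{H}$), every subsequential limit coincides, and the entire sequence $\rho^N$ converges to this law.

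The main obstacle is the passage to the limit in the drift integrals, which depend nonlinearly on $P^N_s$ through the compositions $g^N_s=\mathrm{Softmax}(P^N_s)$, $\pi^{g^N_s}$ and $\sigma^{g^N_s}_{\rho_0}$. The key ingredients that make this work are the uniform Lipschitz estimates on all these nonlinear maps (Assumption \ref{as:ergodic_assumption} together with Remark \ref{rmk:choice_of_norm}), the joint a priori boundedness of $\sB^N_{\xi,\xi',s},\bar{\sB}^N_{\xi,\xi',s},Q^N_s,P^N_s$, and the a.s.\ uniform-on-$[0,T]$ convergence obtained from Skorohod's representation applied to a cadlag path space; these together justify the dominated convergence argument above.
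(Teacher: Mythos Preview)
Your proposal is correct and assembles the right ingredients, but it takes a genuinely different route from the paper. The paper does not invoke Skorohod's representation theorem; instead it builds, for each choice of times $0\le s_1<\dots<s_p\le t$ and test functions $\varphi,\bar\varphi,\phi_j,\bar\phi_j,\psi_j,\bar\psi_j$, a nonnegative functional $F=F_1+F_2+F_3+F_4$ on $D_E([0,T])$ which measures the defect in the integral form of \eqref{NN gradient flow} at time $t$, multiplied by products of bounded observables at the earlier times $s_j$. It shows $\E_{\rho^{N_k}}[F]\to 0$ using the same inputs you list (Lemmas \ref{limit lemma} and \ref{concentration lemma}, Assumption \ref{as:ergodic_assumption}, and the pre-limit equations \eqref{eq:Q_pre_limit}, \eqref{eq:P_pre_limit}), introducing small correction terms $E^{i,N}_t$ to account for the discrepancy between $\eta^N_{\floor{Ns}}$ and $\eta_s$. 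Weak convergence then gives $\E_\rho[F]=0$, and varying the test data yields the ODE identity $\rho$-a.s. Your Skorohod-based approach reaches the same conclusion more directly, at the cost of having to argue that the limit paths are continuous (so that Skorohod convergence upgrades to uniform convergence on $[0,T]$) and that the drift functional is continuous at such paths; both are routine given the regularity estimates in Lemmas \ref{lem:regularity_of_nu} and \ref{lem:regularity_of_Q}.

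One small correction: you assert that the integrand defining $\sB^N_{\xi,\xi',s}$ in \eqref{eq:B} is ``bounded in $(c,w)$'', but the second summand carries a factor $c^2$, which is unbounded on $\R^{1+d}$. What rescues both your argument and the paper's is Lemma \ref{NN parameter bound}, which gives $|C^i_k|\le C_T$ uniformly for $k\le NT$; hence every $\nu^N_s$ (and the weak limit $\nu_0$) is supported on $[-C_T,C_T]\times\R^d$, where the integrand \emph{is} bounded and continuous, and the weak convergence $\nu^N_s\rightharpoonup\nu_0$ then delivers $\sB^N_{\xi,\xi',s}\to A_{\xi,\xi'}$ as you claim. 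With that amendment your dominated-convergence passage is justified. Your step (iii) on uniqueness is indeed needed for the ``Dirac'' conclusion of the lemma; the paper handles it in a separate section by the same local-Lipschitz/linear-growth argument you sketch.
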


\begin{proof}
For any sequence of time-points $0\leq s_1 < s_2 < ... < s_p \leq t$, functions $\varphi, \bar{\varphi} \in C^2_b(\R^{1+d})$, $\phi_1,...,\phi_p, \bar{\phi}_1,...,\bar{\phi}_p \in C_b(\R^{1+d})$ and $\psi_1,...,\psi_p,\bar{\psi}_1,...,\bar{\psi}_p \in C_b(\X\times\A)$, and consider a map $F: D_E([0,T]) \to \R^+$, defined as 
\begin{equation}
F(\mu,\nu,P,Q) = F_1(\mu) + F_2(\nu) + F_3(\mu,\nu,P,Q) + F_4(\mu,\nu,P,Q),
\end{equation}
where
\begin{align*}
F_1(\mu) &= \prod_{j=1}^p |\langle \bar{\phi}_j, \mu_{s_j} \rangle| \times |\langle \bar{\varphi}, \mu_t \rangle - \langle \bar{\varphi}, \nu_0 \rangle|, \\
F_2(\nu) &= \prod_{j=1}^p |\langle \phi_j, \nu_{s_j} \rangle| \times |\langle \varphi, \nu_t \rangle - \langle \varphi, \nu_0 \rangle|, \\
F_3(\mu,\nu,P,Q) &= \prod_{j=1}^p |\psi_j(Q_{s_j})| \times \sum_{\xi\in\X\times\A} \Bigg| Q_t(\xi) - Q_0(\xi) \\
&\phantom{=}- \alpha \int_0^t \sum_{\xi' = (x',a')} \left(r(\xi') + \gamma \sum_{z,a''} Q_s(z,a'')g_s(z,a'')p(z|\xi') - Q_s(x',a')\right) [A(\nu_s)]_{\xi, \xi'} \, \pi^{g_s}(\xi') \, ds \Bigg|, \\
F_4(\mu,\nu,P,Q) &= \prod_{j=1}^p |\bar{\psi}_j(P_{s_j})| \times \sum_{\xi\in\X\times\A} \Bigg| P_t(\xi) - P_0(\xi) \\
&\phantom{=} - \int_0^t \sum_{\xi' = (x',a')} \zeta_s \clip(Q_s(\xi')) \sigma^{g_s}(\xi') \Bigg([A(\mu_s)]_{\xi,\xi'} - \sum_{a''} f_s(x',a'') [A(\mu_s)]_{\xi,(x',a'')} \Bigg) \, ds \Bigg|, \\
f_t &= \mathrm{Softmax}(P_t), \quad g_t = \frac{\eta_t}{\#\bm{\A}} + (1-\eta_t) f_t.
\end{align*}
Then we have
\begin{equation*}
\mathbb{E}_{\rho^N}[F(\mu,\nu,P,Q)] = \mathbb{E}[F(\mu^N, \nu^N, P^N, Q^N)].
\end{equation*}

Let us analyse each terms of $\E\sqbracket{F(\mu^N, \nu^N, P^N, Q^N)}$ one by one. Firstly, \eqref{eq:bound_for_weak_convergence_mu} and the boundedness of $\bar{\phi}_j$ yields
\begin{align*}
\E[F_1(\mu^N)] 
&\leq C \E\abs{\la \bar{\varphi}, \mu^N_t \ra - \la \bar{\varphi}, \nu_0 \ra} \\
&\leq C \E\abs{\la \bar{\varphi}, \mu^N_t \ra - \la \bar{\varphi}, \mu^N_0 \ra} + C\E\abs{\la \bar{\varphi}, \mu^N_0 \ra - \la \bar{\varphi}, \nu_0 \ra} \\
&\leq \frac{C_T}{\sqrt{N}} + \frac{C_T}{N^{3/2}} \overset{N\to\infty}\to 0, \numberthis
\end{align*}
noting that
\begin{align*}
\E|\la \bar{\varphi}, \mu^N_0 \ra - \la \bar{\varphi}, \nu_0 \ra| &\leq \E \left|\frac{1}{N} \sum_{i=1}^N \bar{\varphi}(B^i_0, W^i_0) - \la \bar{\varphi}, \nu_0 \ra \right| \\
&\leq \left[\E \left|\frac{1}{N} \sum_{i=1}^N \bar{\varphi}(B^i_0, W^i_0) - \la \bar{\varphi}, \nu_0 \ra \right|^2 \right]^{1/2} \\
&= \frac{1}{\sqrt{N}} \sqrt{\E\left[\bar{\varphi}(B^1_0, W^1_0) - \la \bar{\varphi}, \nu_0 \ra \right]^2} \\
&\leq \frac{C}{\sqrt{N}}.
\end{align*}
Similarly, \eqref{eq:bound_for_weak_convergence_mu} and the boundedness of $\phi_j$ yields
\begin{align*}
\E[F_2(\nu^N)] 
&\leq C \E\abs{\la \varphi, \nu^N_t \ra - \la \varphi, \nu_0 \ra} \\
&\leq C \E\abs{\la \varphi, \nu^N_t \ra - \la \varphi, \nu^N_0 \ra}+ C\E\abs{\la \varphi, \nu^N_0 \ra - \la \varphi, \nu_0 \ra} \\
&\leq \frac{C_T}{\sqrt{N}} + \frac{C_T}{N^{3/2}} \overset{N\to\infty}\to 0. \numberthis
\end{align*}
To study the next two term, we define
\begin{equation*}
f^N_t = \mathrm{Softmax}(P^N_t), \quad \tilde{g}^N_t = \frac{\eta_t}{\#\bm{\A}} + (1-\eta_t) f^N_t,
\end{equation*}
\begin{align*}
E^{1,N}_t(\xi) &= \int_0^t \sum_{\xi'} [A(\nu^N_s)]_{\xi,\xi'} (\pi^{\tilde{g}^N_s}(\xi') - \pi^{g^N_s}(\xi')) \sqbracket{r(\xi') + \gamma \sum_{z,a''} Q^N_s(z,a'') \tilde{g}^N_s(z,a'')p(z|\xi') - Q^N_s(\xi')} \, ds, \\
E^{2,N}_t(\xi) &= \int_0^t \sum_{\xi'}[A(\nu^N_s)]_{\xi,\xi'} \pi^{g^N_s}(\xi') \sum_{z,a''} \Big[ \gamma Q^N_s(z,a'') (\tilde{g}^N_s(z,a'') - g^N_s(z,a''))p(z|\xi') \Big] \, ds.
\end{align*}
Then by \eqref{eq:Q_pre_limit}:
\begin{align*}
&\phantom{=}F_3(\mu^N, \nu^N, P^N, Q^N) \\
&= \prod_{j=1}^p |\psi_j(Q_{s_j})| \times \sum_{\xi \in \X \times \A} \Bigg| Q^N_t(\xi) - Q^N_0(\xi) \\
&\phantom{=} - \alpha \int_0^t \sum_{\xi'} [A(\nu^N_s)]^N_{\xi,\xi'} \, \pi^{\tilde{g}^N_s}(\xi') \, \Bigg[r(\xi') + \gamma \sum_{z,a''} Q^N_s(z,a'') \tilde{g}^N_s(z,a'')p(z|\xi') - Q^N_s(\xi') \Bigg] \, ds \Bigg| \\
&= \prod_{j=1}^p |\psi_j(Q_{s_j})| \times \sum_{\xi \in \X \times \A} \Bigg| Q^N_t(\xi) - Q^N_0(\xi) \\
&\phantom{=}- \alpha \int_0^t \sum_{\xi'} [A(\nu^N_s)]_{\xi,\xi'} \pi^{g^N_s}(\xi') \bigg[r(\xi') + \gamma \sum_{z,a''} Q^N_s(z,a'') g^N_s(z,a'')p(z|\xi') - Q^N_s(\xi') \bigg] \, ds + E^{1,N}_t(\xi) + E^{2,N}_t(\xi) \Bigg| \\
&\overset{\eqref{eq:Q_pre_limit}}= \prod_{j=1}^p |\psi_j(Q_{s_j})| \times \sum_{\xi \in \bm{\X}\times \bm{\A}} \alpha \abs{M^{1,N}_t(\xi) + M^{2,N}_t(\xi) + M^{3,N}_t(\xi) + E^{1,N}_t(\xi) + E^{2,N}_t(\xi) + O_p(N^{-1/2})}. \numberthis
\end{align*}
Since $\eta_t$ is Lipschitz, hence there exists a constant $C > 0$ such that
\begin{equation} \label{eq:diff_of_discretised_eta}
|\eta^N_{\lfloor Ns \rfloor} - \eta_s| = |\eta_{\frac{\lfloor Ns \rfloor}{N}} - \eta_s| \leq \frac{C}{N},
\end{equation}
and therefore, by Proposition \ref{prop:Lipschitzness_of_stationary_measures}, there exists another $C_T > 0$ such that
\begin{equation}
|\pi^{\tilde{g}^N_s}(\xi') - \pi^{g^N_s}(\xi')| \leq \frac{C_T}{N}. \numberthis \label{eq:difference_g_g_tilde}
\end{equation}
Therefore, we have
\begin{align*}
\E[E^{1,N}_t(\xi)] &\leq \frac{C_T}{N} \E\sqbracket{\int_0^t \sum_{\xi'} |[A(\nu^N_s)]_{\xi,\xi'}| \sqbracket{|r(\xi')| + \gamma \sum_{z,a''} |Q^N_s(z,a'')| |g^N_s(z,a'')| p(z|\xi') - Q^N_s(\xi')} \, ds} \\
&\leq \frac{1}{N} \E\sqbracket{TC_T \sup_{\xi}|Q^N_s(\xi)|} \leq \frac{C_T}{N},
\end{align*}
and
\begin{align*}
E^{2,N}_t(\xi) &= \frac{C}{N} \E\sqbracket{\int_0^t \sum_{\xi'} [A(\nu^N_s)]_{\xi,\xi'} \pi^{g^N_s}(\xi') \gamma \sum_{z,a''} |Q^N_s(z,a'')| p(z|\xi') \, ds} \leq \frac{1}{N} \E\sqbracket{TC_T \sup_{\xi}|Q^N_s(\xi)|} \leq \frac{C_T}{N}.
\end{align*}
Finally, we have
\begin{equation*}
\E[F_3(\mu^N, \nu^N, P^N, Q^N)] \leq C \sum_{\xi} \sqbracket{\E|M^{1,N}_t(\xi)| + \E|M^{2,N}_t(\xi)| + \E|M^{3,N}_t(\xi)| + \E|E^{1,N}_t(\xi)| + \E|E^{2,N}_t(\xi)|} \overset{N\to\infty}\to 0.
\end{equation*}
To study the final term, we define
\begin{align*}
E^{3,N}_t(\xi) = \int_0^t \zeta_s \sum_{\xi'} (\sigma_{\rho_0}^{\tilde{g}^N_s}(\xi') - \sigma_{\rho_0}^{g^N_s}(\xi')) \, \clip(Q^N_s(\xi')) \left[[A(\mu^N_s)]_{\xi, \xi'} - \sum_{a''} f^N_s(x',a'') [A(\mu^N_s)]_{\xi, (x', a'')} \right] ds, \\
E^{4,N}_t(\xi) = \int_0^t (\zeta^N_{\floor{Ns}} - \zeta_s) \sum_{\xi'} \sigma_{\rho_0}^{g^N_s}(\xi') \, \clip(Q^N_s(\xi')) \left[[A(\mu^N_s)]_{\xi, \xi'} - \sum_{a''} f^N_s(x',a'') [A(\mu^N_s)]_{\xi, (x', a'')} \right] ds,
\end{align*}
Then
\begin{align*}
&\phantom{=}F_4(\mu^N,\nu^N,P^N,Q^N) \\
&= \prod_{j=1}^p |\bar{\psi}_j (P_{s_j})| \times \sum_{\xi \in \X \times \A} \Bigg| P^N_t(\xi) - P^N_0(\xi) \\
&\phantom{=}- \int_0^t \sum_{\xi'} \zeta_s \clip(Q^N_s(\xi')) \sigma^{\tilde{g}^N_s}(\xi') \left[[A(\mu^N_s)]_{\xi, \xi'} - \sum_{a''} f^N_s(x',a'')[A(\mu^N_s)]_{\xi, (x', a'')} \right] \, ds \Bigg| \\
&= \prod_{j=1}^p |\bar{\psi}_j (P_{s_j})| \times \sum_{\xi\in\X\times\A} \Bigg| P^N_t(\xi) - P^N_0(\xi) \\
&\phantom{=}- \int_0^t \sum_{\xi'} \zeta_{\floor{Ns}} \clip(Q^N_s(\xi')) \sigma^{g^N_s}(\xi') \left[[A(\mu^N_s)]^N_{\xi, \xi'} - \sum_{a''} f^N_s(x',a'')[A(\mu^N_s)]_{\xi, (x', a'')} \right] \, ds + E^{3,N}_t(\xi) + E^{4,N}_t(\xi) \Bigg|,\\
&= \prod_{j=1}^p |\bar{\psi}_j (P_{s_j})| \times \sum_{\xi\in\X\times\A} |E^{3,N}_t(\xi) + E^{4,N}_t(\xi) + M^N_t(\xi) + O(N^{-1/2})|.
\end{align*}
By \eqref{eq:diff_of_discretised_eta} and statement (2) of Proposition \ref{prop:Lipschitzness_of_stationary_measures}, we have
\begin{equation*}
|\sigma^{\tilde{g}^N_s}_{\rho_0}(\xi') - \sigma^{g^N_s}_{\rho_0}(\xi')| \leq \frac{4}{(1-\gamma)^2 N}.
\end{equation*}
In addition, we have
\begin{align*}
\sup_{\xi,\xi'} \abs{[A(\mu^N_s)]_{\xi, \xi'} - \sum_{a''} f^N_s(x',a'') [A(\nu^N_s)]_{\xi, (x', a'')}} 
&\leq \sup_{\xi, \xi'} \sqbracket{\abs{[A(\nu^N_s)]_{\xi, \xi'}} + \sum_{a''} f^N_s(x',a'') \abs{[A(\nu^N_s)]_{\xi, (x', a'')}}} \\
&\leq C_T,
\end{align*}
as a result of $\bar{B}^N_{\xi,\xi',s}$ being uniformly bounded by Lemma \ref{lem:diff_of_B_bar_B} whenever $s \leq T$. Therefore for any $t \leq T$,
\begin{align*}
E^{3,N}_t(\xi) \leq T \times \frac{C_T}{N} \times 2 \times C_T = \frac{C_T}{N}.
\end{align*}
Similarly,
\begin{align*}
\abs{E^{4,N}_t(\xi)} &\leq C_T \int_0^T \abs{\zeta^N_{\floor{Ns}} - \zeta_s} \, ds \\
&\leq \sum^{\floor{NT}-1}_{k=0} \int_{k/N}^{(k+1)/N} \abs{\frac{1}{1+k/N} - \frac{1}{1+s}} \, ds \\
&\leq C_T \sum^{\floor{NT}-1}_{k=0} \frac{1}{N^2} = \frac{C_T}{N}.
\end{align*}
Combining with the boundedness of $\tilde{\phi}_p$, we have
\begin{equation}
F_4(\mu^N,\nu^N,P^N,Q^N) \leq C\sum_\xi \sqbracket{\E|E^{3,N}_t(\xi)| + \E|E^{4,N}_t(\xi)| + \E|M^{N}_t(\xi)| + O(N^{-1/2})} \overset{N\to\infty}\to 0.
\end{equation}
Combining the above analysis yields:
\begin{equation*}
\e_{\rho^N}[F(\mu,\nu,P,Q)] \overset{N\to\infty}\to 0.
\end{equation*}
But since $F$ is uniformly bounded, by bounded convergence theorem, we have
\begin{align*}
\E_{\rho}[F(\mu,\nu,P,Q)] = 0.
\end{align*}
This holds for any choice of the test functions $\varphi, \bar{\varphi}, \phi_j, \bar{\phi}_j, \psi_j, \bar{\psi}_j$, so we know that $\rho$ is a Dirac measure concentrated on a solution that satisfies the evolution equation.
\end{proof}

\subsection{Existence and uniqueness of solutions to limit ODEs}

\hspace{1.4em} To complete the proof, if suffices to show that there exists a unique solution for the ODEs \eqref{NN gradient flow}. Here we treat $(Q,P)$ as a vector of size $2M$ with $M=|\X\times\A|$. 
\begin{equation}
\label{limit ODE fix starting}
\frac{d}{dt} \begin{pmatrix} Q_t \\ P_t \end{pmatrix} = F(t,Q_t,P_t) = \begin{pmatrix} F_1(t,Q_t,P_t) \\ F_2(t,Q_t,P_t) \end{pmatrix}
\end{equation}
where the first $M$ entries $F(Q,P)$ are specified as
\begin{align*}
F_1(t,Q,P)(x,a) = \alpha \sum_{x', a'} \bar A_{x,a,x',a'} \pi^{g_t(P)}(x', a') \left(r(x', a') + \gamma \sum_{z, a''} Q(z, a'') [g_t(P)](z,a'') p(z| x', a') - Q(x', a') \right)
\end{align*}
and the remaining $M$ entries are specified as
\begin{align*}
F_2(t,Q,P)(x,a) = \sum_{x', a'} \zeta_t \clip(Q(x', a')) \left[A_{x, a, x', a'} - \sum_{a''} [f(P)](x',a'')  A_{x,a,x',a''} \right] \sigma^{g_t(P)}(x', a').
\end{align*}
Here the notation $f(P)$ and $g_t(P)$ denote the (probability) vectors in $\R^{M}$:
\begin{align*}
[f(P)](x,a) &= \frac{\exp\bracket{P(x,a)}}{\sum_{a''} \exp\bracket{P(x,a)}} \\
[g_t(P)](x,a) &= \frac{\eta_t}{\#\bm{\A}} + (1-\eta_t) [f(P)](x,a).
\end{align*}
We will show the global existence of a solution for $t \in [0,\infty)$ by taking the usual route of showing that $F(Q,P)$ is locally Lipschitz and linearly bounded.

\begin{lemma}
Let $\norm{\cdot}_\infty$ the the infinity norm of a vector in $\R^{2M}$, that is, the maximum of the absolute value of the entries of the vector. Then for all $R > 0$, there is a constant $C_R > 0$ that only depends on $R$ such that for all $(Q,P), (\tilde{Q}, \tilde{P})$ lying in the open $R$-ball, we have
\begin{equation}
\label{eq:local_lipschitz}
\norm{F(t,Q,P) - F(t,\tilde{Q},\tilde{P})}_\infty \leq C_R \norm{(Q,P) - (\tilde{Q}, \tilde{P})}_\infty, \quad \forall t \geq 0.  
\end{equation}
Moreover, there is a constant $C > 0$ such that for all $Q,P$, we have
\begin{equation}
\label{eq:linearly_bounded}
\norm{F(t,Q,P)}_\infty \leq C \norm{(Q,P)}_\infty + C, \quad \forall t \geq 0.
\end{equation}
Therefore, $F$ is locally Lipschitz and linearly bounded and for any fixed starting point $(Q_0, P_0)$, there exists the unique solution for ODE 
\eqref{limit ODE fix starting}.
\end{lemma}

\begin{remark}
We note that the choice of norm in equations \eqref{eq:local_lipschitz} and \eqref{eq:linearly_bounded} is not important. If these equations are true with infinity norm, then they will also be true if we replace the infinite norm with the usual Euclidean norm $\norm{\cdot}$, since they are equivalent due to $\X\times\A$ being finite-dimensional.
\end{remark}

\begin{proof}
Let us first prove equation \eqref{eq:linearly_bounded}. Note that the tensor $\bar A_{\xi,\xi'}$ is uniformly bounded by assumptions \ref{as:activation_function} and \ref{as:NN_condition}. Thus 
\begin{align*}
|F_1(t,Q,P)(x,a)| &\leq C \sum_{x', a'} \pi^{g_t(P)}(x',a') \bracket{|r(x', a')| + \gamma \sum_{z, a''} |Q(z, a'')| g(z,a'') p(z|x', a') + |Q(x', a')|} \\
&\leq C \sup_{x',a'} |r(x',a')| + C\gamma \sup_{z,a''} |Q(z,a'')| + C\gamma \sup_{x', a'} |Q(x',a')| \\
&\leq C + C \norm{(Q,P)}.
\end{align*}
It is also clear that
\begin{align*}
|F_2(t,Q,P)(x,a)| &\leq C \sup_{x, a} |\clip(Q(x,a))| \leq C
\end{align*}
This shows that $F$ is linearly bounded. 

To prove the local Lipschitz condition \eqref{eq:local_lipschitz}, note that for all $x,a$,
\begin{equation}
\begin{aligned}
\label{F1 diff}
&\abs{F_1(t, Q,P)(x, a) - F_1(t, \tilde{Q},\tilde{P})(x,a)} \\
\leq& \alpha \sum_{x', a'} |A_{x,a,x',a'}| \abs{\pi^{g_t(P)}(x', a') - \pi^{g_t(\tilde{P})}(x', a')} \underbrace{\abs{r(x', a') + \gamma \sum_{z, a''} Q(z, a'') [g_t(P)](z,a'') p(z| x', a') - Q(x', a')}}_{\leq C + (\gamma+1)R} \\
+& \alpha \sum_{x', a'} |A_{x,a,x',a'}|\pi^{g_t(\tilde{P})}(x', a') \abs{\sum_{z, a''} \gamma(Q(z, a'')[g_t(P)](z,a'') - \tilde{Q}(z,a'') [g_t(\tilde{P})](z,a''))p(z| x', a') - (Q(x', a') - \tilde{Q}(x', a'))}.
\end{aligned}
\end{equation}
Note that $\|(P,Q)\|_\infty \leq R$, we have $[f(P)](x,a) \geq e^{-R}$. Therefore, using statement (2) of Proposition \ref{prop:Lipschitzness_of_stationary_measures} and following arguments in Appendix \ref{SS:lipschitzness_of_the_stationary_measures}, we know that there exists $C_R > 0$ (which varies line by line) such that
\begin{align*}
\sup_{x,a} |[\pi^{g_t(P)}](x,a) - [\pi^{g_t(\tilde{P})}](x,a)|
&\leq C_R \sup_{x,a} |[g_t(P)](x,a) - [g_t(\tilde{P})](x,a)| \\
&= C_R \sup_{x,a} |[f(P)](x,a) - [f(\tilde{P})](x,a)| \\
&\leq C_R \norm{P - \tilde{P}}. \numberthis \label{F1 first term}
\end{align*}
Note that for all $z, a''$
\begin{equation}
\begin{aligned}
\label{F1 second term}
&\abs{Q(z, a'')[g_t(P)](z,a'') - \tilde{Q}(z,a'') [g_t(\tilde{P})](z,a'')} \\
\leq& \abs{Q(z, a'')} \cdot \abs{[g_t(P)](z,a'') - [g_t(\tilde{P})](z,a'')} + [g_t(\tilde{P})](z,a'') \cdot \abs{Q(z, a'') - \tilde{Q}(z,a'')} \\
\leq& CR\bracket{\sup_{z,a''} \abs{P(z,a'')-\tilde{P}(z,a'')}} + \sup_{z,a''}\abs{Q(z,a'')-\tilde{Q}(z,a'')} \\
\leq& CR \norm{(Q,P)-(\tilde{Q},\tilde{P})}.
\end{aligned}
\end{equation}
Combining \eqref{F1 diff}, \eqref{F1 first term} and \eqref{F1 second term}, we have 
\beq 
\abs{[F_1(t, Q, P)](x,a) - [F_1(t, \tilde{Q}, \tilde{P})]} \leq C_R \norm{(Q,P)-(\tilde{Q},\tilde{P})}.
\eeq
Similarly for $F_2$,
\begin{equation}
\begin{aligned}
\label{F2 diff}
& \abs{[F_2(t, Q,P)](x, a) - [F_2(t, \tilde{Q},\tilde{P})](x,a)} \\
\leq& \sum_{x', a'} \zeta_t \abs{\clip(Q(x', a'))-\clip(\tilde{Q}(x', a'))} \sigma^{g_t(P)}(x', a') \abs{A_{x, a, x', a'} - \sum_{a''} [f(P)](x',a'')  A_{x,a,x',a''}} \\
+& \sum_{x', a'} \zeta_t \abs{\clip(\tilde{Q}(x', a'))} \abs{\sigma^{g_t(P)}(x', a') - \sigma^{g_t(\tilde{P})}(x', a')} \abs{A_{x, a, x', a'} - \sum_{a''} [f(P)](x',a'')  A_{x,a,x',a''}} \\
+& \sum_{x', a'} \zeta_t \abs{\clip(\tilde{Q}(x', a'))} \sigma^{g_t(\tilde{P})}(x', a') \abs{\sum_{a''} ([f(P)](x',a'') - [f(\tilde{P})](x',a'')) A_{x,a,x',a''}} \\
\leq& C \norm{(Q,P)-(\tilde{Q},\tilde{P})}.
\end{aligned}
\end{equation}
We therefore show that $F$ is locally Lipschitz if we restrict $(Q,P)$ to be inside a $R$-ball for any $R<\infty$.

The linear boundedness of $F$ can guarantee that the solution grows almost exponentially. In fact, we have
\begin{equation}
\norm{(Q_t,P_t)} \leq \norm{(Q_0, P_0)} + \int_0^t (C + \norm{(Q_s, P_s)} C) \, ds \leq (\norm{(Q_0, P_0)} + Ct) + C \int_0^t \norm{(Q_s,P_s)} \, ds.
\end{equation}
which together with Gronwall's inequality derive 
\beq
\label{exp growth}
\norm{(Q_t,P_t)} \leq (\norm{(Q_0, P_0)} + Ct) e^{Ct}.
\eeq
Suppose the above evolution equation possesses two solutions $(Q, P)_t, (\tilde{Q},\tilde{P})_t$ that satisfies $Q_0 = \tilde{Q}_0$ and $P_0 = \tilde{P}_0$. Then we have
\begin{equation*}
\frac{d}{dt}\norm{(Q_t,P_t) - (\tilde{Q}_t, \tilde{P}_t)}^2 \leq 2 \norm{(Q_t,P_t) - (\tilde{Q}_t, \tilde{P}_t)} \cdot \norm{F(t,Q_t,P_t) - F(t,\tilde{Q}_t, \tilde{P}_t)}.
\end{equation*}
Using \eqref{F1 first term}, \eqref{F2 diff}, \eqref{exp growth} and replacing $R$ in \eqref{F1 first term} by the norm $\norm{(Q_t,P_t)}$ in \eqref{exp growth}, we can show that 
\beq
\frac{d}{dt}\norm{(Q_t,P_t) - (\tilde{Q}_t, \tilde{P}_t)}^2 \leq \underbrace{(C + (C + Ct) e^{Ct})}_{H(t)} \norm{(Q_t,P_t) - (\tilde{Q}_t, \tilde{P}_t)}^2.
\eeq
Therefore, by Gronwall's inequality, we have
\begin{equation*}
\norm{(Q_t,P_t) - (\tilde{Q}_t, \tilde{P}_t)}^2 \leq \norm{(Q_0,P_0) - (\tilde{Q}_0, \tilde{P}_0)}^2 \exp\bracket{\int_0^t H(s) \, ds} = 0,
\end{equation*}
which guarantee uniqueness.
\end{proof}

\subsection{Proof of convergence}
\hspace{1.4em} With the above preparations, now we can finish the proof of Theorem \ref{limit odes}. Rrecall the sequence of probability measure $\rho^N$ being the law of $\left(\mu_t^N, \nu_t^N, P_t^N, Q_t^N\right)_{0\le t \le T}$. We have shown by relative compactness that every subsequence of $\rho^N$ posesses a further subsequence that weakly converges to the $\rho = (\mu, \nu, P, Q)$, which is the unique solution of the limit ODEs \eqref{NN gradient flow}. Therefore by Prokhorov's Theorem (see \cite{billingsley2013convergence, ethier2009markov} for details), $\rho^N$ weakly converges to $\rho$, and thus we can conclude that the process $\left(\mu_t^N, \nu_t^N, P_t^N, Q_t^N\right)_{0\le t \le T}$ weakly converges to $\rho$.

\section{Analysis of the limiting ODE}
\label{s:analysis_of_the_limiting_ODE}

\hspace{1.4em} We have already set up the limit ODEs for the algorithm \eqref{alg:onlineNAC} and now we study the convergence of the limit ODEs \eqref{NN gradient flow}. To improve the readilibity, we first clarify some notations. 

\begin{itemize}
\item From their definitions in \eqref{value function}, $\bar{V}^{f}(x)$ and $V^{f}(x,a)$ are related via the formula
\begin{equation}
\bar{V}^{f}(x)=\sum_a V^{f}(x, a)f(x,a).
\end{equation}
\item Recalling the state and state-action visiting measures $\nu^{f}$ and $\sigma^{f}$ defined in \eqref{visiting}, we have
$\sigma_\mu^{f}(x,a)=f(x,a) \cdot \nu_\mu^{f}(x)$. By \cite{konda2002actor}, the stationary distribution of $\bm{\tilde{\mathcal{M}}}$ is the corresponding visitation measure of $\bm{\mathcal{M}}$. And for the MDP start from a fixed state $x_0$, the visiting measures are denoted by $\nu^f_{x_0}(\cdot), \sigma^f_{x_0}(\cdot, \cdot)$
\item Let the advantage function of policy $f$ denoted by
\begin{equation}
A^{f}(x, a) = V^{f}(x, a) - \bar{V}^{f}(x),\quad \forall (x, a) \in \bm{\mathcal{X}} \times \bm{\mathcal{A}},
\end{equation}
\end{itemize}

We recall that the gradient of a policy $f$ parametrised by some parameter $\theta$ can be evaluated in terms of the visiting measure \eqref{visiting} according to the policy gradient theorem \eqref{eq:policy_gradient}:
\beq
\label{policy gradient theorem}
\nabla_{\theta} J(f_{\theta}) = \sum_{x,a} \sigma^{f_{\theta}}(x,a) V^{f_{\theta}}(x,a) \nabla_{\theta} \log f_{\theta}(x,a),
\eeq
Assume that $f = \text{softmax}(P)$ be the softmax policy parametrised directly by the $(x,a)$-entries of $P$, denoted as $P(x,a)$, so that
\beq
\label{softmax NN policy}
f(x,a) = \frac{\exp\bracket{P(x,a)}}{\sum_{a''} \exp\bracket{P(x,a)}}.
\eeq
Then the gradient $\nabla_P J(\mathrm{softmax}(P))$ can be evaluated using the following formula.
\begin{lemma}
\label{softmax gradient}
Define $\displaystyle{\partial_{x,a} J(f) := \frac{\partial J(f)}{\partial P(x,a)}}$, where $f = \mathrm{softmax}(P)$,   Then
\begin{equation}
\label{policy gradient}
\partial_{x,a} J(f) =  \sigma_{\rho_0}^{f}(x,a) A^{f}(x,a).
\end{equation}
\end{lemma}

\begin{proof}
By the Policy Gradient Theorem \eqref{policy gradient theorem}, we have 
\begin{align*}
\partial_{x,a} J(f) &=  \sum_{x', a'} \nu_{\rho_0}^{f_\theta}(x') f(x', a') \mathsf{1}_{\{x' = x\}} \left[ \mathsf{1}_{\{a' = a\}} - f(x', a) \right] V^{f}(x', a') \\
&= \sum_{a'} \nu_{\rho_0}^{f}(x)f(x, a')  \left[ \mathsf{1}_{\{ a' = a\}} - f(x, a) \right] V^{f}(x, a') \\
&= \nu_{\rho_0}^{f}(x)f_\theta(x, a) V^{f}(x, a) -  \nu_{\rho_0}^{f}(x)f(x, a)  \left[ \sum_{a'}f(x, a') V^{f}(x, a') \right]\\
&= \nu_{\rho_0}^{f}(x)f(x, a) A^{f}(x, a)\\
&= \sigma_{\rho_0}^{f}(x, a) A^{f}(x, a). \numberthis \label{element}
\end{align*}
\end{proof}	

\subsection{Critic Convergence (Proof of Theorem \ref{thm:global_critic_convergence})}
We shall now prove the convergence of the critic \eqref{critic convergence}, which states that the critic model will converge to the action-value function during training. We first derive an ODE for the difference between the critic and the value function. Then, we use a comparison lemma, a two time-scale analysis, and the properties of the learning and exploration rates to prove the convergence of the critic to the value function. We shall focus on the case when $\eta_t = (1+t)^{-\varepsilon}$. The case when $\eta_t = (1+(\log(1+t)^2)^{-1}$ is deferred to Appendix \ref{SS:case_when_eta_is_log}.

The comparison lemma to be used is as follows:
\begin{lemma}[Comparison Lemma] \label{lem:comparison_new}
Consider a differentiable map $t \in [0,\infty) \mapsto Y_t$. Suppose $Y_t \geq 0$, and there exist constants $n_0, \mathsf{C}_1, \mathsf{C}_2, \mathsf{C}_3 > 0$ such that for all $t \geq 0$,
\begin{equation}
\frac{dY_t}{dt} \leq - \frac{\mathsf{C}_1 Y_t}{(1+t)^{\gamma_1}} + \frac{\mathsf{C}_2}{(1+t)^{\gamma_2}},
\end{equation}
where $\gamma_2 > 0$, then
\begin{itemize}
\item for $ \gamma_1 \in(0,1)$:
\begin{equation}
Y_t = O\left(\frac{1}{(1+t)^{\gamma_2-\gamma_1}} \right),
\end{equation}
\item for $\gamma_1 = 1$ and $\mathsf{C}_1-\gamma_2 \neq -1$:
\begin{equation}
Y_t = O\left(\frac{1}{(1+t)^{(\gamma_2-1) \wedge \mathsf{C}_1}} \right),
\end{equation}
\item for $\gamma_1 = 1$ and $\mathsf{C}_1-\gamma_2 = -1$:
\end{itemize}
\begin{equation}
Y_t = O\left(\frac{\log(1+t)}{(1+t)^{\mathsf{C}_1}} \right).
\end{equation}
\end{lemma}

\begin{proof}
See subsection \ref{SS:proof_of_comparison_principle}
\end{proof}

\begin{proof}[Proof of Theorem \ref{thm:global_critic_convergence}]
We shall decompose $\phi_t = \tilde{\phi}_t +  V^{g_t} - V^{f_t}$, where $\tilde{\phi_t} = Q_t - V^{g_t}$. Then, we have

\begin{align}
\frac{d\tilde{\phi}_t}{dt}(\xi) 
&=\alpha \sum_{\xi'} A_{\xi,\xi'} \pi^{g_t}(\xi') \Bigg[-\tilde{\phi}_t(\xi') + \gamma \sum_{z, a''} \tilde{\phi}_t(z, a'') g_t(z,a'') p(z|\xi') \Bigg] - \frac{d}{dt} V^{g_t}(\xi). \label{eq:dphi_dt_new}
\end{align}
Define
$$\tilde{Y}_t = \frac{1}{2} \tilde{\phi}_t A^{-1} \tilde{\phi}_t,$$
then
then
\begin{equation}
\frac{d\tilde{Y}_t}{dt} = - \alpha \sum_{\xi'} \pi^{g_t}(\xi') (\tilde{\phi}_t(\xi'))^2 + \alpha \gamma \sum_{\xi'} \tilde{\phi}_t(\xi') \pi^{g_t}(\xi') \sum_{z, a''} \tilde{\phi}_t(z, a'') g_t(z, a'') p(z | \xi') \, - \, \tilde{\phi}_t^{\top} A^{-1} \frac{dV^{g_t}}{dt}. \label{eq:Yeqn0new}
\end{equation}
The second term on the last line of \eqref{eq:Yeqn0new} becomes:
\begin{align}
&\phantom{=} \bigg{|} \sum_{\xi'=(x',a')} \tilde{\phi}_t(\xi') \pi^{g_t}(\xi') \sum_{z, a''} \tilde{\phi}_t(z, a'') g_t(z, a'') p(z\mid\xi') \bigg{|} \notag \\
&= \bigg{|} \sum_{\xi'=(x',a')} \sum_{z, a''} \tilde{\phi}_t(z, a'') \tilde{\phi}_t(\xi')  g_t(z, a'') p(z\mid \xi') \pi^{g_t}(\xi')  \bigg{|} \notag \\
&\leq \sum_{\xi'=(x',a')} \sum_{z, a''}  \bigg{|} \tilde{\phi}_t(z, a'') \tilde{\phi}_t(\xi') \bigg{|} g_t(z, a'') p(z | \xi') \pi^{g_t}(\xi') \notag \\
&\leq \frac{1}{2} \sum_{\xi'=(x',a')} \sum_{z, a''} \bigg{(} \tilde{\phi}_t(z, a'')^2 + \tilde{\phi}_t(\xi')^2 \bigg{)} g_t(z, a'') p(z | \xi') \pi^{g_t}(\xi') \notag \\
&= \frac{1}{2} \sum_{z, a''} \tilde{\phi}_t(z, a'')^2 \sum_{\xi'=(x',a')} g_t(z, a'') p(z | \xi') \pi^{g_t}(\xi') + \frac{1}{2} \sum_{\xi'=(x',a')} \tilde{\phi}_t(\xi')^2 \pi^{g_t}(\xi')  \sum_{z, a''}  g_t(z, a'') p(z | \xi') \notag \\
&= \frac{1}{2} \sum_{z, a''} \tilde{\phi}_t(z, a'')^2 \pi^{g_t}(z, a'')  + \frac{1}{2}  \sum_{\xi'=(x',a')} \tilde{\phi}_t(\xi')^2 \pi^{g_t}(\xi') \notag \\
&= \sum_{\xi=(x',a')} \phi_t(\xi')^2 \pi^{g_t}(\xi').     
\end{align}
where we have used Young's inequality, the fact that $\displaystyle \sum_{z, a''} g_t(z, a'') p(z | \xi')  = 1$ for each $\xi'=(x',a')$, and $\displaystyle \sum_{\xi'=(x',a')}  g_t(z, a'') p(z | \xi') \pi^{g_t}(\xi')  = \pi^{g_t}(z, a'')$. Therefore, with the lower bound $\min_{\xi'} \pi^{g_t}(\xi') \geq C_{\mathsf{1}} \eta_t^{n_0}$ from \eqref{eq:lower_bound_MDP_general}, we  have
\begin{align}
\frac{d\tilde{Y}_t}{dt} 
&\leq -\alpha(1-\gamma) \sum_{\xi'} \pi^{g_t}(\xi') (\tilde{\phi}(\xi'))^2 - \tilde{\phi}^\top A^{-1} \frac{dV^{g_t}}{dt} \nonumber \\
&\leq -\alpha(1-\gamma)C_{\mathsf{1}} \eta_t^{n_0} \sum_{\xi'} (\tilde{\phi}(\xi'))^2 - \tilde{\phi}^\top A^{-1} \frac{dV^{g_t}}{dt}. \label{eq:Yeqn_new}
\end{align}
Moreover, we have
\begin{align}
\tilde{\phi}_t^\top A^{-1} \frac{dV^{g_t}}{dt} 
&\leq \frac{CM}{\sigma_{\min}(A)} \|\tilde{\phi}_t\| \zeta^{\mathsf{actor}}_t \nonumber \\
&= \left(\left(\alpha(1-\gamma) C_{\mathsf{1}} \right)^{1/2} \eta^{n_0/2}_t \right) \left(\frac{CM}{(\alpha (1-\gamma) C_{\mathsf{1}})^{1/2} \sigma_{\min}(A)} \frac{\zeta_t}{\eta^{n_0/2}_t} \|\tilde{\phi}_t\| \right) \nonumber \\
&\leq \frac{\alpha(1-\gamma) C_{\mathsf{1}}}{2} \eta^{n_0}_t \|\tilde{\phi}_t\|^2 + \frac{C^2 M^2}{\alpha (1-\gamma) C_{\mathsf{1}} \sigma_{\min}^{2}(A)} \frac{\zeta_t}{\eta^{n_0}_t}
\label{eq:zetaBound_new}
\end{align}
Combining \eqref{eq:Yeqn_new} with \eqref{eq:zetaBound_new}, we have 
\begin{align}
\frac{d\tilde{Y}_t}{dt} &\leq -\frac{\alpha(1-\gamma)C_{\mathsf{1}}}{2} \eta_t^{n_0} \|\tilde{\phi}_t \|^2 + \frac{C^2 M^2}{\alpha (1-\gamma) C_{\mathsf{1}} \sigma_{\min}^{2}(A)} \frac{\zeta_t^2}{\eta^{n_0}_t} \nonumber \\ 
&\leq -\frac{\alpha(1-\gamma) C_{\mathsf{1}}}{2(1+t)^{n_0\varepsilon}} \|\tilde{\phi}_t\|^2 + \frac{C^2 M^2}{\alpha (1-\gamma) C_{\mathsf{1}} \sigma_{\min}^2(A)} \frac{1}{(1+t)^{2\beta - n_0\varepsilon}}.
\end{align}
Assume that $\beta_a > n_0\varepsilon$, then by Lemma \ref{lem:comparison_new}, we have
\begin{equation}
\tilde{Y}_t = O\left(\frac{1}{(1+t)^{2(\beta-n_0\varepsilon)}} \right).
\end{equation}
This also means that 
\begin{equation}
\|\tilde{\phi}_t \|^2 \leq \frac{1}{\sigma_{\min}(A^{-1})} \tilde{Y}_t = \sigma_{\max}(A) \tilde{Y}_t = O\left(\frac{1}{(1+t)^{2(\beta-n_0\varepsilon)}}\right).
\end{equation}
We shall now bound $V^{g_t} - V^{f_t}$. Note that
\begin{align*}
V^{g_t}(\xi) - V^{f_t}(\xi) &= \gamma \sum_{\xi'} (V^{g_t}(\xi') - V^{f_t}(\xi')) g_t(\xi') p(x'|\xi)  + \gamma \sum_{\xi'} V^{f_t}(\xi') (g_t(\xi')-f_t(
\xi') p(x'|\xi) \\
&=\gamma \sum_{\xi'} (V^{g_t}(\xi') - V^{f_t}(\xi')) g_t(\xi') p(x'|\xi)  + \gamma \eta_t \sum_{\xi'} V^{f_t}(\xi') \left(\frac{1}{\#\bm{\A}}-f_t(
\xi') \right) p(x'|\xi).
\end{align*}
Therefore
\begin{align*}
|V^{g_t}(\xi) - V^{f_t}(\xi)| &\leq \gamma \sum_{\xi'} |V^{g_t}(\xi') - V^{f_t}(\xi')| g_t(\xi') p(x'|\xi) + \gamma \eta_t \sum_{\xi'} |V^{f_t}(\xi')| \left(\frac{1}{\#\bm{\A}} + f_t(
\xi') \right) p(x'|\xi) \\
&\leq \gamma \max_{\xi'} |V^{g_t}(\xi')-V^{f_t}(\xi')| + \frac{2\gamma}{1-\gamma} \eta_t.
\end{align*}
Thus, by taking maximum and re-arranging, we have
\begin{equation}
\max_{\xi'} |V^{g_t}(\xi')-V^{f_t}(\xi')| \leq \frac{2\gamma}{(1-\gamma)^2},
\end{equation}
and that
\begin{equation}
\|V^{g_t} - V^{f_t} \| \leq \frac{2\gamma\sqrt{M}}{(1-\gamma)^2} \eta_t =\frac{2\gamma \sqrt{M}}{(1-\gamma)^2} \frac{1}{(1+t)^\varepsilon}.
\end{equation}
Therefore, by triangle inequality, we have
\begin{equation}
\|\phi_t\| = O\left(\frac{1}{(1+t)^{(\beta-n_0\varepsilon) \wedge \varepsilon}}\right).
\end{equation}
\end{proof}

\subsection{Actor Convergence (Proof of Theorem \ref{thm:global_actor_convergence})}
Now we show that the actor converges to a stationary point. We introduce the following notation:
\begin{align}
\widehat{\nabla}_P J(f_t) &:= \sum_{x'', a''} \sigma_{\rho_0}^{f_t}(x'' , a'') Q_t(x'', a'') \nabla_P \log f_t(x'', a''), \\
\widehat{\partial}_{x,a} J(f_t)&:= \sum_{x'', a''} \sigma_{\rho_0}^{f_t}(x'', a'') Q_t(x'', a'') \partial_{P(x,a)} \log f_t(x'', a''),
\end{align}
with $\hat{\partial}_{x,a} J(f_t)$ being the $(x,a)$-entries of the vector $\widehat{\nabla}_P J(f_t)$. Since $f_t = \text{softmax}(P_t)$, hence
\begin{align*}
\widehat{\partial}_{x,a} J(f_t) 
&= \sum_{x'', a''} \sigma_{\rho_0}^{f_t}(x'',a'') Q_t(x'',a'') \mathsf{1}_{\{x'' = x\}} \left[ \mathsf{1}_{\{a'' = a\}} - f_t(x'', a) \right]   \\
&= \sigma_{\rho_0}^{f_t}(x,a)\left[ Q_t(x,a) - \sum_{a''} Q_t(x,a'') f_t(x,a'') \right].
\end{align*}
Using the same method as in \cite{wang2021global} and the following lemmas, we can prove $\|\nabla_P J(f_t)\| \to 0$ as $t \to \infty$. Again, here we only include the case when $\eta_t = (1+t)^{-\varepsilon}$. The case for $\eta_t = (1+(\log(1+t))^2)^{-1}$ is similar, and is deferred to subsection \ref{SS:case_when_eta_is_log}.

\begin{proof}[Proof of Theorem \ref{thm:global_actor_convergence_rate}]
We first prove that whenever $\beta \in (\frac{n_0+1}{n_0+3}, 1)$, then there is $\varepsilon > 0$ such that $\beta + ((\beta - n_0 \varepsilon) \wedge \varepsilon) > 1$. We note that
\begin{align*}
\beta + 2((\beta - n_0 \varepsilon) \wedge \varepsilon) &> 1 \\
\iff 2((\beta - n_0 \varepsilon) \wedge \varepsilon) &> 1-\beta \\
\iff 2\beta - 2n_0 \varepsilon &> 1-\beta\text{ and } \varepsilon > \frac{1-\beta}{2} \\
\iff \varepsilon &\in \left(\frac{1-\beta}{2}, \frac{3\beta-1}{2n_0} \right).
\end{align*}
The interval in the RHS is non-empty iff
$$\frac{1-\beta}2 < \frac{3\beta-1}{2n_0} \iff \beta > \frac{n_0+1}{n_0+3}.$$

Let us now establish the convergence of $\inf_{t\in [0,T]\|}  \|\nabla_P J(f_t)$. Since $\|Q_t - V^{f_t} \| \overset{t \to \infty}\to 0$, there exists a $T_0$ such that whenever $t\geq T_0$, $\clip(Q_t) = Q_t$ (and $|Q_t(x,a)| \leq 2/(1-\gamma)$), which implies for any $\xi = (x,a)$:
\begin{align*}
[A \widehat{\nabla}_P J(f_t)]_\xi &= \sum_{x',a'} A_{\xi,(x',a')} \left[Q_t(x',a') - \sum_{a''} Q_t(x',a'') f_t(x',a'') \right] \sigma^{f_t}_{\rho_0}(x',a') \\
&= \sum_{x',a'} A_{\xi,(x',a')} Q_t(x',a') \sigma^{f_t}_{\rho_0}(x',a') - \sum_{x',a',a''} A_{(x,a),(x',a')} Q_t(x',a'') f_t(x',a'') \sigma^{f_t}_{\rho_0}(x',a') \\
&\overset{(*)}= \sum_{x',a'} A_{\xi,(x',a')} Q_t(x',a') \sigma^{f_t}_{\rho_0}(x',a') - \sum_{x',a',a''} A_{\xi,(x',a')} Q_t(x',a'') f_t(x',a') \sigma^{f_t}_{\rho_0}(x',a'') \\
&\overset{(**)}= \sum_{x',a'} \mathsf{clip}(Q_t(x',a')) \sigma^{f_t}_{\rho_0}(x',a') \left[A_{\xi, (x',a')} - \sum_{a''} A_{\xi,(x',a'')} f_t(x',a'') \right] \\
&= \sum_{x',a'} \mathsf{clip}(Q_t(x',a')) \sigma^{g_t}_{\rho_0}(x',a') \left[A_{\xi, (x',a')} - \sum_{a''} A_{\xi,(x',a'')} f_t(x',a'') \right] + \\
&\phantom{=}+ \sum_{x',a'} \mathsf{clip}(Q_t(x',a')) [\sigma^{g_t}_{\rho_0}(x',a') - \sigma^{f_t}_{\rho_0}(x',a')] \left[A_{\xi, (x',a')} - \sum_{a''} A_{\xi,(x',a'')} f_t(x',a'') \right] \\
&= \frac{1}{\zeta_t} \frac{dP_t}{dt}(\xi) - E_t(\xi)
\end{align*}
with
\begin{itemize}
\item $E_t(\xi)$ defined by the following:
\begin{align*}
E_t(\xi) &= \sum_{x',a'} \clip(Q_t(x',a')) [\sigma^{g_t}_{\rho_0}(x',a') - \sigma^{f_t}_{\rho_0}(x',a')] \left[A_{\xi, (x',a')} - \sum_{a''} A_{\xi,(x',a'')} f_t(x',a'') \right],
\end{align*}
\item $(*)$ holds as $\sigma^{f_t}_{\rho_0}(x',a') f_t(x',a'') = \nu^{f_t}_{\rho_0}(x') f_t(x',a') f_t(x',a'') = \sigma^{f_t}_{\rho_0}(x',a'') f_t(x',a')$, and
\item $(**)$ holds by swapping the indices $a'$ and $a''$ in the second term and replacing $Q_t$ with $\mathsf{clip}(Q_t)$.
\end{itemize}
Rearranging yields the following:
\begin{equation}
\frac{dP_t}{dt} = \zeta_t A \widehat{\nabla}_P J(f_t) - \zeta_t E_t,
\end{equation}
where $E_t$ is a vector having $\xi$-entries $E_t(\xi)$. Therefore we have
\begin{align*}
\frac{dJ(f_t)}{dt} &= \zeta_t(\nabla_P J(f_t))^\top A(\nabla_P J(f_t)) + \zeta_t(\nabla_P J(f_t))^\top A (\widehat{\nabla}_P J(f_t) - \nabla_P J(f_t)) - \zeta_t \sum_{\xi} \partial_\xi J(f_t) E_t(\xi).
\end{align*}
As $A$ is a positive definite with finite entries, $0 < \sigma_{\min}(A) \leq \sigma_{\max}(A) < \infty$, so
\begin{equation}
\frac{d}{dt} J(f_t) \geq \zeta_t \sigma^2_{\min}(A) \|\nabla_P J(f_t)\|^2 - \zeta_t \|\nabla_P J(f_t)\| \Big[\sigma^2_{\max}(A) \|\widehat{\nabla}_P J(f_t) - \nabla_P J(f_t)\| + \|E_t\| \Big]. \label{eq:pre-lower-bound}
\end{equation}
Recall that 
\begin{equation}
\partial_{x.a} J(f_t) - \widehat{\partial}_{x,a} J(f_t) = \sigma^{f_t}_{\rho_0}(x,a) \left[[Q_t - V^{f_t}](x,a) - \sum_{a''} [Q_t - V^{f_t}](x,a'')\right],
\end{equation}
so there are constants $C > 0$ (changing line by line) such that
$$\|\nabla_P J(f_t) - \widehat{\nabla}_P J(f_t) \| \leq C\|V^{f_t} - Q_t\| \leq \frac{C}{(1+t)^{(\beta-n_0\varepsilon) \wedge \varepsilon}}.$$
Furthermore, using statement (2) of Proposition \ref{prop:Lipschitzness_of_stationary_measures}, there are constants $C > 0$ such that
\begin{align*}
|E_t(\xi)| \leq C \eta_t \left[\sum_\xi \left|f(\xi) - \frac{1}{\#\mathcal{A}}\right| \right] \leq C \eta_t,
\end{align*}
so $\displaystyle{\|E_t\| \leq C\eta_t \leq \frac{C}{(1+t)^{(\beta-n_0\varepsilon) \wedge \varepsilon}}}$. \\

Substituting \eqref{eq:pre-lower-bound-new}, we know that there is a constant $C>0$ such that
\begin{align*}
\frac{d}{dt} J(f_t) &\geq \frac{1}{(1+t)^{\beta}} \sigma^2_{\min}(A) \|\nabla_P J(f_t)\|^2 - \frac{C}{(1+t)^{\beta+(\beta-n_0\varepsilon) \wedge \varepsilon}} \|\nabla_P J(f_t)\|, \quad t \geq T_0.
\end{align*}
Rearranging and utilising Young's inequality yields, for all $T \geq T_0$,
\begin{align*}
\frac{\sigma^2_{\min}(A)}{(1+t)^{\beta}}  \|\nabla_P J(f_t)\|^2 &\leq \frac{dJ(f_t)}{dt} + \frac{C}{(1+t)^{\beta+(\beta-n_0\varepsilon) \wedge \varepsilon}} \|\nabla_P J(f_t)\| \\
&\leq \frac{dJ(f_t)}{dt} + \frac{\sigma^2_{\min}(A)}{2(1+t)^{\beta}} \|\nabla_P J(f_t)\|^2 + \frac{C}{2 \sigma^2_{\min}(A)} \frac{1}{(1+t)^{\beta+2((\beta-n_\varepsilon) \wedge \varepsilon)}}. \\
\implies \frac{\|\nabla_P J(f_t)\|^2}{(1+t)^{\beta}} &\leq \frac{2}{\sigma^2_{\min} (A)} \frac{dJ(f_t)}{dt} + \frac{C}{2 \sigma^4_{\min}(A)} \frac{1}{(1+t)^{\beta+2((\beta-n_\varepsilon) \wedge \varepsilon)}}.
\end{align*}
Recall from Lemma \ref{softmax gradient} that
\begin{equation*}
\partial_{x,a} J(f_t) = \sigma^{f_t}_{\rho_0}(x,a) A^{f_t}(x,a).
\end{equation*}
As $|A^{f_t}(x,a)| \leq 2/(1-\gamma)$, the following bound holds:
\begin{equation}
\|\nabla_P J(f_t)\|^2 = \sum_{x,a} (\sigma^{f_t}_{\rho_0}(x,a))^2 (A^{f_t}(x,a))^2 \leq \frac{2}{1-\gamma}.
\end{equation}
Therefore,
\begin{align}
\int_0^T \frac{\|\nabla_P J(f_t)\|^2}{(1+t)^{\beta}} \, dt 
&= \int_0^{T_0} \frac{\|\nabla_P J(f_t)\|^2}{(1+t)^{\beta}} \, dt + \int_{T_0}^T \frac{\|\nabla_P J(f_t)\|^2}{(1+t)^{\beta}} \, dt \\
&\leq \frac{2T_0}{1-\gamma} + \frac{J(f_T) - J(f_{T_0})}{2\sigma^2_{\min}(A)} + \frac{C}{2\sigma^4_{\min}(A)} \int_0^T \frac{1}{(1+t)^{\beta+2((\beta-n_0 \varepsilon) \wedge \varepsilon)}} \, dt \nonumber \\
&\leq \frac{2T_0 \sigma^2_{\min}(A) + 1}{(1-\gamma)\sigma^2_{\min}(A)} + \frac{C}{2\sigma^4_{\min}(A)} \int_0^T \frac{1}{(1+t)^{\beta+2((\beta-n_0 \varepsilon) \wedge \varepsilon)}} \, dt, \label{eq:weighted_time_average_of_squared_norm}
\end{align}
noting that $|J(f)| = |\sum_x \rho_0(x) \bar{V}^f(x)| \leq 1/(1-\gamma)$ by \eqref{eq:trivial_bound}. \\

Since we assumed that $\beta+ 2((\beta-n_0\varepsilon) \wedge\varepsilon) > 1$, so is $\beta+2((\beta-n_0\varepsilon) \wedge\varepsilon)$, and the integral in the RHS converges. \\

Therefore, the weighted time-average of the squared gradient converges to zero:
\begin{align}
\inf_{t \in [0,T]} \|\nabla_P J(f_t) \|^2 \leq \frac{\int_0^T\zeta_t \|\nabla_P J(f_t)\|^2 \, dt}{\int_0^T \zeta_t \, dt} &\leq \frac{C'}{(1+T)^{1-\beta_a} - 1} \overset{T\to+\infty}\to 0.
\end{align}
\end{proof}

\begin{proof}[Proof of Theorem \ref{thm:global_actor_convergence}]
Using the exact same arguments in Theorem \ref{thm:global_actor_convergence_rate}, we can show that whenever $\beta \in (\frac{n_0+1}{n_0+2},1)$, then there is $\varepsilon > 0$ such that $\beta+((\beta-n_0\varepsilon) \wedge \varepsilon) > 1$. To show that $\displaystyle \lim_{t\to \infty}\|\nabla_P J(f_t)\| = 0$, assume the contrary; that is $\displaystyle \limsup_{t\to \infty}\|\nabla_P J(f_t)\| > 0$. Then we can find a constant $\epsilon_1>0$ and two increasing sequences $\{a_n\}_{n\ge 1}, \{b_n\}_{n\ge 1}$ such that 
\begin{align*}
a_1 <b_1 <a_2 <b_2 <a_3 <b_3 < \cdots,\\
\|\nabla_P J(f_{a_n})\| < \frac{\epsilon_1}{2},\quad \|\nabla_P J(f_{b_n})\| > \epsilon_1.
\end{align*}
Define the following cycle of stopping times:
\begin{align}
t_n &:= \sup\{s | s\in (a_n, b_n),\  \|\nabla_P J(f_s)\| < \frac{\epsilon_1}{2} \},\\
i(t_n) &:= \inf\{s | s\in (t_n, b_n),\  \|\nabla_\theta J(f_s)\| > \epsilon_1 \}.
\end{align}
Note that $ \|\nabla_P J(f_t)\| $ is continuous with respect to $t$, so
we have 
\begin{align*}
&a_n \le t_n < i(t_n) \le b_n \\
&\|\nabla_P J(f_{t_n})\| = \frac{\epsilon_1}{2}, \quad \|\nabla_P J(f_{i(t_n)})\| =\epsilon_1\\
&\frac{\epsilon_1}{2} \le \|\nabla_P J(f_s)\| \le \epsilon_1, \quad s\in(t_n, i(t_n)). \numberthis \label{property}
\end{align*}
Notice that there is a constant $L > 0$ such that gradient is $L$-Lipschitz, see e.g., the proof of \cite[Lemma 7]{mei2020global}, we have for any $t_n$
\begin{align*}
\frac{\epsilon_1}{2} &=  \|\nabla_P J(f_{i(t_n)})\| - \|\nabla_P J(f_{t_n})\|\\
&\le \| \nabla_P J(f_{i(t_n)}) - \nabla_P J(f_{t_n}) \|\\
&\le L\| P_{i(t_n)} - P_{t_n} \|\\
&\le C \int_{t_n}^{i(t_n)} \zeta_s\| \nabla_P J(f_s)\| ds + C \int_{t_n}^{i(t_n)} \zeta_s \|\widehat{\nabla}_P J(f_s) - \nabla_P J(f_s)\|ds \\
&\le C \epsilon_1 \int_{t_n}^{i(t_n)} \frac{1}{(1+t)^\beta} ds + C \int_{t_n}^{i(t_n)} \frac{1}{(1+t)^{\beta + ((\beta - n_0\varepsilon) \wedge \varepsilon)}} ds.
\end{align*}
Since $\int_0^\infty (1+t)^{-(\beta + ((\beta - n_0\varepsilon) \wedge \varepsilon))} \, dt < +\infty$, it follows that 
\begin{equation}
\label{key2}
\frac{1}{2L} \le \liminf_{n\to \infty} \int_{t_n}^{i(t_n)} \zeta_s ds.
\end{equation}
Using \eqref{property}, we see that 
\begin{equation*}
J(f_{\bar\theta_{i(t_n)}}) - J(f_{\bar\theta_{t_n}}) \ge C_1 \left(\frac{\epsilon_1}{2} \right)^2 \int_{t_n}^{i(t_n)} \zeta_s ds - C_2\int_{t_n}^{i(t_n)} \zeta_s\eta_s ds.
\end{equation*}
Due to the convergence of $J(f_{\theta_{t_n}})$ and the assumption of the learning rate, this implies that 
\begin{equation}
\lim_{n\to \infty} \int_{t_n}^{i(t_n)} \zeta_s ds = 0,
\end{equation}
which contradicts Equation \eqref{key2}, and thus the convergence to the stationary point is proven. 
\end{proof}

\section{Conclusion}
We have developed a convergence analysis for online neural network actor-critic algorithms in the limit as the number of hidden units and training steps $\rightarrow \infty$. In the algorithm, the neural network is initialised using the Neural Tangent Kernel (NTK) initialisation and an epsilon-greedy softmax policy is used for exploration. It is proven that the time-rescaled algorithm evolutions converges weakly to a limit ODE system using weak convergence techniques combined with an appropriate Poisson equation. The limit ODE involves an NTK kernel, and we study the dynamics as the training time $t \rightarrow \infty$. 

In our analysis of the limit ODE, we prove that the critic network converges to the action-value function for the current policy and the actor network converges to a stationary point of the objective function (the expected discounted reward). A crucial element necessary for the proof is the careful selection of the exploration rate in the epsilon-greedy softmax policy, highlighting the role of exploration in learning an optimal policy. Convergence rates are also proven for both the actor and critic ODEs.

There are many interesting future research directions which could build upon the results in this paper. We have focused our analysis on the very specific case where $\sigma$ is the sigmoid function $\sigma(z)=(1+e^{-z})^{-1}$, and did not generalise too far from the original Cybenko's condition \cite{cybenko1989approximation}, which ensures the universal approximation of functions by neural networks. It has been shown in \cite{sirignano2021asymptotics, sirignano2022scalinglimitneuralnetworks}) that this is sufficient to ensure that the NTK is positive definite. It has since been established in more recent research (e.g.,  \cite{Carvalho2024NTKpositive}) that the NTK is positive definite even when a much more mild condition is imposed on $\sigma$. This, in principle, paves the way for us to extend our analysis for more general activation functions. However, several important technical challenges remain. For unbounded ReLUs, there will be additional challenges that must be addressed in order to establish bounds for the parameter updates. These challenges due to the unbounded nature of ReLUs could potentially be addressed by clipping the critic in the critic updates. However, it is unclear how this would affect the convergence of the limit critic to a valid action value function as the training time $t \rightarrow \infty$. Modifying the Taylor expansion arguments when deriving the ODE, such that it is also valid when the neural network is non-differentiable at certain points (which is the case for ReLU activation functions). 

It would also be interesting to prove an extension of the current results to a general Polish state space. A new condition is needed for the MDP to ensure that the induced/auxiliary Markov chain by a reasonably large class of policies is (geometric) ergodic. It is also not trivial to ensure that the stationary measures $\pi^f$ and $\sigma^f_{\rho_0}$ remain Lipschitz with respect to the policies $f$. The space that the action-value functions are studied in is important. For example, we could consider the reproducing kernel Hilbert space (RKHS) of the associated Neural Tangent Kernel (NTK). \cite{cayci2022finitetime, wang2019neural} provides more details on how this could be done. The main difficulty would be that it is now more difficult to specify the path space $D_E([0,T])$, where $E$ is the Polish space in which the action value functions live. The norm of the Polish space must be carefully chosen for the convergence to be meaningful. A key challenge is that a completely different proof would have to be used for the limit ODEs as $t \rightarrow \infty$ since the quantity $Y_t := \phi_t^\top A^{-1} \phi_t$ in the current critic convergence proof (subsection 5.1) cannot be used in infinite dimensions since the infinite-dimensional NTK kernel cannot be inverted. 

Finally, it would be interesting to consider how different normalization scalings affect the limit differential equations. As mentioned in the Introduction, the normalisation scaling for the neural network approximation determines if the limit differential equation is an ODE or a McKean-Vlasov equation (mean-field limit), and the step size determines if a Brownian motion is present. It would also be interesting to explore the dynamics of Stochastic Gradient Langevin Dynamics for actor-critic algorithms.

\section*{Acknowledgement}
This research has been supported by the EPSRC Centre for Doctoral Training in Mathematics of Random Systems: Analysis, Modelling and Simulation (EP/S023925/1).

\appendix
\section{Pre-limit Evolutions of the Actor and Critic Networks}
\label{S:pre_limit_evolution}

Recall that the Online Actor-Critic algorithm (Algorithm \ref{alg:onlineNAC}) could therefore be written as followed, given the choice of the learning rates $\lambda = 1$:
\begin{align*}
C^{i}_{k+1} &= C^{i}_k + \frac{1}{N\sqrt{N}} \left(r(\xi_k) + \gamma \sum_{a''} Q^N_k(x_{k+1}, a'') g^N_k(x_{k+1},a'') - Q^N_k(\xi_k) \right) \sigma \left(W^i_k \cdot \xi_k \right), \\
W^i_{k+1} &= W^i_k + \frac{1}{N\sqrt{N}}  \left( r(\xi_k) + \gamma \sum_{a''} Q^N_k(x_{k+1}, a'') g^N_k(x_{k+1},a'') - Q^N_k(\xi_k) \right) C^{i}_k \sigma' \left( W^i_k \cdot \xi_k \right) \xi_k, \\
B^i_{k+1} &= B^i_k + \frac{\zeta^N_k}{N\sqrt{N}} \clip(Q^N_k(\tilde{\xi}_k)) \bracket{\sigma(U^i \cdot (\tilde{\xi}_k)) - \sum_{a''} f^N_k(\tilde{x}_k,a'') \sigma(U^i \cdot (\tilde{x}_k, a''))}, \\
U^i_{k+1} &= U^i_k + \frac{\zeta^N_k}{N\sqrt{N}} \clip(Q^N_k(\tilde{\xi}_k)) \bracket{B^i_k \sigma'(U^i_k \cdot \tilde{\xi}_k) \tilde{\xi}_k - \sum_{a''} f^N_k(\tilde{x}_k,a'') B^i_k \sigma'(U^i_k \cdot \tilde{\xi}_k) (\tilde{x}_k, a'')}. \numberthis \label{eq:updates_repeated}
\end{align*}

In computing the evolution of the Actor and Critic networks, we will
\begin{itemize}
    \item prove that the increments of the parameters are bounded,
    \item prove a-priori $L^2$ bounds for the outputs of the actor and critic networks, and 
    \item specify the size of the error terms in the pre-limit evolution, and
    \item rewrite the pre-limit evolution in terms of fluctuation terms
\end{itemize}

\subsection{Bounds for the increments of the parameters}

\begin{lemma}[A-priori bounds of size of increments of parameters]
\label{NN parameter bound}
For any fixed $T>0$, any $k$ such that $k \leq TN$ and $i \in [N] = \{1,...,N\}$, there exists a constant $C_T<\infty$ that only depends on $T$ such that 
\begin{equation}
\max\bracket{\left|C^i_k \right|, \e\|W_k^i\|, |B^i_k|, \e\| U_k^i \|} < C_T, 
\end{equation}
and that
\begin{equation}
\max\bracket{\left|C^i_{k+1} - C^i_k \right|, \left\|W^i_{k+1} - W^i_k \right\|} \leq \frac{C_T}{N}.
\end{equation}
Moreover,
\begin{equation}
\max\bracket{\left|B^i_{k+1} - B^i_k \right|, \left\|U^i_{k+1} - U^i_k \right\|} < \frac{C_T}{N^{3/2}}
\end{equation}
\end{lemma}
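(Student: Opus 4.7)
The plan is to prove all four bounds by a joint induction on the step index $k$, exploiting that $|\sigma|, |\sigma'| \le 1$, $|r| \le 1$, $\clip(\cdot) \in [0,2]$, $0 < \zeta^N_k, \eta^N_k \le 1$, and $\|\xi\|$ is bounded uniformly on the finite set $\bm{\X} \times \bm{\A}$; call this bound $K$. The base case $k = 0$ is immediate from Assumption~\ref{as:NN_condition}. The induction will be organised so that $|C^i_k|$ and $|B^i_k|$ are controlled in the almost sure sense, from which the stated bounds on $\mathbb{E}\|W^i_k\|$ and $\mathbb{E}\|U^i_k\|$ follow via one further telescoping sum of the weight updates.

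First I would dispatch the actor parameters $(B^i_k, U^i_k)$, because the clipping in \eqref{NACActorupdates} provides a \emph{deterministic} a priori bound on the size of each update. Using $\clip(Q^N_k(\tilde\xi_k)) \le 2$, $|\sigma|, |\sigma'| \le 1$, and $\zeta^N_k \le 1$, the updates displayed in \eqref{eq:updates_repeated} give $|B^i_{k+1}-B^i_k| \le C/N^{3/2}$ and $\|U^i_{k+1}-U^i_k\| \le C(1+|B^i_k|)K/N^{3/2}$ outright. Telescoping over $k \le TN$ yields $|B^i_k| \le |B^i_0| + CT/\sqrt{N}$, which is bounded uniformly in $N$ and $k$, and then $\mathbb{E}\|U^i_k\| \le \mathbb{E}\|U^i_0\| + C_T T/\sqrt{N}$, also bounded. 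This settles $|B^i_k| \le C_T$, $\mathbb{E}\|U^i_k\| \le C_T$, and the actor increment bound of order $C_T/N^{3/2}$ simultaneously.

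The critic parameters are more delicate, because the temporal-difference error $r(\xi_k) + \gamma Q^N_k(\xi_{k+1}) - Q^N_k(\xi_k)$ in \eqref{NACCriticupdates} is not clipped and can a priori grow with $N$. The key observation is that \eqref{critic NN} furnishes the crude pointwise bound $|Q^N_k(\xi)| \le \sqrt{N}\,M^N_k$, where $M^N_k := \max_i |C^i_k|$. Substituting this into the $C^i$ update and taking a maximum over $i$ produces the one-step recursion
\begin{equation*}
M^N_{k+1} \le M^N_k \Bigl(1 + \tfrac{2\alpha\gamma}{N}\Bigr) + \tfrac{\alpha}{N^{3/2}},
\end{equation*}
which is a discrete Gr\"onwall inequality yielding $M^N_k \le e^{2\alpha\gamma T}(M^N_0 + \alpha T/\sqrt{N}) =: C_T$ for every $k \le TN$. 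Plugging this deterministic bound back in gives $|r(\xi_k) + \gamma Q^N_k(\xi_{k+1}) - Q^N_k(\xi_k)| \le 1 + 2\gamma C_T\sqrt{N}$, after which the increment bounds $|C^i_{k+1} - C^i_k| \le C_T/N$ and $\|W^i_{k+1} - W^i_k\| \le C_T K/N$ follow directly from \eqref{eq:updates_repeated}. Summing the $W^i$ increments over $k \le TN$ and taking expectations then delivers $\mathbb{E}\|W^i_k\| \le \mathbb{E}\|W^i_0\| + C_T T$, closing the induction.

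The main obstacle is precisely this self-coupling between $Q^N_k$ and the weights $C^i_k$ via the TD error: a naive estimate makes each $C^i$-update appear to scale like $M^N_k/N$, which over $TN$ iterations could superficially diverge. The resolution is the Gr\"onwall structure above, in which the $1/N$ prefactor on the linear term is exactly what converts $TN$ steps into an $e^{O(T)}$ accumulation rather than an unbounded one. The resulting constant $C_T$ therefore depends exponentially on the horizon $T$ but, crucially, is independent of $N$, which is all that the subsequent compactness and tightness arguments for weak convergence require.
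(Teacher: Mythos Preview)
Your proposal is correct and follows essentially the same architecture as the paper's proof: handle the actor increments directly via the clipping, and for the critic parameters exploit the bound $|Q^N_k(\xi)| \lesssim \sqrt{N}\cdot(\text{aggregate of the }|C^i_k|)$ to obtain a linear one-step recursion with $O(1/N)$ growth factor, then close via a discrete Gr\"onwall over the $TN$ steps. The only cosmetic difference is that you run the Gr\"onwall on $M^N_k = \max_i |C^i_k|$ directly, whereas the paper runs it on the average $m^N_k = \tfrac{1}{N}\sum_i |C^i_k|$ and then bootstraps back to the individual $|C^i_k|$; your route is if anything slightly more streamlined. (Minor arithmetic: the TD-error bound should carry a factor $(1+\gamma)$ rather than $2\gamma$, but this is inconsequential.)
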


\begin{proof}
We start with the observations that
\begin{equation}
    \max_{\xi \in \bm{\X}\times \bm{\A}} |Q^N_k(\xi)| = \max_{\xi \in \bm{\X}\times \bm{\A}} \abs{\frac{1}{\sqrt{N}} \sum_{i=1}^N C^i_k \sigma(W^i_k \cdot \xi)} \leq \frac{1}{\sqrt{N}} \sum_{i=1}^N |C^i_k|
\end{equation}
(as $\sigma$ is bounded by 1 by Assumption \ref{as:activation_function}), and
\begin{align*}
\max_{x \in \bm{\X}} \abs{\sum_{a''} Q^N_k(x,a'') g^N_k(x,a'')} &\leq \max_{x \in \bm{\X}} \sum_{a''} \abs{Q^N_k(x,a'')} g^N_k(x,a'') \\
&\leq \max_{\xi \in \bm{\X} \times \bm{\A}} \abs{Q^N_k(\xi)} \sum_{a''} g^N_k(x,a'') = \max_{\xi \in \bm{\X} \times \bm{\A}} \abs{Q^N_k(\xi)}. \numberthis
\end{align*}
We may then recursive bound for $|C^i_k|$ using the above observations:
\begin{align*}
\left|C^i_{k+1} - C^i_k\right| &\leq \frac{\alpha^N}{\sqrt{N}} \abs{r(\xi_k) + \gamma \sum_{a''} Q^N_k(x_{k+1},a'') g^N_k(x_{k+1},a'') - Q^N_k(\xi_k)} \cdot \abs{\sigma(W^i_k \cdot \xi_k)} \\
&\leq \frac{\alpha}{N^{3/2}} \left( \abs{r(\xi_k)} + (1 + \gamma) \max_{\xi \in \bm{\X} \times \bm{\A} } \abs{Q^N_k(\xi)}\right) \cdot \abs{\sigma (W^i_k \cdot \xi_k)} \\
&{\le} \frac{\alpha}{N^{3/2}} \bracket{1 + (\gamma+1) \frac{1}{\sqrt{N}} \sum_{i=1}^N |C^i_k|} \\
&= \frac{\alpha}{N^{3/2}} + \frac{\alpha}{N^2} \sum_{i=1}^N |C^i_k|. \numberthis
\end{align*}
By recursively using the triangle inequality, and recalling that $C^i_0$ is a bounded random variable, we have
\begin{align*}
|C^i_k| \leq |C^i_0| + \sum_{j=1}^k (|C^i_j - C^i_{j-1}|) 
&\leq 1 + \sum_{j=1}^k \bracket{\frac{\alpha}{N^{3/2}} + \frac{\alpha}{N^2} \sum_{i=1}^N |C^i_{j-1}|} \\
&= 1 + \frac{\alpha T}{N^{1/2}} + \frac{\alpha}{N^2} \sum_{j=1}^k \sum_{i=1}^N |C^i_{j-1}|. \numberthis \label{eq:recursion_bound_Cik}
\end{align*}
Define 
\begin{equation}
\displaystyle{m_k^N = \frac{1}{N} \sum_{i=1}^N |C^i_k|}.
\end{equation}
Then
\begin{equation}
m_k^N \leq \frac{1}{N} \sum_{i=1}^N \bracket{1 + \frac{\alpha T}{N^{1/2}} + \frac{\alpha}{N^2} \sum_{j=1}^k \sum_{l=1}^N |C^l_{j-1}|} \le (1+\alpha T) + \frac{\alpha}{N} \sum_{j=1}^k m_{j-1}^N.
\end{equation}
By the discrete Gronwall's lemma and using $k \le TN$,
\begin{equation}
m_k^N \leq (1+\alpha T)\exp\left(\frac{\alpha k}{N} \right) \leq (1+\alpha T) \exp(\alpha T) =: C_T.
\end{equation}
Plugging this into \eqref{eq:recursion_bound_Cik} yields
\begin{equation}
\label{C bound}
\left|C^i_k\right| \le \left| C_0^i \right| + \frac{C}{N^{1/2}} + \frac{C}{N}\sum_{j=1}^k m_{j-1}^N
\le \left| C_0^i \right| + \frac{C}{N^{1/2}} + C_T \le C_T,
\end{equation}
We could bootstrap with this a-priori bound to show that 
\begin{equation}
    |C^i_{k+1} - C^i_k| \leq \frac{C}{N^{3/2}} + N \times \frac{C}{N^2} \times C_T \leq \frac{C_T}{N}.
\end{equation}
We can similarly get the bound for $\norm{W_k^i}$. In fact, 
\begin{align*}
\norm{W^i_{k+1} - W^i_k} &\leq \frac{\alpha^N}{\sqrt{N}} \abs{r(\xi_k) + \gamma \sum_{a''} Q^N_k(x_{k+1},a'') g^N_k(x_{k+1},a'') - Q^N_k(\xi_k)} \cdot \abs{C^{i}_k \sigma'\left( W^i_k \cdot \xi_k \right)} \|\xi_k\| \\
&\leq \frac{C_T}{N^{\frac32}} \bracket{C + (\gamma+1) N^{-\frac12} \sum_{i=1}^N |C^i_k|} \overset{\eqref{C bound}}{\le} \frac{C_T}{N}, \numberthis \label{eq:diff_of_W}
\end{align*}
Taking expectation and using Assumptions \ref{as:activation_function} and \ref{as:NN_condition} yields
\begin{equation}
\E\norm{W^i_{k}} \leq \E\norm{W^i_{0}} + \sum_{j=0}^{k-1} \e \norm{W^i_{k+1} - W^i_k} \leq C_T.
\end{equation}

For the boundedness of parameters in the actor network, observe that
\begin{align*}
|B^i_{k+1} - B^i_k| &\leq \zeta^N_k N^{-\frac32} |\clip(Q^N_k(\tilde{\xi}_k))| \cdot \sup_{a''} \abs{\sigma(\tilde{x}_k, a'')} \cdot \bracket{1 + \sum_{a''} f^N_k(\tilde x_k, a'')} < \frac{C}{N^{3/2}} \numberthis \label{eq:diff_of_B}
\end{align*}
then by telescoping series, we have for all $k \leq NT$
\begin{equation}
|B^i_k| \leq |B^i_0| + C\frac{k}{N^{\frac32}} \leq C + C \frac{T}{N^{\frac12}} \le C_T.
\end{equation}
As the state-action space is finite, we also have 
\begin{equation}
\norm{U^i_{k+1} - U^i_k} \leq \zeta^N_k N^{-\frac32} \left|\clip(Q^N_k(\tilde{\xi}_k))\right| |B^i_k| \left(1+\sum_{a''} f^N_k (\tilde{x}_k, a'')\right) \cdot \sup_{\xi \in \X \times \A} \norm{\xi} \le \frac{C_T}{N^{3/2}},
\end{equation}
which yields
\begin{equation}
\e\norm{U^i_k} \leq \e\norm{U^i_0} + C_T \frac{k}{N^{\frac 32}} \leq C + \frac{C_T}{N^{\frac12}} \leq  C_T, \quad \forall k \le TN.
\end{equation}
\end{proof}

\begin{lemma}[Increments of entries in the pre-limit kernels] \label{lem:diff_of_B_bar_B}
For all $k \leq NT$,
\begin{equation}
\max_{\xi, \xi' \in \X \times \A} \Big[ |[A(\nu^N_{k+1})]_{\xi,\xi'} - [A(\nu^N_k)]_{\xi,\xi'}| \vee |[A(\mu^N_{k+1})]_{\xi,\xi'} - [A(\mu^N_k)]_{\xi,\xi'}| \Big] \leq \frac{C_T}{N},
\end{equation}
where the kernels $A(\cdot)$ are defined in \eqref{eq:kernel_matrix}. Consequently, one could show by method of telescoping series that for all $k \leq NT$
\begin{equation}
\label{pre kernel bound}
\max_{\xi, \xi' \in \X \times \A} \Big[\big|[A(\nu^N_k)]_{\xi, \xi'} \big| \vee \big|[A(\mu^N_k)]_{\xi, \xi'} \big| \Big] \leq C_T,
\end{equation}
\end{lemma}

\begin{proof}
The proof for the case of the kernel $A(\mu^N_k)$ is exactly the same as the proof for the case of $A(\nu^N_k)$, both of which utilise our a priori bounds of the increments of the parameters. To the end, for all $\xi,\xi' \in \X \times \A$, we have
\begin{align*}
    &\phantom{=}\abs{\la \sigma(w \cdot \xi') \sigma(w \cdot \xi), \, \nu^N_{k+1} - \nu^N_k \ra} \\
    &\leq \frac{1}{N} \sum_{i=1}^N \abs{\sigma(W^i_{k+1} \cdot \xi') \sigma(W^i_{k+1} \cdot \xi) - \sigma(W^i_k \cdot \xi') \sigma(W^i_k \cdot \xi)} \\
    &\leq \frac{1}{N} \sum_{i=1}^N \sqbracket{\abs{\sigma(W^i_{k+1} \cdot \xi') - \sigma(W^i_k \cdot \xi')} \abs{\sigma(W^i_{k+1} \cdot \xi)} + \abs{\sigma(W^i_{k+1} \cdot \xi) - \sigma(W^i_k \cdot \xi)} \abs{\sigma(W^i_k \cdot \xi')}} \\
    &\leq \frac{1}{N} \sum_{i=1}^N (|\xi'| + |\xi|) \norm{W^i_{k+1} - W^i_k} \leq \frac{C_T}{N}. \numberthis \label{eq:B_increment_term_1}
\end{align*}
Similarly,
\begin{align*}
    &\phantom{=}\abs{\la c^2 \sigma'(w \cdot \xi') \sigma'(w \cdot \xi), \, \nu^N_{k+1} - \nu^N_k \ra} \\
    &\leq \frac{1}{N} \sum_{i=1}^N \abs{(C^i_{k+1})^2 \sigma(W^i_{k+1} \cdot \xi') \sigma(W^i_{k+1} \cdot \xi) - (C^i_k)^2 \sigma(W^i_k \cdot \xi') \sigma(W^i_k \cdot \xi)} \\
    &\leq \frac{1}{N} \sum_{i=1}^N \Big[\abs{(C^i_{k+1})^2 - (C^i_k)^2} \abs{\sigma(W^i_{k+1} \cdot \xi')} \abs{\sigma(W^i_{k+1} \cdot \xi)} \\
    &\phantom{=}+ (C^i_k)^2\abs{\sigma(W^i_{k+1} \cdot \xi') - \sigma(W^i_k \cdot \xi')} \abs{\sigma(W^i_{k+1} \cdot \xi)} + (C^i_k)^2 \abs{\sigma(W^i_{k+1} \cdot \xi) - \sigma(W^i_k \cdot \xi)} \abs{\sigma(W^i_k \cdot \xi')} \Big] \numberthis
\end{align*}
We have the control 
\begin{equation}
    \abs{(C^i_{k+1})^2 - (C^i_k)^2} \leq \abs{C^i_{k+1} - C^i_k}^2 + 2|C^i_k||C^i_{k+1} - C^i_k| \leq \frac{C^2_T}{N^2} + \frac{2C^2_T}{N} \leq \frac{C_T}{N}.
\end{equation}
By combining this with our previous analyses, we have
\begin{equation}
    \abs{\la c^2 \sigma'(w \cdot \xi') \sigma'(w \cdot \xi), \, \nu^N_{k+1} - \nu^N_k \ra}
    \leq \frac{1}{N} \sum_{i=1}^N \sqbracket{\frac{C_T}{N} + C_T \times \frac{C_T}{N} + C_T \times \frac{C_T}{N}} = \frac{C_T}{N}. \label{eq:B_increment_term_2}
\end{equation}
Summing up \eqref{eq:B_increment_term_1} and \eqref{eq:B_increment_term_2} yields $|[A(\nu^N_{k+1})]_{\xi,\xi'} - [A(\nu^N_k)]_{\xi,\xi'}| \leq C_T/N$, uniformly in $\xi,\xi'$. It remains to show that there is a $C>0$, independent of $T$, such that $|[A(\nu^N_0)]_{\xi,\xi'}| \leq C$. This is clearly true by the sure boundedness of $\sigma(\cdot), \sigma'(\cdot)$ and $C^i_0$ as guaranteed in the Assumptions \ref{as:activation_function} and \ref{as:NN_condition}. Therefore, we could consider the telescoping sum
\begin{equation}
\big|[A(\nu^N_k)]_{\xi,\xi'} \big| \leq \big| [A(\nu^N_0)]_{\xi,\xi'} \big| + \sum_{j=0}^{k-1} \big| [A(\nu^N_{j+1})]_{\xi,\xi'} - [A(\nu^N_j)]_{\xi,\xi'} \big| \leq C + N \times \frac{C_T}{N} \leq C_T,
\end{equation}
which completes our proof.
\end{proof}

\subsection{\texorpdfstring{$L^2$}{} bounds of network outputs}

Using Lemma \ref{NN parameter bound} and Lemma \ref{lem:diff_of_B_bar_B}, we can now establish the bounds for the neural networks. 

\begin{lemma}[A-priori $L^2$ bound for the outputs of the critic network] \label{lem:maximal_Q_bounded}
For all $k$ such that $k \leq TN$, there is a $C_T<\infty$ such that 
\begin{equation}
\E\sqbracket{\max_{(x,a) \in \X \times \A} |Q^N_k(x,a)|^2} < C_T.
\end{equation}
\end{lemma}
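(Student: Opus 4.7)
\textbf{My plan} is to combine the pre-limit evolution equation \eqref{eq:evolution_of_q_1} with the pathwise kernel bound $|\mathsf{B}^N_{\xi,\xi',k}| \leq C_T$ from Lemma \ref{lem:diff_of_B_bar_B} and a discrete Gronwall argument. Note that the naive bound $|Q^N_k(\xi)| \leq \frac{1}{\sqrt{N}}\sum_i |C^i_k| \leq \sqrt{N}\,C_T$ obtained directly from Lemma \ref{NN parameter bound} diverges with $N$, so it is essential to exploit both the recursive structure of the updates and the i.i.d.\ mean-zero structure of the parameters at initialization.

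\textbf{For the initial data,} I will expand $|Q^N_0(\xi)|^2$ and use independence together with $\mathbb{E}[C^i_0]=0$ (Assumption \ref{as:NN_condition}) to kill all cross terms, giving
\begin{equation*}
\mathbb{E}\bigl[|Q^N_0(\xi)|^2\bigr] = \frac{1}{N}\sum_{i=1}^N \mathbb{E}\bigl[(C^i_0)^2\sigma^2(W^i_0\cdot\xi)\bigr] \leq 1,
\end{equation*}
using $|C^i_0|\leq 1$ and $|\sigma|\leq 1$. Since $\mathcal{X}\times\mathcal{A}$ is finite of cardinality $M$, summing over $\xi$ yields $\mathbb{E}[\max_\xi |Q^N_0(\xi)|^2] \leq M$, uniformly in $N$.

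\textbf{For the recursion,} I will use \eqref{eq:evolution_of_q_1} in the form
\begin{equation*}
Q^N_{k+1}(\xi) = Q^N_k(\xi) + \frac{\alpha}{N}\bigl[r(\xi_k) + \gamma Q^N_k(\xi_{k+1}) - Q^N_k(\xi_k)\bigr]\mathsf{B}^N_{\xi,\xi_k,k} + R^N_k(\xi),
\end{equation*}
where the Taylor remainder $R^N_k$ is $O(N^{-3/2})$ pathwise thanks to $|C^i_k|\leq C_T$, $\|W^i_{k+1}-W^i_k\|\leq C_T/N$ (both pathwise, from Lemma \ref{NN parameter bound}) and the boundedness of $\sigma''$ from Assumption \ref{as:activation_function}. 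Combined with $|\mathsf{B}^N_{\xi,\xi',k}|\leq C_T$ and $|r|\leq 1$, this produces the pathwise recursion
\begin{equation*}
\max_\xi |Q^N_{k+1}(\xi)| \leq \Bigl(1+\tfrac{C_T}{N}\Bigr)\max_\xi |Q^N_k(\xi)| + \tfrac{C_T}{N}.
\end{equation*}
Discrete Gronwall's inequality, applied over $k \leq NT$, then yields $\max_\xi |Q^N_k(\xi)| \leq e^{C_T T}\bigl(\max_\xi |Q^N_0(\xi)| + 1\bigr)$ almost surely; squaring, taking expectation, and substituting the initial estimate gives the conclusion.

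\textbf{The delicate point} will be that the Gronwall bootstrap must be carried out pathwise, since $\max_\xi$ does not commute with $\mathbb{E}$ in a useful way. This is possible precisely because the parameter bounds from Lemma \ref{NN parameter bound} on $|C^i_k|$ and on the increments are almost-sure (they are deduced from $|C^i_0|\leq 1$, $|r|\leq 1$, and $|\sigma|,|\sigma'|\leq 1$, all of which are deterministic), and consequently the kernel bound from Lemma \ref{lem:diff_of_B_bar_B} is almost-sure as well. If instead any of these were only $L^2$-bounded, a more elaborate Burkholder-type estimate would be required to close the recursion.
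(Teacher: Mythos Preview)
Your proof is correct and takes a genuinely different route from the paper. The paper works in $L^2$ throughout: it bounds the squared increments $|Q^N_{k+1}(\xi)-Q^N_k(\xi)|^2 \leq \frac{C_T}{N^2}(1+\max_{\xi'}|Q^N_k(\xi')|^2)$ via Cauchy--Schwarz, telescopes $|Q^N_k(\xi)|^2 \leq 2\max_{\xi'}|Q^N_0(\xi')|^2 + NT\sum_{j<k}(Q^N_{j+1}(\xi)-Q^N_j(\xi))^2$, takes expectation, and then runs discrete Gronwall on $\mathbb{E}[\max_\xi |Q^N_k(\xi)|^2]$. Your argument instead establishes a \emph{pathwise linear} recursion on $\max_\xi |Q^N_k(\xi)|$ directly, runs Gronwall almost surely, and only squares and takes expectation at the end.

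Your approach is cleaner and more informative: since the pathwise inequality $\max_\xi |Q^N_k(\xi)| \leq C_T(\max_\xi |Q^N_0(\xi)|+1)$ can be raised to any power, you immediately get uniform $L^p$ bounds for every $p\geq 1$ (and indeed sub-Gaussian-type bounds via the bounded initial data), whereas the paper's argument only delivers $L^2$. Your observation that the pathwise character of the bounds in Lemmas~\ref{NN parameter bound} and~\ref{lem:diff_of_B_bar_B} is what enables this shortcut is exactly the right diagnosis. The paper's route, on the other hand, does not rely on noticing this pathwise structure and would be more robust in settings where only moment bounds on the parameters were available.
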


\begin{proof}
We first prove the statement for $k = 0$. Since $C^i_0$ and $\sigma(W^i_0 \cdot \xi)$ are both bounded by 1, we have
\begin{align*}
\E\sqbracket{\max_{\xi \in \X \times \A} |Q^N_0(\xi)|^2} \leq \E\sqbracket{\sum_{\xi \in \X \times \A} |Q^N_0(\xi)|^2}
&\leq \sum_{\xi \in \X \in \A} \E\sqbracket{\frac{1}{\sqrt{N}}\sum_{i=1}^N C^i_0 \sigma(W^i_0 \cdot \xi)}^2 \\
&\overset{(a)}{\leq} \frac{C}{N} \sum_{i=1}^N \E\sqbracket{C^i_0 \sigma(W^i_0 \cdot \xi)}^2 \leq C < \infty, \numberthis \label{eq:expectation_max_Q0}
\end{align*}
We now provide an $L^2$ control over the maximum increments of the outputs $Q^N_k(\xi)$. Recall that
\begin{equation}
\label{Q update bound}
Q^N_{k+1}(\xi) - Q^N_k(\xi) = \frac{1}{\sqrt{N}} \sum_{i=1}^N \sqbracket{(C^i_{k+1} - C^i_k) \sigma(W^i_{k+1} \cdot \xi) + C^i_k (\sigma(W^i_{k+1} \cdot \xi) - \sigma(W^i_{k+1} \cdot \xi))},
\end{equation}
so
\begin{align*}
\abs{Q^N_{k+1}(\xi) - Q^N_k(\xi)}^2
&\overset{\mathsf{(CS)}}\leq \frac{2}{N} \sqbracket{\bracket{\sum_{i=1}^N (C^i_{k+1} - C^i_k) \sigma(W^i_{k+1} \cdot \xi)}^2 + \bracket{\sum_{i=1}^N C^i_k (\sigma(W^i_{k+1} \cdot \xi) - \sigma(W^i_k \cdot \xi))}^2} \\
&\overset{\mathsf{(CS)}}\leq \frac{2}{N} \Bigg[\Bigg( \sum_{i=1}^N (C^i_{k+1} - C^i_k)^2 \Bigg) \Bigg(\sum_{i=1}^N \bracket{\sigma(W^i_{k+1} \cdot \xi)}^2 \Bigg) \\
&\phantom{=}+ \Bigg( \sum_{i=1}^N (C^i_k)^2 \Bigg) \Bigg( \sum_{i=1}^N \bracket{\sigma(W^i_{k+1} \cdot \xi) - \sigma(W^i_k \cdot \xi)}^2 \Bigg) \Bigg] \\
&\leq 2 \sqbracket{\sum_{i=1}^N (C^i_{k+1} - C^i_k)^2 + C_T \sum_{i=1}^N \bracket{\sigma(W^i_{k+1} \cdot \xi) - \sigma(W^i_k \cdot \xi)}^2} \numberthis \label{eq:apriori_bound_of_increment_of_Q}
\end{align*}
To proceed, we note that by Jensen inequality (and the fact that $\sum_{a''} g^N_k(x_{k+1},a'') = 1$),
\begin{align*}
\bracket{\sum_{a''} Q^N_k(x_{k+1},a'') g^N_k(x_{k+1},a'')}^2 &\leq \sum_{a''} \bracket{Q^N_k(x_{k+1},a'')}^2 g^N_k(x_{k+1},a'') \\
    &\leq \sum_{a''} \max_{\xi \in \bm{\X}\times\bm{\A}} |Q^N_k(\xi)|^2 g^N_k(x_{k+1},a'') = \max_{\xi \in \bm{\X}\times\bm{\A}} |Q^N_k(\xi)|^2,
    \numberthis
\end{align*}
so
\begin{align*}
\left|C^i_{k+1} - C^i_k \right|^2 &\leq \frac{(\alpha^N)^2}{N} \bracket{r(\xi_k) + \gamma \sum_{a''} Q^N_k(x_{k+1},a'') g^N_k(x_{k+1},a'') - Q^N_k(\xi_k)}^2  \bracket{\sigma(W^i_k \cdot \xi_k)}^2 \\
&\overset{\mathsf{(CS)}}\leq \frac{3\alpha}{N^3}\sqbracket{(r(\xi_k))^2 + \gamma^2 \bracket{\sum_{a''} Q^N_k(x_{k+1},a'') g^N_k(x_{k+1},a'')}^2 + \bracket{Q^N_k(\xi_k)}^2} \\
&\leq \frac{3\alpha}{N^3} \bracket{1 + (1+\gamma^2) \max_{\xi \in \bm{\X}\times\bm{\A}}|Q^N_k(\xi)|^2} \numberthis
\end{align*}
Making use of the mean-value inequality (and the fact that $|\sigma'| \leq 1$ by assumption \ref{as:activation_function}), one could show similarly
\begin{align*}
&\phantom{=} |\sigma(W^i_{k+1} \cdot \xi) - \sigma(W^i_k \cdot \xi)|^2 \\
&\leq |(W^i_{k+1} - W^i_k) \cdot \xi|^2 \\
&\leq \frac{(\alpha^N)^2}{N} \bracket{r(\xi_k) + \gamma \sum_{a''} Q^N_k(x_{k+1},a'') g^N_k(x_{k+1},a'') - Q^N_k(\xi_k)}^2  \bracket{\sigma(W^i_k \cdot \xi_k)}^2 (C^i_k)^2 (\xi_k \cdot \xi)^2 \\
&\leq \frac{3\alpha C_T^2}{N^3} \bracket{1 + (1+\gamma^2) \max_{\xi \in \bm{\X}\times\bm{\A}}|Q^N_k(\xi)|^2}, \numberthis
\end{align*}
noting that $(\xi_k \cdot \xi)^2$ is bounded by some constant $C$ as $\xi, \xi_k$ are elements from the finite set $\bm{\X}\times \bm{\A}$. Substituting into \eqref{eq:apriori_bound_of_increment_of_Q} yields
\begin{equation}
\abs{Q^N_{k+1}(\xi) - Q^N_k(\xi)}^2 \leq \frac{C_T}{N^2} \bracket{1 + (1+\gamma^2) \max_{\xi \in \bm{\X}\times\bm{\A}}|Q^N_k(\xi)|^2}. \label{eq:4.41}
\end{equation}
Therefore for any $\xi$ and $k \leq NT$,
\begin{align*}
    |Q^N_k(\xi)|^2 &= \bracket{Q^N_0(\xi) + \sum_{j=0}^{k-1} (Q^N_{j+1}(\xi) - Q^N_j(\xi))}^2 \\
    &\overset{\mathsf{(CS)}}= 2\bracket{Q^N_0(\xi)}^2 + 2\bracket{\sum_{j=0}^{k-1} (Q^N_{j+1}(\xi) - Q^N_j(\xi))}^2 \\
    &\overset{\mathsf{(CS)}}\leq 2\bracket{Q^N_0(\xi)}^2 + N\sum_{j=0}^{k-1} (Q^N_{j+1}(\xi) - Q^N_j(\xi))^2 \\
    &\leq 2\max_{\xi \in \bm{\X}\times\bm{\A}} |Q^N_0(\xi)|^2 + \frac{C_T}{N} \sum_{j=0}^{k-1} \bracket{1 + (1+\gamma^2) \max_{\xi \in \bm{\X}\times \bm{\A}}|Q^N_j(\xi)|^2}. \numberthis
\end{align*}
Taking maximum then expectation yields
\begin{align*}
\E\sqbracket{\max_{\xi \in \bm{\X}\times \bm{\A}}|Q^N_k(\xi)|^2} &\leq 2 \E\sqbracket{\max_{\xi \in \bm{\X}\times\bm{\A}} |Q^N_0(\xi)|^2} + \frac{C_T}{N} + \frac{C_T}{N} \sum_{j=0}^{k-1} \E\sqbracket{\max_{\xi \in \bm{\X}\times \bm{\A}}|Q^N_j(\xi)|^2} \\
&\leq C_T + \frac{C_T}{N} \sum_{j=0}^{k-1} \E\sqbracket{\max_{\xi \in \bm{\X}\times \bm{\A}}|Q^N_j(\xi)|^2}. \numberthis
\end{align*}
We conclude by discrete Gronwall's lemma that for all $k \leq TN$:
\begin{equation}
    \E\sqbracket{\max_{\xi \in \bm{\X}\times \bm{\A}}|Q^N_k(\xi)|^2} \leq C_T \exp\bracket{C_T \frac{k}{N}} \leq C_T < +\infty.
\end{equation}
\end{proof}

\begin{lemma}[A-priori $L^2$ bound for the outputs of the actor network]
\label{lem:maximal_P_bounded}
For all $k$ such that $k \leq NT$, there is a $C_T<\infty$ such that 
\begin{equation}
\e \sqbracket{\max_{(x,a) \in \X \times \A} \left|P^N_k(x,a)\right|^2} < C_T.
\end{equation}
\end{lemma}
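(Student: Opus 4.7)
The plan is to mirror the proof of Lemma \ref{lem:maximal_Q_bounded}, but the argument will be strictly easier because the actor updates \eqref{NACActorupdates} contain the clipped critic $\clip(Q^N_k(\tilde\xi_k))$, which is bounded by $2$ deterministically, so no Gronwall step is required.

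First, I would dispose of the base case $k=0$. Since $B^i_0$ is mean-zero and bounded by $1$ with $|\sigma|\le 1$, one computes exactly as in \eqref{eq:expectation_max_Q0} that
\begin{equation*}
\e\left[\max_{\xi\in\bm{\X}\times\bm{\A}}|P^N_0(\xi)|^2\right] \le \sum_{\xi}\e\!\left[\bigg(\tfrac{1}{\sqrt N}\sum_{i=1}^N B^i_0\sigma(U^i_0\cdot\xi)\bigg)^{\!2}\right] \le \frac{C}{N}\sum_{\xi}\sum_{i=1}^N \e[(B^i_0)^2] \le C.
\end{equation*}
Next, I would bound the one-step increments. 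Starting from the Taylor-style decomposition
\begin{equation*}
P^N_{k+1}(\xi)-P^N_k(\xi) = \frac{1}{\sqrt N}\sum_{i=1}^N\bigl[(B^i_{k+1}-B^i_k)\sigma(U^i_{k+1}\cdot\xi) + B^i_k(\sigma(U^i_{k+1}\cdot\xi)-\sigma(U^i_k\cdot\xi))\bigr],
\end{equation*}
two applications of Cauchy--Schwarz (as in \eqref{eq:apriori_bound_of_increment_of_Q}) and $|\sigma|,|\sigma'|\le 1$ give
\begin{equation*}
|P^N_{k+1}(\xi)-P^N_k(\xi)|^2 \le 2\sum_{i=1}^N(B^i_{k+1}-B^i_k)^2 + 2\Big(\max_i(B^i_k)^2\Big)\sum_{i=1}^N\|U^i_{k+1}-U^i_k\|^2\,\|\xi\|^2.
\end{equation*}
Now I would plug in the sure bounds from Lemma \ref{NN parameter bound}: $|B^i_k|\le C_T$, $|B^i_{k+1}-B^i_k|\le C_T N^{-3/2}$, and $\|U^i_{k+1}-U^i_k\|\le C_T N^{-3/2}$, together with the fact that $\bm{\X}\times\bm{\A}$ is finite so $\|\xi\|\le C$. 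This yields the deterministic estimate $|P^N_{k+1}(\xi)-P^N_k(\xi)|^2 \le C_T/N^2$ uniformly in $\xi$ and $k\le TN$.

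Finally, I would telescope and apply Cauchy--Schwarz in the discrete time index: for any $k\le TN$,
\begin{equation*}
|P^N_k(\xi)|^2 \le 2|P^N_0(\xi)|^2 + 2\bigg(\sum_{j=0}^{k-1}|P^N_{j+1}(\xi)-P^N_j(\xi)|\bigg)^{\!2} \le 2|P^N_0(\xi)|^2 + 2k\sum_{j=0}^{k-1}|P^N_{j+1}(\xi)-P^N_j(\xi)|^2 \le 2|P^N_0(\xi)|^2 + C_T,
\end{equation*}
where the last inequality uses $k\le TN$ and the per-step bound $C_T/N^2$. Taking the maximum over $\xi$ and then expectations, the base-case estimate controls the first term and gives $\e[\max_\xi|P^N_k(\xi)|^2]\le C_T$, as claimed. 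The main thing to be careful about is simply tracking that all the implicit constants from Lemma \ref{NN parameter bound} are independent of $k$ for $k\le TN$; there is no genuine obstacle, because the clipping inside the actor update already supplies the uniform boundedness that, in the critic case, had to be bootstrapped via Gronwall's inequality.
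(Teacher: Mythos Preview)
Your proposal is correct and follows essentially the same route as the paper: bound $\e[\max_\xi|P^N_0(\xi)|^2]$ by independence, obtain a deterministic $O(N^{-2})$ bound on $|P^N_{k+1}(\xi)-P^N_k(\xi)|^2$ from the sure parameter-increment bounds of Lemma~\ref{NN parameter bound}, then telescope. The only cosmetic difference is that the paper bounds $|P^N_{k+1}(\xi)-P^N_k(\xi)|$ directly by the triangle inequality (getting $C_T/N$ and squaring), whereas you square first and apply Cauchy--Schwarz as in \eqref{eq:apriori_bound_of_increment_of_Q}; both give the same per-step estimate and your observation that clipping removes the need for a Gronwall argument is exactly the point.
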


\begin{proof}
Again we first prove the statement for $k=0$. Since $B^i_0$ and $\sigma(W^i_0 \cdot \xi)$ are bounded by 1,
\begin{align*}
\E\sqbracket{\max_{\xi \in \X \times \A} |P^N_0(\xi)|^2} \leq \E\sqbracket{\sum_{\xi \in \X \times \A} |P^N_0(\xi)|^2} 
&\leq \sum_{\xi \in \X \in \A} \E\sqbracket{\frac{1}{\sqrt{N}}\sum_{i=1}^N B^i_0 \sigma(U^i_0 \cdot \xi)}^2 \\
&\leq \frac{C}{N} \sum_{i=1}^N \E\sqbracket{B^i_0}^2 \leq C < \infty. \numberthis
\end{align*}
The increments could again be controlled by noting
\begin{align*}
    |P^N_{k+1}(\xi) - P^N_k(\xi)| &\leq \frac{1}{\sqrt{N}} \sum_{i=1}^{N} \left[(B^i_{k+1}-B_k^i) \sigma(U_{k+1}^i \cdot \xi) + (\sigma(U_{k+1}^i \cdot \xi) - \sigma(U_{k}^i \cdot \xi)) B_k^i \right] \\
    &\leq \frac{1}{\sqrt{N}} \sum_{i=1}^{N} \left[\abs{B^i_{k+1}-B_k^i} \abs{\sigma(U_{k+1}^i \cdot \xi)} + \abs{\sigma(U_{k+1}^i \cdot \xi) - \sigma(U_{k}^i \cdot \xi)} \abs{B_k^i} \right] 
\end{align*}
By the mean-value inequality and the fact that both $\sigma$ and $\sigma'$ are bounded by 1 by assumption \ref{as:activation_function},
\begin{equation}
    |P^N_{k+1}(\xi) - P^N_k(\xi)| \leq \frac{1}{\sqrt{N}} \sum_{i=1}^{N} \frac{C_T}{N^{3/2}} = \frac{C_T}{N^{2}}.
\end{equation}
Therefore for all $\xi$,
\begin{align*}
    |P^N_k(\xi)|^2 = \bracket{P^N_0(\xi) + \sum_{j=0}^{k-1} (P^N_{j+1}(\xi) - P^N_j(\xi))}^2 &\leq 2 \max_{\xi \in \bm{\X}\times \bm{\A}}|P^N_0(\xi)|^2 + 2N \sum_{j=0}^{k-1}(P^N_{j+1}(\xi) - P^N_j(\xi))^2 \\ 
    &\leq 2 \max_{\xi \in \bm{\X}\times \bm{\A}} |P^N_0(\xi)|^2 + \frac{C_T}{N^2}. \numberthis
\end{align*}
Taking supremum then expectation yields the result.
\end{proof}

\subsection{Taylor Expansions for the Pre-limit Evolutions}

The evolution of the actor and critic network $Q_k^N$ could therefore be studied by using Taylor's expansions. For the critic network, one has:
\begin{align*}
Q^N_{k+1}(\xi) &= Q^N_k(\xi)+\frac{1}{\sqrt{N}} \sum_{i=1}^{N} \left[(C^i_{k+1}-C_k^i) \sigma(W_{k+1}^i \cdot \xi) + (\sigma(W_{k+1}^i \cdot \xi) - \sigma(W_{k}^i \cdot \xi)) C_k^i \right] \\ 
&= Q^N_k(\xi)+\frac{1}{\sqrt{N}} \sum_{i=1}^{N} \bigg[(C^i_{k+1}-C_k^i) \bracket{\sigma(W^i_k \cdot \xi) + \sigma'(W^{i,*}_k \cdot \xi) (W^i_{k+1} - W^i_k) \cdot \xi} \\
&\phantom{=}+ C_k^i \bracket{\sigma'(W^{i}_k \cdot \xi) (W^i_{k+1} - W^i_k) \cdot \xi + \frac{1}{2} \sigma''(W^{i,**}_k \cdot \xi) ((W^i_{k+1}-W^i_k) \cdot \xi)^2} \bigg], \numberthis
\end{align*}
where $W^{i,*}_k$ and $W^{i,**}_k$ are points in the line segment connecting the points $W^i_k$ and $W^i_{k+1}$. Substituting the parameter updates \eqref{eq:updates_repeated}, we have the following pre-limit evolution:
\begin{align*} 
Q^N_{k+1}(\xi)
&= Q^N_k(\xi)+\frac{1}{\sqrt{N}} \sum_{i=1}^{N} \left[(C^i_{k+1}-C_k^i) \sigma(W_k^i \cdot \xi)+\sigma'(W_k^i \cdot \xi) C_k^i (W^i_{k+1}-W_k^i) \cdot \xi \right] \\ 
&\phantom{=}+ \frac{1}{\sqrt{N}} \sum_{i=1}^{N} \bigg[\sigma'(W^{i,*}_k \cdot \xi) (C^i_{k+1}-C_k^i) ((W^i_{k+1} - W^i_k) \cdot \xi)
+ \frac{C_k^i \sigma''(W^{i,**}_k \cdot \xi)}{2} ((W^i_{k+1}-W^i_k) \cdot \xi)^2 \bigg]. \\
&= Q^N_k(\xi) + \frac{\alpha}{N} \left[ r(\xi_k) + \gamma \sum_{a''} Q^N_k(x_{k+1},a'') g^N_k(x_{k+1},a'') - Q^N_k(\xi_k) \right] \\
&\phantom{=}\times \frac{1}{N} \sum_{i=1}^N [\sigma(W^i_k \cdot \xi_k) \sigma(W^i_k \cdot \xi) + (C^i_k)^2 \sigma'(W^i_k \cdot \xi) \sigma(W^i_k \cdot \xi_k) (\xi \cdot \xi_k)] \\
&\phantom{=}+ \frac{1}{\sqrt{N}} \sum_{i=1}^{N} \bigg[\sigma'(W^{i,*}_k \cdot \xi) (C^i_{k+1}-C_k^i) ((W^i_{k+1} - W^i_k) \cdot \xi)
+ \frac{C_k^i \sigma''(W^{i,**}_k \cdot \xi)}{2} ((W^i_{k+1}-W^i_k) \cdot \xi)^2 \bigg]. \numberthis
\end{align*}
Use the definition of $A(\nu)$ as given in \eqref{eq:kernel_matrix}, we have
\begin{align*} 
[A(\nu^N_k)]_{\xi, \xi'} 
&= \langle \sigma(w \cdot \xi')\sigma(w \cdot  \xi) + c^2 \sigma'(w \cdot \xi')\sigma(w \cdot \xi) (\xi' \cdot \xi), \nu_k^N \rangle \\
&= \frac{1}{N} \sum_{i=1}^{N} \left[ \sigma(W_k^i \cdot \xi')\sigma(W_k^i \cdot  \xi) + (C_k^i)^2 \sigma'(W_k^i \cdot \xi')\sigma'(W_k^i \cdot  \xi) (\xi' \cdot \xi) \right].
\end{align*}
Therefore
\begin{align*}
Q^N_{k+1}(\xi) &= Q^N_k(\xi)+\frac{\alpha}{N} \left[ r(\xi_k) + \gamma \sum_{a''} Q^N_k(x_{k+1},a'') g^N_k(x_{k+1},a'') - Q^N_k(\xi_k) \right] [A(\nu^N_k)]_{\xi,\xi'}\\
&\phantom{=} + \frac{1}{\sqrt{N}} \sum_{i=1}^{N} \bigg[\sigma'(W^{i,*}_k \cdot \xi) (C^i_{k+1}-C_k^i) ((W^i_{k+1} - W^i_k) \cdot \xi)
+ \frac{C_k^i \sigma''(W^{i,**}_k \cdot \xi)}{2} ((W^i_{k+1}-W^i_k) \cdot \xi)^2 \bigg]. \numberthis
\end{align*}
For the actor network, one has
\begin{align*}
P^N_{k+1}(\xi) &= P^N_k(\xi) + \frac{1}{\sqrt{N}} \sum_{i=1}^{N} \left[(B^i_{k+1}-B_k^i) \sigma(U_{k+1}^i \cdot \xi) + (\sigma(U_{k+1}^i \cdot \xi) - \sigma(U_{k}^i \cdot \xi)) B_k^i \right]\\
&= P^N_k(\xi) + \frac{1}{\sqrt{N}} \sum_{i=1}^{N} \bigg[(B^i_{k+1}-B_k^i) \bracket{\sigma(U^i_k \cdot \xi) + \sigma'(U^{i,*}_k \cdot \xi) (U^i_{k+1} - U^i_k) \cdot \xi} \\
&\phantom{=}+ B_k^i \bracket{\sigma'(U^{i}_k \cdot \xi) (U^i_{k+1} - U^i_k) \cdot \xi + \frac{1}{2} \sigma''(U^{i,**}_k \cdot \xi) ((U^i_{k+1}-U^i_k) \cdot \xi)^2} \bigg] \\
&= P^N_k(\xi) + \sum_{i=1}^{N} \frac{\zeta^N_k}{N^2} \clip(Q^N_k(\tilde{\xi}_{k})) \Bigg[\sigma(U_k^i \cdot \xi)\bracket{\sigma(U^i_k \cdot \tilde{\xi}_k) - \sum_{a''} f^N_k(\tilde{x}_k, a'') \sigma(U^i_k \cdot (\tilde{x}_k, a''))} \\
&\phantom{=}+ (B_k^i)^2 \sigma'(U_k^i \cdot \xi) \bracket{\sigma'(U^i_k \cdot \tilde{\xi}_k) (\xi \cdot \tilde{\xi}_k) - \sum_{a''} f^N_k(\tilde{x}_k, a'') \sigma'(U^i_k \cdot (\tilde{x}_k, a'')) ((\tilde{x}_k, a'') \cdot \xi) } \Bigg] \\
&\phantom{=}+ \frac{1}{\sqrt{N}} \sum_{i=1}^{N} \bigg[\sigma'(U^{i,*}_k \cdot \xi) (B^i_{k+1}-B_k^i) ((U^i_{k+1} - U^i_k) \cdot \xi) + \frac{B^i_k \sigma''(U^{i,**}_k \cdot \xi)}{2} ((U^i_{k+1}-U^i_k) \cdot \xi)^2 \bigg] \numberthis \label{eq:4.6}
\end{align*}
where $U^{i,*}_k$ and $U^{i,**}_k$ are points in the line segment connecting the points $U^i_k$ and $U^i_{k+1}$. Again, we the notation $A(\nu)$ as defined in \eqref{eq:kernel_matrix} to simplify the evolution equation \eqref{eq:4.6}:
\begin{align*} 
[A(\mu^N_k)]_{\xi,\xi} 
&= \langle \sigma(u \cdot \xi')\sigma(u \cdot \xi) + b^2 \sigma'(u \cdot \xi')\sigma'(u \cdot  \xi) (\xi' \cdot \xi), \mu^N_k \rangle \\
&= \frac{1}{N} \sum_{i=1}^{N} \left[ \sigma(U_k^i \cdot \xi')\sigma(U_k^i \cdot  \xi) + (B_k^i)^2 \sigma'(U_k^i \cdot \xi')\sigma'(U_k^i \cdot  \xi) (\xi' \cdot \xi) \right], \numberthis \label{eq:bar_B}
\end{align*}
so,
\begin{align*}
P^N_{k+1}(\xi) &= P^N_k(\xi) + \frac{\zeta^N_k}{N} \clip(Q^N_k(\tilde{\xi}_k)) \sqbracket{[A(\mu^N_k)]_{\xi, \tilde{\xi}_k} - \sum_{a''} f^N_k(\tilde{x}_k, a'') [A(\mu^N_k)]_{\xi,(\tilde{x}_k,a'')}} \\
&\phantom{=}+ \frac{1}{\sqrt{N}} \sum_{i=1}^{N} \bigg[\sigma'(U^{i,*}_k \cdot \xi) (B^i_{k+1}-B_k^i) ((U^i_{k+1} - U^i_k) \cdot \xi) + \frac{B^i_k \sigma''(U^{i,**}_k \cdot \xi)}{2} ((U^i_{k+1}-U^i_k) \cdot \xi)^2 \bigg] \numberthis
\end{align*}

We are now ready to prove the Proposition \ref{prop:pre-limit evolution}.

\begin{proof}[Proof of Proposition \ref{prop:pre-limit evolution}]
We begin by noting for all $\xi$,
\begin{align*}
&\phantom{=}\abs{Q^N_{k+1}(\xi) - Q^N_k(\xi) - \frac{\alpha}{N} \bracket{r(\xi_k) + \gamma \sum_{a''} Q^N_k(x_{k+1},a'') g^N_k(x_{k+1},a'') - Q^N_k(\xi_k)} [A(\nu^N_k)]_{\xi,\xi_k}} \\
&= \frac{1}{\sqrt{N}} \abs{\sum_{i=1}^N \sigma'(W^{i,*}_k \cdot \xi) (C^i_{k+1}-C^i_k)(W^i_{k+1}-W^i_k) \cdot \xi + \frac{\sigma''(W^{i,**}_k \cdot \xi) C^i_k}{2}((W^i_{k+1}-W^i_k)\cdot \xi)^2} \\
&\overset{\mathsf{(CS)}}\leq \frac{1}{\sqrt{N}} \sum_{i=1}^N \sqbracket{\abs{C^i_{k+1}-C^i_k} \norm{W^i_{k+1}-W^i_k} \|\xi\| + C_T \norm{W^i_{k+1}-W^i_k}^2 \|\xi\|^2} \\
&\leq \frac{1}{\sqrt{N}} \sum_{i=1}^N \frac{C_T}{N^{3}} \bracket{1+(1+\gamma) \max_{\xi \in \bm{\X}\times\bm{\A}}|Q^N_k(\xi)|}^2 \leq \frac{C_T}{N^{5/2}} \bracket{1 + (1+\gamma)^2 \max_{\xi \in \bm{\X}\times\bm{\A}} |Q^N_k(\xi)|^2} \numberthis
\end{align*}
Taking maximum and expectation yields
\begin{align*}
&\phantom{=}\E\sqbracket{\max_\xi \abs{Q^N_{k+1}(\xi) - Q^N_k(\xi) - \frac{\alpha}{N} \bracket{ r(\xi_k) + \gamma \sum_{a''} Q^N_k(x_{k+1},a'') g^N_k(x_{k+1},a'') - Q^N_k(\xi_k)} [A(\nu^N_k)]_{\xi,\xi_k,k}}} \\
&\leq \frac{C_T}{N^{5/2}} \E\sqbracket{1 + (1+\gamma)^2 \max_{\xi \in \bm{\X}\times\bm{\A}} |Q^N_k(\xi)|^2} \leq \frac{C_T}{N^{5/2}}. \numberthis
\end{align*}
Similarly, for all $\xi$,
\begin{align*}
&\phantom{=}\abs{P^N_{k+1}(\xi) - P^N_k(\xi) - \frac{\zeta^N_k}{N} \clip(Q^N_k(\tilde{\xi}_k)) \bracket{[A(\mu^N_k)]_{\xi, \tilde{\xi}_k} - \sum_{a''} f^N_k(\tilde{x}_k, a'') [A(\mu^N_k)]_{\xi,(\tilde{x}_k,a'')}}} \\
&=\frac{1}{\sqrt{N}} \abs{\sum_{i=1}^N \sigma'(U^{i,*}_k \cdot \xi) (B^i_{k+1}-B^i_k)(U^i_{k+1}-U^i_k) \cdot \xi + \frac{\sigma''(U^{i,**}_k \cdot \xi) C^i_k}{2}((U^i_{k+1}-U^i_k)\cdot \xi)^2} \\
&\overset{\mathsf{(CS)}}\leq \frac{1}{\sqrt{N}} \sum_{i=1}^N \sqbracket{\abs{B^i_{k+1}-B^i_k} \norm{U^i_{k+1}-U^i_k} \|\xi\| + C_T \norm{U^i_{k+1}-U^i_k}^2 \|\xi\|^2} \leq \frac{C_T}{N^{5/2}}.
\end{align*}
This completes the proof.
\end{proof}

Using the notation as introduced in the Definition \ref{def:stochastic_boundedness}, one could write
\begin{align}
Q^N_{k+1}(\xi) &= Q^N_k(\xi)+\frac{\alpha}{N} \left[ r(\xi_k) + \gamma \sum_{a''} Q^N_k(x_{k+1},a'') g^N_k(x_{k+1},a'') - Q^N_k(\xi_k) \right] [A(\nu^N_k)]_{\xi,\xi_k} + O_p(N^{-5/2}). \label{eq:evolution_of_q_2} \\
P^N_{k+1}(\xi) &= P^N_k(\xi) + \frac{\zeta^N_k}{N} \clip(Q^N_k(\tilde{\xi}_k)) \sqbracket{ [A(\mu^N_k)]_{\xi, \tilde{\xi}_k, k} - \sum_{a''} f^N_k(\tilde{x}_k, a'')[A(\mu^N_k)]_{\xi,(\tilde{x}_k,a''),k}} + O(N^{-5/2}). \label{eq:evolution_of_p_2}
\end{align}

\section{Proofs for Relative Compactness}
\subsection{Proof of compact containment (Lemma \ref{lem:compact_containment})} \label{SS:proof_compact_containment}

\begin{proof}[Proof of Lemma \ref{lem:compact_containment}]
Let $K_L = [-L,L]^{1+d}$ denotes a compact subset in $\R^{1+d}$. We then see that for any $t \geq 0$ and $N \in \N$,
\begin{align*}
\E\sqbracket{\nu_t^N\bracket{\R^{1+d} \setminus K_L}} &= \frac{1}{N} \sum_{i=1}^N \p\bracket{(C^i_{\floor{Nt}}, W^i_{\floor{Nt}}) \in \R^{1+d} \setminus K_L} \\
&\leq \frac{1}{N} \sum_{i=1}^N \p\bracket{\abs{C^i_{\floor{Nt}}} + \norm{W^i_{\floor{Nt}}} \geq L} \leq \frac{C_T}{L}, \numberthis \label{eq:simple_bound_of_nunt}
\end{align*}
where the final step is by $\abs{C^i_{\floor{Nt}}} + \norm{W^i_{\floor{Nt}}}$ is integrable (from Lemma \ref{NN parameter bound}) and Chebyshev's inequality. We define the following subset of $\mathcal{M}\bracket{\R^{1+d}}$
\begin{equation}
\hat{K}_L = \overline{\set{\nu \in \mathcal{M}\bracket{\R^{1+d}} \,\bigg|\, \nu\bracket{\R^{1+d} \setminus K_{(L+j)^2}} < \frac{1}{\sqrt{L+j}} \text{ for all } j}},
\end{equation}
which is a closure of a tight family of measures and thus being a compact subset of $\M(\R^{1+d})$. Observe that
\begin{align*}
\p\bracket{\nu^N_t \notin \hat{K}_L} &\leq \p\bracket{\exists j \text{ s.t. } \nu_t^N(\R^{1+d} \setminus K_{(L+j)^2}) > \frac{1}{\sqrt{L+j}}} \\
&\leq \sum_{j=1}^\infty \p\bracket{\nu_t^N(\R^{1+d} \setminus K_{(L+j)^2}) > \frac{1}{\sqrt{L+j}}} \\
&\overset{(a)}{\leq} \sum_{j=1}^\infty \frac{\E\sqbracket{\nu_t^N(\R^{1+d} \setminus K_{(L+j)^2})}}{(L+j)^{-1/2}} \\
&\overset{(b)}{\leq} \sum_{j=1}^\infty \frac{C_T}{(L+j)^{3/2}} < \infty,
\end{align*}
where step $(a)$ is from Chebyshev's inequality and step $(b)$ from \eqref{eq:simple_bound_of_nunt}. By dominated convergence theorem for infinite sum, we see that $\sum_{j\geq 1} (L+j)^{-3/2} \to 0$ as $L \to +\infty$, thus for any $\eta > 0$ there is an $L$ such that 
\begin{equation*}
\sup_{N \in \N, t \in [0,T]} \p\bracket{\nu^N_t \notin \hat{K}_L} < \frac{\eta}{4}.
\end{equation*}
With the exact same argument, we can also make $L$ large enough such that
\begin{equation*}
\sup_{N \in \N, t \in [0,T]} \p\bracket{\mu^N_t \notin \hat{K}_L} < \frac{\eta}{4}.
\end{equation*}
As we have shown in Lemma \ref{lem:maximal_Q_bounded} and \ref{lem:maximal_P_bounded} that the $L^2$ norm of $P$ and $Q$ are locally bounded, so by Chebyshev's inequality we know for each $\eta > 0$, there exists $B > 0$ such that
\begin{equation*}
\sup_{N \in \N, t \in [0,T]} \p\bracket{Q^N_t \notin [-B,B]^{M}} < \frac{\eta}{4},
\end{equation*}
and 
\begin{equation*}
\sup_{N \in \N, t \in [0,T]} \p\bracket{P^N_t \notin [-B,B]^{M}} < \frac{\eta}{4}.
\end{equation*}
Therefore, for each $\eta > 0$, there is a compact set $\mathcal{K} := \hat{K}_L \times \hat{K}_L \times [-B,B]^{M} \times [-B,B]^{M} \subseteq E$ such that 
\begin{equation*}
\sup_{N \in \N, 0 \leq t \leq T} \p\sqbracket{\bracket{\mu_t^N, \nu_t^N, P_t^N, Q_t^N} \notin \K} < \eta,
\end{equation*}
which completes the proof.
\end{proof}

\subsection{Evolution of empirical measure}
\label{SS:evolution_of_empirical_measure}
The evolution of the empirical measure $\nu^N_k$ 
can be characterized in terms of their projection onto test functions $\varphi \in C^2_b(\R^{1+M})$, by Taylor's expansion
\begin{align*}
\la \varphi, \nu^N_{k+1} \ra - \la \varphi, \nu^N_k \ra 
&= \frac{1}{N} \sum_{i=1}^N (\varphi(C^i_{k+1},W^i_{k+1}) - \varphi(C^i_k,W^i_k)) \\
&= \frac{1}{N} \sum_{i=1}^N \bigg[\de_c \varphi(C^i_k,W^i_k) (C^i_{k+1} - C^i_k) + \de_w \varphi(C^i_k,W^i_k) \cdot (W^i_{k+1} - W^i_k) \\
&\phantom{=}+ \frac{1}{2} \Big(\de^2_c \varphi(C^{i,*}_k,W^{i,*}_k) (C^i_{k+1} - C^i_k)^2 +  (C^i_{k+1} - C^i_k) \de^2_{cw} \varphi(C^{i,**}_k,W^{i,**}_k) (W^i_{k+1} - W^i_k) \\
&\phantom{=}+ (W^i_{k+1} - W^i_k) \cdot \de^2_{w} \varphi(C^{i,***}_k,W^{i,***}_k) (W^i_{k+1} - W^i_k) \Big)\bigg], \numberthis \label{eq:Q_measure_pre_limit}
\end{align*}
where $\bracket{C^{i,*}_k, W^{i,*}_k}, \bracket{C^{i,**}_k, W^{i,**}_k}, \bracket{C^{i,***}_k, W^{i,***}_k}$ are points lying on the line segments connecting between $\bracket{C^{i}_k, W^{i}_k}$ and $\bracket{C^{i}_{k+1}, W^{i}_{k+1}}$. Substituting \eqref{NACCriticupdates} into \eqref{eq:Q_measure_pre_limit}, we have 
\begin{align*}
\la \varphi, \nu^N_{k+1} \ra - \la \varphi, \nu^N_k \ra 
&= \frac{1}{N} \sum_{i=1}^N \sqbracket{\de_c \varphi(C^i_k,W^i_k) (C^i_{k+1} - C^i_k) + \de_w \varphi(C^i_k,W^i_k) \cdot (W^i_{k+1} - W^i_k)} + O_p(N^{-2}) \\
&= \alpha N^{-
\frac52} \bracket{r(\xi_k) + \gamma \sum_{a''} Q^N_k(x_{k+1},a'') g^N_k(x_{k+1},a'') - Q^N_k(\xi_k)} \\
&\pheq \times \sum_{i=1}^N \bracket{\de_c \varphi(C^i_k, W^i_k)\sigma(W^i_k \cdot \xi_k)) + C^i_k \sigma'(W^i_k \cdot \xi_k) \de_w \varphi(C^i_k, W^i_k) \xi_k} + O_p(N^{-2}) \\
&= \alpha N^{-\frac32} \bracket{r(\xi_k) + \gamma \sum_{a''} Q^N_k(x_{k+1},a'') g^N_k(x_{k+1},a'') - Q^N_k(\xi_k)} \\
&\pheq \times \la \de_c \varphi(c,w) \sigma(w \cdot \xi_k) + c \sigma'(w \cdot \xi_k) \de_w \varphi(c,w) \xi_k, \nu^N_k \ra + O_p(N^{-3}). \numberthis
\end{align*}
Therefore, the time-rescled empirical measure $\nu^N_t := \nu^N_{\floor{Nt}}$ satisfies
\begin{align*}
\la \varphi, \nu^N_t \ra - \la \varphi, \nu^N_0 \ra 
&= \alpha N^{-\frac32} \sum_{k=0}^{\floor*{Nt}-1} \bracket{r(\xi_k) + \gamma \sum_{a''} Q^N_k(x_{k+1},a'') g^N_k(x_{k+1},a'') - Q^N_k(\xi_k)}  \\
&\pheq \times \la \de_c \varphi(c,w) \sigma(w \cdot \xi_k) + c \sigma'(w \cdot \xi_k) \de_w \varphi(c,w) \xi_k, \nu^N_k \ra + O_p(N^{-2}). \numberthis
\end{align*}
We can similarly characterise the evolution of the empirical measure $\mu^N_k$ in terms of their projection onto any test functions $\varphi \in C^2_b(\R^{1+M})$:
\begin{align*}
\la \varphi, \mu^N_{k+1} \ra - \la \varphi, \mu^N_k \ra 
&= \frac{1}{N} \sum_{i=1}^N \bigg[\de_b \varphi(B^i_k,U^i_k) (B^i_{k+1} - B^i_k) + \de_u \varphi(B^i_k,U^i_k) \cdot (U^i_{k+1} - U^i_k) \bigg]+ O_p(N^{-2}) \\
&= \frac{1}{N^{\frac52}} \sum_{i=1}^N \zeta^N_k \clip(Q^N_k(\tilde{\xi}_k)) \Bigg[ \sigma(U^i_k \cdot \tilde{\xi}_k) (\partial_b \varphi(B^i_k, U^i_k) - B^i_k \partial_w\varphi(B^i_k, U^i_k) \cdot \xi_k) \\
&- \sum_{a''} f^N_k(\tilde{x}_k, a'') \sigma'(U^i_k \cdot (\tilde{x}_k, a'')) \bracket{\partial_b \varphi(B^i_k, U^i_k) - B^i_k \partial_w\varphi(B^i_k, U^i_k) \cdot (\tilde{x}_k, a'')} \Bigg] + O_p(N^{-2}) \\
&= \frac{1}{N^{\frac32}} \xi^N_k \clip(Q^N_k(\tilde{\xi}_k)) \Big[ \la \sigma(u \cdot \tilde{\xi}_k) (\partial_b \varphi(b,u) - b \partial_w\varphi(b,u) \cdot \xi_k), \mu^N_k \ra \\
&- \sum_{a''} f^N_k(\tilde{x}_k, a'') \la \sigma'(u \cdot (\tilde{x}_k, a'')) \bracket{\partial_b \varphi(b,u) - b \partial_w\varphi(b,u) \cdot (\tilde{x}_k, a'')}, \mu^N_k \ra \Big]+ O_p(N^{-2}), \numberthis \label{eq:P_pre_limit}
\end{align*}
and hence
\begin{align}
\label{eq:prelimit_evolution_of_mu}
\la \varphi, \mu^N_t \ra - \la \varphi, \mu^N_0 \ra 
&= \frac{1}{N^{\frac32}} \sum_{k=0}^{\floor{Nt}-1} \xi^N_k \clip(Q^N_k(\tilde{\xi}_k)) \Big[ \la \sigma(u \cdot \tilde{\xi}_k) (\partial_b \varphi(b,u) - b \partial_w\varphi(b,u) \cdot \xi_k), \mu^N_k \ra \\
&- \sum_{a''} f^N_k(\tilde{x}_k, a'') \la \sigma'(u \cdot (\tilde{x}_k, a'')) \bracket{\partial_b \varphi(b,u) - b \partial_w\varphi(b,u) \cdot (\tilde{x}_k, a'')}, \mu^N_k \ra \Big]+ O_p(N^{-1}).
\end{align}

\subsection{Proof of path regularities (Lemmas \ref{lem:regularity_of_nu} and \ref{lem:regularity_of_Q})}
\label{SS:proof_of_path_regularities}

\begin{proof}[Proof of Lemma \ref{lem:regularity_of_nu}]
We start by the following Taylor's expansion for $0 \leq s < t \leq T$:
\begin{align*}
&\pheq \abs{\left\langle f, \nu_{t}^{N}\right\rangle-\left\langle f, \nu_{s}^{N}\right\rangle} \\
&=\left|\left\langle f, v_{\lfloor N t\rfloor}^{N}\right\rangle-\left\langle f, v_{\lfloor N s\rfloor}^{N}\right\rangle\right| \\
&\leq \frac{1}{N} \sum_{i=1}^{N}\left|f\left(C_{\lfloor N t\rfloor}^{i}, W_{\lfloor N t\rfloor}^{i}\right)-f\left(C_{\lfloor N s\rfloor}^{i}, W_{\lfloor N s\rfloor}^{i}\right)\right| \\
&\leq \frac{1}{N} \sum_{i=1}^{N} \abs{\partial_{c} f\bracket{\bar{C}_{\lfloor N t\rfloor}^{i}, \bar{W}_{\lfloor N t\rfloor}^{i}}} \abs{C_{\lfloor N t\rfloor}^{i}-C_{\lfloor N s\rfloor}^{i}} 
+ \frac{1}{N} \sum_{i=1}^{N} \norm{\partial_{w} f\left(\bar{C}_{\lfloor N t\rfloor}^{i}, \bar{W}_{\lfloor N t\rfloor}^{i}\right)} \norm{W_{\lfloor Nt \rfloor}^{i}-W_{\lfloor Ns \rfloor}^{i}}, \numberthis \label{eq:taylor_for_nu}
\end{align*}
where $\bar{C}^{i}, \bar{W}^{i}$ are in the segments connecting $C_{\lfloor Ns \rfloor}^{i}$ to $C_{\lfloor Nt \rfloor}^{i}$ and $W_{\left\lfloor N_{s}\right\rfloor}^{i}$ to $W_{\lfloor N t\rfloor}^{i}$ respectively.\\

Let's now establish a bound on $\abs{C_{\floor{Nt}}^{i}-C_{\floor{Ns}}^{i}}$ for $s<t \leq T$ with $0<t-s \leq \delta<1$.
\begin{align*}
&\E\sqbracket{\abs{C_{\floor{Nt}}^{i}-C_{\floor{Ns}}^{i}} \,|\, \mathcal{F}_{s}^{N}}
=\mathbb{E}\sqbracket{\abs{\sum_{k=\floor{Ns}}^{\floor{Nt}-1} \bracket{C_{k+1}^{i}-C_{k}^{i}}} \,|\, \mathcal{F}_{s}^{N}} \\
\leq& \E\sqbracket{\sum_{k=\floor{Ns}}^{\floor{Nt}-1} \frac{\alpha}{N^{\frac32}} \abs{r(\xi_k) + \gamma \sum_{a''} Q^N_k(x_{k+1},a'') g^N_k(x_{k+1}, a'') - Q^N_k(\xi_k)} \cdot \abs{\sigma(W^i_k \cdot \xi_k)} \,|\,  \mathcal{F}_{s}^{N}} \\
\leq& \frac{\alpha C}{N^{\frac32}} \sum_{k=\floor{Ns}}^{\floor{Nt}-1} \bracket{C + (\gamma+1) \E\sqbracket{\sup_{\xi \in \X \times \A} |Q^N_k(\xi)|}} \\ 
\overset{(a)}{\leq}& \frac{C(\floor{Nt} - \floor{Ns})}{N^{\frac32}} \bracket{C+(\gamma+1) C_T^{1/2}} \\
\leq& \frac{C_T(N(t-s)+1)}{N^{\frac32}} 
\leq \frac{C_T}{\sqrt{N}} \delta + \frac{C_T}{N^{\frac32}}. \numberthis \label{eq:Lipschitz_c} 
\end{align*}
where step $(a)$ is by Lemma \ref{lem:maximal_Q_bounded}. 
Similarly for $\norm{W_{\floor{Nt}}^{i}-W_{\floor{Ns}}^{i}}$ for any $s < t \leq T$ with $0 < t-s \leq \delta < 1$,
\begin{align*}
&\phantom{=}\E\sqbracket{\norm{W_{\floor{Nt}}^{i}-W_{\floor{Ns}}^{i}} \mid \mathcal{F}_{s}^{N}} \\
&= \E\sqbracket{\norm{\sum_{k=\floor{Ns}}^{\floor{Nt}-1}\bracket{W_{k+1}^{i}-W_{k}^{i}}} \,|\, \mathcal{F}_{s}^{N}} \\
&\leq \E\sqbracket{\sum_{k=\floor{Ns}}^{\floor{Nt}-1} \frac{\alpha}{N^{\frac32}} \abs{r(x_k, a_k) + \gamma \sum_{a''} Q^N_k(x_{k+1}, a'') g^N_k(x_{k+1}, a'') - Q^N_k(x_k, a_k)} \cdot \abs{C^i_k} \cdot \abs{\sigma'(W^i_k \cdot (x_k, a_k))} \,|\,  \mathcal{F}_{s}^{N}} \\
&\leq \frac{\alpha C_T}{N^{\frac32}} \sum_{k=\floor{Ns}}^{\floor{Nt}-1} \bracket{C + (\gamma+1) \E\sqbracket{\sup_{\xi \in \X \times \A} |Q^N_k(\xi)|}} \\
&\leq \frac{C_T}{\sqrt{N}} \delta + \frac{C_T}{N^{3/2}}, \numberthis \label{eq:Lipschitz_w}
\end{align*}
where we have used the bound in Lemma \ref{NN parameter bound} and \ref{lem:maximal_Q_bounded} again. Combine \eqref{eq:Lipschitz_c}, \eqref{eq:Lipschitz_w} and \eqref{eq:taylor_for_nu}, we have for any $0 \leq s < t \leq T$ with $ 0<t-s\leq \delta < 1$
\begin{equation}
\label{eq:bound_for_weak_convergence_nu}
\E\sqbracket{\abs{\la f, \nu^N_t \ra - \la f, \nu^N_s \ra} \mid \mathcal{F}_{s}^{N}} \leq \frac{C_T}{\sqrt{N}}\delta + \frac{C_T}{N^{3/2}} \leq C_T\delta + \frac{C_T}{N^{3/2}}.
\end{equation}

Similarly for $\mu_t^N$, we have by Taylor's expansion that for $0 \leq s < t \leq T$ with $0 \leq s < t \leq T$ that 
\begin{align*}
&\pheq \abs{\left\langle f, \mu_{t}^{N}\right\rangle-\left\langle f, \mu_{s}^{N}\right\rangle} \\
&=\left|\left\langle f, \mu_{\lfloor N t\rfloor}^{N}\right\rangle - \left\langle f, \mu_{\lfloor N s\rfloor}^{N}\right\rangle\right| \\
&\leq \frac{1}{N} \sum_{i=1}^{N}\left|f\left(B_{\lfloor N t\rfloor}^{i}, U_{\lfloor N t\rfloor}^{i}\right)-f\left(B_{\lfloor N s\rfloor}^{i}, U_{\lfloor N s\rfloor}^{i}\right)\right| \\
&\leq \frac{1}{N} \sum_{i=1}^{N} \abs{\partial_{b} f\bracket{\bar{B}_{\lfloor N t\rfloor}^{i}, \bar{U}_{\lfloor N t\rfloor}^{i}}} \abs{B_{\lfloor N t\rfloor}^{i}-B_{\lfloor N s\rfloor}^{i}} 
+ \frac{1}{N} \sum_{i=1}^{N} \norm{\partial_{u} f\left(\bar{B}_{\lfloor N t\rfloor}^{i}, \bar{U}_{\lfloor N t\rfloor}^{i}\right)}  \norm{U_{\lfloor Nt \rfloor}^{i}-U_{\lfloor Ns \rfloor}^{i}}, \numberthis \label{eq:taylor_for_mu} 
\end{align*}
and 
\begin{align*}
\E\sqbracket{\abs{B_{\floor{Nt}}^{i}-B_{\floor{Ns}}^{i}} \,|\, \mathcal{F}_{s}^{N}}
& \leq \frac{C}{N^{\frac32}} \sum_{k=\floor{Ns}}^{\floor{Nt}-1} \E |\clip(Q^N_k(\tilde{x}_k,\tilde{a}_k))|
\leq \frac{C(N(t-s)+1)}{N^{\frac32}}
\leq \frac{C_T}{\sqrt{N}} \delta + \frac{C_T}{N^{\frac32}} \\
\E\sqbracket{\norm{U_{\floor{Nt}}^{i}-U_{\floor{Ns}}^{i}} \,|\, \mathcal{F}_{s}^{N}} 
&\leq\E\sqbracket{\sum_{k=\floor{Ns}}^{\floor{Nt}-1} CN^{-\frac32} |\clip(Q^N_k(\tilde{\xi}_k))| |B^i_k|} 
\leq \frac{C_T}{\sqrt{N}} \delta + \frac{C_T}{N^{\frac32}}. \numberthis \label{B U diff}
\end{align*}
With the fact that the terms $\abs{\partial_b f(\bar{B}^i_{\floor{Nt}}, \bar{U}^i_{\floor{Nt}})}$ and $\norm{\partial_w f(\bar{B}^i_{\floor{Nt}}, \bar{U}^i_{\floor{Nt}})}$ are bounded, we have that that for $0 \leq s < t \leq T$ with $0<t-s\leq \delta < 1$
\begin{equation}
\label{eq:bound_for_weak_convergence_mu}
\E\sqbracket{\abs{\la f, \mu^N_t \ra - \la f, \mu^N_s \ra} \mid \mathcal{F}_{s}^{N}} \leq \frac{C_T}{\sqrt{N}}\delta + \frac{C_T}{N^{3/2}} \leq C_T\delta + \frac{C_T}{N^{3/2}}.
\end{equation}
\end{proof}

\begin{proof}[Proof of Lemma \ref{lem:regularity_of_Q}]
Recalling the assumption that the state-action space is finite, it suffices to prove a uniform bound for the increments of the outputs $P^N(\xi), Q^N(\xi)$. In particular, by \eqref{eq:evolution_of_q_2} we have 
\begin{align*}
\E \sqbracket{\sup_{\xi}|Q^N_{k+1}(\xi) - Q^N_k(\xi)|}
&\leq \frac{\alpha}{N} \E\left| r(\xi_k) + \gamma \sum_{a''} Q^N_k(x_{k+1},a'') - Q^N_k(\xi_k) \right| |[A(\nu^N_k)]_{\xi,\xi_k}| + \frac{C_T}{N^{3/2}} \\
&\leq \frac{C_T}{N} + \frac{C_T}{N^{3/2}}, \numberthis
\end{align*}
and that by \eqref{eq:evolution_of_p_with_tensor} we have
\begin{align*}
\sup_\xi |P^N_{k+1}(\xi) - P^N_k(\xi)| &\leq \frac{\zeta^N_k}{N} |\clip(Q^N_k(\tilde{\xi}_k))| \abs{[A(\mu^N_k)]_{\xi, \tilde{\xi}_k} - \sum_{a''} f^N_k(\tilde{x}_k', a'') [A(\mu^N_k)]_{\xi,(\tilde{x}_k,a'')}} + \frac{C_T}{N^{3/2}} \\
&\leq \frac{C_T}{N} + \frac{C_T}{N^{3/2}}. \numberthis
\end{align*}
In fact, one could prove a stronger inequality.
\begin{align*}
&\phantom{=}\E\sqbracket{\sup_\xi \abs{Q^N_t(\xi) - Q^N_s(\xi)}} \\
&\leq \sum_{k=\floor{Ns}}^{\floor{Nt}-1} \E \sqbracket{\sup_\xi \abs{Q_{k+1}(\xi) - Q_{k}(\xi)}} \\
&\leq \sum_{k=\floor{Ns}}^{\floor{Nt}-1} \sup_\xi \sqbracket{\frac{1}{\sqrt{N}} \sum_{i=1}^N \abs{(C^i_{k+1}-C_k^i) \sigma(W_k^i \cdot \xi)+\sigma'(W_k^i \cdot \xi)\xi^\top (W^i_{k+1}-W_k^i)C_k^i}+O_p(N^{-5/2})} \\
&\leq \sum_{k=\floor{Ns}}^{\floor{Nt}-1} \sqbracket{\frac{C}{\sqrt{N}}  \sum_{i=1}^N \bracket{|C^i_{k+1} - C^i_k| + \norm{W^i_{k+1} - W^i_k}} + O_p(N^{-5/2})}. \numberthis
\end{align*}
Taking expectations and using the bounds \eqref{eq:Lipschitz_c} and \eqref{eq:Lipschitz_w}, we have
\begin{align*}
\E\sqbracket{\sup_\xi \abs{Q^N_t(\xi) - Q^N_s(\xi)} \,|\, \F^N_s} 
&\leq \sum_{k=\floor{Ns}}^{\floor{Nt}-1} \sqbracket{\frac{C}{\sqrt{N}} \sum_{i=1}^N \bracket{\E\sqbracket{|C^i_{k+1} - C^i_k| + \norm{W^i_{k+1} - W^i_k} \,|\, \F^N_s}} + \E[O_p(N^{-5/2})]} \\
&\leq \frac{C}{\sqrt{N}} \sum_{i=1}^N \bracket{\frac{C_T}{\sqrt{N}}\delta + \frac{C_T}{N^{3/2}}} \leq C_T \delta + \frac{C_T}{N}. \numberthis
\end{align*}
With exactly the same arguments, we can derive
\begin{align*}
\abs{P^N_t(\xi) - P^N_s(\xi)} 
&=\abs{P^N_{\floor{Nt}}(\xi) - P^N_{\floor{Ns}}(\xi)} 
&\leq \sum_{k=\floor{Ns}}^{\floor{Nt}-1} \sqbracket{\frac{C}{\sqrt{N}} \sum_{i=1}^N \bracket{|B^i_{k+1} - B^i_k| + \norm{U^i_{k+1} - U^i_k}} + O(N^{-5/2})},
\end{align*}
which together with \eqref{B U diff} derive
\begin{align*}
\E\sqbracket{\sup_\xi \abs{P^N_t(\xi) - P^N_s(\xi)} \,|\, \F^N_s} 
&\leq C_T \delta + \frac{C_T}{N}.
\end{align*}
\end{proof}

\section{Further Analyses of the Critic and Actor Convergence}

\subsection{Proof of Comparison Principle (Lemma \ref{lem:comparison_new})} \label{SS:proof_of_comparison_principle}

\begin{proof}[Proof of Lemma \ref{lem:comparison_new}]
Consider the differential equation
\begin{equation}
\frac{dZ_t}{dt} = -\frac{\mathsf{C}_1}{(1+t)^{\gamma_1}} Z_t + \frac{\mathsf{C}_2}{(1+t)^{\gamma_2}}, \quad Z_0=Y_0.
\end{equation}
\paragraph{Case $\gamma_1 \in (0,1)$: } This admits the solution 
\begin{equation}
Z_t = \frac{Y_0}{\exp\left(\frac{\mathsf{C}_1}{1-\gamma_1} (1+t)^{1-\gamma_1} \right)} + \frac{\int_0^t  \mathsf{C}_2 (1+s)^{-\gamma_2} \exp\left(\frac{\mathsf{C}_1}{1-\gamma_1} (1+s)^{1-\gamma_1}\right) \, ds}{\exp\left(\frac{\mathsf{C}_1}{1-\gamma_1} (1+t)^{1-\gamma_1} \right)}
\end{equation}
This enables us to compute the asymptotics of the second term using the L'H\^{o}pital's rule:
\begin{align*}
&\phantom{=} \lim_{t\to\infty} \frac{\int_0^t  \mathsf{C}_2 (1+s)^{-\gamma_2} \exp\left(\frac{\mathsf{C}_1}{1-\gamma_1} (1+s)^{1-\gamma_1}\right) \, ds}{(1+t)^{\gamma_1-\gamma_2} \exp\left(\frac{\mathsf{C}_1}{1-\gamma_1} (1+t)^{1-\gamma_1}\right)} \\
&= \lim_{t\to\infty} \frac{\mathsf{C}_2 (1+t)^{-\gamma_2} \exp\left(\frac{\mathsf{C}_1}{1-\gamma_1} (1+t)^{1-\gamma_1}\right)}{(\gamma_1-\gamma_2)(1+t)^{\gamma_1-\gamma_2-1} \exp\left(\frac{\mathsf{C}_1}{1-\gamma_1} (1+t)^{1-\gamma_1}\right) + \mathsf{C}_1(1+t)^{-\gamma_2} \exp\left(\frac{\mathsf{C}_1}{1-\gamma_1} (1+t)^{1-\gamma_1}\right)} \\
&= \lim_{t\to\infty} \frac{\mathsf{C}_2}{(\gamma_1-\gamma_2)(1+t)^{\gamma_1-1} + \mathsf{C}_1} \\
&= \frac{\mathsf{C}_2}{\mathsf{C}_1}.
\end{align*} 
We now show that the first term is signficantly smaller than the second term. To do so, we note the inequality $e^x\geq (1/n!) x^n$ for any $x \geq 0$. Therefore, by choosing $n = 1 + \lceil (\gamma_2-\gamma_1)/(1-\gamma_1) \rceil$, we have $(1-\gamma_1) n > \gamma_2-\gamma_1$, so
\begin{align*}
0\leq \lim_{t\to\infty} \frac{(1+t)^{\gamma_2-\gamma_1} Y_0}{\exp\left(\frac{\mathsf{C}_1}{1-\gamma_1} (1+t)^{1-\gamma_1} \right)} 
&\leq \lim_{t\to\infty} \frac{n! (1-\gamma_1)^n Y_0}{\mathsf{C}_1^n (1+t)^{n(1-\gamma_1) - (\gamma_2-\gamma_1)}} = 0.
\end{align*}
Therefore
\begin{equation}
Z_t \asymp \frac{1}{(1+t)^{\gamma_2-\gamma_1}}.
\end{equation}

\paragraph{Case $\gamma_1=1$:} This admits the solution
\begin{align}
Z_t &= \frac{Z_0}{(1+t)^{\mathsf{C}_1}} + \frac{\int_0^t \mathsf{C}_2 (1+s)^{\mathsf{C}_1-\gamma_2}}{(1+t)^{\mathsf{C}_1}} \nonumber \\
&= \begin{cases} \displaystyle{\frac{Y_0}{(1+t)^{\mathsf{C}_1}} + \frac{\mathsf{C}_2}{\mathsf{C}_1-\gamma_2+1} \frac{1}{(1+t)^{\gamma_2-1}}} & \mathsf{C}_1-\gamma_2 \neq -1 \\ \displaystyle{\frac{Y_0}{(1+t)^{\mathsf{C}_1}} + \frac{\mathsf{C}_2}{\mathsf{C}_1-\gamma_2+1} \frac{\log(1+t)}{(1+t)^{\mathsf{C}_1}}} & \mathsf{C}_1-\gamma_2 = 1
\end{cases}.
\end{align}
The asymptotics thus follows.

\paragraph{Comparison Principle} Finally, we establish a comparison principle between $Y_t$ and $Z_t$. We note that
\begin{align*}
\frac{d(Y_t-Z_t)}{dt} \leq -\frac{\mathsf{C}_1}{(1+t)^{\gamma_1}} (Y_t-Z_t), \quad Y_0-Z_0=0.
\end{align*}
Therefore, using an integrating factor,
\begin{align*}
0 \geq \begin{cases} \displaystyle{\frac{d}{dt} \left[(Y_t-Z_t) \exp\left(\frac{\mathsf{C}_1}{1-\gamma_1} (1+t)^{1-\gamma_1} \right) \right]} & \gamma_1 \in (0,1) \\ \displaystyle{\frac{d}{dt} \left[(Y_t-Z_t) (1+t) \right]} & \gamma_1 =1 \end{cases},
\end{align*}
and thus
\begin{align*}
Y_t - Z_t &\leq \begin{cases} (Y_0-Z_0) \exp\left(-\frac{\mathsf{C}_1}{1-\gamma_1} (1+t)^{1-\gamma_1} \right) & \gamma_1 \in (0,1) \\ (Y_0-Z_0) (1+t)^{-1} & \gamma_1 = 1 \end{cases} \\
&= 0.
\end{align*}
As $Y_0 \geq 0$, the desired asymptotics thus followed.
\end{proof}

\subsection{Case when \texorpdfstring{$\eta_t = (1+(\log(1+t))^2)^{-1}$}{}} \label{SS:case_when_eta_is_log}

We first establish a comparison lemma, similar to \ref{lem:comparison_new}.

\begin{lemma}[Comparison Lemma] \label{lem:comparison}
Consider a differentiable map $t \in [0,\infty) \mapsto Y_t$. Suppose $Y_t \geq 0$, and there exist constants $n_0, \mathsf{C}_1, \mathsf{C}_2 > 0$ such that for all $t \geq 0$,
\begin{equation}
\frac{dY_t}{dt} \leq -\mathsf{C}_1 \eta_t^{n_0} Y_t + \mathsf{C}_2 \eta_t^2,
\end{equation}
where, as defined in \eqref{eq:learning rates},
\begin{equation*}
\eta_t = \frac{1}{1+\log^2(1+t)}.
\end{equation*}
then 
\begin{equation}
Y_t = O\left(\frac{\zeta_t}{\eta_t^{n_0-2}}\right) = O\left(\frac{(1+(\log(1+t))^2)^{n_0-2}}{1+t} \right).
\end{equation}
\end{lemma}

\begin{proof}
Consider the differential equation
\begin{equation}
\frac{dZ_t}{dt} = -\mathsf{C}_1 \eta^{n_0}_t Z_t + \mathsf{C}_2 \eta_t^2, \quad Z_0=Y_0.
\end{equation}
This admits the solution 
\begin{equation}
Z_t = \frac{Y_0}{\exp\left(\int_0^t \mathsf{C}_1 \eta_s^{n_0} \, ds\right)} + \frac{\int_0^t \mathsf{C}_2 \eta_t^2 \exp\left(\int_0^s \mathsf{C}_1 \eta_\tau^{n_0} \, d\tau\right) \, ds}{\exp\left(\int_0^t \mathsf{C}_1 \eta_s^{n_0} \, ds\right)}
\end{equation}
As there exists $t_0 \geq 0$ such that for any $t \geq t_0$, $\eta^2_t \geq 1/t$, it follows that both $\int_0^t \mathsf{C}_1 \eta_s^{n_0} \, ds$ and $\exp\left(\int_0^t \mathsf{C}_1 \eta_s^{n_0} \, ds\right)$ diverges to $+\infty$ as $t \to \infty$. Therefore, we can use L'H\^{o}pital's rule to establish our asymptotics. In particular, we first have the following preliminary asymptotics:
\begin{align}
\lim_{t\to\infty} \frac{1}{(1+t) \eta^{n_0}_t} \int_0^t \mathsf{C}_1  \eta_s^{n_0} \, ds
&= \lim_{t\to\infty} \frac{\int_0^t \mathsf{C}_1 \eta_s^{n_0} \, ds}{(1+t) (1+(\log(1+t))^2)^{-n_0}}  \nonumber \\
&= \lim_{t\to\infty} \frac{\mathsf{C}_1 \eta_t^{n_0}}{(1+(\log(1+t))^2)^{-n_0} - 2n_0 (1+(\log(1+t))^2)^{-n_0-1} \log(1+t)}  \nonumber \\
&= \lim_{t\to\infty} \frac{\mathsf{C}_1}{1-\frac{2n_0 \log(1+t)}{1+(\log(1+t))^2}} = \mathsf{C}_1.
\end{align}
This means
\begin{align}
\int_0^t \mathsf{C}_1 \eta_s^{n_0} \asymp (1+t) \eta_t^{n_0} = \frac{1+t}{(1+(\log(1+t))^2)^{n_0}}.
\end{align}
This enables us to compute the asymptotics of the second term:
\begin{align*}
\lim_{t\to\infty} \frac{\eta_t^{n_0-2} \int_0^t \mathsf{C}_2 \eta_s^2 \exp\left(\int_0^s \mathsf{C}_1  \eta_\tau^{n_0} \, d\tau\right) \, ds}{\zeta_t \exp\left(\int_0^t \mathsf{C}_1 \eta_s^{n_0} \, ds\right)} &= \lim_{t\to\infty} \frac{\int_0^t \mathsf{C}_2 \eta_s^2 \exp\left(\int_0^s \mathsf{C}_1  \eta_\tau^{n_0} \, d\tau\right) \, ds}{(1+t)^{-1} (1+(\log(1+t))^2)^{n_0-2} \exp\left(\int_0^t \mathsf{C}_1 \eta_s^{n_0} \, ds\right)}.
\end{align*}
Note that the numerator satisfies
\begin{align*}
\int_0^t \mathsf{C}_2 \eta_s^2 \exp\left(\int_0^s \mathsf{C}_1 \eta_\tau^{n_0} \, d\tau \right) \, ds \geq \int_0^t\mathsf{C}_2 \eta_s^2 \, ds \overset{n\to\infty}\to +\infty, 
\end{align*}
and the denominator satisfies
\begin{align}
&\phantom{=}\frac{(1+(\log(1+t))^2)^{n_0-2}}{1+t} \exp\left(\int_0^t \mathsf{C}_1 \eta_\tau^{n_0} \, d\tau \right) \nonumber \\
&\geq \frac{(1+(\log(1+t))^2)^{n_0-2}}{1+t} \left(1+\int_0^t \mathsf{C}_1 \eta_\tau^{n_0} \, d\tau + \frac{1}{2} \left(\int_0^t \mathsf{C}_1 \eta_\tau^{n_0} \, d\tau \right)^2 \right) \nonumber \\
&\geq \frac{(1+(\log(1+t))^2)^{n_0-2}}{2(1+t)}  \left(\int_0^t \mathsf{C}_1 \eta_\tau^{n_0} \, d\tau \right)^2 \nonumber \\
&\asymp (1+t) \eta_t^{n_0+2} \overset{t\to\infty}\to \infty. \label{eq:second_term_denom}
\end{align}
Therefore, we can use L'H\^{o}pital's rule to obtain the asymtotics for the second term. For the case $n_0 \neq 2$, we have
\begin{align*}
&\phantom{=} \lim_{t\to\infty} \frac{\int_0^t \mathsf{C}_2 \eta_s^2 \exp\left(\int_0^s \mathsf{C}_1  \eta_\tau^{n_0} \, d\tau\right) \, ds}{(1+t)^{-1} (1+(\log(1+t))^2)^{n_0-2} \exp\left(\int_0^t \mathsf{C}_1 \eta_s^{n_0} \, ds\right)} \\
&= \lim_{t\to\infty} \frac{\mathsf{C}_2 \eta_t^2 \exp\left(\int_0^t \mathsf{C}_1 \eta_s^{n_0} \, d\tau\right)}{\exp\left(\int_0^t \mathsf{C}_1 \eta_s^{n_0} \, ds\right) \left[-\frac{(1+\log(1+t))^2)^{n_0-2}}{(1+t)^2} + \frac{2(n_0-2)(1+\log(1+t))^2)^{n_0-3}}{(1+t)^2} + \frac{(1+\log(1+t))^2)^{n_0-2}}{1+t} \int_0^t \mathsf{C}_1 \eta_s^{n_0} \, ds\right] } \\
&= \frac{\mathsf{C}_2}{\lim_{t\to\infty} \left[-\frac{(1+\log(1+t))^2)^{n_0}}{(1+t)^2} + \frac{2(n_0-2)(1+\log(1+t))^2)^{n_0-1}}{(1+t)^2} + \frac{(1+\log(1+t))^2)^{n_0}}{1+t} \int_0^t \mathsf{C}_1 \eta_s^{n_0} \, ds\right]} \\
&= \frac{\mathsf{C}_2}{\mathsf{C}_1};
\end{align*} 
and for the case $n_0 = 2$, we have
\begin{align*}
&\phantom{=} \lim_{t\to\infty} \frac{\int_0^t \mathsf{C}_2 \eta_s^2 \exp\left(\int_0^s \mathsf{C}_1  \eta_\tau^{n_0} \, d\tau\right) \, ds}{(1+t)^{-1} (1+(\log(1+t))^2)^{n_0-2} \exp\left(\int_0^t \mathsf{C}_1 \eta_s^{n_0} \, ds\right)} \\
&= \lim_{t\to\infty} \frac{\int_0^t \mathsf{C}_2 \eta_s^2 \exp\left(\int_0^s \mathsf{C}_1 \eta_\tau^2 \, d\tau\right) \, ds}{(1+t)^{-1} \exp\left(\int_0^t \mathsf{C}_1 \eta_s^2 \, ds\right)} \\
&= \lim_{t\to\infty} \frac{\mathsf{C}_2 \eta_t^2 \exp\left(\int_0^t \mathsf{C}_1 \eta_s^2 \, d\tau\right)}{\exp\left(\int_0^t \mathsf{C}_1 \eta_s^2 \, ds\right) \left[-\frac{1}{(1+t)^2} + \frac{1}{1+t} \int_0^t \mathsf{C}_1 \eta_s^2 \, ds\right] } \\
&= \frac{\mathsf{C}_2}{\lim_{t\to\infty} \left[-\frac{(1+\log(1+t))^2)^2}{(1+t)^2} +  \frac{(1+\log(1+t))^2)^2}{1+t} \int_0^t \mathsf{C}_1 \eta_s^2 \, ds\right]} \\
&= \frac{\mathsf{C}_2}{\mathsf{C}_1}.
\end{align*}
Therefore, the second term admits the following asymptotic:
\begin{equation}
\frac{\int_0^t \mathsf{C}_2 \eta_s^2 \exp\left(\int_0^s \mathsf{C}_1  \eta_\tau^{n_0} \, d\tau\right) \, ds}{\exp\left(\int_0^t \mathsf{C}_1 \eta_s^{n_0} \, ds\right)} \asymp \frac{\zeta_t}{\eta_t^{n_0-2}} = \frac{(1+(\log(1+t))^2)^{n_0-2}}{1+t}.
\end{equation}
Finally, for the first term, \eqref{eq:second_term_denom} yields
\begin{align*}
\lim_{t\to\infty} \frac{\eta_t^{n_0-2} Y_0}{\zeta_t \exp\left(\int_0^t \mathsf{C}_1 \eta_s^{n_0} \, ds \right)} 
&\leq \lim_{t\to\infty} \frac{2Y_0}{\mathsf{C}_1^2 (1+t) \eta_t^{n_0+2}} = 0,
\end{align*}
so 
\begin{align*}
\frac{Y_0}{\exp\left(\int_0^t \mathsf{C}_1 \eta_s^{n_0} \, ds\right)} = o\left(\frac{\zeta_t}{\eta_t^{n_0-2}}\right).
\end{align*}
Combining yields
\begin{equation}
Z_t \asymp \frac{\zeta_t}{\eta_t^{n_0-2}}.
\end{equation}
Finally, we establish a comparison principle between $Y_t$ and $Z_t$. We note that
\begin{align*}
\frac{d(Y_t-Z_t)}{dt} \leq -\mathsf{C}_1 \eta_t^{n_0} (Y_t-Z_t), \quad Y_0-Z_0=0.
\end{align*}
Therefore, using an integrating factor,
\begin{align*}
\frac{d}{dt} \left[(Y_t-Z_t) \exp\left(\int_0^t \mathsf{C}_1 \eta_s^{n_0} \, ds \right)\right] \leq 0,
\end{align*}
and thus
\begin{align*}
Y_t-Z_t \leq (Y_0-Z_0) \exp\left(-\int_0^t \mathsf{C}_1 \eta_s^{n_0} \, ds \right) = 0.
\end{align*}
As $Y_0 \geq 0$, so 
\begin{equation*}
Y_t = O\left(\frac{\zeta_t}{\eta_t^{n_0-2}}\right) = O\left(\frac{(1+(\log(1+t))^2)^{n_0-2}}{1+t} \right)
\end{equation*}
as desired.
\end{proof}

\begin{remark}
We note that this is a strict improvement from the bound in Lemma 5.2 of \cite{wang2021global}.
\end{remark}

Recall that both the value functions $V^{g_t}$ and $V^{f_t}$ satisfies the Bellman equation
\begin{align}
r(x,a) + \gamma \sum_{z, a''} V^{g_t}(z,a'') g_t(z, a'') p(z| x,a) - V^{g_t}(x,a) &= 0. \\
r(x,a) + \gamma \sum_{z, a''} V^{f_t}(z,a'') f_t(z, a'') p(z| x,a) - V^{f_t}(x,a) &=0. \label{eq:Bellman equation}
\end{align}
Define the difference
\begin{equation}
\phi_t = Q_t - V^{f_t}. 
\end{equation}
Without loss of generality, we initialise the ODE as $Q_0 = 0$. We can then finish the proof for the convergence for the critic.

\begin{proof}[Proof of Theorem \ref{thm:global_critic_convergence}]
Rearranging the Bellman equation \eqref{eq:Bellman equation} gives:
\begin{align}
r(x,a) = -\gamma \sum_{z, a''} V^{f_t}(z,a'') f_t(z, a'') p(z| x,a) + V^{f_t}(x,a). \label{eq:Bellman equation_arr}
\end{align}
Substituting this into \eqref{NN gradient flow} gives:
\begin{align*}
\frac{d\phi_t}{dt}(\xi) 
&= \alpha \sum_{\xi'=(x',a')} A_{\xi,\xi'} \pi^{g_t}(\xi') \left[r(\xi') - Q_t(\xi') + \gamma \sum_{z, a''} Q_t(z, a'') g_t(z,a'') p(z|\xi') \right] - \frac{d}{dt} V^{g_t}(\xi) \\  
&=\alpha \sum_{\xi'=(x',a')} A_{\xi,\xi'} \pi^{g_t}(\xi') \left[-\phi_t(\xi') + \gamma \sum_{z, a''} \Big( Q_t(z, a'') g_t(z,a'') - V^{f_t}(z,a'') f_t(z,a'') \Big) p(z|\xi') \right] - \frac{d}{dt} V^{g_t}(\xi) \\
&=\alpha \sum_{\xi'=(x',a')} A_{\xi,\xi'} \pi^{g_t}(\xi') \Bigg[-\phi_t(\xi') + \gamma \sum_{z, a''} \phi_t(z, a'') g_t(z,a'') p(z|\xi')  \\
&\phantom{=}+ \gamma \sum_{z, a''} V^{f_t}(z, a'') \big(g_t(z,a'') - f_t(z,a'') \big)  p(z|\xi') \Bigg] - \frac{d}{dt} V^{g_t}(\xi) \\
&=\alpha \sum_{\xi'=(x',a')} A_{\xi,\xi'} \pi^{g_t}(\xi') \Bigg[-\phi_t(\xi') + \gamma \sum_{z, a''} \phi_t(z, a'') g_t(z,a'') p(z|\xi')  \\
&\phantom{=}+ \gamma \eta_t \sum_{z, a''} V^{f_t}(z, a'') \left(\frac{1}{\#\bm{\A}} - f_t(z,a'') \right) p(z|\xi') \Bigg] - \frac{d}{dt} V^{g_t}(\xi).
\end{align*}
Rearranging the Bellman equation \eqref{eq:Bellman equation} again gives:
\begin{equation}
\gamma \sum_{z,a''} V^{f_t}(z,a'') f_t(z,a'') p(z|x,a) = V^{f_t}(x,a) - r(x,a),
\end{equation}
so
\begin{align}
\frac{d\phi_t}{dt}(\xi) 
&=\alpha \sum_{\xi'=(x',a')} A_{\xi,\xi'} \pi^{g_t}(\xi') \Bigg[-\phi_t(\xi') + \gamma \sum_{z, a''} \phi_t(z, a'') g_t(z,a'') p(z|\xi') \nonumber \\
&\phantom{=}+ \gamma \eta_t \sum_{z, a''} V^{f_t}(z, a'') \frac{p(z|\xi')}{\#\bm{\A}}  - \eta_t (V^{f_t}(\xi')-r(\xi')) \Bigg] - \frac{d}{dt} V^{g_t}(\xi). \label{eq:dphi_dt}
\end{align}
Define the process
\begin{equation}
Y_t =  \frac{1}{2} \phi_t^{\top} A^{-1} \phi_t,
\end{equation}
then
\begin{align}
\frac{dY_t}{dt} &= - \alpha \sum_{\xi'} \pi^{g_t}(\xi') (\phi_t(\xi'))^2 + \alpha \gamma \sum_{\xi'} \pi^{g_t}(\xi') \phi_t(\xi')\sum_{z, a''} \phi_t(z, a'') g_t(z, a'') p(z | \xi') \nonumber  \\
&\phantom{=}+ \, \alpha \gamma \eta_t \sum_{\xi'} \pi^{g_t}(\xi') \phi_t(\xi') \sum_{z, a''} V^{f_t}(z, a'') \frac{p(z | \xi')}{\#\bm{\A}} + \alpha \eta_t \sum_{\xi'} \pi^{g_t}(\xi') \phi_t(\xi') (r(\xi') - V^{f_t}(\xi)) \nonumber \\
&\phantom{=}- \phi_t^{\top} A^{-1} \frac{dV^{g_t}}{dt}. \label{eq:Yeqn0}
\end{align}
We shall now bound different terms in \eqref{eq:Yeqn0}:

\paragraph{Second Term} The absolute value of this term could be bounded as follows.
\begin{equation}
\begin{aligned}
&\bigg{|} \sum_{x', a'} \phi_t(x', a') \pi^{g_t}(x', a') \sum_{z, a''} \phi_t(z, a'') g_t(z, a'') p(z | x', a') \bigg{|} \notag \\
=& \bigg{|} \sum_{x', a'} \sum_{z, a''} \phi_t(z, a'')  \phi_t(x', a')  g_t(z, a'') p(z | x', a') \pi^{g_t}(x', a')  \bigg{|} \notag \\
\leq& \sum_{x', a'} \sum_{z, a''}  \bigg{|} \phi_t(z, a'')  \phi_t(x', a') \bigg{|}  g_t(z, a'') p(z | x', a') \pi^{g_t}(x', a')  \notag \\
\leq& \frac{1}{2}  \sum_{x', a'} \sum_{z, a''} \bigg{(} \phi_t(z, a'')^2 +  \phi_t(x', a')^2 \bigg{)}  g_t(z, a'') p(z | x', a') \pi^{g_t}(x', a') \notag \\
=& \frac{1}{2}  \sum_{z, a''}  \phi_t(z, a'')^2  \sum_{x', a'}   g_t(z, a'') p(z | x', a') \pi^{g_t}(x', a') + \frac{1}{2}  \sum_{x', a'} \phi_t(x', a')^2 \pi^{g_t}(x', a')  \sum_{z, a''}    g_t(z, a'') p(z | x', a') \notag \\
=&   \frac{1}{2}  \sum_{z, a''}  \phi_t(z, a'')^2 \pi^{g_t}(z, a'')   + \frac{1}{2}  \sum_{x', a'} \phi_t(x', a')^2 \pi^{g_t}(x', a') \notag \\
=&  \sum_{x', a'} \phi_t(x', a')^2 \pi^{g_t}(x', a')
\end{aligned}
\end{equation}
where we have used Young's inequality, the fact that $\displaystyle \sum_{z, a''}    g_t(z, a'') p(z | x', a')  = 1$ for each $(x', a')$, and $\displaystyle \sum_{x', a'}   g_t(z, a'') p(z | x', a') \pi^{g_t}(x', a')  = \pi^{g_t}(z, a'')$.

\paragraph{Third Term} The absolute value of this term could be bounded as follows.
\begin{align}
&\phantom{=}\alpha \gamma \eta_t \left|\sum_{\xi'} \pi^{g_t}(\xi') \phi_t(\xi') \sum_{z, a''} V^{f_t}(z, a'') \frac{p(z | \xi')}{\#\bm{\A}} \right| \nonumber \\
&\leq \alpha \gamma \eta_t \left(\sum_{\xi'} \pi^{g_t}(\xi') (\phi_t(\xi'))^2 \right)^{1/2} \left(\sum_{\xi'} \pi^{g_t}(\xi') \left(\sum_{z,a''} V^{f_t}(z,a) \frac{p(z|\xi')}{\#\bm{\A}} \right)^2\right)^{1/2} \nonumber \\
&\leq \alpha \gamma \eta_t \left(\sum_{\xi'} \pi^{g_t}(\xi') (\phi_t(\xi'))^2 \right)^{1/2} \left(\sum_{\xi'} \pi^{g_t}(\xi') \sum_{z,a''} (V^{f_t}(z,a))^2 \frac{p(z|\xi')}{\#\bm{\A}}\right)^{1/2} \nonumber \\
&\leq \frac{\alpha \gamma}{1-\gamma} \eta_t \left(\sum_{\xi'} \pi^{g_t}(\xi') (\phi_t(\xi'))^2 \right)^{1/2} \\
&\leq \frac{\alpha(1-\gamma)}{4} \sum_{\xi'} \pi^{g_t}(\xi') (\phi_t(\xi'))^2 + \frac{\alpha\gamma}{(1-\gamma)^2} \eta_t^2.
\end{align}

\paragraph{Fourth Term}
By Cauchy-Schwarz, we have:
\begin{align}
&\phantom{=}\alpha \eta_t \left|\sum_{\xi'} \pi^{g_t}(\xi') \phi_t(\xi') (r(\xi')-V^{f_t}(\xi')) \right| \nonumber \\
&\leq \alpha \eta_t \left(\sum_{\xi'} \pi^{g_t}(\xi') (\phi_t(\xi'))^2 \right)^{1/2} \left(\sum_{\xi'} \pi^{g_t}(\xi') (r(\xi') - V^{f_t}(\xi'))^2\right)^{1/2} \nonumber \\
&\leq \frac{\alpha \sqrt{2} (2-\gamma)}{1-\gamma} \eta_t \left(\sum_{\xi'} \pi^{g_t}(\xi') (\phi_t(\xi'))^2 \right)^{1/2} \\
&\leq \frac{\alpha(1-\gamma)}{4} \sum_{\xi'} \pi^{g_t}(\xi') (\phi_t(\xi'))^2 + \frac{2\alpha(2-\gamma)}{(1-\gamma)^2} \eta_t^2.
\end{align}

So far, we have
\begin{equation}
\frac{dY_t}{dt} \leq -\frac{\alpha(1-\gamma)}{2} \sum_{\xi} \pi^{g_t}(\xi) (\phi_t(\xi))^2 + \frac{\alpha(4-\gamma)}{(1-\gamma)^2} \eta_t^2 - \phi_t^{\top} A^{-1} \frac{dV^{g_t}}{dt}.
\end{equation}
As \begin{equation*}
\min_{\xi} \pi^{g_t}(\xi) \geq C_{\mathsf{1}} \eta_t^{n_0},
\end{equation*}
where the constants $C_{\mathsf{1}}, n_0 > 0$ are given in the Proposition \ref{prop:geometric2}, so
\begin{align}
\frac{dY_t}{dt} 
&\leq -\frac{\alpha(1-\gamma) C_{\mathsf{1}}}{2} \eta_t^{n_0} \sum_{\xi} (\phi_t(\xi))^2 + \frac{\alpha(4-\gamma)}{(1-\gamma)^2} \eta_t^2 - \phi_t^{\top} A^{-1} \frac{dV^{g_t}}{dt}.
\label{eq:Yeqn}
\end{align}

\paragraph{Final Term} We shall now evaluate $dV^{g_t}/dt$ using Policy Gradient Theorem. To justify the use of this theorem, we shall first show that $dg_t/dt$ is uniformly bounded in $t$. To begin, note that
\begin{align*}
\frac{d\eta_t}{dt} = -\frac{2 \log (1+t)}{(1+t) (1+ (\log(1+t))^2 )^2},
\end{align*}
so 
\begin{align*}
\abs{\frac{d\eta_t}{dt}} \leq \frac{2}{(1+t)(1+\log(1+t))} \leq 2\zeta_t.
\end{align*}
Therefore,
\begin{align*}
\frac{df_t}{dt}(x,a) &= -\frac{e^{P_t(x,a)} \sum_{a'} e^{P_t(x,a')} \frac{dP_t(x,a')}{dt}}{\left(\sum_{a''} e^{P_t(x,a'')}\right)^2} + \frac{e^{P_t(x,a)} \frac{dP_t(x,a')}{dt}}{\sum_{a'} e^{P_t(x,a')}} \\
&= -f_t(x,a) \sum_{a'} f_t(x,a') \frac{dP_t}{dt}(x,a') + f_t(x,a) \frac{dP_t}{dt}(x,a),
\end{align*}
and, by the limit ODEs in \eqref{NN gradient flow}, it holds for any $(x,a)$:
\begin{equation}
\label{theta uniform bound}
\left| \frac{dP_t}{dt}(x,a) \right| 
= \left| \sum_{x', a'} \zeta_t \clip(Q_t(x', a')) \left[A_{x, a, x', a'} - \sum_{a''} f_t(x',a'')  A_{x,a,x',a''} \right] \sigma^{f_t}(x', a')\right| \le C\zeta_t.
\end{equation}
Therefore, we have
\begin{align*}
\abs{\frac{df_t}{dt}(\xi)} \leq f_t(x,a) \sum_{a'} f_t(x,a') C\zeta_t + Cf_t(x,a) \zeta_t \leq C\zeta_t.
\end{align*}
As
\begin{align*}
\frac{dg_t}{dt} = \left(\frac{1}{\#\bm{\A}} - f_t(\xi) \right) \frac{d\eta_t}{dt} - \frac{df_t}{dt}(\xi),
\end{align*}
so for all $\xi$, we have
\begin{align*}
\left|\frac{dg_t}{dt}(\xi) \right| \leq C\zeta_t.
\end{align*}
We can therefore invoke the Policy Gradient Theorem, which, in our case, reads
\begin{align*}
\frac{d\bar{V}^{g_t}}{dt}(x_0) = \sum_{\xi=(x,a)} \nu^{g_t}_{x_0}(x) \frac{dg_t}{dt}(\xi) V^{g_t}(\xi),
\end{align*}
and, by differentiating the Bellman equation,
\begin{align*}
\frac{dV^{g_t}(\xi)}{dt} 
&= \gamma \sum_{x'} \frac{d\bar{V}^{g_t}(x')}{dt} p(x'|\xi) \\
&= \gamma \sum_{x', \xi'' = (x'',a'')} \nu^{g_t}_{x'}(x'') \frac{dg_t}{dt}(\xi'') V^{g_t}(\xi'') p(x'|\xi),
\end{align*}
so
\begin{equation}
\abs{\frac{dV^{g_t}}{dt}(\xi)} \leq C\frac{\gamma}{1-\gamma} \zeta_t.
\end{equation}
Therefore,
\begin{align}
\phi_t^\top A^{-1} \frac{dV^{g_t}}{dt} 
&\leq \frac{CM}{\sigma_{\min}(A)} \|\phi_t\| \zeta_t \nonumber \\
&= \left(\left(\frac{\alpha(1-\gamma) C_{\mathsf{1}}}{2} \right)^{1/2} \eta^{n_0/2}_t\right) \left(\frac{\sqrt{2} CM}{(\alpha (1-\gamma) C_{\mathsf{1}})^{1/2} \sigma_{\min}(A)} \frac{\zeta_t}{\eta^{n_0/2}_t} \|\phi_t\| \right) \nonumber \\
&\leq \frac{\alpha(1-\gamma) C_{\mathsf{1}}}{4} \eta^{n_0}_t \|\phi_t\|^2 + \frac{C^2 M^2}{\alpha (1-\gamma) C_{\mathsf{1}} \sigma_{\min}^{2}(A)} \frac{\zeta^2_t}{\eta^{n_0}_t}. \label{eq:zetaBound}
\end{align}
Combining \eqref{eq:Yeqn} with \eqref{eq:zetaBound}, we have 
\begin{equation}
\frac{dY_t}{dt} \leq -\frac{\alpha(1-\gamma) C_{\mathsf{1}}}{4} \eta_t^{n_0} \|\phi_t\|^2 + \frac{\alpha(4-\gamma)}{(1-\gamma)^2} \eta_t^2 + \frac{C^2 M^2}{\alpha (1-\gamma) C_{\mathsf{1}} \sigma_{\min}^2(A)} \frac{\zeta^2_t}{\eta^{n_0}_t}.
\end{equation}
Since $A$ is positive definite, its smallest eigenvalue $\sigma_{\min}(A) > 0$, and
\begin{equation} \label{eq:relate_phi_with_Y}
\frac{1}{\sigma_{\max}(A)} \|\phi_t\|^2 \leq Y_t \leq \frac{1}{\sigma_{\min}(A)} \|\phi_t\|^2,
\end{equation}
where $\sigma_{\max}$ is the largest eigenvalue of $A$. Hence,
\begin{equation*} 
-\|\phi_t\|^2 \leq -\sigma_{\min}(A) Y_t,
\end{equation*}
and
\begin{equation}
\frac{dY_t}{dt} \leq -\frac{\alpha(1-\gamma) C_{\mathsf{1}} \sigma_{\min}(A)}{4} \eta_t^{n_0} Y_t + \frac{\alpha(4-\gamma)}{(1-\gamma)^2} \eta_t^2 + \frac{C^2 M^2}{\alpha (1-\gamma) C_{\mathsf{1}} \sigma_{\min}^2(A)} \frac{\zeta^2_t}{\eta^{n_0}_t} .\label{eq:critic}
\end{equation}
Finally, note that there exists a constant $c$ such that $\eta_t^2 \geq \zeta_t^2/\eta_t^{n_0}$ for any $t \geq 0$, so it is true that
\begin{equation}
\frac{dY_t}{dt} \leq -\mathsf{C}_1 \eta_t^{n_0} Y_t + \mathsf{C}_2 \eta_t^2, \quad t \geq 0,
\end{equation}
where
\begin{equation*}
\mathsf{C_1} = \frac{\alpha(1-\gamma)C_{\mathsf{1}} \sigma_{\min}(A)}{4}, \quad \mathsf{C}_2= \frac{c\alpha(4-\gamma)}{(1-\gamma)^2} \vee \frac{C^2 M^2}{\alpha (1-\gamma) C_{\mathsf{1}} \sigma^2_{\min}(A)}.
\end{equation*}
Therefore, by the comparison lemma (Lemma \ref{lem:comparison}) and \eqref{eq:relate_phi_with_Y}, there exists constant $C \geq 0$ such that 
\begin{equation}
\|\phi_t\|^2 \leq C \frac{(1+(\log(1+t))^2)^{n_0}}{1+t}.
\end{equation}
\end{proof}

\begin{proof}[Proof of Theorem \ref{thm:global_actor_convergence_rate}]
Follow from the original calculations, we know that there exists $T_0 \geq 0$ such that for any $T\geq T_0$,
\begin{equation}
\frac{d}{dt} J(f_t) \geq \zeta_t \sigma^2_{\min}(A) \|\nabla_P J(f_t)\|^2 - \zeta_t \|\nabla_P J(f_t)\| \Big[\sigma^2_{\max}(A) \|\widehat{\nabla}_P J(f_t) - \nabla_P J(f_t)\| + \|E_t\| \Big]. \label{eq:pre-lower-bound-new}
\end{equation}
Recall that 
\begin{equation}
\partial_{x,a} J(f_t) - \widehat{\partial}_{x,a} J(f_t) = \sigma^{f_t}_{\rho_0}(x,a) \left[[Q_t - V^{f_t}](x,a) - \sum_{a''} [Q_t - V^{f_t}](x,a'')\right],
\end{equation}
so there are constant $C>0$ (changing line by line) such that
$$\|\nabla_P J(f_t) - \widehat{\nabla}_P J(f_t) \| \leq C\|V^{f_t} - Q_t\| \leq \frac{C}{\eta_t^{n_0/2-1} \sqrt{1+t}} \leq C\eta_t.$$
Furthermore, using statement (2) of Proposition \ref{prop:Lipschitzness_of_stationary_measures}, there are constants $C > 0$ such that
\begin{align*}
|E_t(\xi)| \leq C \eta_t \left[\sum_\xi \left|f(\xi) - \frac{1}{\#\mathcal{A}}\right| \right] \leq C \eta_t,
\end{align*}
so $\|E_t\| \leq C\eta_t$. \\

Substituting \eqref{eq:pre-lower-bound}, we know that there are constant $C>0$ such that
\begin{align*}
\frac{d}{dt} J(f_t) &\geq \zeta_t \sigma^2_{\min}(A) \|\nabla_P J(f_t)\|^2 - C\zeta_t \eta_t \|\nabla_P J(f_t)\|.
\end{align*}
Rearranging and utilising Young's inequality yields
\begin{align*}
\zeta_t \sigma^2_{\min}(A) \|\nabla_P J(f_t)\|^2 &\leq \frac{dJ(f_t)}{dt} + C\zeta_t \eta_t \|\nabla_P J(f_t)\| \\
&\leq \frac{dJ(f_t)}{dt} + \frac{\sigma^2_{\min}(A)}{2} \zeta_t \|\nabla_P J(f_t)\|^2 + \frac{C}{2 \sigma^2_{\min}(A)} \zeta_t \eta^2_t. \\
\implies \frac{\sigma^2_{\min}(A)}{2} \zeta_t \|\nabla_P J(f_t)\|^2 &\leq \frac{dJ(f_t)}{dt} + \frac{C}{2 \sigma^2_{\min}(A)} \zeta_t \eta^2_t \\
\implies \zeta_t \|\nabla_P J(f_t)\|^2 &\leq \frac{2}{\sigma^2_{\min} (A)} \frac{dJ(f_t)}{dt} + \frac{C}{2 \sigma^4_{\min}(A)} \zeta_t \eta^2_t.
\end{align*}
Integrating both sides yields
\begin{align}
\int_0^T\zeta_t \|\nabla_P J(f_t)\|^2 \, dt 
&\leq \int_0^{T_0} \zeta_t \|\nabla_P J(f_t)\|^2 \, dt + \int_{T_0}^T\zeta_t \|\nabla_P J(f_t)\|^2 \, dt  \\
&\leq \frac{2T_0}{1-\gamma} + \frac{J(f_T) - J(f_0)}{2\sigma^2_{\min}(A)} + \frac{C}{2\sigma^4_{\min}(A)} \int_0^T \zeta_t \eta^2_t \, dt\\
&\leq \frac{2T_0 \sigma^2_{\min}(A)+1}{(1-\gamma)\sigma^2_{\min}(A)} + \frac{C}{2\sigma^4_{\min}(A)} \int_0^T \zeta_t \eta_t \, dt \\
&\leq C' < +\infty, \numberthis \label{eq:weighted_time_average_of_squared_norm_2}
\end{align}
where $C'>0$ is another constant, noting that $|J(f)| = |\sum_x \rho_0(x) \bar{V}^f(x)| \leq 2/(1-\gamma)$ by \eqref{eq:trivial_bound}. It follows that the weighted time-average of the squared gradient converges to zero:
\begin{align*}
\frac{\int_0^T\zeta_t \|\nabla_P J(f_t)\|^2 \, dt}{\int_0^T \zeta_t \, dt} &\leq \frac{C'}{\log(1+T)} \overset{T\to+\infty}\to 0.
\end{align*}
Moreover,
\begin{equation}
C' \geq \int_0^T\zeta_t \|\nabla_P J(f_t)\|^2 \, dt \geq \inf_{t \in [0,T]} \|\nabla_P J(f_t) \|^2 \int_0^T \zeta_t \, dt =  \log(1+T) \, \inf_{t \in [0,T]} \|\nabla_P J(f_t) \|^2,
\end{equation}
so
\begin{equation}
\inf_{t \in [0,T]} \|\nabla_P J(f_t) \|^2 \leq \frac{C'}{\log(1+T)},
\end{equation}
and therefore $\displaystyle \liminf_{t\to \infty}\|\nabla_P J(f_t)\| = 0$.
\end{proof}

\begin{proof}[Proof of Theorem \ref{thm:global_actor_convergence}]
To show that $\displaystyle \lim_{t\to \infty}\|\nabla_P J(f_t)\| = 0$, assume the contrary; that is $\displaystyle \limsup_{t\to \infty}\|\nabla_P J(f_t)\| > 0$. Then we can find a constant $\epsilon_1>0$ and two increasing sequences $\{a_n\}_{n\ge 1}, \{b_n\}_{n\ge 1}$ such that 
\begin{align*}
a_1 <b_1 <a_2 <b_2 <a_3 <b_3 < \cdots,\\
\|\nabla_P J(f_{a_n})\| < \frac{\epsilon_1}{2},\quad \|\nabla_P J(f_{b_n})\| > \epsilon_1.
\end{align*}
Define the following cycle of stopping times:
\begin{align}
t_n &:= \sup\{s | s\in (a_n, b_n),\  \|\nabla_P J(f_s)\| < \frac{\epsilon_1}{2} \},\\
i(t_n) &:= \inf\{s | s\in (t_n, b_n),\  \|\nabla_\theta J(f_s)\| > \epsilon_1 \}.
\end{align}
Note that $ \|\nabla_P J(f_t)\| $ is continuous with respect to $t$, so
we have 
\begin{align*}
&a_n \le t_n < i(t_n) \le b_n \\
&\|\nabla_P J(f_{t_n})\| = \frac{\epsilon_1}{2}, \quad \|\nabla_P J(f_{i(t_n)})\| =\epsilon_1\\
&\frac{\epsilon_1}{2} \le \|\nabla_P J(f_s)\| \le \epsilon_1, \quad s\in(t_n, i(t_n)). \numberthis \label{eq:stopping_time_property_new}
\end{align*}
Notice that there is a constant $L > 0$ such that gradient is $L$-Lipschitz, see e.g., the proof of \cite[Lemma 7]{mei2020global}, we have for any $t_n$
\begin{align*}
\frac{\epsilon_1}{2} &=  \|\nabla_P J(f_{i(t_n)})\| - \|\nabla_P J(f_{t_n})\|\\
&\le \| \nabla_P J(f_{i(t_n)}) - \nabla_P J(f_{t_n}) \|\\
&\le L\| P_{i(t_n)} - P_{t_n} \|\\
&\le C \int_{t_n}^{i(t_n)} \zeta_s\| \nabla_P J(f_s)\| ds + C \int_{t_n}^{i(t_n)} \zeta_s \|\widehat{\nabla}_P J(f_s) - \nabla_P J(f_s)\|ds \\
&\le C \epsilon_1 \int_{t_n}^{i(t_n)} \zeta_s ds + C \int_{t_n}^{i(t_n)} \zeta_s \eta_s ds.
\end{align*}
Since $\int_0^\infty \zeta_s \eta_s \, ds < +\infty$, it follows that 
\begin{equation}
\label{eq:key2_new}
\frac{1}{2L} \le \liminf_{n\to \infty} \int_{t_n}^{i(t_n)} \zeta_s ds.
\end{equation}
Using \eqref{eq:stopping_time_property_new}, we see that 
\begin{equation*}
J(f_{\bar\theta_{i(t_n)}}) - J(f_{\bar\theta_{t_n}}) \ge C_1 \left(\frac{\epsilon_1}{2} \right)^2 \int_{t_n}^{i(t_n)} \zeta_s ds - C_2\int_{t_n}^{i(t_n)} \zeta_s\eta_s ds.
\end{equation*}
Due to the convergence of $J(f_{\theta_{t_n}})$ and the assumption of the learning rate, this implies that 
\begin{equation}
\lim_{n\to \infty} \int_{t_n}^{i(t_n)} \zeta_s ds = 0,
\end{equation}
which contradicts Equation \eqref{eq:key2_new}, and thus the convergence to the stationary point is proven.
\end{proof}

\section{Backgrounds on MDP}
\label{S:backgrounds_on_MDP}
In this appendix, we shall formally define when MDP is \textit{communicating} as given in \cite{Puterman2014markov, Kallenberg2002, Kallenberg2021}. We will then prove Propositions \ref{prop:equiv_communication_1} and \ref{prop:equiv_communication_2} on the equivalence between MDP being communicating and Assumption \ref{as:ergodic_assumption}, as well as the consequence on the uniform ergodicity of the induced and auxiliary chains. Finally, we will discuss the Lipschitzness of the stationary distributions with the policy.

\subsection{Communicating and Aperiodicity of MDP}
\label{SS:communicating_MDP}
We begin by noting that an MDP $\bm{\M}$ with policy $f$ defines the following ``marginal" Markov chain on the state space
\begin{equation}
(\bm{\M}, f)_{\avg} : \begin{cases} x_0 \sim \rho_0 \\ 
x_{k+1} \sim \bar{\mathbb{P}}_f(x_k, \cdot) := \sum_{a} p(\cdot \mid x_k, a) f(x_k, a)
\end{cases}.
\end{equation}

Note that this is a ``marginal" version of the induced Markov chain $(\bm{\M}, f)$. Recall that a policy $f$ is \textit{deterministic} if for any $x$, there exists a unique $a(x)$ such that
\begin{equation*}
f(x,a) = \begin{cases} 1 & a=a(x) \\ 0 & \text{otherwise} \end{cases}.
\end{equation*}
We could now define the notion of \textit{communication} for an MDP:

\begin{definition}
An MDP is \textit{communicating} if for any states $x, x' \in \bm{\X}$, there exists a deterministic policy $f := f_{x,x'}$ such that under the marginal Markov chain $(\bm{\M}, f_{x,x'})_{\avg}$, $x'$ is \textit{accessible} by $x$. In other words, there exists $n_0 \in \N$ such that
\begin{equation} \label{eq:communicating_MDP_C1}
0 < \bar{\mathbb{P}}_{f_{x,x'}}^{n_0}(x,x') = \sum_{a_0, x_1, a_1, ..., x_{n_0-1}, a_{n_0-1}} \prod_{\ell=0}^{n_0-1} f_{x,x'}(x_\ell, a_\ell) p(x_{\ell+1} \mid x_\ell, a_\ell),
\end{equation}
where in the product, $x_0$ is to be interpreted as $x$, and $x_{n_0}$ is to be interpreted as $x'$.
\end{definition}

For notational convenience, we shall identify $f_{x,x'}$ by the function $a_{x,x'} : z \in \bm{\X} \mapsto \mathbb{I}(f_{x,x'}(z,\cdot) = 1)$. With this, condition \eqref{eq:communicating_MDP_C1} could then be written as
\begin{equation} \label{eq:communicating_equation}
0 < \bar{\mathbb{P}}_{f_{x,x'}}^{n_0}(x,x') = \sum_{x_1, ..., x_{n_0-1}} \prod_{\ell=0}^{n_0-1}  p(x_{\ell+1} \mid x_\ell, a_{x,x'}(x_\ell)).
\end{equation}

We are now ready to prove Proposition \ref{prop:equiv_communication_1}. In fact, we shall prove the following stronger proposition:

\begin{proposition} \label{prop:C2}
The following are equivalent:
\begin{enumerate}
    \item $\bm{\M}$ is communicating,
    \item $(\bm{\M}, \mathsf{1})_{\avg}$ is irreducible,
    \item[2A.] $(\bm{\M}, \mathsf{1})$ is irreducible,
    \item $(\bm{\M}, f)_{\avg}$ is irreducible for any $f$ such that $f(x,a) \geq f_{\min} > 0$,
    \item[3A.] $(\bm{\M}, f)$ is irreducible for any $f$ such that $f(x,a) \geq f_{\min} > 0$,
    \item there exists a policy $f$ such that $(\bm{\M}, f)_{\avg}$ is irreducible.
\end{enumerate}

Note that Proposition \ref{prop:equiv_communication_1} corresponds to the equivalence $(1) \iff (2A)$.

\begin{proof}
We shall structure the proof as $(1) \implies (2) \implies (3) \implies (4) \implies (1)$, then noting $(2) \iff (2A)$ and $(2A) \iff (3A)$. Note that $(3) \implies (4)$ and $(3A) \implies (2A)$  are trivial. Throughout the proof, $x_0=x$ and $x_{n_0} = x'$. \\

$(1) \implies (2)$: pick $x,x' \in \bm{\X}$, then there is a policy $f_{x,x'}$ (identified by the function $a_{x,x'}(\cdot)$ as above) and $n_0 \in \N$ such that $\bar{\mathbb{P}}^{n_0}_{f_{x,x'}}(x,x') > 0$. Then
\begin{align*}
\bar{\mathbb{P}}^{n_0}_{\mathsf{1}}(x,x') 
&= \frac{1}{(\#\bm{\A})^{n_0}} \sum_{a_0, x_1, a_1, ..., x_{n_0-1}, a_{n_0-1}} \prod_{\ell=0}^{n_0-1} p(x_{\ell+1} \mid x_\ell, a_\ell) \\
&\geq \frac{1}{(\#\bm{\A})^{n_0}} \sum_{x_1, a_1, ..., x_{n_0-1}} \prod_{\ell=0}^{n_0-1} p(x_{\ell+1} \mid x_\ell, a_{x,x'}(x_\ell)) \\
&= \frac{1}{(\#\bm{\A})^{n_0}} \bar{\mathbb{P}}^{n_0}_{f_{x,x'}}(x,x') > 0, 
\end{align*}
so $x'$ is accessible by $x$ in $(\bm{\M}, \mathsf{1})_{\avg}$. \\

$(2) \implies (3)$: pick $x,x' \in \bm{\X}$, then there is an $n_0 \in \N$ such that $\bar{\mathbb{P}}^{n_0}_{\mathsf{1}}(x,x') > 0$. Then
\begin{align*}
\bar{\mathbb{P}}^{n_0}_{f}(x,x') 
&= \frac{1}{(\#\bm{\A})^{n_0}} \sum_{a_0, x_1, a_1, ..., x_{n_0-1}, a_{n_0-1}} \prod_{\ell=0}^{n_0-1} f(x_\ell, a_\ell) \, p(x_{\ell+1} \mid x_\ell, a_\ell) \\
&= \left(\frac{f_{\min}}{\#\bm{\A}} \right)^{n_0} \sum_{a_0, x_1, a_1, ..., x_{n_0-1}, a_{n_0-1}} \prod_{\ell=0}^{n_0-1} p(x_{\ell+1} \mid x_\ell, a_\ell) \\
&= f^{n_0}_{\min} \bar{\mathbb{P}}^{n_0}_{\mathsf{1}}(x,x') > 0, 
\end{align*}
so $x'$ is accessible by $x$ in $(\bm{\M}, f)_{\avg}$. \\

$(4) \implies (1)$: pick $x,x' \in \bm{\X}$, then there is an $n_0 \in \N$ such that $\bar{\mathbb{P}}^{n_0}_f(x,x') > 0$. We shall pick the smallest $n_0$, which exists by well-ordering principle and $n_0 \geq 1$. This implies we could find a sequence $(x_0, a_0, x_1, ..., x_{n_0-1}, a_{n_0-1}, x_{n_0})$ such that
\begin{itemize}
    \item $x_0 = x$ and $x_{n_0} = x'$,
    \item no duplication: if $i,j$ are chosen such that $0 \leq i,j < n_0$ and $i \neq j$ then $x_i \neq x_j$ 
    \item probability that this sequence happening is $>$ 0.
\end{itemize}
Since there are no duplications on $(x_\ell)$, for each $\ell$ we can construct $f_{x,x'}$ by assigning $a_{x,x'}(x_\ell) = a_\ell$. For each $\tilde{x} \neq x_\ell$ we could assign $a_{x,x'}(\tilde{x})$ to any element in $\bm{\A}$. The fact that the sequence happens with positive probability indicates
\begin{align*}
\frac{1}{(\#\bm{\A})^{n_0}} \prod_{\ell=0}^{n_0-1} f(x_\ell, a_\ell) \, p(x_{\ell+1} \mid x_\ell, a_\ell) > 0.
\end{align*}
None of the terms in the product must be zero, which means
\begin{align*}
\bar{\mathbb{P}}_{f_{x,x'}}(x,x') = \prod_{\ell=0}^{n_0-1} p(x_{\ell+1} \mid x_\ell, a_{x,x'}(x_\ell)) = \prod_{\ell=0}^{n_0-1} p(x_{\ell+1} \mid x_\ell, a_\ell) > 0.
\end{align*}

$(2A) \implies (2)$: Given states $x,x'$, it holds for any $a,a'$, there exists $n_0$ (that depends on $x,x',a,a'$) such that
\begin{equation*}
\mathbb{P}^{n_0}_{\mathsf{1}}((x,a), (x',a')) = \frac{1}{(\#\bm{\A})^{n_0}} \sum_{x_1,a_1,...,x_{n_0-1}, a_{n_0-1}} p(x_1\mid x, a) \prod_{\ell=1}^{n_0-1} p(x_{\ell+1} \mid x_\ell, a_\ell),
\end{equation*}
where we let $x_{n_0} = x'$. This implies 
\begin{align*}
\bar{\mathbb{P}}^{n_0}_{\mathsf{1}}(x,x') &= \frac{1}{(\#\bm{\A})^{n_0}} \sum_{a_0,x_1,a_1,...,x_{n_0-1},a_{n_0-1}} \prod_{\ell=0}^{n_0-1} p(x_{\ell+1} \mid x_\ell, a_\ell) \\
&\geq \frac{1}{(\#\bm{\A})^{n_0}} \sum_{x_1,a_1,...,x_{n_0-1},a_{n_0-1}} p(x_1\mid x,a) \prod_{\ell=1}^{n_0-1} p(x_{\ell+1} \mid x_\ell, a_\ell) \\
&= \mathbb{P}^{n_0}_{\mathsf{1}}((x,a), (x',a')) > 0,
\end{align*}
so $x'$ is accessible by $x$ in $(\bm{\M}, \mathsf{1})_{\avg}$. \\

$(2) \implies (2A)$: Consider the state-action pairs $(x,a)$ and $(x',a')$. Select $z$ such that $p(\tilde{x}\mid x,a) > 0$ (which exists as $\sum_{z'} p(z' \mid x,a) = 1$). Note that $x'$ is accessible by $z$ in the chain $(\bm{\M}, \mathsf{1})$, so there exists $n_0 \in \N$ such that $\bar{\mathbb{P}}^{n_0}_{\mathsf{1}}(z,x') > 0$. Then
\begin{align*}
\mathbb{P}^{n_0+1}_{\mathsf{1}}((x,a), (x',a')) &= \frac{1}{(\#\bm{\A})^{n_0+1}}\sum_{x_1, a_1, ..., x_{n_0}, a_{n_0}} p(x_1\mid x,a) \, p(x'\mid x_{n_0}, a_{n_0}) \prod_{\ell=1}^{n_0-1} p(x_{\ell+1} \mid x_\ell, a_\ell) \\
&\geq \frac{p(z\mid x,a)}{\#\bm{\A}} \frac{1}{(\#\bm{\A})^{n_0}} \sum_{a_1, x_2, a_2, ..., x_{n_0}, a_{n_0}} p(x' \mid x_{n_0}, a_{n_0}) \prod_{\ell=1}^{n_0-1} p(x_{\ell+1} \mid x_\ell, a_\ell) \\
&= \frac{p(z\mid x,a)}{\#\bm{\A}} \bar{\mathbb{P}}^{n_0}_{\mathsf{1}}(z,x') > 0.
\end{align*}
Therefore $(x',a')$ is accessible by $(x,a)$ in $(\bm{\M}, \mathsf{1})_{\avg}$. \\

$(2A) \implies (3A)$: Consider the state-action pairs $(x,a)$ and $(x',a')$, then there exists $n_0$ such that $\mathbb{P}^{n_0}_{\mathsf{1}}((x,a), (x',a')) > 0$. It follows that
\begin{align*}
\mathbb{P}^{n_0}_{f}((x,a), (x',a')) 
&= \frac{1}{(\#\bm{\A})^{n_0}} \sum_{x_1, a_1, ..., x_{n_0-1}, a_{n_0-1}} \prod_{\ell=0}^{n_0-1} f(x_{\ell+1}, a_{\ell+1}) \, p(x_{\ell+1} \mid x_\ell, a_\ell) \\
&\geq \left(\frac{f_{\min}}{\#\bm{\A}} \right)^{n_0} \sum_{x_1, a_1, ..., x_{n_0-1}, a_{n_0-1}} \prod_{\ell=0}^{n_0-1} p(x_{\ell+1} \mid x_\ell, a_\ell) \\
&= f_{\min}^{n_0} \, \mathbb{P}^{n_0}_{\mathsf{1}}((x,a),(x',a')) > 0, 
\end{align*}
where we abuse notation and denote $(x_0,a_0) = (x,a)$ and $(x_{n_0}, a_{n_0}) = (x',a')$. Therefore $(x', a')$ is accessible by $(x,a)$ in $(\bm{\M}, f)$.
\end{proof}
\end{proposition}

Next, we study the aperiodicity of an MDP.

\begin{lemma} \label{lem:C3}
$(\bm{\M}, \mathsf{1})$ is aperiodic $\iff$ $(\bm{\M}, f)$ is aperiodic for any fully supported $f$.
\end{lemma}

\begin{proof}
The $\Leftarrow$ is clear. For $\Rightarrow$, let us consider the following subsets of $\N$, defined based on the policy $f$:
\begin{equation*}
R^f_{(x,a)} := \{n \geq 1 \mid \mathbb{P}^n_f((x,a), (x,a)) > 0\}.
\end{equation*}
Note $R^{\mathsf{1}}_{(x,a)} \subseteq R^f_{(x,a)}$. This is shown when proving $(2A) \implies (3A)$ in the proof of Proposition \ref{prop:C2}. As for the other direction, say $n_0 \in R^f_{(x,a)}$, such that
\begin{align*}
\mathbb{P}^{n_0}_{f}((x,a), (x,a)) 
&= \frac{1}{(\#\bm{\A})^{n_0}} \sum_{x_1, a_1, ..., x_{n_0-1}, a_{n_0-1}} \prod_{\ell=0}^{n_0-1} f(x_{\ell+1}, a_{\ell+1}) \, p(x_{\ell+1} \mid x_\ell, a_\ell) > 0
\end{align*}
where we abuse notation and denote $(x_0,a_0) = (x_{n_0}, a_{n_0}) = (x,a)$. Then there is a sequence $(x^*_0=x, a^*_0=a,x^*_1,a^*_1,...,x^*_{n_0-1},a^*_{n_0-1},x^*_{n_0}=x,a^*_{n_0}=a)$ such that
\begin{align*}
\prod_{\ell=0}^{n_0-1} f(x^*_{\ell+1}, a^*_{\ell+1}) \, p(x^*_{\ell+1} \mid x^*_\ell, a^*_\ell) > 0.
\end{align*}
Since $f > 0$ (as it is fully supported), so we must have
\begin{align*}
\prod_{\ell=0}^{n_0-1} p(x^*_{\ell+1} \mid x^*_\ell, a^*_\ell) > 0.
\end{align*}
It follows that
\begin{align*}
\mathbb{P}^{n_0}_{\mathsf{1}}((x,a), (x',a')) &= \left(\frac{1}{\#\bm{\A}} \right)^{n_0} \sum_{x_1, a_1, ..., x_{n_0-1}, a_{n_0-1}} \prod_{\ell=0}^{n_0-1} p(x_{\ell+1} \mid x_\ell, a_\ell) \\
&\geq \left(\frac{1}{\#\bm{\A}} \right)^{n_0} \prod_{\ell=0}^{n_0-1} p(x^*_{\ell+1} \mid x^*_\ell, a^*_\ell) > 0.
\end{align*}
Therefore $n_0 \in R_{\mathsf{1}}$. It follows that $R^{\mathsf{1}}_{(x,a)} = R^f_{(x,a)}$, so $\mathsf{gcd}(R^{\mathsf{1}}_{(x,a)}) = \mathsf{gcd}(R^{f}_{(x,a)})$. This completes the proof.
\end{proof}

Using the same arguments as above, we can also show that if $(\bm{\M}, f)$ is irreducible and aperiodic for any fully supported policies $f$, then so is $(\bm{\M}, f)_{\aux}$. We shall omit the complete proof, but point out the main observation that for any $n$ and state-action pairs $(x,a)$, $(x',a')$, we have
\begin{equation*}
\Pi^{n}_f((x,a), (x',a')) \geq \gamma \mathbb{P}^n_f((x,a), (x',a')).
\end{equation*}
So if $(x',a')$ is accessible by $(x,a)$ in $(\bm{\M}, f)$ then so is the case in $(\bm{\M}, f)_{\aux}$. Moreover, if we define
\begin{equation*}
\tilde{R}^f_{(x,a)} := \{n \geq 1 \mid \Pi^n_f((x,a), (x,a)) > 0\},
\end{equation*}
then $R^f_{(x,a)} \subseteq \tilde{R}^f_{(x,a)}$. It follows that $1 \leq \mathsf{gcd}(\tilde{R}^f_{(x,a)}) \leq \mathsf{gcd}(R^f_{(x,a)})$. Therefore if $(\bm{\M}, f)$ is aperiodic then so is $(\bm{\M}, f)_{\aux}$. \\

So far, we have now shown that if $(\bm{\M}, \mathsf{1})$ is irreducible and aperiodic, then for all fully supported policies $f$, the chains $(\bm{\M}, f)$ and $(\bm{\M}, f)_{\aux}$ are both irreducible and aperiodic, and must admits stationary measures. Recall that we have denote them $\pi^f$ and $\sigma^f_{\rho_0}$ respectively. It remains for us to show that these measures are limiting. 

\begin{proposition}
\label{prop:geometric2}
If $(\bm{\M},\mathsf{1})$ is irreducible and aperiodic and that $f$ is a policy satisfying $f(x,a) \geq f_{\min}$ for any state-action pairs $\xi := (x,a)$, then
\begin{itemize}
\item the following lower (minorisation) bounds hold:
\begin{align}
\min_{x,a} \pi^f(x,a) &\geq C_{\mathsf{1}} (f_{\min} \#\bm{\A})^{n_0} \label{eq:lower_bound_MDP_general}, \\
\min_{x,a} \sigma^f_{\rho_0}(x,a) &\geq C_{\mathsf{1}} (\gamma f_{\min} \#\bm{\A})^{n_0}, \label{eq:lower_bound_general}
\end{align}
where the constants $n_0, C_{\mathsf{1}} > 0$ are defined in Assumption \ref{as:ergodic_assumption}.
\item the chains $(\bm{\M}, f)$ and $(\bm{\M}, f)_{\aux}$ are uniformly ergodic: for any $n$ and $\xi := (x,a)$:
\begin{align}
d_{\TV}(\mathbb{P}^{n}_f(\xi, \cdot), \pi^f) &\leq  (1 - MC_{\mathsf{1}} (f_{\min} \#\bm{\A})^{n_0}))^{\lfloor n/n_0 \rfloor}, \label{eq:geometric_MDP_general} \\
d_{\TV}(\Pi^{n}_f(\xi, \cdot), (1-\gamma) \sigma^f_{\rho_0}(\cdot)) &\leq  (1 - MC_{\mathsf{1}} (\gamma f_{\min} \#\bm{\A})^{n_0})^{\lfloor n/n_0 \rfloor}, \label{eq:geometric_aux_general}
\end{align}
where $M = \#\bm{\X} \times \#\bm{\A}$.
\end{itemize}
\end{proposition}

Note that the convergence rates in \eqref{eq:geometric_MDP_general} and \eqref{eq:geometric_aux_general} both depend on $f_{\min}$.

\begin{proof}
As $\pi^f$ is invariant under $\prob_f$, hence also under $\prob^{n_0}_f$. Therefore, for any $\xi = (x,a)$,
\begin{align*}
\pi^f(\xi) 
&= \sum_{\xi'} \pi^f(\xi') \prob^{n_0}_f(\xi', \xi) \\
&= \sum_{\xi_1, \xi_2, ..., \xi_{n_0-1}, \xi'} \pi^f(\xi') p(x_1|\xi') f(\xi_1) p(x_2|\xi_1) ... p(x|\xi_{n_0-1}) f(\xi) \\
&\geq f_{\min}^{n_0} \sum_{\xi_1, \xi_2, ..., \xi_{n_0-1}, \xi'} \pi^f(\xi') p(x_1|\xi') p(x_2|\xi_1) ... p(x|\xi_{n_0-1}) \\
&\geq C_{\mathsf{1}} (f_{\min} \#\bm{\A})^{n_0} \sum_{\xi'} \pi^{g^N_k}(\xi') = C_{\mathsf{1}} (f_{\min} \#\bm{\A})^{n_0}.
\end{align*}
Using the exact argument, one could also prove that for any $(x,a)$
\begin{equation*}
\min_{x,a} \sigma^f(x,a) \geq C_{\mathsf{1}} \left(\gamma f_{\min} \#\bm{\A} \right)^{n_0}.
\end{equation*}

For the second part of the theorem, let us first define the measure $c\nu_{\mathsf{1}}$ for all constants $c \in (0,1)$ that uniformly assigns $c\nu_{\mathsf{1}}(x,a) = cC_{\mathsf{1}}$. Then, viewing $\mathbb{P}^{n_0}_f$ as a measure, we have the following minorisation inequality
\begin{equation}
\mathbb{P}^{n_0}_f((x,a), \cdot) \geq (f_{\min} \#\bm{\A})^{n_0} \nu_{\mathsf{1}}(\cdot). \label{eq:minorisation_of_Pn0f}
\end{equation}
Therefore, by \cite[Theorem 16.2.4]{meyn2012markov} \eqref{eq:geometric_MDP_general} is true. With similar arguments, we could prove \eqref{eq:geometric_aux_general}.
\end{proof}

The above results imply the following:
\begin{proof}[Proof of Proposition \ref{prop:equiv_communication_2}]
This follows from Proposition \ref{prop:C2}, Lemma \ref{lem:C3} and Proposition \ref{prop:geometric2}.
\end{proof}

\begin{proof}[Proof of Lemma \ref{lem:geometric}]
Note that $\eta_t$ is decreasing, so for any $k \leq \lfloor NT \rfloor$ we have $\eta^N_k \geq \eta^N_{\lfloor NT \rfloor} \geq \eta_T$. Lemma \ref{lem:geometric} therefore follows from Proposition \ref{prop:geometric2} with $f_{\min} = \eta_T$.
\end{proof}

\subsection{Lipschitzness of the stationary measures}
\label{SS:lipschitzness_of_the_stationary_measures}
Consider a finite-state Markov chain with a transition kernel $\mathbf{P}(\xi,\xi')$ that describes the probability of transitioning from $\xi$ to $\xi'$. Following \cite{Hunter2006MarkovChainPerturbations, Ipsen2011ErgodicityCoefficient}, we could define the ergodicity coefficients for the Markov chain:

\begin{definition}[Ergodicity Coefficient, see Corollary 3.9 of \cite{Ipsen2011ErgodicityCoefficient}]
The ergodicity coefficient is defined as
\begin{equation}
\tau_1(\mathbf{P}) = 1-\min_{\xi, \xi'} \sum_{\xi''} \min(\mathbf{P}(\xi,\xi''), \mathbf{P}(\xi',\xi'')).
\end{equation}
\end{definition}

With this, \cite{Ipsen2011ErgodicityCoefficient} proves the following:

\begin{theorem}[Sensitivity Analysis] \label{thm:sensitivity_analysis}
Let $\mathbf{P}$, $\mathbf{P}'$ be two transition kernels that both induce irreducible Markov chain and admit (unique) invariant measures $\pi$ and $\pi'$ respectively. Assume further that $\tau_1(\mathbf{P}) < 1$, then for all $\xi$,
\begin{equation}
d_{\TV}(\pi, \pi') \leq \frac{1}{1-\tau_1(\mathbf{P})} \max_\xi d_{\TV}(\mathbf{P}(\xi,\cdot), \mathbf{P}'(\xi, \cdot)).
\end{equation}
\end{theorem}

Therefore, by bounding the ergodicity coefficients of $\mathbb{P}_f$ and $\Pi_f$, we could establish Proposition \ref{prop:Lipschitzness_of_stationary_measures}.

\begin{proof}[Proof of Proposition \ref{prop:Lipschitzness_of_stationary_measures}]
For the statement (1), recall that we have assumed that $f \geq f_{\min}$, so by \eqref{eq:minorisation_of_Pn0f}, for any $\xi,\xi'$ there exists $n_0$ and $C_{\mathsf{1}}$ such that
\begin{equation}
\mathbb{P}^{n_0}_f(\xi,\xi') \geq (f_{\min} \#\bm{\A})^{n_0} C_{\mathsf{1}} > 0.
\end{equation}
This prompts us to consider the Markov chain induced by the kernel $\mathbb{P}^{n_0}_f$, which also admits $\pi^f$ as the unique invariant measure. In particular, we have
\begin{equation*}
1 - \tau_1(\mathbb{P}^{n_0}_f) = \min_{\xi,\xi'} \sum_{\xi''} \min(\mathbb{P}^{n_0}_f(\xi,\xi''), \mathbb{P}^{n_0}_f(\xi',\xi'') \geq (f_{\min} \#\bm{\A})^{n_0} MC_{\mathsf{1}} > 0.
\end{equation*}
Therefore, by Theorem \ref{thm:sensitivity_analysis}, we have
\begin{equation*}
|\pi^f(\xi) - \pi^g(\xi)| \leq \max_{\xi'} \frac{d_{\TV}(\mathbb{P}^{n_0}_f(\xi',\cdot), \mathbb{P}^{n_0}_g(\xi', \cdot))}{MC_{\mathsf{1}} (f_{\min} \#\bm{\A})^{n_0}}.
\end{equation*}
We can obtain a sharper bound for the statement (2). This is to note that
\begin{align*}
1 - \tau_1(\Pi_{f,\rho_0}) 
&= \min_{\xi,\xi'} \sum_{\xi''} \min(\Pi_{f,\rho_0}(\xi,\xi''), \Pi_{f,\rho_0}(\xi',\xi'')) \\
&\geq \min_{\xi,\xi'} \sum_{\xi'' = (x'',a'')} (1-\gamma) f(\xi'') \rho_0(x'') \\
&= 1-\gamma > 0.
\end{align*}
Therefore, by Theorem \ref{thm:sensitivity_analysis}, we have
\begin{align*}
(1-\gamma)|\sigma^f_{\rho_0}(\xi) - \sigma^g_{\rho_0}(\xi)| &\leq \max_{\xi'} \frac{d_{\TV}(\Pi_f(\xi,\cdot), \Pi_g(\xi, \cdot))}{1-\gamma} \\
&= \frac{1}{1-\gamma} \max_{\xi'} \sum_{\xi''=(x'',a'')} |f(\xi'') - g(\xi'')| p(x''|\xi').
\end{align*}
\end{proof}

Let us demonstrate how the first statement could be used to establish the inequality \eqref{eq:difference_g_g_tilde}.

\begin{proof}[Proof of \eqref{eq:difference_g_g_tilde}]
Recall that 
\begin{align*}
g^N_s &= \frac{\eta^N_{\lfloor Ns \rfloor}}{\#\bm{\A}} + (1-\eta^N_{\lfloor Ns \rfloor}) f^N_s \\
\tilde{g}^N_s &= \frac{\eta_s}{\#\bm{\A}} + (1-\eta_s) f^N_s.
\end{align*}
Therefore, for any $\xi$, we have
\begin{align*}
g^N_s(\xi) - \tilde{g}^N_s(\xi) = (\eta^N_{\lfloor Ns \rfloor} - \eta_s) \left(\frac{1}{\#\bm{\A}} - f^N_s\right).
\end{align*}
Note that if $\eta_t = (1+(\log(1+t))^2)^{-1}$, then
\begin{align*}
\left|\frac{d\eta_t}{dt} \right| = \left|-\frac{2 \log (1+t)}{(1+t) (1+(\log(1+t))^2)^2}\right| \leq 2,
\end{align*}
and if $\eta_t = (1+t)^{-\varepsilon}$, then
\begin{align*}
\left|\frac{d\eta_t}{dt} \right| = \left|-\frac{\varepsilon}{(1+t)^{\varepsilon+1}}\right| \leq \varepsilon.
\end{align*}
In both cases, there exists constant $C > 0$ such that
\begin{align}
|\eta_s - \eta^N_{\floor{Ns}}|
&= |\eta_s - \eta_{\frac{\lfloor Ns \rfloor}N}| \nonumber \\
&\leq C \left|s-\frac{\floor{Ns}}{N}\right| \nonumber \\
&\leq C \left|\frac{\floor{Ns}}{N}+1 - \frac{\floor{Ns}}{N}\right| \leq \frac{C}{N}.
\end{align}
Let us once again abuse notation and set $\xi_0 = (x,a)$. Then, by using a telescoping sum argument in step $(*)$, we have
\begin{align*}
&\phantom{=} d_{\TV}(\mathbb{P}^{n_0}_f(\xi,\cdot), \mathbb{P}^{n_0}_g(\xi, \cdot)) \\
&\leq \sum_{\xi_{n_0}} \left|\sum_{\xi_1,...,\xi_{n_0-1}} \left(\prod_{\ell=0}^{n_0-1} g^N_s(\xi_{\ell+1}) p(x_{\ell+1} \mid \xi_\ell) - \prod_{\ell=0}^{n_0-1} \tilde{g}^N_s(\xi_{\ell+1}) p(x_{\ell+1} \mid \xi_\ell)\right)\right| \\
&\leq \sum_{\xi_1,...,\xi_{n_0}} \left| \prod_{\ell=0}^{n_0-1} g^N_s(\xi_{\ell+1}) p(x_{\ell+1} \mid \xi_\ell) - \prod_{\ell=0}^{n_0-1} \tilde{g}^N_s(\xi_{\ell+1}) p(x_{\ell+1} \mid \xi_\ell) \right| \\
&\overset{(*)}\leq \sum_{m=0}^{n_0-1} \sum_{\xi_1,...,\xi_{n_0}}  \left(\prod_{\ell \leq m-1} g^N_s(\xi_{\ell+1}) p(x_{\ell+1} \mid \xi_\ell)\right) |g^N_s(\xi_{\ell+1}) - \tilde{g}^N_s(\xi_{\ell+1})| p(\xi_{\ell+1}\mid \xi_\ell) \left(\prod_{m+1 \leq \ell \leq n_0-1} g^N_s(\xi_{\ell+1}) p(x_{\ell+1} \mid \xi_\ell)\right) \\
&\leq \frac{2}{N} \sum_{m=0}^{n_0-1} \sum_{\xi_1,...,\xi_{n_0}}  \left(\prod_{\ell \leq m-1} g^N_s(\xi_{\ell+1}) p(x_{\ell+1} \mid \xi_\ell)\right) \left(f^N_s(\xi_{\ell+1}) + \frac{1}{\#\bm{\A}} \right) p(\xi_{\ell+1}\mid \xi_\ell) \left(\prod_{m+1 \leq \ell \leq n_0-1} g^N_s(\xi_{\ell+1}) p(x_{\ell+1} \mid \xi_\ell)\right) \\
&\leq \frac{4n_0}{N}.
\end{align*}
Therefore, by applying statement (1) of Proposition \ref{prop:Lipschitzness_of_stationary_measures} with $f_{\min} = \eta_T/\#\bm{\A}$, we have
\begin{equation*}
|\pi^f(\xi) - \pi^g(\xi)| \leq \frac{4}{NMC_{\mathsf{1}} \eta_T^{n_0}} =: \frac{C_T}{N}.
\end{equation*}
\end{proof}

\section{Policy Gradient Theorem}
\label{S:policy_gradient}
The following appendix shall provide a short discussion regarding the backgrounds of the famous Policy Gradient Theorem, and how would the theorem be applied in the paper. We shall assume Assumptions \ref{as:MDP_basic} and \ref{as:ergodic_assumption} throughout.

\subsection{Backgrounds and Idea of Proof of Policy Gradient Theorem}
The Policy Gradient Theorem was first proved in \cite{policygradient1999}, which involved the unrolling of the Bellman equation. Here we shall provide justification of why the steps of unrolling is mathematically rigorous. \\

Assume a policy $f_\theta$ is parametrised by $\theta \in \R^{d_{\theta}}$. We could view the policy as a vector in $\R^M$, or a map 
\begin{equation*}
    \theta \in \R^{d_{\theta}} \mapsto f_\theta \in \R^M.
\end{equation*}
With this in mind, we could also see the state-value function as a map
\begin{equation}
    \theta \in \R^{d_\theta} \mapsto \bar{V}^{f_\theta} \in \R^{\#\bm{\X}},
\end{equation}
where $\bar{V}^{f_\theta}$ is a vector indexed by $x \in \bm{\X}$ and defined implicitly by the Bellman equation
\begin{equation}
    \bar{V}^{f_\theta}(x) = \sum_{a} f_\theta(x,a) \left[r(x,a) + \gamma \sum_{x'} \bar{V}^{f_\theta}(x') p(x'|x,a)\right]. \label{eq:Bellman_state_app}
\end{equation}
Similarly, the action-value function could be seen as a map 
\begin{equation}
\theta \in \R^{d_\theta} \mapsto V^{f_\theta} \in \R^M,
\end{equation}
as defined by 
\begin{equation}
V^{f_\theta}(x,a) = r(x,a) + \gamma \sum_{x'} \bar{V}^{f_\theta}(x') p(x'|x,a). \label{eq:Bellman_action_app}
\end{equation}
We shall justify that $\bar{V}^{f_\theta}$ is well-defined by showing that \eqref{eq:Bellman_state_app} always have a solution for any transition probabilities $p$. It is useful to rewrite \eqref{eq:Bellman_state_app} in the matrix form:
\begin{equation}
    \bar{V}^{f_\theta} = \bar{r}(\theta) + K(\theta) \bar{V}^{f_\theta}, \label{eq:Bellman_state_app_vec}
\end{equation}
where $\bar{r}(\theta) \in \R^{\#\bm{\X}}$ and $K(\theta) \in \R^{\#\bm{\X} \times \#\bm{\X}}$ are defined in terms of indices:
\begin{equation}
    [\bar{r}(\theta)]_x = \sum_a r(x,a) f_\theta(x,a), \quad [K(\theta)]_{xx'} = \sum_{a} f_\theta(x,a) p(x'|x,a).
\end{equation}
We note that the sum of column entries equals to 1:
\begin{align*}
    \sum_{x'} [K(\theta)]_{xx'} = \sum_{a,x'} f_\theta(x,a) p(x'|x,a) = \sum_a f_\theta(x,a) = 1,
\end{align*}
so, for any $\theta$, the matrix $K(\theta)$ has eigenvalues of norm $\leq 1$ by the Perron-Frobenius Theorem. As $\gamma < 1$, the matrix $I - \gamma K(\theta)$ now has non-zero eigenvalues, so must therefore be invertible. Thus \eqref{eq:Bellman_state_app} admits a unique solution for any $p$ and $\theta$.

We shall now derive the derivatives of the $\bar{V}^{f_\theta}$ with respect to $\theta$, with the assumption that $f_\theta$ has bounded derivatives. To do so, we note that \eqref{eq:Bellman_state_app_vec} could be further written in terms of the function $F:\R^{d_\theta} \times \R^{\#\bm{\X}} \to \R^{\#\bm{\X}}$:
\begin{equation*}
F(\theta, \bar{V}^{f_\theta}) = 0, \quad F(\theta, \bar{v}) = -\bar{r}(\theta) + (I - \gamma K(\theta))\bar{v}.
\end{equation*}
Thus we could invoke the Implicit Function Theorem to compute the derivatives of the $\bar{V}^{f_\theta}$. If we denote $D_\theta F$ as the Jacobian (as matrix in $\R^{\#\bm{\X} \times d_\theta}$) of $F$ that includes derivatives with respect to $\theta$, and similar $D_{\bar{v}} F$ as the Jacobian (as matrix in $\R^{\#\bm{\X} \times \#\bm{\X}}$) that includes derivatives with respect to $v$, then
\begin{equation*}
D_{\bar{v}} F = I - \gamma K(\theta),
\end{equation*}
so the total derivative $D_\theta \bar{V}^{f_\theta}$ (as matrix in $\R^{\#\bm{\X} \times d_\theta}$) is
\begin{equation*}
D_\theta \bar{V}^{f_\theta} = -(D_{\bar{v}} F)^{-1} D_\theta F = -(I - \gamma K(\theta))^{-1} D_\theta F.
\end{equation*}
To obtain the infinite series form, we note that the operator norm of $\gamma K(\theta)$ is strictly less than $1$, so it is true that
\begin{equation*}
(I-\gamma K(\theta))^{-1} = \sum_{k\geq 0} \gamma^k (K(\theta))^k,
\end{equation*}
with the right hand side converges in the operator norm. \\

It remains to compute $-D_\theta F$ explicitly. Recall that the rows of $D_\theta F$ are the transpose of the gradient of the $x$-entry of $F$, i.e. $(\nabla_\theta [F]_x)^\top$. We shall, therefore, have
\begin{align*}
-[D_\theta F]_{x,:} &= -(\nabla_\theta [F]_x)^\top = \sum_a r(x,a) (\nabla_\theta f_\theta(x,a))^\top + \gamma \sum_{a,x'} p(x'|x,a) \bar{V}^{f_\theta}(x') (\nabla_\theta f_\theta(x,a))^\top \\
&= \sum_a \left[r(x,a) + \gamma \sum_{x'} p(x'|x,a) \bar{V}^{f_\theta}(x') \right] (\nabla_\theta f_\theta(x,a))^\top \\
&\overset{\eqref{eq:Bellman_action_app}}= \sum_{a} V^{f_\theta}(x,a) (\nabla_\theta f_\theta(x,a))^\top  
\end{align*}
The $x$-row of the entries $(K(\theta))^k D_\theta F$ could therefore be computed inductively -- in fact,
\begin{align*}
-[(K(\theta))^k D_\theta F]_{x,:} &= -\sum_{x^{(1)}} [K(\theta)]_{x,x^{(1)}} [(K(\theta))^{k-1} D_\theta F]_{x^{(1)},:} \\
&= -\sum_{a,x^{(1)}} f_\theta(x,a) p(x^{(1)}|x,a) [K(\theta)^{k-1} D_\theta F]_{x^{(1)},:} \\
&= ... \text{(unrolling)} \\
&= -\sum_{a,x^{(1)},a^{(1)},...,x^{(k)}} f_\theta(x,a) p(x^{(1)}|x,a) ... f_\theta(x^{(k-1)}, a^{(k-1)}) p(x^{(k)}|x^{(k-1)}, a^{(k-1)}) ... [D_\theta F]_{x^{(k)},:} \\
&= \sum_{x^{(k)}} \prob(x_k=x^{(k)} \mid x_0 = x) \, [-D_\theta F]_{x^{(k)},:}.
\end{align*}
Therefore,
\begin{align*}
[D_\theta \bar{V}^{f_\theta}]_x 
&= \sum_{k\geq 0} \sum_{x'} \gamma^k \prob(x_k=x' \mid x_0 = x) [-D_\theta F]_{x',:} \\
&= \sum_{x'} \left(\sum_{k\geq 0} \gamma^k \prob(x_k=x' \mid x_0 = x)\right) [-D_\theta F]_{x',:} \\
&= \sum_{x'} \nu^{f_\theta}_{x}(x') [-D_\theta F]_{x',:} \\
&= \sum_{x',a} \nu^{f_\theta}_{x}(x') V^{f_\theta}(x',a) (\nabla_\theta f_\theta(x',a))^\top.
\end{align*}
This, therefore, establishes the Policy Gradient Theorem for the state-value function:
\begin{equation}
\nabla_\theta \bar{V}^{f_\theta}(x) = \sum_{x',a} \nu^{f_\theta}_{x}(x') V^{f_\theta}(x',a) \nabla_\theta f_\theta(x',a). \label{eq:Policy_gradient_state_app}
\end{equation}
We can also derive a Policy Gradient theorem for the action-value function by differentiating \eqref{eq:Bellman_action_app}:
\begin{align*}
\nabla_\theta V^{f_\theta}(x,a) &= \gamma \sum_{x'} p(x'|x,a) \nabla_\theta \bar{V}^{f_\theta}(x') \\
&= \gamma \sum_{x',a',x''} p(x'|x,a) \nu^{f_\theta}_{x'}(x'') V^{f_\theta}(x'',a') \nabla_\theta f_\theta (x'',a'). \numberthis \label{eq:Policy_gradient_action_app}
\end{align*}
Finally, we recall that the target function used in the Actor-Critic algorithm is the weighted average $J(\theta) = \sum_{x} \rho_0(x) \bar{V}^{f_\theta}(x)$, which could be differentiated to obtain
\begin{align*}
\nabla_\theta J(\theta) = \sum_{x, a, x'} \nu^{f_\theta}_{x}(x') \nabla_\theta f_\theta(x',a) V^{f_\theta}(x',a) \rho_0(x) = \sum_{a, x'} \nu^{f_\theta}_{\rho_0}(x') \nabla_\theta f_\theta(x',a) V^{f_\theta}(x',a) .
\end{align*}
If we assume that $f_\theta(x,a) > 0$ for all $(x,a) \in \bm{\X} \times \bm{\A}$, then 
\begin{align*}
\nabla_\theta J(\theta) &= \sum_{x,a,x'} \bracket{\sum_{k\geq 0} \gamma^k f_\theta(x,a) \p(x_k = x' \,|\, x) \rho_0(x)} \frac{1}{f_\theta(x',a)} \nabla_\theta f_\theta(x',a) V^{f_\theta}(x',a) \\
&= \sum_{a,x'} \sigma^{f_\theta}_{\rho_0}(x',a) V^{f_\theta}(x',a) \nabla_\theta (\ln f_\theta(x',a)).
\end{align*}

\subsection{Applications of the Policy Gradient Theorem}
The simplest application of \eqref{eq:Policy_gradient_state_app} and \eqref{eq:Policy_gradient_action_app} would be to study how the value functions evolve according to the limiting ODE \eqref{limit odes}, which could be done by directly parametrising the policy with time $t$, i.e., replace $\theta$ with $t$.

We could also look at the special case when we have a tabulated policy, i.e. $\theta \in \R^M$ and $\theta \mapsto f_\theta$ is the identity map itself. Denoting $\theta_{x,a}$ as the $(x,a)$-entry of $\theta$, \eqref{eq:Policy_gradient_state_app} and \eqref{eq:Policy_gradient_action_app} now reads the following:
\begin{align}
\frac{\partial \bar{V}^{f_\theta}(x_0)}{\partial \theta_{x,a}} &= \nu^{f_\theta}_{x_0}(x) V^{f_\theta}(x_0,a) \\
\frac{\partial V^{f_\theta}(x_0,a_0)}{\partial \theta_{x,a}} &= \gamma \sum_{x'} p(x'|x_0,a_0) \nu^{f_\theta}_{x'}(x) V^{f_\theta}(x',a). \label{eq:Policy_gradient_tabular_action_app}
\end{align}
From \eqref{eq:Policy_gradient_tabular_action_app}, we could see that the derivatives of $V^{f_\theta}$ is always bounded, as
\begin{equation*}
\left|\frac{\partial V^{f_\theta}(x_0, a_0)}{\partial \theta_{x,a}}\right| \leq \gamma \sum_{x'} p(x'|x_0,a_0) \nu^{f_\theta}_{x'}(x) |V^{f_\theta}(x',a)| \leq \frac{\gamma}{(1-\gamma)^2} \sum_{x'} p(x'|x_0,a_0) = \frac{\gamma}{(1-\gamma)^2}. \label{eq:boundedness_of_tabular_value_function}
\end{equation*}
Therefore $\theta \mapsto V^{f_\theta}$ is Lipschitz in $\theta$ in the tabular case.

\bibliographystyle{plain}
\bibliography{cite}
\end{document}